\definecolor{bgcolor}{rgb}{0.76,0.88,0.50}
\definecolor{bgcolor0}{rgb}{0.93,0.99,1}
\definecolor{bgcolor1}{rgb}{0.8,1,1}
\definecolor{bgcolor2}{rgb}{0.8,1,0.8}
\definecolor{bgcolor3}{rgb}{0.50,0.90,0.50}
\newcommand{\norm}[1]{\left\| #1 \right\|}
\newcommand{\inp}[2]{\left\langle#1,#2\right\rangle} 
\newcommand{\abs}[1]{\left| #1 \right|}
\newcommand{\R}{\mathbb{R}} 
\newcommand{\N}{\mathbb{N}} 
\newcommand{\Exp}[1]{{\mathbb{E}}\left[#1\right]}
\newcommand{\ExpSub}[2]{{\mathbb{E}}_{#1}\left[#2\right]}
\newcommand{\Prob}[1]{\mathbb{P}\left(#1\right)} 
\newcommand{\cN}{\mathcal{N}}
\newcommand{\cO}{\mathcal{O}}
\newcommand{\cZ}{\mathcal{Z}}
\newcommand{\mA}{\mathbf{A}}
\newcommand{\mB}{\mathbf{B}}
\newcommand{\mC}{\mathbf{C}}
\newcommand{\mX}{\mathbf{X}}
\newcommand{\mI}{\mathbf{I}}
\newcommand{\mO}{\mathbf{0}}
\newcommand{\mP}{\mathbf{P}}
\newcommand{\mG}{\mathbf{G}}
\theoremstyle{plain}
\newtheorem{theorem}{Theorem}[section]
\newtheorem{lemma}[theorem]{Lemma}
\theoremstyle{definition}
\theoremstyle{remark}
\newcommand{\eqdef}{:=}
\newcommand{\vast}{\bBigg@{4}}
\newtcolorbox{percetronbox}{
  colback=gray!12,
  colframe=gray!35,
  boxrule=0.6pt,
  boxsep=0mm,
  left=2mm,right=2mm,top=1mm,bottom=1mm,
  before skip=6pt, after skip=6pt,
}
\newtcolorbox{mytitlebox}{
  colback=gray!12,
  colframe=gray!35,
  boxrule=0.6pt,          
  boxsep=0mm,             
  left=4mm,               
  right=4mm,              
  top=4mm,                
  bottom=4mm,             
}
\newtcolorbox{propbox}{
  colback=white,       
  colframe=black,      
  boxrule=0.1pt,       
  boxsep=0mm,
  left=2mm,right=2mm,top=1mm,bottom=1mm,
  before skip=6pt, after skip=6pt,
}
\newtcolorbox{figbox}{
  colback=white,       
  colframe=black,      
  boxrule=0.3pt,       
  boxsep=0mm,
  left=4mm,right=4mm,top=0mm,bottom=1mm,
  before skip=6pt, after skip=6pt,
}
\definecolor{mydarkgreen}{RGB}{39,130,67}
\definecolor{mydarkorange}{RGB}{236,147,14}
\definecolor{mydarkred}{RGB}{192,47,25}
\definecolor{blue}{RGB}{0,0,255}
\definecolor{mygray}{RGB}{120,120,120}
\newcommand{\colorref}[1]{{\hyperref[#1]{\color{mygray}\ref*{#1}}}}
\newcommand{\coloreqref}[1]{{\hyperref[#1]{\color{mygray}(\ref*{#1})}}}
\DeclareSymbolFont{extraup}{U}{zavm}{m}{n}
\DeclareMathSymbol{\varheart}{\mathalpha}{extraup}{86}
\DeclareMathSymbol{\vardiamond}{\mathalpha}{extraup}{87}
\providecommand\theHALG@line{\thealgorithm.\arabic{ALG@line}}
\newcommand{\hatgamma}{\gamma}
\newcommand{\hatmu}{\mu}
\newcommand{\importantfrac}{\frac{93 \gamma \mu}{50}}
\icmltitlerunning{Gradient Descent as a Perceptron Algorithm: Understanding Dynamics and Implicit Acceleration}
\newcommand{\thetastart}{w}
\begin{document}

\newread\myreadconst
\openin\myreadconst=result_const_new_2_1.txt
\read\myreadconst to \constformulapropose
\read\myreadconst to \constformulaoneone
\read\myreadconst to \constformulaonetwo
\read\myreadconst to \constformulaonethree
\read\myreadconst to \constformulaonefinal
\read\myreadconst to \constformulatwoone
\read\myreadconst to \constformulatwotwo
\read\myreadconst to \constformulatwothree
\read\myreadconst to \constformulatwofour
\read\myreadconst to \constformulatwofinal
\read\myreadconst to \constformulathreeone
\read\myreadconst to \constformulathreetwo
\read\myreadconst to \constformulathreethree
\read\myreadconst to \constformulathreefour
\read\myreadconst to \constformulathreefinal
\read\myreadconst to \constformulafourone
\read\myreadconst to \constformulafourtwo
\read\myreadconst to \constformulafourfinal
\read\myreadconst to \constformulafiveone
\read\myreadconst to \constformulafivetwo
\read\myreadconst to \constformulafivefinal
\read\myreadconst to \constAdis
\read\myreadconst to \constadis
\read\myreadconst to \constbdis
\closein\myreadconst

\newread\myreadconst
\openin\myreadconst=result_const_new_1_2.txt
\read\myreadconst to \constformulaproposenew
\read\myreadconst to \constformulaoneonenew
\read\myreadconst to \constformulaonetwonew
\read\myreadconst to \constformulaonethreenew
\read\myreadconst to \constformulaonefinalnew
\read\myreadconst to \constformulatwoonenew
\read\myreadconst to \constformulatwotwonew
\read\myreadconst to \constformulatwothreenew
\read\myreadconst to \constformulatwofournew
\read\myreadconst to \constformulatwofinalnew
\read\myreadconst to \constformulathreeonenew
\read\myreadconst to \constformulathreetwonew
\read\myreadconst to \constformulathreethreenew
\read\myreadconst to \constformulathreefournew
\read\myreadconst to \constformulathreefinalnew
\read\myreadconst to \constformulafouronenew
\read\myreadconst to \constformulafourtwonew
\read\myreadconst to \constformulafourfinalnew
\read\myreadconst to \constformulafiveonenew
\read\myreadconst to \constformulafivetwonew
\read\myreadconst to \constformulafivefinalnew
\closein\myreadconst

\twocolumn[
  \icmltitle{Gradient Descent as a Perceptron Algorithm: \\ Understanding Dynamics and Implicit Acceleration}



  \icmlsetsymbol{equal}{*}

  \begin{icmlauthorlist}
    \icmlauthor{Alexander Tyurin}{yyy,xxx}
  \end{icmlauthorlist}

  \icmlaffiliation{yyy}{AXXX, Moscow, Russia}
  \icmlaffiliation{xxx}{Applied AI Institute, Moscow, Russia}

  \icmlcorrespondingauthor{Alexander Tyurin}{alexandertiurin@gmail.com}

  \icmlkeywords{Machine Learning, ICML}

  \vskip 0.3in
]



\printAffiliationsAndNotice{}  

\begin{abstract}
Even for the gradient descent (GD) method applied to neural network training, understanding its \emph{optimization dynamics}, including convergence rate, iterate trajectories, function value oscillations, and especially its \emph{implicit acceleration}, remains a challenging problem. We analyze nonlinear models with the logistic loss and show that the steps of GD reduce to those of \emph{generalized perceptron algorithms} \citep{rosenblatt1958perceptron}, providing a new perspective on the dynamics. This reduction yields significantly simpler algorithmic steps, which we analyze using classical linear algebra tools. Using these tools, we demonstrate on a minimalistic example that the nonlinearity in a two-layer model can provably yield a faster iteration complexity
    $\scalebox{0.92}{$\tilde{\mathcal{O}}(\sqrt{d})$}$
    compared to
    $\scalebox{0.92}{$\Omega(d)$}$
    achieved by linear models, where $\scalebox{0.92}{$d$}$ is the number of features. This helps explain the \emph{optimization dynamics} and the \emph{implicit acceleration} phenomenon observed in neural networks. The theoretical results are supported by extensive numerical experiments. We believe that this alternative view will further advance research on the optimization of neural networks.
\end{abstract}
\section{Introduction}
Consider the gradient descent method applied to minimize the classical logistic regression problem $\textstyle \frac{1}{n} \sum_{i=1}^{n} \log(1 + \exp(-y_i b_i^\top v)),$
where $\{(b_i, y_i)\}_{i = 1}^n$ is a dataset with $b_i \in \R^d$ and $y_i \in \{-1, 1\}$ for all $i \in [n] \eqdef \{1, \dots, n\}$ \citep{bishop2006pattern}. 
Although it is one of the most popular approaches in the machine learning literature, logistic regression with GD is still being explored \citep{wu2024large,tyurin2025logistic,zhang2025minimax,zhanggradient}, as it serves as a foundation for understanding the nonlinear case with neural networks and large language models. 

We take one step further and investigate the training problem with the gradient descent (GD) method
\begin{align}
    \label{eq:gd}\tag{GD}
    \thetastart_{t+1} = \thetastart_{t} - \gamma\nabla f(\thetastart_t),
\end{align}
together with the logistic loss and \emph{nonlinear} models $m\,:\,\R^d \times \R^p \to \R$:
\begin{align}
    \label{eq:logistic_regression_two}
    \textstyle \min\limits_{\thetastart \in \R^p} \Big\{f(w) \eqdef \frac{1}{n} \sum\limits_{i=1}^{n} \log\left(1 + \exp(-y_i m(b_i;\thetastart))\right)\Big\}
\end{align} 
where $\thetastart_{0}$ is a starting point and $\gamma > 0$ is a step size. In the case of the logistic regression problem, the model is linear and defined as
\tcbhighmath{m_{\textnormal{lin}}(b;v) \eqdef b^\top v}
,where $v \in \R^d$ and $b \in \R^d.$ 
However, the main object of our interest is the two-layer model $m_{\textnormal{two}}(b; \mC, v) \eqdef (\mC b)^\top v,$ where $\mC \in \R^{f \times d}$ and $v \in \R^f$ are the weights, and, in particular, its special case
\begin{percetronbox}
\vspace{-0.4cm}
\begin{align}
\label{eq:nonlinear}
m_{\textnormal{cv}}(b; c, v) \eqdef (c \ast b)^\top v, \,\, c \in \R^k, \,\, v, b \in \R^d,
\end{align}
\end{percetronbox}
where we take the features $b \in \R^d$ and apply the convolution operation\footnote{\label{foot:conv}The convolution operation $c,b \mapsto c \ast b$ returns the vector $y \in \R^d$ such that the $i$\textsuperscript{th} coordinate equals $[y]_i = \sum_{j=1}^k [b]_{i - j + 1} [c]_{j}$ for all $i \in [d],$ where $[b]_{0} \equiv [b]_{d}, [b]_{-1} \equiv [b]_{d - 1},$ and so forth.} with the kernel $c \in \R^k$ of size $k \in \N$ before multiplying by a vector $v \in \R^d$ (throughout the paper, we primarily focus on the case $k = 2$). These nonlinear models can be interpreted as two-layer neural networks without an activation. The importance of $m_{\textnormal{cv}}$ model and our focus on it are explained in Section~\ref{sec:exp_linear_model}.

Instead of considering a fully general nonlinear model, we take a relatively small step by adding a small convolutional kernel to the linear model and investigate how this modification affects the behavior of GD. Only afterwards, we discuss extensions to larger nonlinear models in Section~\ref{sec:larger_kernel_sizes}.
\vspace{-0.3cm}
\subsection{Observing the dynamics and implicit acceleration experimentally}
\label{sec:exp_linear_model}
\begin{figure*}[h]
  \centering
  \begin{figbox}
    \begin{subfigure}[t]{0.44\textwidth}
      \centering
      \begin{tikzpicture}
        \node[inner sep=0] (fig)
          {\includegraphics[width=\textwidth]{./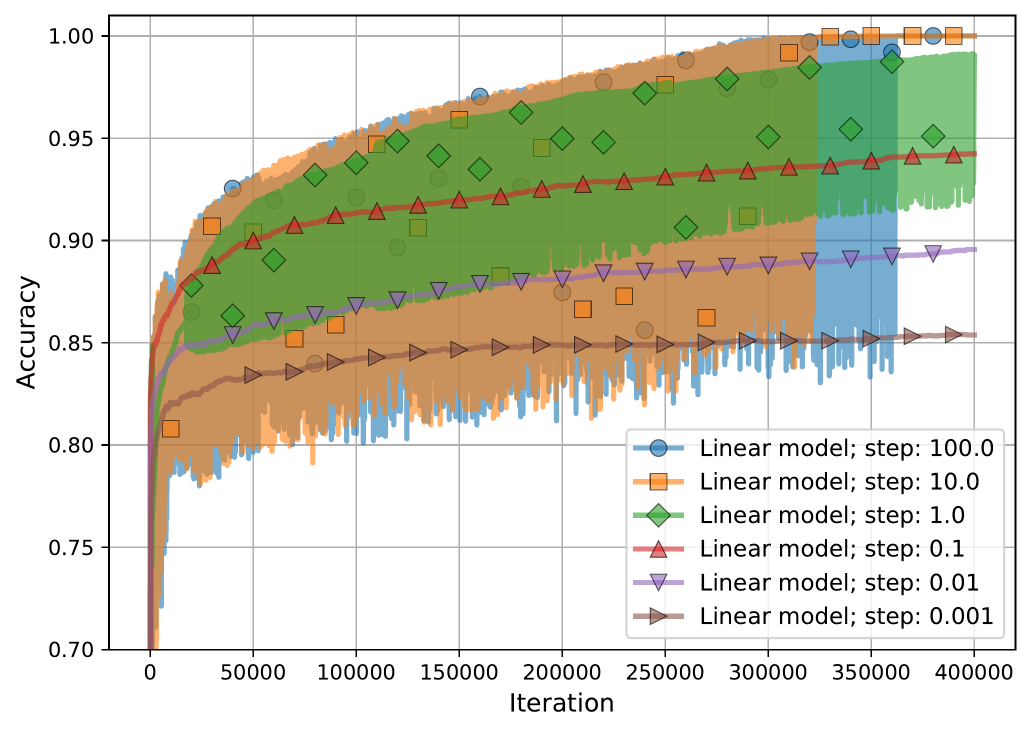}};
        \draw[<->, thick]
          ([yshift=-2mm, xshift=5.7cm]fig.north west) --
          ([yshift=-2mm, xshift=1.05cm]fig.north west)
          node[midway, above]{330K iters};
      \end{tikzpicture}
      \caption{Linear model $m_{\textnormal{lin}}$}
      \label{fig:linear_model_cifar10_0,1_5000}
    \end{subfigure}
    \hfill
    \begin{subfigure}[t]{0.44\textwidth}
      \centering
      \begin{tikzpicture}
        \node[inner sep=0] (fig)
          {\includegraphics[width=\textwidth]{./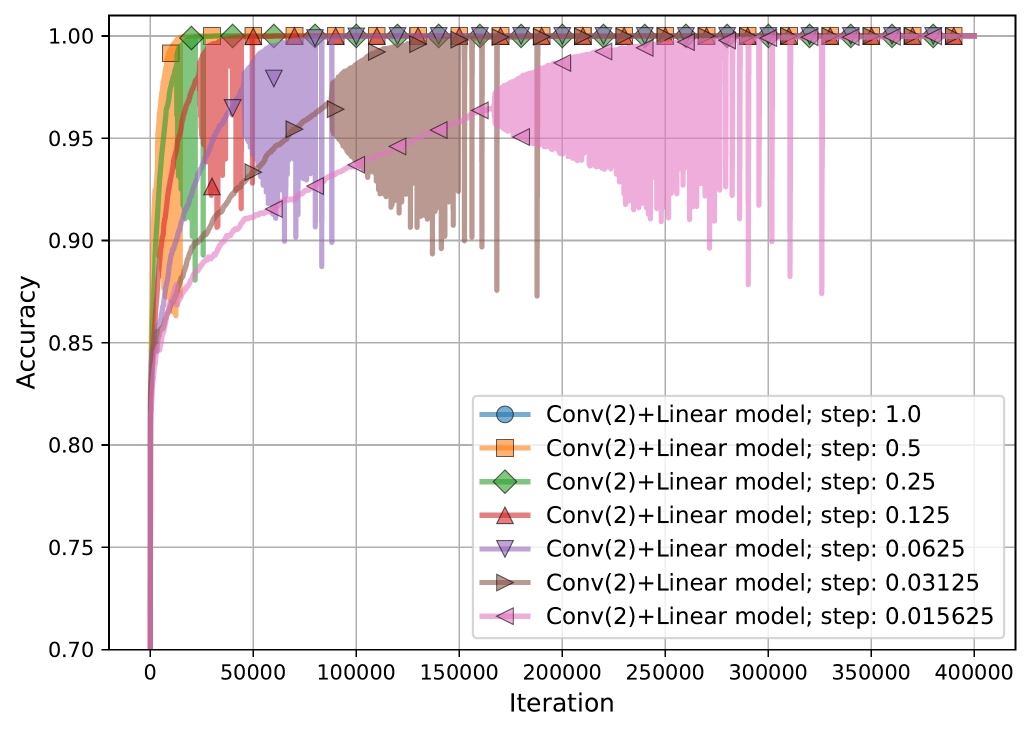}};
        \draw[<->, thick]
          ([yshift=-2mm, xshift=1.3cm]fig.north west) --
          ([yshift=-2mm, xshift=1.05cm]fig.north west)
          node[midway, above]{20K iters};
      \end{tikzpicture}
      \caption{Non-linear model $m_{\textnormal{cv}}$ with kernel size $k = 2$}
      \label{fig:conv_linear_model_cifar10_0,1_5000}
    \end{subfigure}

    \caption{Comparison of accuracies for linear model $m_{\textnormal{lin}}$ and nonlinear model $m_{\textnormal{cv}}$ trained on CIFAR-10 with classes $0$ and $1$, and \# of samples $n=5000.$ The loss values are in Fig.~\ref{fig:cifar10_1} and \ref{fig:cifar10_2}. 
    \emph{The main observation is that, surprisingly, the nonlinear model $m_{\textnormal{cv}}$ converges much faster than the linear model $m_{\textnormal{lin}}.$ See discussion in Section~\ref{sec:exp_linear_model}.}
    }
    \label{fig:linear_vs_conv_linear}
  \end{figbox}
\end{figure*}
We begin with a simple experiment, considering 
\eqref{eq:logistic_regression_two}
with linear model $m_{\textnormal{lin}}$ and samples from classes $0$ and $1$ of the CIFAR-10 dataset \citep{krizhevsky2009learning}. 
We randomly subsample $2500$ samples from each class (in Section~\ref{sec:extra_nonlinear}, we examine other datasets, sample sizes, and classes, and provide additional implementation details). We run GD with step sizes $\gamma \in \{10^{-3}, \dots, 10^{2}\}$, as shown in Figure~\ref{fig:linear_model_cifar10_0,1_5000}. 

In Figure~\ref{fig:linear_model_cifar10_0,1_5000}, GD with the \emph{linear model} $m_{\textnormal{lin}}$ and the optimal step size $\gamma = 10.0$ converges non-monotonically, finding a separator between the two classes, that is, a vector $v$ that perfectly classifies the samples, after about \textbf{330K iterations}. Next, we repeat exactly the same experiment with the same subset of samples but using \emph{nonlinear model} $m_{\textnormal{cv}}$ from \eqref{eq:nonlinear} with kernel size $k = 2$ and step sizes $\gamma \in \{2^{-6}, \dots, 2^{0}\}$, and observe a surprising result in Figure~\ref{fig:conv_linear_model_cifar10_0,1_5000}: the nonlinear model \eqref{eq:nonlinear} finds the separator after around \textbf{20K iterations}, more than \textbf{15 times faster than the linear model.}

Notice that $m_{\textnormal{cv}}$ is a reparameterization of $m_{\textnormal{lin}}.$ Indeed, for all $c \in \R^k$ and $v \in \R^d$, there exists a vector $\bar{v}$ such that $(c \ast b)^{\top} v = b^\top \bar{v}$ for all $b \in \R^d$, since both the convolution and inner product operations are linear operations. 

Our first hypothesis was that there might be something special about CIFAR-10 and the chosen classes. Therefore, we repeated this experiment with other popular datasets, classes, and sample sizes, and observed the same effect in almost all cases (see Section~\ref{sec:extra_nonlinear}). Somehow, although $m_{\textnormal{cv}}$ is just a reparameterization of $m_{\textnormal{lin}}$, the nonlinearity in \eqref{eq:nonlinear} can significantly accelerate the training speed of GD with the logistic loss.

\subsection{Previous work: non-stable convergence of the linear model $m_{\textnormal{lin}}$}
Before we start exploring the reasons behind the acceleration of the nonlinear model, it is important to recall what we know about the logistic regression problem with linear model $m_{\textnormal{lin}}.$ Notice an interesting and relatively little-known behavior of GD on a separable dataset 
in Figures~\ref{fig:linear_model_cifar10_0,1_5000} and \ref{fig:cifar10_1}. If the step size $\gamma$ is small, the method converges monotonically, in accordance with the classical convex optimization theory \citep{nesterov2018lectures}. However, beyond a particular threshold, it enters a \emph{non-stable} regime in which GD converges non-monotonically. \emph{Notice that there is no randomness in the method}: we run deterministic GD and calculate the full gradients without sampling. Oscillations naturally appear with separable data and large step sizes in GD. 

Using slightly different techniques, \citet{tyurin2025logistic,zhang2025minimax,zhanggradient} explored this phenomenon, and showed that GD converges with arbitrary large step sizes. \citet{tyurin2025logistic} noticed that 
GD with linear model $m_{\textnormal{lin}}$ and $\gamma \to \infty$ reduces to a batch version of the celebrated perceptron algorithm \citep{rosenblatt1958perceptron,block1962perceptron,novikoff1962convergence}:
\begin{theorem}{\citep{tyurin2025logistic}}
    \label{theorem:reduction}
    For $\gamma \to \infty$ and $w_0 = 0,$ logistic regression \eqref{eq:logistic_regression_two} with GD and linear model $m_{\textnormal{lin}}(b;v) = b^\top v$ reduces to the batch perceptron algorithm defined as
    \begin{percetronbox}
    \vspace{-0.3cm}
    \begin{equation}
    \label{eq:batch_perceptron}\tag{Perceptron Algorithm}
    \begin{aligned}
        \textstyle z_{t+1} = z_{t} + \frac{\phi}{n}\sum\limits_{i \in S_t} y_i  b_i, \,\, S_t = \{i \in [n]\,:y_i b_i^\top z_t \leq 0\,\},
    \end{aligned}
    \end{equation}
    \end{percetronbox}
    where $z_{1} = z_0 + \frac{1}{2 n}\sum_{i=1}^n y_i b_i,$ $z_0 = 0,$ and step size $\phi = 1.$ In particular, $w_t / \gamma \overset{\gamma \to \infty}{\to} z_t$ for all $t \geq 1$ and almost all datasets (does not require the dataset to be separable).
\end{theorem}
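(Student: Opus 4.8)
The plan is to make the $\gamma$-dependence of the iterates explicit, rescale by $1/\gamma$, and then pass to the limit $\gamma\to\infty$ by induction on $t$, exploiting that the logistic-loss gradient is assembled from sigmoid factors that saturate to $\{0,1\}$-valued indicators. Write $\sigma(s)\eqdef(1+e^{-s})^{-1}$ for the logistic function. A direct computation gives $\nabla f(w)=-\tfrac{1}{n}\sum_{i=1}^{n} y_i b_i\,\sigma(-y_i b_i^{\top} w)$, so \eqref{eq:gd} with $m_{\textnormal{lin}}(b;v)=b^{\top}v$ reads
\begin{equation*}
  w_{t+1}=w_t+\frac{\gamma}{n}\sum_{i=1}^{n} y_i b_i\,\sigma(-y_i b_i^{\top} w_t).
\end{equation*}
Introducing the rescaled iterates $z_t^{\gamma}\eqdef w_t/\gamma$ (so $z_0^{\gamma}=0$) and dividing by $\gamma$, this becomes $z_{t+1}^{\gamma}=z_t^{\gamma}+\tfrac{1}{n}\sum_{i=1}^{n} y_i b_i\,\sigma(-\gamma\, y_i b_i^{\top} z_t^{\gamma})$, in which $\gamma$ now appears only inside $\sigma$. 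Because $z_0^{\gamma}=0$, at $t=0$ every argument of $\sigma$ vanishes and $\sigma(0)=\tfrac{1}{2}$, so $z_1^{\gamma}=\tfrac{1}{2n}\sum_i y_i b_i=z_1$ for \emph{every} $\gamma$; this reproduces the initialization $z_1=z_0+\tfrac{1}{2n}\sum_i y_i b_i$ of the Perceptron Algorithm.

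The core is an induction on $t$ with hypothesis ``$z_t^{\gamma}\to z_t$ as $\gamma\to\infty$'', the base case $t=1$ having just been verified (with exact equality). Fix $i\in[n]$ and $t\ge 1$, and assume the dataset is non-degenerate in the sense that $y_i b_i^{\top} z_t\ne 0$ (equivalently $b_i^{\top} z_t\ne 0$) — the ``almost all datasets'' caveat, handled below. If $y_i b_i^{\top} z_t>0$, then since $z_t^{\gamma}\to z_t$ there are $\gamma_0,\delta>0$ with $y_i b_i^{\top} z_t^{\gamma}\ge\delta$ for all $\gamma\ge\gamma_0$, whence $0\le\sigma(-\gamma\, y_i b_i^{\top} z_t^{\gamma})\le(1+e^{\gamma\delta})^{-1}\to 0$; symmetrically, $y_i b_i^{\top} z_t<0$ forces $\sigma(-\gamma\, y_i b_i^{\top} z_t^{\gamma})\to 1$. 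Hence $\sigma(-\gamma\, y_i b_i^{\top} z_t^{\gamma})\to\mathbbm{1}[\,y_i b_i^{\top} z_t\le 0\,]=\mathbbm{1}[\,i\in S_t\,]$, and passing to the limit in the rescaled recursion gives $z_{t+1}^{\gamma}\to z_t+\tfrac{1}{n}\sum_{i\in S_t} y_i b_i=z_{t+1}$, which is exactly the \eqref{eq:batch_perceptron} step with $\phi=1$. This closes the induction, proving $w_t/\gamma=z_t^{\gamma}\to z_t$ for all $t\ge 1$ (and trivially for $t=0$).

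I expect the only genuine obstacle to be the degeneracy clause: at a tie $b_i^{\top} z_t=0$ the factor $\sigma$ would contribute $\tfrac{1}{2}$ rather than a clean $0$ or $1$, and the reduction would break. To dispatch it, I would use that for each fixed $t$ the map $(b_1,\dots,b_n)\mapsto z_t$ is piecewise linear: on the (finitely many) regions obtained by prescribing the active sets $S_1,\dots,S_{t-1}$, one has $z_t=\sum_{j=1}^{n}\lambda_j y_j b_j$ with $\lambda_j\in\mathbb{Q}$, and in fact $\lambda_i\ge\tfrac{1}{2n}>0$ for every $i$ (from $\lambda_i^{(1)}=\tfrac{1}{2n}$ and $\lambda_i^{(t+1)}=\lambda_i^{(t)}+\tfrac{1}{n}\mathbbm{1}[i\in S_t]$). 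Consequently, on such a region $b_i^{\top} z_t=\lambda_i y_i\norm{b_i}^{2}+\sum_{j\ne i}\lambda_j y_j\, b_i^{\top} b_j$, viewed as a polynomial in the data, has non-vanishing quadratic part $\lambda_i y_i\norm{b_i}^{2}$, so it is not identically zero and its zero set is Lebesgue-null; a finite union over the regions and over $i\in[n]$, then a countable union over $t\ge 1$, shows the set of datasets admitting any tie has measure zero. With that null set excluded, the saturation argument above applies verbatim — the measure-theoretic bookkeeping, not the limiting step (which is routine once the scaling $z_t^{\gamma}=w_t/\gamma$ is identified), is the part that requires care. Note also that separability is never used, consistent with the theorem's final remark.
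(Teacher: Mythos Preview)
The paper does not actually prove this theorem: it is quoted verbatim from \citep{tyurin2025logistic} and is used only as motivation, so there is no ``paper's own proof'' to compare against. Your argument is correct and is exactly the natural one: rescale by $1/\gamma$, note that the sigmoid factors saturate to indicators whenever the limiting margins $y_i b_i^{\top} z_t$ are non-zero, and induct on $t$ with the special first step $\sigma(0)=\tfrac{1}{2}$ giving $z_1$ exactly. The measure-zero bookkeeping you sketch (finitely many active-set histories for each $t$, each yielding a fixed non-zero polynomial $b_i^{\top} z_t$ in the data, then a countable union over $t$) is also sound. For what it is worth, your approach mirrors the paper's proof of its nonlinear analogue (Theorem~\ref{eq:reducing_to_perceptron}), which uses the same induction-plus-saturation idea with $\norm{w_0}\to\infty$ in place of $\gamma\to\infty$, and handles the ``almost all'' clause via a non-vanishing polynomial argument as well.
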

Under the assumption that $\omega \eqdef \max_{\norm{v} = 1} \min_{i \in [n]} y_i  b_i^\top v > 0,$ it remains to combine Theorem~\ref{theorem:reduction} with the classical arguments from \citep{block1962perceptron,novikoff1962convergence,pattern} to prove that logistic regression with GD and $\gamma \to \infty$ finds a separator after at most $\cO\left(\nicefrac{n R^2}{\omega^2}\right)$ iterations, where $R \eqdef \max_{i \in [n]} \norm{b_i}.$
\emph{Thus, for large $\gamma$, the oscillations in Figure~\ref{fig:linear_model_cifar10_0,1_5000} of the linear model correspond to the steps of \colorref{eq:batch_perceptron}.}

The goal now is to consider the nonlinear model \eqref{eq:nonlinear} and explain how the nonlinear and chaotic dynamics of GD in this setting can lead to the \emph{accelerated non-monotonic} convergence. 
Our initial guess is that the dynamics of the nonlinear model are also connected to generalized perceptron algorithms, which we will see is indeed true.

\subsection{Contributions}
$\spadesuit$ \textbf{Reduction to Quadratic Perceptron Algorithm.} 
Our work begins with the observation of the implicit acceleration and the non-monotonic chaotic dynamics discussed in Section~\ref{sec:exp_linear_model}. To understand them, we start with the key finding that GD with the logistic loss \eqref{eq:logistic_regression_two} and the nonlinear model $m_{\textnormal{cv}}$ reduces to a quadratic extension of the classical perceptron algorithm when the norm of the initial point $w_0$ is large (Section~\ref{sec:from_logistic_to_quadratic}). This reduction leads to significantly simpler algorithmic steps \coloreqref{eq:perceptron_quadratic}, which we analyze using classical linear algebra tools (Sections~\ref{sec:choice_gamma} and \ref{sec:properties}).

$\clubsuit$ \textbf{Provably faster convergence of the nonlinear model.} Using the discovered reduction and properties, we analyze a quadratic perceptron algorithm and prove that it requires \emph{at most} $\tilde\Theta(\nicefrac{{\color{mydarkgreen}\sqrt{d}}}{\mu})$ steps to find a separator on a minimalistic dataset, where $d$ is the feature dimension and $\mu$ is the margin between the samples (Section~\ref{sec:kernel_two}). We also show that the classical perceptron algorithm requires \emph{at least} $\Omega(\nicefrac{\color{red}d}{\mu})$ steps on the same dataset, where the dependence on $d$ is worse. These results highlight and help explain the gap observed in Section~\ref{sec:exp_linear_model} and Figure~\ref{fig:linear_vs_conv_linear}, as both $m_{\textnormal{lin}}$ and $m_{\textnormal{cv}}$ reduce to perceptron algorithms in the sense of Theorems~\ref{theorem:reduction} and~\ref{eq:reducing_to_perceptron}.

$\vardiamond$ \textbf{Extension to multi-layer models.} We extend our result to multi-layer models and show that they can be also reduced to \colorref{eq:generalized_quadratic}. Our theoretical insights are predictive. For instance, we observe that the maximal allowed step size decreases with larger kernel sizes, which can be easily explained through our reduction (Section~\ref{sec:larger_kernel_sizes}).

$\varheart$ \textbf{Numerical experiments.} Our work combines both theoretical and empirical perspectives: each theoretical insight is motivated and corroborated through extensive experiments conducted on different datasets and setups (Sections~\ref{sec:exp_linear_model} and \ref{sec:extra_experiments}).
\section{From Logistic Loss to Quadratic Perceptron Algorithm}
\label{sec:from_logistic_to_quadratic}
We consider the case where the kernel size is $k = 2$ in \eqref{eq:nonlinear}.
Let us take a closer look at one step of GD with $m_{\textnormal{cv}}.$ In Section~\ref{sec:one_step_gd}, we show that it is equivalent to the step
\begin{align}
    \label{eq:gd_model_conv_linear}
    \thetastart_{t+1} = \thetastart_t + \frac{\gamma}{n} \sum\limits_{i=1}^{n} \frac{1}{1 + \exp(\frac{1}{2} \thetastart_t^\top \mA_i \thetastart_t)} \mA_i \thetastart_t,
\end{align}
where $\thetastart_t \eqdef \begin{bmatrix}[c_t]_1& [c_t]_2& v_t^\top\end{bmatrix}^\top \in \R^{d + 2}$ for all $t \geq 0,$
\begin{equation}\label{eq:matrix_a}
    \begin{gathered}
    \textstyle \mA_i \eqdef \begin{bmatrix}
        0 & 0 & y_i b_i^\top\\
        0 & 0 & y_i b_i^\top \mP^\top\\
        y_i b_i & y_i \mP b_i & \mO_{d}\\
    \end{bmatrix} \in \R^{(d + 2) \times (d + 2)}, \\
    \textstyle\mP \eqdef \begin{bmatrix}
        0 & 0 & \cdots & 0 & 1\\
        1 & 0 & \cdots & 0 & 0\\
        \vspace{-0.1cm}0 & 1 & \cdots & 0 & 0\\
        \vdots & \vdots & \ddots & \vdots & \vdots \\
        0 & 0 & \cdots & 1 & 0
    \end{bmatrix} 
    \in \R^{d \times d}
    \end{gathered}
\end{equation}
Here, $\mP$ denotes the standard permutation matrix, which circularly shifts coordinates to the right, and $\mA_i$ is the particular matrix induced by the structure of $(c \ast b)^\top v$. 

At first glance, there is no connection between \eqref{eq:gd_model_conv_linear} and classical perceptron algorithms. Our first idea is to consider the regime where the starting point $\thetastart_0$ has a large norm. Formally, we study \eqref{eq:gd_model_conv_linear} under the assumption that $\norm{\thetastart_0} \to \infty$. Surprisingly, \eqref{eq:gd_model_conv_linear} remains (almost) well-defined in this case and reduces to a generalized perceptron algorithm, as shown by the following theorem.
\begin{propbox}
\begin{restatable}[Reduction to Quadratic Perceptron Algorithm]{theorem}{REDUCINGTOPERCEPTRON}
    \label{eq:reducing_to_perceptron}
    Consider the steps \eqref{eq:gd_model_conv_linear}. Assume that the direction $\theta_0 \eqdef \frac{\thetastart_0}{\norm{\thetastart_0}} \in \R^{d + 2}$ of the starting point $\thetastart_0$ is fixed, $\theta_0^\top \mA_i \theta_0 \neq 0$ for all $i \in [n],$ and $\norm{\thetastart_0} \rightarrow \infty.$ 
    For almost all choices\footnotemark of $\gamma < 1 / \max\limits_{i \in [n]} \norm{\mA_i}$ 
    and for $\norm{w_0} \rightarrow \infty,$ 
    $\frac{\thetastart_{t+1}}{\norm{\thetastart_{t+1}}}$ is well-defined and equals $\frac{\theta_{t+1}}{\norm{\theta_{t+1}}},$ where
    \begin{percetronbox}
    \vspace{-0.4cm}
    \begin{align}
        &\textstyle \theta_{t+1} = \theta_t + \frac{\gamma}{n} \sum\limits_{i \in S_t} \mA_i \theta_t, \nonumber \\
        &\textstyle S_t = \left\{i \in [n]\,:\,\frac{1}{2}\theta_t^\top \mA_i \theta_t \leq 0\right\} \label{eq:perceptron_quadratic}\tag{Quadratic Perceptron Algorithm}
    \end{align}
    \end{percetronbox}
    Moreover, the predictions of $w_t$ and $\theta_t$ are equal, i.e., for $\norm{\thetastart_0} \rightarrow \infty,$ $\textnormal{sign}(m_{\textnormal{cv}}(b_i; w_t)) = \textnormal{sign}(m_{\textnormal{cv}}(b_i; \theta_t))$ for all $i \in [n]$ and $t \geq 0.$
\end{restatable}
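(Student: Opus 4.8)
The plan is to follow the normalization idea behind Theorem~\ref{theorem:reduction}: track the GD iterate rescaled by its \emph{initial} norm, $\tilde\theta_t \eqdef w_t / \norm{w_0}$, and show that as $\norm{w_0}\to\infty$ with the direction $\theta_0 = w_0/\norm{w_0}$ held fixed, $\tilde\theta_t$ converges to the iterate $\theta_t$ of \eqref{eq:perceptron_quadratic}, from which every claim in the statement follows. First I would rewrite \eqref{eq:gd_model_conv_linear} in factored form, $w_{t+1} = \bigl(\mI + \tfrac{\gamma}{n}\sum_{i=1}^n \sigma_{t,i}\,\mA_i\bigr) w_t$ with $\sigma_{t,i} \eqdef \bigl(1 + \exp(\tfrac{1}{2} w_t^\top \mA_i w_t)\bigr)^{-1} \in (0,1)$, and record the scaling identity $w_t^\top\mA_i w_t = \norm{w_t}^2\, \bar u_t^\top \mA_i \bar u_t$ with $\bar u_t \eqdef w_t/\norm{w_t}$. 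The whole argument then rests on one elementary observation: if $\norm{w_t}\to\infty$ and $\bar u_t$ tends to a unit vector $\bar\theta$ with $\bar\theta^\top\mA_i\bar\theta\neq 0$, then $w_t^\top\mA_i w_t\to\pm\infty$ with the sign of $\bar\theta^\top\mA_i\bar\theta$, hence $\sigma_{t,i}\to\mathbbm{1}[\bar\theta^\top\mA_i\bar\theta < 0]$ --- the logistic weight collapses to the indicator of ``$i$ is active''. The assumption $\theta_0^\top\mA_i\theta_0\neq 0$ is exactly what makes this usable at $t=0$, and ``for almost all $\gamma$'' is what will keep it usable for $t\geq 1$.

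I would then run an induction on $t$ with hypothesis: (a) $\tilde\theta_t\to\theta_t$ as $\norm{w_0}\to\infty$; (b) $\theta_t\neq 0$; (c) $\theta_t^\top\mA_i\theta_t\neq 0$ for all $i\in[n]$. The base case is immediate, since $\tilde\theta_0=\theta_0$ and (b), (c) are among the hypotheses of the theorem. For the inductive step, (a) and (b) give $\bar u_t = \tilde\theta_t/\norm{\tilde\theta_t}\to\theta_t/\norm{\theta_t}$ and $\norm{w_t}=\norm{w_0}\norm{\tilde\theta_t}\to\infty$; combined with (c), the elementary observation yields $\sigma_{t,i}\to\mathbbm{1}[\theta_t^\top\mA_i\theta_t<0]=\mathbbm{1}[i\in S_t]$, exactly the active set of \eqref{eq:perceptron_quadratic}. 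Dividing the factored recursion by $\norm{w_0}$ and letting $\norm{w_0}\to\infty$ gives $\tilde\theta_{t+1}=\bigl(\mI+\tfrac{\gamma}{n}\sum_i\sigma_{t,i}\mA_i\bigr)\tilde\theta_t \to \bigl(\mI+\tfrac{\gamma}{n}\sum_{i\in S_t}\mA_i\bigr)\theta_t=\theta_{t+1}$, i.e. (a) at $t+1$. Since $\gamma<1/\max_i\norm{\mA_i}$, we have $\norm{\tfrac{\gamma}{n}\sum_{i\in S_t}\mA_i}\leq\gamma\max_i\norm{\mA_i}<1$, so $\mI+\tfrac{\gamma}{n}\sum_{i\in S_t}\mA_i$ is invertible and $\theta_{t+1}\neq 0$, which is (b) at $t+1$; (c) at $t+1$ is the genericity step, deferred below. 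From this, all conclusions follow: $\norm{w_{t+1}}=\norm{w_0}\norm{\tilde\theta_{t+1}}\to\infty$ and $w_{t+1}/\norm{w_{t+1}}=\tilde\theta_{t+1}/\norm{\tilde\theta_{t+1}}\to\theta_{t+1}/\norm{\theta_{t+1}}$; well-definedness of $w_{t+1}/\norm{w_{t+1}}$ for \emph{every} $w_0\neq 0$ holds because $\mI+\tfrac{\gamma}{n}\sum_i\sigma_{t,i}\mA_i$ is likewise a norm-less-than-one perturbation of $\mI$ (as $\sigma_{t,i}<1$), hence invertible, so $w_{t+1}\neq 0$ whenever $w_t\neq 0$. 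For the prediction claim I would use the bilinear structure already exploited in \eqref{eq:gd_model_conv_linear}, namely $m_{\textnormal{cv}}(b_i;w)=\tfrac{1}{2y_i}w^\top\mA_i w$, so that $\mathrm{sign}(m_{\textnormal{cv}}(b_i;w_t))=y_i\,\mathrm{sign}(\bar u_t^\top\mA_i\bar u_t)$, which tends to $y_i\,\mathrm{sign}(\theta_t^\top\mA_i\theta_t)=\mathrm{sign}(m_{\textnormal{cv}}(b_i;\theta_t))$ by (c) (and holds with equality already at $t=0$, since $w_0$ and $\theta_0$ are parallel).

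The one genuinely delicate point --- and the step I expect to be the main obstacle --- is establishing (c) for $t\geq 1$, which is precisely where ``for almost all $\gamma$'' is consumed: if some $\theta_t^\top\mA_i\theta_t$ vanished, then $w_t^\top\mA_i w_t=\norm{w_0}^2\,o(1)$ would be an indeterminate product and $\sigma_{t,i}$ need not converge, breaking the reduction. I would argue by piecewise-polynomiality: on any $\gamma$-interval on which none of the active sets $S_0,\dots,S_{t-1}$ change, $\theta_t(\gamma)$ is a polynomial in $\gamma$ --- a product $\prod_{s<t}\bigl(\mI+\tfrac{\gamma}{n}\sum_{j\in S_s}\mA_j\bigr)$ applied to $\theta_0$ --- so each $\gamma\mapsto\theta_t(\gamma)^\top\mA_i\theta_t(\gamma)$ is a polynomial, with finitely many roots unless identically zero; building the corresponding finite partition of $(0,1/\max_i\norm{\mA_i})$ by induction on $t$ and deleting these roots leaves a co-null set on which every active set at every step is cut out by strict inequalities, and a countable union over the horizon $t$ gives the statement for all $t$. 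The subtle sub-point is excluding a polynomial $\theta_t(\gamma)^\top\mA_i\theta_t(\gamma)$ that vanishes on an entire interval: on the first piece (a right-neighborhood of $\gamma=0$) this is ruled out by $\theta_t(0)=\theta_0$ and $\theta_0^\top\mA_i\theta_0\neq 0$, and propagating non-degeneracy across the later pieces --- each transition matrix being invertible, so $\theta_t(\gamma)$ is a nowhere-vanishing polynomial vector --- is where the real work lies; the scaling-plus-induction skeleton above is otherwise routine.
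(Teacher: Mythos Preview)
Your proof is correct and matches the paper's approach: induction on $t$, sigmoid-to-indicator via $\norm{w_t}\to\infty$ together with $\theta_t^\top\mA_i\theta_t\neq 0$, and invertibility (indeed positive-definiteness) of $\mI+\tfrac{\gamma}{n}\sum_{i\in S_t}\mA_i$ from $\gamma<1/\max_i\norm{\mA_i}$; your normalization by $\norm{w_0}$ rather than $\norm{w_t}$ is a harmless variant. The genericity step you flag as ``where the real work lies'' is simpler than you fear and requires no piece-to-piece propagation: for \emph{every} fixed sequence of subsets $(S_0,\dots,S_{t-1})\in(2^{[n]})^t$ the formal product $\theta_t(\gamma)=\prod_{s<t}\bigl(\mI+\tfrac{\gamma}{n}\sum_{i\in S_s}\mA_i\bigr)\theta_0$ is a polynomial with $\theta_t(0)=\theta_0$, so $\theta_t(\gamma)^\top\mA_i\theta_t(\gamma)$ is nonzero at $\gamma=0$ and hence not identically zero; the exceptional set is then contained in the union over all such sequences (finitely many for each $t$, countably many over $t$) of the finite root sets, which is null --- this is exactly what the paper's terse line ``at $\gamma=0$, $\theta_{t+1}^\top\mA_i\theta_{t+1}=\theta_t^\top\mA_i\theta_t\neq 0$'' encodes.
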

\end{propbox}
\footnotetext{There is a set of step sizes of measure zero, which depends on $\theta_0,$ where we can not guarantee the statement of the theorem.}
Thus, our initial guess was valid: \textbf{the dynamics of the nonlinear model in Figure~\ref{fig:conv_linear_model_cifar10_0,1_5000} are those of a generalized perceptron algorithm in the regime when $\norm{w_0}$ is large.} Indeed, compare \colorref{eq:batch_perceptron} and \colorref{eq:perceptron_quadratic}. These two methods are very similar in that they construct sets $\{S_t\}$ of samples that violate the models' predictions in every iteration, and only use these samples to make an update. We observe that they can be unified into the following generalized perceptron algorithm:
\begin{percetronbox}
\vspace{-0.4cm}
\begin{align*}
&\textstyle \theta_{t+1} = \theta_t + \frac{\gamma_t}{n} \sum\limits_{i \in S_t} \nabla h_i(\theta_t), \nonumber \\
&\textstyle S_t = \{i \in [n]\,:\,h_i(\theta_t) \leq 0\}, \label{eq:generalized_quadratic}\tag{Generalized Perceptron Algorithm}
\end{align*}
\end{percetronbox}
where $h_i \,:\, \R^p \to \R$ is a transformation associated with sample $i$ under the current model. In the cases of \colorref{eq:batch_perceptron} and \colorref{eq:perceptron_quadratic}, the corresponding transformations for sample $i$ are $z \mapsto y_i b_i^\top z$ and $\theta \mapsto \frac{1}{2}\theta^\top \mA_i \theta,$ respectively. While the convergence analysis of \colorref{eq:batch_perceptron} with the former transformation is well known, since it is essentially a minor variation of the perceptron algorithm, to the best of our knowledge, the latter case with \colorref{eq:perceptron_quadratic} has not been explored.

\textbf{How can the reduction help us?} \colorref{eq:perceptron_quadratic} has arguably much simpler algorithmic steps compared to those of \eqref{eq:gd_model_conv_linear}. In every iteration, the steps are $\theta_{t+1} = \mB_t \theta_t,$ where $\mB_t \eqdef \mI + \frac{\gamma}{n} \sum_{i \in S_t} \mA_i$ is a \emph{symmetric and positive definite matrix} for all $\gamma < 1 / \max_{i \in [n]} \norm{\mA_i}.$ Intuitively, what remains is to understand the properties of $\mA_i$ and use classical tools from linear algebra to analyze the optimization dynamics.

\textbf{Numerical observation of the norms.} When we run GD with $m_{\textnormal{cv}},$ we observe that even if $\norm{w_0} \approx 1,$ the norm of subsequent iterates increases, and the method \eqref{eq:gd_model_conv_linear} \emph{automatically} tends to the regime where $\norm{w_{t_0}}$ is large for some $t_0 \geq 1$ (see Section~\ref{sec:incr_norm}). Thus, the assumption that $\norm{w_0}$ is large is practical up to the first ``warm-up'' steps $t_0$. Alternatively, it is sufficient to initialize the starting point with a large norm for Theorem~\ref{eq:reducing_to_perceptron} to hold. In Section~\ref{sec:large_init}, we run the nonlinear model starting from iterates with large norms and observe similar behavior.
\section{On the Choice of Step Size $\gamma$}
\label{sec:choice_gamma}
The condition in Theorem~\ref{eq:reducing_to_perceptron} that we must take $\gamma < \gamma_{\max} \eqdef 1 / \max_{i \in [n]} \norm{\mA_i}$ is important: the maximal norm of the matrices ${\mA_i}$ determines the largest allowed step size, and we will show that convergence can be guaranteed when $\gamma = \Theta(\gamma_{\max})$. This observation stands in contrast to the classical choice based on the maximal eigenvalue of the Hessian. Let us illustrate the difference:
\begin{restatable}{proposition}{MATRIXLOWERBOUND}
    \label{proposition:matrix_lower_bound}
    Consider the logistic loss with \eqref{eq:nonlinear}. There exists a dataset $\{(y_1, b_1)\}$ of size one such that $\norm{b_1} = 1,$
        $\norm{\mA_1} \leq \sqrt{2},$ $\norm{\nabla^2 f(w)} \geq \frac{1}{20} \norm{w}^2 - \sqrt{2},$ $\frac{\sqrt{2}}{20} \norm{w} \leq \norm{\nabla f(w)} \leq \sqrt{2} \norm{w},$ and $\frac{1}{20} \leq f(w) \leq 5$
    for all $w \in K \eqdef \{\alpha v_1 + \beta v_2 \,:\, 0 \leq \beta \leq \alpha \leq \sqrt{\beta^2 + 1}\},$ where $v_1, v_2 \in \R^{d + 2}$ are orthonormal vectors.
\end{restatable}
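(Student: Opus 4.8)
The plan is to choose the size-one dataset so that the matrix $\mA_1$ has the simplest possible spectrum and then read off all four estimates by working inside the $2$-dimensional subspace on which $\mA_1$ is supported, taking $v_1,v_2$ to be the corresponding eigenvectors. By the identity underlying \eqref{eq:gd_model_conv_linear} (Section~\ref{sec:one_step_gd}), $y_1 m_{\textnormal{cv}}(b_1;w)=\tfrac12 w^\top\mA_1 w$ for all $w$, so for a single sample $f(w)=\log\!\bigl(1+\exp(-\tfrac12 w^\top\mA_1 w)\bigr)$, and differentiating $x\mapsto\log(1+e^{-x})$ gives the closed forms $\nabla f(w)=-p(w)\,\mA_1 w$ and $\nabla^2 f(w)=p(w)(1-p(w))\,(\mA_1 w)(\mA_1 w)^\top-p(w)\,\mA_1$, where $p(w)\eqdef\bigl(1+\exp(\tfrac12 w^\top\mA_1 w)\bigr)^{-1}\in(0,1)$. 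Now take $y_1=1$ and $b_1=\tfrac1{\sqrt d}\mathbf{1}$, so $\norm{b_1}=1$ and $\mP b_1=b_1$; then $c\ast b_1=(c_1+c_2)b_1$ and $\tfrac12 w^\top\mA_1 w=(c_1+c_2)\,b_1^\top v$. Hence $\mA_1$ vanishes on the orthogonal complement of $\mathrm{span}\{\hat e,\hat b\}$, where $\hat e\eqdef\tfrac1{\sqrt2}(e_1+e_2)$ and $\hat b\eqdef(0,0,b_1^\top)^\top$, and on that plane it equals $\left[\begin{smallmatrix}0&\sqrt2\\\sqrt2&0\end{smallmatrix}\right]$. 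Thus $\mA_1=\sqrt2\,(v_1 v_1^\top-v_2 v_2^\top)$ with the orthonormal pair $v_1\eqdef\tfrac1{\sqrt2}(\hat e+\hat b)$, $v_2\eqdef\tfrac1{\sqrt2}(\hat e-\hat b)$, and $\norm{\mA_1}=\sqrt2$, which is the first claim.

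\textbf{Evaluating on $K$.} For $w=\alpha v_1+\beta v_2$ with $0\le\beta\le\alpha\le\sqrt{\beta^2+1}$ one has $\mA_1 w=\sqrt2(\alpha v_1-\beta v_2)$, so $\norm{\mA_1 w}=\sqrt2\norm{w}$, and $\tfrac12 w^\top\mA_1 w=\tfrac1{\sqrt2}\delta$ with $\delta\eqdef\alpha^2-\beta^2\in[0,1]$ on $K$. Therefore $p(w)=(1+e^{\delta/\sqrt2})^{-1}$ is sandwiched between the absolute constants $(1+e^{1/\sqrt2})^{-1}>\tfrac1{20}$ and $\tfrac12$, and $f(w)=\log(1+e^{-\delta/\sqrt2})\in[\log(1+e^{-1/\sqrt2}),\log2]\subset[\tfrac1{20},5]$, proving the bound on $f$. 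Since $\norm{\nabla f(w)}=p(w)\norm{\mA_1 w}=\sqrt2\,p(w)\norm{w}$, the two-sided bound on $p(w)$ (with $p(w)\le 1$) immediately gives $\tfrac{\sqrt2}{20}\norm{w}\le\norm{\nabla f(w)}\le\sqrt2\norm{w}$.

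\textbf{The Hessian lower bound.} Because $\nabla^2 f(w)$ is symmetric with range inside $\mathrm{span}\{v_1,v_2\}$, its operator norm equals that of its $2\times2$ restriction $M$ in the basis $(v_1,v_2)$, for which $\mA_1\mapsto\mathrm{diag}(\sqrt2,-\sqrt2)$ and $\mA_1 w\mapsto(\sqrt2\alpha,-\sqrt2\beta)$; hence $\norm{\nabla^2 f(w)}=\norm{M}\ge|M_{11}|=\bigl|2p(1-p)\alpha^2-\sqrt2\,p\bigr|$. On $K$ we have $\alpha^2\ge\tfrac12\norm{w}^2$ (this is exactly where $\alpha\ge\beta$ is used), $p(1-p)\ge\tfrac1{20}$ from the range of $p$, and $p\le1$, so $|M_{11}|\ge 2p(1-p)\alpha^2-\sqrt2 p\ge p(1-p)\norm{w}^2-\sqrt2\ge\tfrac1{20}\norm{w}^2-\sqrt2$ (and when the right-hand side is negative the inequality holds trivially since $\norm{\nabla^2 f(w)}\ge0$).

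\textbf{Main obstacle.} There is no serious difficulty beyond the right choice of construction; the one point requiring care is the Hessian lower bound, where the rank-one term $p(1-p)(\mA_1 w)(\mA_1 w)^\top$ and the indefinite term $-p\mA_1$ could in principle cancel. Restricting to the $v_1$-coordinate resolves this because there $\mA_1$ contributes only the bounded amount $-\sqrt2\,p\ge-\sqrt2$, while the rank-one term contributes $2p(1-p)\alpha^2=\Theta(\norm{w}^2)$; the constraint $\alpha\ge\beta$ baked into $K$ is precisely what prevents $\alpha^2$ from being small while $\norm{w}$ is large, and it simultaneously keeps $\delta=\alpha^2-\beta^2$ in $[0,1]$ so that $f$ and all sigmoid multipliers stay between absolute constants. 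Everything else is direct substitution.
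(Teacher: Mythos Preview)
Your proposal is correct and follows essentially the same construction and analysis as the paper: the same dataset $b_1=\tfrac1{\sqrt d}\mathbf 1$, the same eigendecomposition $\mA_1=\sqrt2(v_1v_1^\top-v_2v_2^\top)$, and the same bookkeeping of $\delta=\alpha^2-\beta^2\in[0,1]$ on $K$. The only cosmetic difference is in the Hessian step: the paper uses the reverse triangle inequality $\norm{\nabla^2 f}\ge p(1-p)\norm{\mA_1 w}^2-\norm{\mA_1}$ directly (which already captures the full $\norm{w}^2=\alpha^2+\beta^2$), whereas you pass to the $2\times2$ restriction and read off $|M_{11}|$, losing a harmless factor via $\alpha^2\ge\tfrac12\norm{w}^2$; both routes land on the same $\tfrac1{20}\norm{w}^2-\sqrt2$.
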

According to Proposition~\ref{proposition:matrix_lower_bound}, $\norm{\mA_1} \leq \sqrt{2}.$ Thus, we are allowed to take any $\gamma < \norm{\mA_1} \leq \sqrt{2}$ in Theorem~\ref{eq:reducing_to_perceptron}, and the choice \emph{does not} depend on the current iterate $w$. 
There exists $w \in K$ with arbitrarily large $\norm{w}$ such that $\norm{\nabla^2 f(w)} \geq \Omega(\norm{w}^2),$ and $\nabla^2 f(w)$ is neither $L$--smooth for a finite $L$ nor $(L_0, L_1)$--smooth \citep{zhang2019gradient}, since $\norm{\nabla f(w)} = \Theta(\norm{w}),$ nor $(\rho, K_0, K_{\rho})$--smooth \citep{liu2025theoretical}, since $f(w) = \Theta(1).$

In light of this, we arguably need alternative analysis, ones that do not rely on smoothness or bounded Hessians, to explain the non-monotonic and chaotic convergence behavior observed in Figure~\ref{fig:conv_linear_model_cifar10_0,1_5000} when using constant, relatively large step sizes $\gamma$. Our reduction in Theorem~\ref{eq:reducing_to_perceptron} represents one possible direction that helps to explain this convergence.

\section{Properties of Quadratic Perceptron Algorithm}
\label{sec:properties}
We now analyze the matrices $\mA_i$ from \colorref{eq:perceptron_quadratic}, induced by the nonlinear model $m_{\textnormal{cv}}$ \eqref{eq:nonlinear}, and prove the following properties, which we believe are essential to understand the convergence of \colorref{eq:perceptron_quadratic} and the dynamics in Figure~\ref{fig:conv_linear_model_cifar10_0,1_5000}.
\begin{propbox}
\begin{restatable}{proposition}{MAXVALUESMATRIX}
    \label{proposition:max_matrix}
    \leavevmode
    Consider $a \in \R^d$ and \\
        \centerline{$\mA = \begin{bmatrix}
            0 & 0 & a^\top\\
            0 & 0 & a^\top \mP^\top\\
            a & \mP a & \mO_{d}
        \end{bmatrix} \in \R^{(d + 2) \times (d + 2)}.$}

        1. If $a = 0,$ then $\mA$ is the zero matrix with $d + 2$ zero eigenvalues.

        2. If $a \neq 0$ and $a = \mP a,$ then $\mA$ has two non-zero eigenvalues $\sqrt{2} \norm{a}$ and $-\sqrt{2} \norm{a}$ with the corresponding eigenvectors
            $v_1 = \begin{bmatrix}
                \norm{a} &
                \norm{a} &
                \sqrt{2} a^\top
            \end{bmatrix}^\top$ and $v_2 = \begin{bmatrix}
                -\norm{a} &
                -\norm{a} &
                \sqrt{2} a^\top
            \end{bmatrix}^\top.$
        
        3. If $a \neq 0$ and $a = -\mP a,$ then $\mA$ has two non-zero eigenvalues $\sqrt{2} \norm{a}$ and $-\sqrt{2} \norm{a}$ with the corresponding eigenvectors
            $v_1 = \begin{bmatrix}
                \norm{a} &
                -\norm{a} &
                \sqrt{2} a^\top
            \end{bmatrix}^\top$ and $v_2 = \begin{bmatrix}
                -\norm{a} &
                \norm{a} &
                \sqrt{2} a^\top
            \end{bmatrix}^\top.$

        4. If $a \neq 0,$ $a \neq \mP a,$ and $a \neq -\mP a,$ then $\mA$ has four non-zero eigenvalues 
            $\sqrt{\norm{a}^2 + a^\top \mP a},$ $-\sqrt{\norm{a}^2 + a^\top \mP a},$ $\sqrt{\norm{a}^2 - a^\top \mP a},$ $-\sqrt{\norm{a}^2 - a^\top \mP a}$
        with the corresponding eigenvectors\\
            $v_1 = \begin{bmatrix}
                 \bar{a}_{+,\mP} &
                 \bar{a}_{+,\mP} &
                 (a + \mP a)^\top
            \end{bmatrix}^\top,$\\
            $v_2 = \begin{bmatrix}
                - \bar{a}_{+,\mP} &
                - \bar{a}_{+,\mP} &
                 (a + \mP a)^\top
            \end{bmatrix}^\top,$\\
            $v_3 = \begin{bmatrix}
                 \bar{a}_{-,\mP} &
                - \bar{a}_{-,\mP} &
                 (a - \mP a)^\top
            \end{bmatrix}^\top,$\\ and 
            $v_4 = \begin{bmatrix}
                - \bar{a}_{-,\mP} &
                 \bar{a}_{-,\mP} &
                 (a - \mP a)^\top
            \end{bmatrix}^\top,$\\
        where $\bar{a}_{+,\mP} \eqdef \sqrt{\norm{a}^2 + a^\top \mP a}$ and $\bar{a}_{-,\mP} \eqdef \sqrt{\norm{a}^2 - a^\top \mP a}.$
        
        5. $\norm{a} \leq  \norm{\mA} \leq \sqrt{2} \norm{a}.$
\end{restatable}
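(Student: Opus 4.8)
The plan is to exploit the block structure of $\mA$ and reduce the eigenproblem to a $2\times 2$ matrix. First I would record that $\mA$ is symmetric: the $(3,1)$ block $a$ is the transpose of the $(1,3)$ block $a^\top$, and $\mP a$ is the transpose of $a^\top\mP^\top$, so $\norm{\mA}$ equals the largest eigenvalue of $\mA$ in absolute value and all the spectral claims are well posed. Writing a vector of $\R^{d+2}$ as $(x_1,x_2,u)$ with $x_1,x_2\in\R$ and $u\in\R^d$, one computes $\mA(x_1,x_2,u)=(\inp{a}{u},\,\inp{\mP a}{u},\,x_1 a+x_2\mP a)$, so the eigenvalue equation $\mA v=\lambda v$ becomes the system $\inp{a}{u}=\lambda x_1$, $\inp{\mP a}{u}=\lambda x_2$, $x_1 a+x_2\mP a=\lambda u$.

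For $\lambda\neq 0$ the third equation forces $u=\lambda^{-1}(x_1 a+x_2\mP a)\in\mathrm{span}\{a,\mP a\}$; substituting into the first two and using that $\mP$ is an orthogonal (permutation) matrix, so $\norm{\mP a}=\norm{a}$, I obtain
\[
\begin{bmatrix}\norm{a}^2 & s\\ s & \norm{a}^2\end{bmatrix}\begin{bmatrix}x_1\\ x_2\end{bmatrix}=\lambda^2\begin{bmatrix}x_1\\ x_2\end{bmatrix},\qquad s\eqdef\inp{a}{\mP a}.
\]
The $2\times 2$ matrix has eigenvalues $\norm{a}^2+s$ with eigenvector $(1,1)$ and $\norm{a}^2-s$ with eigenvector $(1,-1)$, both nonnegative by Cauchy--Schwarz $|s|\le\norm{a}\,\norm{\mP a}=\norm{a}^2$. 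Hence every nonzero eigenvalue of $\mA$ lies in $\{\pm\sqrt{\norm{a}^2+s},\,\pm\sqrt{\norm{a}^2-s}\}$, and for each admissible $(x_1,x_2)$ the vector $u=\lambda^{-1}(x_1 a+x_2\mP a)$ completes the eigenvector; rescaling by $\lambda$ yields exactly $v_1,\dots,v_4$ of Part~4 with $\bar a_{+,\mP}=\sqrt{\norm{a}^2+s}$ and $\bar a_{-,\mP}=\sqrt{\norm{a}^2-s}$. Parts~1--4 are then the obvious specializations: $a=0$ gives $\mA=0$; $a=\mP a$ gives $s=\norm{a}^2$, so $\norm{a}^2-s=0$ and only $\pm\sqrt{2}\norm{a}$ survive, with $a+\mP a=2a$ and $a-\mP a=0$ (Part~2); symmetrically $a=-\mP a$ gives $s=-\norm{a}^2$ (Part~3); and the generic case $a\neq 0$, $a\neq\pm\mP a$ is Part~4. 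I would also double-check each listed $v_j$ by plugging it directly into $\mA v=\lambda v$ as a cross-check, and note that since $\mA$ is symmetric, the $v_j$'s together with $\ker\mA$ form a full eigenbasis, so no nonzero eigenvalue has been missed.

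To count the nonzero eigenvalues I would argue via $\mathrm{rank}(\mA)=\dim\mathrm{range}(\mA)$: the range equals $\{(\inp{a}{u},\inp{\mP a}{u},0):u\in\R^d\}+\mathrm{span}\{(0,0,a^\top)^\top,(0,0,(\mP a)^\top)^\top\}$, which has dimension $2$ when $a\neq 0$ and $a,\mP a$ are parallel (Parts~2--3; dimension $0$ when $a=0$) and dimension $4$ otherwise (Part~4). Equivalently, in Part~4 the four vectors $v_1,\dots,v_4$ are pairwise orthogonal, since $(1,1)\perp(1,-1)$ and $\inp{a+\mP a}{a-\mP a}=\norm{a}^2-\norm{\mP a}^2=0$, and none is zero because $\bar a_{\pm,\mP}>0$ strictly (strict Cauchy--Schwarz, as $a\neq\pm\mP a$) and $a\pm\mP a\neq 0$; hence they are linearly independent, so $\mA$ has exactly four nonzero eigenvalues. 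Finally, Part~5 follows because $\norm{\mA}$ is the largest $|\lambda|$, namely $\sqrt{\norm{a}^2+|s|}$, and $0\le|s|\le\norm{a}^2$ gives $\norm{a}\le\sqrt{\norm{a}^2+|s|}\le\sqrt{2}\,\norm{a}$ (the case $a=0$ being trivial).

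I expect the main obstacle to be organizational rather than deep: carefully handling the degenerate sub-cases (when $s=0$ two of the four eigenvalues coincide; when $a=\pm\mP a$ two of the eigenvectors degenerate), ensuring the reduction to the $2\times 2$ problem is genuinely complete (which is exactly where $\lambda\neq 0\Rightarrow u\in\mathrm{span}\{a,\mP a\}$ is essential), and reconciling the $1/\lambda$ scaling with the precise normalizations of $v_1,\dots,v_4$ stated in the proposition.
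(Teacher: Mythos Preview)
Your proposal is correct and, in substance, covers everything the paper's proof does. The paper's own argument is much terser: it observes that $\mathrm{rank}(\mA)\le 4$ (dropping to $\le 2$ when $a=\pm\mP a$ and to $0$ when $a=0$), then simply states the claimed eigenpairs and says ``one can easily check it'' using $\norm{\mP x}=\norm{x}$, and finishes Part~5 with $|a^\top\mP a|\le\norm{a}^2$. Your route is a genuine derivation rather than a verification: you reduce the nonzero spectrum to the $2\times2$ matrix $\begin{bmatrix}\norm{a}^2 & s\\ s & \norm{a}^2\end{bmatrix}$ with $s=a^\top\mP a$, which \emph{produces} the eigenvalues $\pm\sqrt{\norm{a}^2\pm s}$ and the eigenvectors rather than just confirming them, and your orthogonality argument for $v_1,\dots,v_4$ is a cleaner way to certify there are exactly four nonzero eigenvalues in the generic case. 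The two approaches use the same ingredients (block structure, orthogonality of $\mP$, Cauchy--Schwarz), so there is no real divergence; your version simply fills in what the paper leaves implicit.
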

\end{propbox}
Proposition~\ref{proposition:max_matrix} uncovers the low-rank spectrum of $\{\mA_i\},$ which we will use later in the analysis.

\textbf{Quadratic Perceptron Algorithm does not converge in general.} It is possible to prove that \colorref{eq:batch_perceptron} converges for any separable and any starting point \citep{pattern,tyurin2025logistic}. However, using Proposition~\ref{proposition:max_matrix}, we can show it is not the case for \colorref{eq:perceptron_quadratic}:
\begin{restatable}{theorem}{THEOREMNEGONE}
    For any dataset with the number of samples $n < (d + 2) / 4$, there exists a subspace $V$ of at least dimension $(d + 2) - 4n$ from which \colorref{eq:perceptron_quadratic} does not converge.
\end{restatable}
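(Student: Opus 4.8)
The whole statement comes out of the low-rank structure recorded in Proposition~\ref{proposition:max_matrix}. The first step is to note that every $\mA_i$ has rank at most $4$: by items~2--4 of Proposition~\ref{proposition:max_matrix} each $\mA_i$ has at most four nonzero eigenvalues (and is the zero matrix in the degenerate case $a=0$), so $\dim\ker\mA_i \ge (d+2)-4$. Since $\mA_i$ is symmetric, $\ker\mA_i = (\operatorname{range}\mA_i)^\perp$, and I would define
\[
  V \eqdef \bigcap_{i=1}^{n}\ker\mA_i \;=\; \Big(\sum_{i=1}^{n}\operatorname{range}\mA_i\Big)^{\!\perp}.
\]
Because codimensions are subadditive under intersection, $\operatorname{codim} V \le \sum_{i=1}^{n}\operatorname{codim}(\ker\mA_i) \le 4n$, hence $\dim V \ge (d+2)-4n$, which is $\ge 1$ exactly under the hypothesis $n<(d+2)/4$; in particular $V$ contains nonzero vectors.

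Next I would show that \eqref{eq:perceptron_quadratic} is completely frozen once initialized inside $V$. Fix any $\theta_0\in V$. Then $\mA_i\theta_0=0$ for every $i\in[n]$, so $\tfrac12\theta_0^\top\mA_i\theta_0=0\le 0$ for all $i$; therefore $S_0=[n]$ and the update gives $\theta_1 = \theta_0 + \tfrac{\gamma}{n}\sum_{i\in[n]}\mA_i\theta_0 = \theta_0$. An immediate induction yields $\theta_t=\theta_0$ and $S_t=[n]$ for every $t\ge 0$, and this holds for every step size $\gamma$ and every dataset $\{(b_i,y_i)\}_{i=1}^{n}$ (separable or not).

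Finally I would translate stagnation into non-convergence. Writing $\theta = \begin{bmatrix}[c]_1 & [c]_2 & v^\top\end{bmatrix}^\top$, one has $\tfrac12\theta^\top\mA_i\theta = y_i\,(c\ast b_i)^\top v = y_i\,m_{\textnormal{cv}}(b_i;\theta)$ (the same identity already used to derive \eqref{eq:gd_model_conv_linear} and \eqref{eq:perceptron_quadratic}), so the event $S_t=\emptyset$ is precisely ``$\theta_t$ separates the data''. Since $S_t=[n]\neq\emptyset$ for all $t$ when $\theta_0\in V$, the iterates never classify all samples correctly: starting anywhere in $V$, the algorithm remains pinned at a non-separator no matter how long it runs, even on a separable dataset. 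This exhibits the claimed $V$ with $\dim V\ge (d+2)-4n$. I do not expect any genuine obstacle here: there is no nontrivial computation, just the dimension count plus the rank bound of Proposition~\ref{proposition:max_matrix}. The only point needing care is the reading of ``does not converge'': the iterates do have a limit (they are constant), so the intended meaning is ``fails to reach a separator / make progress'', which is the natural notion for perceptron-type methods and matches the usage in the paragraph preceding the statement.
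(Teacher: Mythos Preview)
Your proposal is correct and follows essentially the same approach as the paper: define $V=\bigcap_{i=1}^n\ker\mA_i$, use Proposition~\ref{proposition:max_matrix} together with the rank--nullity theorem to bound $\dim V\ge(d+2)-4n$, and observe that any $\theta_0\in V$ is a fixed point of \eqref{eq:perceptron_quadratic} which never separates the data. Your write-up is in fact a bit more explicit than the paper's (you spell out $S_0=[n]$ and discuss the intended meaning of ``does not converge''), but there is no substantive difference in the argument.
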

The result follows from the fact that the matrices $\mA_i$ have low ranks; thus, there exists a subspace $V$ such that $\mA_i \theta_0 = 0$ for all $\theta_0 \in V,$ and \colorref{eq:perceptron_quadratic} does not move. Nevertheless, the measure of subspace $V$ is zero since $V$ is a strict subspace of $\R^{d+2}.$
Moreover, when we tested this result in practice, we observed that \colorref{eq:perceptron_quadratic} converges even if we start from the subspace $V.$ The reason is that the numerical computations of convolutions, matrix multiplications, and other operations are not precise and allow the algorithm to leave the subspace $V.$ \emph{The subspace $V$ is not stable.} This observation motivates us to consider the following algorithm:
\begin{percetronbox}
\vspace{-0.4cm}
\begin{align}
    &\textstyle \theta_{t+1} = \theta_t + \frac{\gamma}{n} \sum\limits_{i \in S_t} \mA_i \theta_t + \xi_t, \nonumber \\
    &\textstyle S_t = \left\{i \in [n]\,:\,\frac{1}{2}\theta_t^\top \mA_i \theta_t \leq 0\right\}, \quad \xi_t \sim \cN(0, \sigma^2\mI_{d+2}), \label{eq:perceptron_quadratic_noise}\tag{Quadratic Perceptron Algorithm with Noise}
\end{align}
\end{percetronbox}
where $\xi_t$ are i.i.d.~random variables from the normal distribution $\cN(0, \sigma^2\mI_{d+2})$ and $\sigma^2$ is a small variance. 
Note that these random perturbations are introduced for theoretical purposes. In practice, however, random perturbations naturally arise due to numerical errors, making \colorref{eq:perceptron_quadratic_noise} and \colorref{eq:perceptron_quadratic} effectively equivalent.

\section{Convergence Guarantees on Minimalistic Example}
\label{sec:kernel_two}
We now present a comparison between the classical perceptron and quadratic perceptron approaches, and consider the following simple dataset with two samples:
\begin{align}
    &a_1 \eqdef y_1 b_1, \,\, b_1 = \begin{bmatrix}
        1, 1, \dots, 1
    \end{bmatrix}^\top \in \R^d, \textnormal{ and } y_1 = 1; \label{eq:two_dataset} \\
    &a_2 \eqdef y_2 b_2, \,\, b_2 = \begin{bmatrix}
        1 + \mu, 1, \dots, 1
    \end{bmatrix}^\top \in \R^d, \textnormal{ and } y_2 = -1, \nonumber
\end{align}
where $\mu > 0$ is a margin. The corresponding matrices from \eqref{eq:matrix_a} are $\mA_1$ and $\mA_2.$
Notice that the features $b_1$ and $b_2$ are close to each other, and their distance is controlled by the parameter $\mu > 0$: the smaller the $\mu$, the more difficult it is for an algorithm to find a separator between $b_1$ and $b_2.$
Consider a non-batch version of \colorref{eq:perceptron_quadratic_noise} with two samples:
\begin{percetronbox}
\vspace{-0.4cm}
\begin{align}
    &\textstyle \theta_{t+1} = \nonumber \\
    &\begin{cases}
        \textstyle \theta_t + \gamma \mA_1 \theta_t + \xi_t, & \text{if } \theta_t^\top \mA_1 \theta_t \leq 0, \\
        \textstyle \theta_t + \gamma \mA_2 \theta_t + \xi_t, & \text{if } \theta_t^\top \mA_1 \theta_t > 0 \text{ and } \theta_t^\top \mA_2 \theta_t \leq 0,
    \end{cases} \nonumber \\
    &\qquad\qquad\qquad\qquad\qquad\qquad\qquad \xi_t \sim \cN(0, \sigma^2\mI_{d+2}), \nonumber \\
    & \label{eq:perceptron_quadratic_noise_two} \tag{Two-Sample Quadratic Perceptron Algorithm}
\end{align}
\end{percetronbox}
\begin{figure*}[t]
    \centering
    \includegraphics[width=\textwidth]{./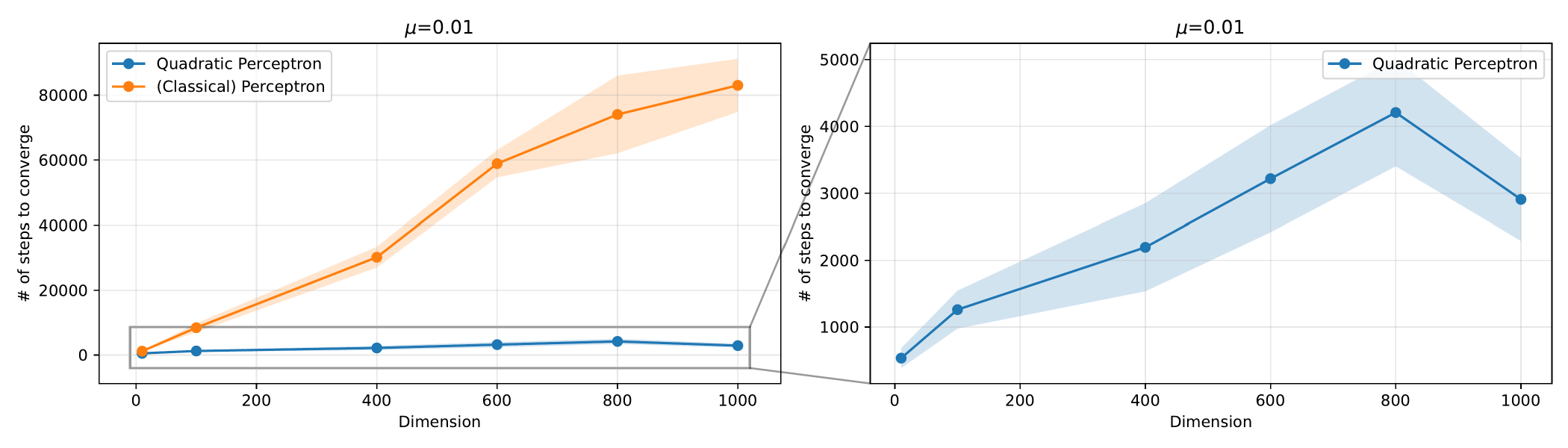}
    \caption{\colorref{eq:batch_perceptron} and \colorref{eq:perceptron_quadratic_noise_two} on dataset \eqref{eq:two_dataset}.}
    \label{eq:perceptron_vs_quad_perceptron}
\end{figure*}
which is executed until $\theta_t^\top \mA_1 \theta_t > 0$ and $\theta_t^\top \mA_2 \theta_t > 0.$ As in the classical perceptron algorithm, if the first sample is correctly classified, we update the point using that sample; otherwise, we use the other sample. To simplify the analysis, compared to \colorref{eq:perceptron_quadratic_noise}, when both samples are misclassified, we consider only the first sample in the update.
\newcommand{\choiceT}[1][]{
    \begin{align*}
        \textstyle T = \left\lceil B \times \frac{\color{mydarkgreen}\sqrt{d}}{\mu}\right\rceil 
        = \tilde\Theta\left(\frac{{\color{mydarkgreen}\sqrt{d}}}{\mu}\right),
    \end{align*}
    where
    $B \eqdef A \log\left(\frac{{\sqrt{d}}}{\mu} A\right) = \tilde\Theta\left(1\right),$
    $A \eqdef 2^{27} \log \left(\frac{2^{62} \norm{\theta_0}^2}{\rho^3 \inp{\theta_{0}}{v_{\mu,+}}^2}\right) \log\left(\frac{2^{15} \sqrt{d} \norm{\theta_0}^2}{\mu \inp{\theta_{0}}{v_{\mu,+}}^2}\right) = \tilde\Theta\left(1\right)#1$
    }
Analyzing \colorref{eq:perceptron_quadratic_noise_two}, 
we can prove the following convergence rate:
\begin{propbox}
\begin{restatable}[Upper Bound for \colorref{eq:perceptron_quadratic_noise_two}]{theorem}{MAINTHEOREM}
    \label{thm:main}
    Consider \colorref{eq:perceptron_quadratic_noise_two} on the dataset \eqref{eq:two_dataset}. With probability greater than or equal to $1 - \rho,$
    the required number of steps to find a separator is at most \choiceT[,]
    if $\gamma = \frac{1}{4 \sqrt{d}},$ $\mu \leq \frac{1}{10},$ $\sigma = \abs{\inp{\theta_{0}}{v_{\mu,+}}} / (4096 T \sqrt{\max\{d, \log (T / \rho)\}}),$ and $\inp{\theta_{0}}{v_{\mu,+}} \neq 0.$
\end{restatable}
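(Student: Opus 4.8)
The plan is a Novikoff-type potential argument, but carried out in the eigenbasis of $\mA_2$ rather than the original coordinates, exploiting the spectral structure that Proposition~\ref{proposition:max_matrix} gives for the dataset \eqref{eq:two_dataset}. Since $\mP$ fixes the all-ones vector we have $a_1 = \mP a_1$, so by part 2 of Proposition~\ref{proposition:max_matrix} the matrix $\mA_1$ has rank two with eigenvalues $\pm\sqrt{2d}$; and since $a_2 \neq \pm\mP a_2$, part 4 applies and $\mA_2$ has rank four, with ``large'' eigenvalues $\pm\sqrt{2d + 4\mu + \mu^2}$ and, crucially, ``small'' eigenvalues $\pm\mu$, whose unit eigenvectors $v_{\mu,+}, v_{\mu,-}$ are explicit (proportional to $(1,-1,a_2-\mP a_2)$ and $(-1,1,a_2-\mP a_2)$, with $a_2 - \mP a_2 = (-\mu,\mu,0,\dots,0)$). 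I would first record the two structural facts that drive everything: (a) $v_{\mu,+}$ and $v_{\mu,-}$ are orthogonal to $\operatorname{range}(\mA_1)$ --- a one-line computation --- so an $\mA_1$-step leaves $\inp{\theta_t}{v_{\mu,\pm}}$ unchanged up to the noise, whereas an $\mA_2$-step multiplies $\inp{\theta_t}{v_{\mu,+}}$ by $1+\gamma\mu$ and $\inp{\theta_t}{v_{\mu,-}}$ by $1-\gamma\mu$; and (b) the large eigenvectors of $\mA_1$ and $\mA_2$ are nearly anti-parallel (the relevant pair of unit eigenvectors has inner product $-1 + O(\mu^2/d)$), so the large-eigenvalue part of any $\theta$ can make at most one of $\theta^\top\mA_1\theta$, $\theta^\top\mA_2\theta$ positive --- a separator is therefore impossible without a substantial component along $v_{\mu,+}$.

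Next I would reduce the dimension. Let $U = \operatorname{range}(\mA_1) + \operatorname{range}(\mA_2)$, a subspace of dimension at most six. Both classification tests depend only on $P_U\theta_t$, and the drift $\gamma\mA_j\theta_t$ lies in $U$; hence $P_{U^\perp}\theta_t = P_{U^\perp}\theta_0 + \sum_{s<t}P_{U^\perp}\xi_s$ is a pure Gaussian random walk, of norm $O(\norm{P_{U^\perp}\theta_0} + \sigma\sqrt{dT})$ with high probability, and is irrelevant to termination. This leaves an explicitly parameterized $O(1)$-dimensional dynamical system. In coordinates aligned with the eigenbasis of $\mA_2$, set $r_t = \inp{\theta_t}{v_{\mu,+}}$, $s_t = \inp{\theta_t}{v_{\mu,-}}$, and let $\Delta_t \geq 0$ be the ``large-eigenvalue imbalance'' that controls the sign of $\theta_t^\top\mA_2\theta_t$, so that (up to $O(\mu^2/d)$ corrections) $\theta_t^\top\mA_1\theta_t > 0$ forces $\Delta_t > 0$, and then $\theta_t^\top\mA_2\theta_t > 0$ iff $\mu(r_t^2 - s_t^2) > \sqrt{2d+4\mu+\mu^2}\,\Delta_t$.

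The heart of the argument is then: (i) the total number of $\mA_1$-steps is $\tilde O$ of the number of $\mA_2$-steps, because an $\mA_1$-step multiplies the ratio of the two large $\mA_1$-coordinates by the fixed factor $(1+\gamma\sqrt{2d})/\abs{1-\gamma\sqrt{2d}} > 1$, so an $\mA_1$-misclassification is repaired in $O(1)$ steps (apart from an initial $O(\log(\cdot))$-length transient); (ii) each $\mA_2$-step multiplies $\abs{r_t/s_t}$ by $(1+\gamma\mu)/(1-\gamma\mu) = 1 + \Theta(\gamma\mu)$ and contracts $\Delta_t$ toward the floor set by the accumulated noise and the $U^\perp$-walk; (iii) consequently $\mu(r_t^2 - s_t^2)$ eventually overtakes $\sqrt{2d+4\mu+\mu^2}\,\Delta_t$, which requires $\abs{r_t/s_t}$ to be pushed past a threshold, costing $O\!\big((\gamma\mu)^{-1}\log(\cdot)\big)$ $\mA_2$-steps. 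Since part 5 of Proposition~\ref{proposition:max_matrix} and Proposition~\ref{proposition:matrix_lower_bound} force the admissible step size down to $\gamma = \Theta(1/\sqrt d)$ --- here $\gamma = \tfrac{1}{4\sqrt d}$, so $(\gamma\mu)^{-1} = 4\sqrt d/\mu$ --- this count is exactly $\tilde\Theta(\sqrt d/\mu)$: this is the origin of the $\sqrt d$, and it matches $T = \lceil B\sqrt d/\mu\rceil$, with the polylogarithmic factors in $A$ and $B$ absorbing $\log(\norm{\theta_0}^2/\inp{\theta_0}{v_{\mu,+}}^2)$ and the other ratios. Finally, for the noise: since $\inp{\xi_t}{v} \sim \cN(0,\sigma^2)$ for every unit vector $v$, a sub-Gaussian tail bound and a union bound over the $T$ iterations control every relevant noise contribution; the delicate one is in the progress direction, where the accumulated noise is at most $(1+\gamma\mu)^T\sigma\sqrt{T}$, which is $\sigma\sqrt{T}$ up to a polylog factor because $\gamma\mu T = \Theta(B) = \tilde\Theta(1)$, and the stated choice $\sigma = \abs{\inp{\theta_0}{v_{\mu,+}}}/(4096\,T\sqrt{\max\{d,\log(T/\rho)\}})$ makes this $\ll \abs{\inp{\theta_0}{v_{\mu,+}}}$; the same $\sigma$ (the $\sqrt{\max\{d,\log(T/\rho)\}}$ accounting for the ambient dimension of $\xi_t$ and the union bound) also controls the noise in the large-eigenvalue directions and in $U^\perp$. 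Combining (i)--(iii) with these high-probability bounds yields a separator within $T$ steps with probability at least $1-\rho$; the perturbations are also what guarantee, with probability one, that the iterates never stall on the measure-zero bad subspace of the earlier non-convergence theorem.

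The main obstacle is step (ii)--(iii): because the two eigenbases are only nearly --- not exactly --- aligned, the contraction of $\Delta_t$ is a near-cancellation that must be tracked to orders $O(\mu/\sqrt d)$ and $O(\mu^2/d)$ (the regime where the deterministic drift, the $O(1)$-run-length bound, and the noise floor all become comparable), and the three quantitative estimates --- deterministic progress, run length, and noise accumulation --- must be made mutually consistent over all $T$ iterations. Pinning down the explicit constants $A$, $B$, $\sigma$ so that the whole argument closes, rather than just the qualitative picture, is where essentially all the work lies.
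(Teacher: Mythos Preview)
Your outline is largely the paper's own: spectral decomposition of $\mA_1,\mA_2$, the fact that $v_{\mu,\pm}\in\ker\mA_1$ (Proposition~\ref{proposition:max_matrix_spec}), exponential growth of $\inp{\theta_t}{v_{\mu,+}}$ under $\mA_2$-steps, and termination once $\mu\inp{\theta_t}{v_{\mu,+}}^2$ overtakes the large-eigenvalue imbalance, yielding the $\tilde\Theta(\sqrt d/\mu)$ count. The high-probability noise control and the $\tilde O(1)$ run-length bound for each option are also as in the paper.

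The one place your plan diverges substantively---and where it is likely to run into trouble---is the mechanism for step (ii)--(iii). You propose to show directly that $\Delta_t$ (the $v_{2,\pm}$ imbalance) is \emph{contracted} by $\mA_2$-steps toward a noise floor. But the ping-pong means $\mA_1$-steps re-inflate $\Delta_t$ (your own near-anti-parallel observation: growing $\inp{\theta}{v_{1,+}}$ essentially grows $\abs{\inp{\theta}{v_{2,-}}}$), so there is no monotone contraction to exploit. The paper sidesteps this entirely: instead of tracking $\Delta_t$, it tracks $\norm{\theta_t}^2$ and proves (Lemmas~\ref{lemma:ineq_norm_theta}, \ref{lemma:ineq_norm_theta_better}) that $\norm{\theta_{t_j}}^2 \le 1024\,(1+\tfrac{93}{50}\gamma\mu)^{\bar t_j}\norm{\theta_0}^2$, where $\bar t_j$ counts only the $\mA_2$-steps. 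Since $\inp{\theta_{t_j}}{v_{\mu,+}}^2$ grows like $(1+\gamma\mu)^{2\bar t_j}=(1+2\gamma\mu+\cdots)^{\bar t_j}$ and $\tfrac{93}{50}<2$, the $\mu\inp{\theta}{v_{\mu,+}}^2$ term eventually dominates $(\lambda_2+\mu)\norm{\theta}^2$ and the algorithm halts. The whole argument therefore hinges on getting a growth constant strictly below $2$, and the paper obtains it via Lemmas~\ref{lemma:num_2_1}/\ref{lemma:num_1_2}: a Lagrangian relaxation of the constrained problem $\max\{\norm{(\mI+\gamma\mA_j)(\mI+\gamma\mA_i)x}:\norm{x}=1,\ x^\top\mA_i x\le 0,\ \ldots\}$, reduced (Lemma~\ref{lemma:reduction}) to a $5\times5$ matrix whose characteristic polynomial is bounded symbolically. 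That computation is exactly the ``near-cancellation to order $O(\mu/\sqrt d)$'' you flag, and without it neither your run-length bound (which needs $\log(\norm{\theta_t}^2/\alpha^2)=\tilde O(1)$, not $O(1)$) nor the final comparison closes. Your $\Delta_t$-contraction route would require an equivalent estimate, and it is not clear it is any easier than the paper's norm-growth route.
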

\end{propbox}
The theorem states that the number of steps required to find a separator is $\tilde\Theta\left(\nicefrac{{\color{mydarkgreen}\sqrt{d}}}{\mu}\right)$ with step size $\gamma = \Theta(1 / \max_{i \in \{1, 2\}} \norm{\mA_i})$ and almost all starting points\footnote{For instance, if $\theta_0 \sim \cN(0, \mI_{d+2}),$ then $\Prob{\inp{\theta_{0}}{v_{\mu,+}} = 0} = 0.$} $\theta_0$. In contrast, the convergence guarantees of \colorref{eq:batch_perceptron} are provably worse (up to logarithmic factors):
\begin{propbox}
\begin{restatable}[Lower Bound for \colorref{eq:batch_perceptron}]{theorem}{THEOREMPERCEPTRONISBAD}
    \label{thm:main_lower}
    The batch perceptron algorithm \coloreqref{eq:batch_perceptron} requires at least
    \begin{align*}
    \textstyle \Omega\left(\frac{\color{red}d}{\mu}\right)
    \end{align*}
    steps to find a separator of the dataset \eqref{eq:two_dataset} for any $\varepsilon \leq \frac{\mu}{8 \max\{\norm{a_1}, \norm{a_2}\}},$ $\mu \leq \nicefrac{1}{2},$ step size $\phi > 0$, and for any starting point $z_0 \in B(0,\phi \varepsilon).$ 
\end{restatable}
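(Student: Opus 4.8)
The plan is to follow the trajectory of \colorref{eq:batch_perceptron} on the two-point dataset \eqref{eq:two_dataset} explicitly and show that it spends $\Omega(d/\mu)$ steps oscillating before it can enter the (very thin) separating cone. Write $a_i \eqdef y_i b_i$, so that $a_1 = b_1$, $a_2 = -b_2$, and record the three Gram entries $\norm{a_1}^2 = d$, $\norm{a_2}^2 = d + 2\mu + \mu^2$, and $a_1^\top a_2 = -(d+\mu)$. Every update of \colorref{eq:batch_perceptron} adds a multiple of $a_1$ and/or $a_2$, so the only quantities that decide which samples lie in $S_t$ are $P_t \eqdef a_1^\top z_t$ and $Q_t \eqdef a_2^\top z_t$; these obey a closed two-dimensional recursion whose increments depend only on $(\mathrm{sign}\,P_t,\mathrm{sign}\,Q_t)$, namely $\Delta P = \tfrac d2$, $\Delta Q = -\tfrac{d+\mu}2$ when $S_t=\{1\}$, and $\Delta P = -\tfrac{d+\mu}2$, $\Delta Q = \tfrac{d+2\mu+\mu^2}2$ when $S_t=\{2\}$ (the sum of the two when $S_t=\{1,2\}$). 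From $\norm{z_0}\le\varepsilon\le \mu/(8\max_i\norm{a_i})$ we get $\abs{P_0},\abs{Q_0}\le\tfrac\mu8$, and the prescribed first step $z_1 = z_0 + \tfrac14(a_1+a_2)$ yields $P_1 = P_0 - \tfrac\mu4 \in [-\tfrac{3\mu}8,-\tfrac\mu8]$ and $Q_1 = Q_0 + \tfrac{\mu(1+\mu)}4 \in [\tfrac\mu8,\tfrac\mu2]$; in particular $S_1 = \{1\}$.

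Then I would prove, by induction on $t$, that the algorithm alternates $S_1=\{1\},S_2=\{2\},S_3=\{1\},\dots$ over a long window, with the explicit values $P_{2k-1}=P_1-(k-1)\tfrac\mu2$, $Q_{2k-1}=Q_1+(k-1)\tfrac{\mu(1+\mu)}2$ on odd steps and $P_{2k}=P_{2k-1}+\tfrac d2$, $Q_{2k}=Q_{2k-1}-\tfrac{d+\mu}2$ on even steps. The odd-step case is immediate: $P_{2k-1}<0$ since $P_1<0$, and $Q_{2k-1}>0$ since $Q_1>0$, so $S_{2k-1}=\{1\}$. The even-step case needs $P_{2k}>0$ and $Q_{2k}\le0$, i.e. $k< k_P\eqdef 1+\tfrac{d+2P_1}{\mu}$ and $k\le k_Q\eqdef 1+\tfrac{d+\mu-2Q_1}{\mu(1+\mu)}$; the crucial inequality is $k_Q< k_P$, which after clearing denominators reduces to $\mu d - \mu - 2\abs{P_1}(1+\mu) + 2Q_1 > 0$, true for $d\ge 3$ given $\abs{P_1}\le\tfrac{3\mu}8$ and $\mu\le\tfrac12$. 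Hence the binding constraint is just $k\le k_Q$, the oscillation persists for all $t\le 2\lfloor k_Q\rfloor$, and none of those iterates separates the data (odd $t$: $P_t<0$; even $t=2k\le 2k_Q$: $Q_t\le0$). Therefore the number of steps needed to find a separator is at least $2\lfloor k_Q\rfloor \ge 2(k_Q-1) = \tfrac{2(d+\mu-2Q_1)}{\mu(1+\mu)} \ge \tfrac{2d}{\mu(1+\mu)} \ge \tfrac{4d}{3\mu} = \Omega(d/\mu)$, using $2Q_1\le\mu$ and $\mu\le\tfrac12$.

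I expect the even-step induction to be the only real obstacle, precisely because it is where one must rule out the algorithm escaping the oscillation early — whether through a different sign pattern or a fortunate $z_0$ — and this is exactly the comparison $k_Q< k_P$: $Q_t$ climbs back up to $0$ only after $\Theta(d/\mu)$ oscillations, while $P_t$ stays comfortably positive on even steps throughout that window; the $\varepsilon$-sized perturbations from $z_0$ only shift the thresholds by $O(1)$ and are harmless. As a sanity check one can observe that $R_t\eqdef \tfrac{d+\mu}d P_t + Q_t$ is exactly invariant under $\{1\}$-steps and increases by exactly $\tfrac{(d-1)\mu^2}{2d}$ per $\{2\}$-step, so its net change of order $d\mu$ up to termination again forces $\Omega(d/\mu)$ steps; but the direct recursion above is cleaner and self-contained, and that is what I would write.
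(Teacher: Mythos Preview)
Your proposal is correct and follows essentially the same approach as the paper: track the two inner products $a_1^\top z_t$ and $a_2^\top z_t$, show that the algorithm alternates between $S_t=\{1\}$ and $S_t=\{2\}$, and bound the number of oscillations by $\Omega(d/\mu)$. The paper computes the iterates explicitly coordinate by coordinate rather than via your $(P_t,Q_t)$ recursion, and it concludes by a case split (either $a_2^\top z_{2k}\ge0$ or $a_1^\top z_{2k}\le0$ forces $k\ge d/(2\mu)$) instead of your threshold comparison $k_Q<k_P$, but these are stylistic differences only.
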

\end{propbox}
\textbf{Discussion.} Comparing Theorem~\ref{thm:main} and Theorem~\ref{thm:main_lower}, we see that the quadratic perceptron approach achieves a $\sqrt{d}$-times better complexity.~These theorems help to explain the gap between $m_{\textnormal{lin}}$ and $m_{\textnormal{cv}}$ observed in Section~\ref{sec:exp_linear_model} and Figure~\ref{fig:linear_vs_conv_linear}. Thus, the main reason for the acceleration is that GD with $m_{\textnormal{cv}}$ is more robust to the dimensionality $d.$

\textbf{Numerical test of the theorems.}
It is easy to verify that our results hold numerically on the dataset \eqref{eq:two_dataset}. In Figure~\ref{eq:perceptron_vs_quad_perceptron}, we fix $\mu = 0.01$, take $d \in \{10, 100, 400, 600, 800, 1000\}$, and compare \colorref{eq:batch_perceptron} and \colorref{eq:perceptron_quadratic_noise_two}. For each step size from $\{2^{-10}, \dots, 2^{9}\}$, we run each method $30$ times from a uniformly random point in $S(0, 1)$ and choose the plot corresponding to the step size with the smallest mean number of iterations required to find a separator. The experiments align with Theorems~\ref{thm:main} and \ref{thm:main_lower}: in the case of \colorref{eq:batch_perceptron}, the number of steps increases linearly with $d$, while the dependence in \colorref{eq:perceptron_quadratic_noise_two} is $\sqrt{d}.$

\subsection{Proof sketch of Theorem~\ref{thm:main}}
\label{sec:proof_sketch}
The main idea behind the result and proof can be explained using Propositions~\ref{proposition:max_matrix} and \ref{proposition:max_matrix_spec}. 
\begin{restatable}[follows from Proposition~\ref{proposition:max_matrix}]{proposition}{MAXVALUESMATRIXCOR}
    \label{proposition:max_matrix_spec}
    \leavevmode
    Consider the matrices $\mA_1$ and $\mA_2$ of the dataset \eqref{eq:two_dataset}.

    1.~$\mA_1$ has two non-zero eigenvalues $\lambda_1$ and $-\lambda_1$ with the corresponding eigenvectors $v_{1,+}$ and $v_{1,-},$ where $\lambda_1 \eqdef \sqrt{2 d}.$
    
    2.~$\mA_2$ has four non-zero eigenvalues $\lambda_2, -\lambda_2, \mu, -\mu$ with the corresponding eigenvectors $v_{2,+}, v_{2,-}, v_{\mu,+},$ and $v_{\mu,-},$ where $\lambda_2 \eqdef \sqrt{(2 + \mu)^2 + 2 (d - 2)}.$
    
    3.~$v_{\mu,+}, v_{\mu,-} \in \textnormal{ker} (\mA_1).$
\end{restatable}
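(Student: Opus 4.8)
The plan is to specialize Proposition~\ref{proposition:max_matrix} to the two vectors $a_1 = y_1 b_1 = \mathbf{1}$ (the all-ones vector in $\R^d$) and $a_2 = y_2 b_2 = -(1+\mu, 1, \dots, 1)^\top$ from the dataset \eqref{eq:two_dataset}, and then read off the three claims; no new machinery is needed. First I would decide which of the four cases of Proposition~\ref{proposition:max_matrix} each matrix falls into. Since $a_1$ is the all-ones vector it is fixed by the cyclic shift, i.e.\ $a_1 \neq 0$ and $\mP a_1 = a_1$, so Case~2 applies and $\mA_1$ has exactly the two non-zero eigenvalues $\pm\sqrt{2}\,\norm{a_1} = \pm\sqrt{2d}$; this is Part~1 with $\lambda_1 = \sqrt{2d}$. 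For $\mA_2$ one checks $a_2 \neq 0$, and because $\mu > 0$ neither $a_2 = \mP a_2$ (that would force $a_2$ to be constant) nor $a_2 = -\mP a_2$ (that would force $a_2 + \mP a_2 = 0$, impossible since every entry of $a_2 + \mP a_2$ is negative) holds, so Case~4 applies.

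Next I would evaluate the two scalars entering Case~4. A direct computation gives $\norm{a_2}^2 = (1+\mu)^2 + (d-1)$, and, writing $c = (1+\mu, 1, \dots, 1)^\top$ so that $\mP c = (1, 1+\mu, 1, \dots, 1)^\top$, $a_2^\top \mP a_2 = c^\top \mP c = 2(1+\mu) + (d-2) = d + 2\mu$. Substituting, $\norm{a_2}^2 + a_2^\top \mP a_2 = (2+\mu)^2 + 2(d-2)$ and $\norm{a_2}^2 - a_2^\top \mP a_2 = \mu^2$, so by Case~4 the four non-zero eigenvalues of $\mA_2$ are $\pm\sqrt{(2+\mu)^2 + 2(d-2)}$ and $\pm\sqrt{\mu^2} = \pm\mu$ (using $\mu > 0$). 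Naming these $\pm\lambda_2$ and $\pm\mu$ and labelling the associated Case~4 eigenvectors $v_{2,+}, v_{2,-}, v_{\mu,+}, v_{\mu,-}$ yields Part~2.

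For Part~3 I would use the explicit eigenvectors from Case~4 applied to $a = a_2$: with $\bar{a}_{-,\mP} = \mu$ and $a_2 - \mP a_2 = \mu(-1, 1, 0, \dots, 0)^\top$, we have $v_{\mu,+} = (\mu, -\mu, (a_2 - \mP a_2)^\top)^\top$ and $v_{\mu,-} = (-\mu, \mu, (a_2 - \mP a_2)^\top)^\top$. Since $\mP a_1 = a_1 = \mathbf{1}$, the block form of $\mA_1$ gives $\mA_1 (\alpha, \beta, w^\top)^\top = (\mathbf{1}^\top w,\ \mathbf{1}^\top w,\ (\alpha + \beta)\mathbf{1}^\top)^\top$ for any $\alpha, \beta \in \R$ and $w \in \R^d$. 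For both $v_{\mu,\pm}$ the first two coordinates satisfy $\alpha + \beta = 0$, and $\mathbf{1}^\top(a_2 - \mP a_2) = \mathbf{1}^\top a_2 - (\mP^\top \mathbf{1})^\top a_2 = 0$ because $\mathbf{1}$ is fixed by $\mP^\top$; hence $\mA_1 v_{\mu,\pm} = 0$, proving $v_{\mu,+}, v_{\mu,-} \in \textnormal{ker}(\mA_1)$.

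I do not anticipate a real obstacle: the argument is a substitution into Proposition~\ref{proposition:max_matrix} together with two short inner-product evaluations. The two places that need a little care are (i) confirming the case split --- in particular that the perturbation $\mu$ breaks the cyclic symmetry so $\mA_2$ lands in Case~4 rather than Case~2 or~3 --- and (ii) tracking the $\mu$-scaling of the Case~4 eigenvectors so that the identity $\norm{a_2}^2 - a_2^\top \mP a_2 = \mu^2$ is used correctly to single out the small eigenvalue $\mu$ and its eigenvectors.
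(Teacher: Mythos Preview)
Your proposal is correct and follows essentially the same approach as the paper: specialize Proposition~\ref{proposition:max_matrix} to $a_1=\mathbf{1}$ (Case~2) and $a_2=-(1+\mu,1,\dots,1)^\top$ (Case~4), compute $\norm{a_2}^2\pm a_2^\top\mP a_2$, and then verify $\mA_1 v_{\mu,\pm}=0$ directly from the block form. Your write-up is in fact more detailed than the paper's, which simply states that Parts~1--2 are corollaries of Proposition~\ref{proposition:max_matrix} and then exhibits the explicit vectors $v_{\mu,\pm}\propto(\mp\mu,\pm\mu,\mu,-\mu,0,\dots,0)^\top$ and asserts $\mA_1 v_{\mu,\pm}=0$ by direct calculation.
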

Due to the eigendecomposition of $\mA_1$ and $\mA_2$,
\begin{align*}
\theta_t^\top \mA_1 \theta_t = \lambda_1 \inp{v_{1,+}}{\theta_t}^2 - \lambda_1 \inp{v_{1,-}}{\theta_t}^2
\end{align*}
and
\begin{figure*}[ht]
    \centering
    \begin{subfigure}{0.32\textwidth}
        \includegraphics[width=\textwidth]{./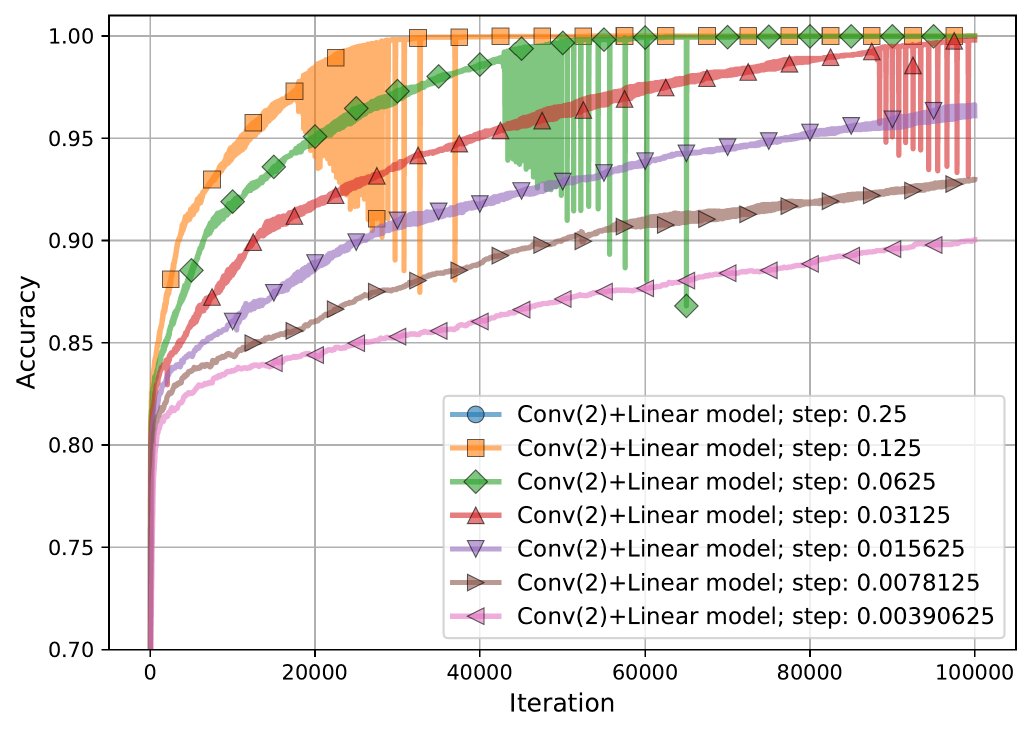} 
        \caption{$k = 10$}
        \label{fig:plot1}
    \end{subfigure}
    \hfill
    \begin{subfigure}{0.32\textwidth}
        \includegraphics[width=\textwidth]{./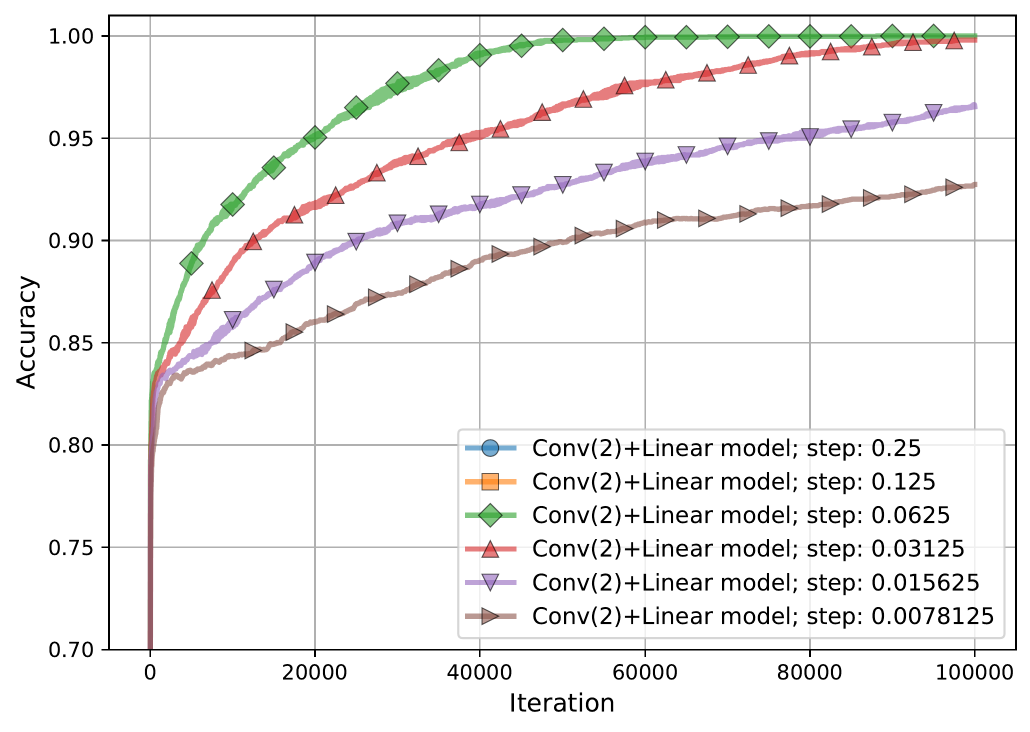}
        \caption{$k = 100$}
        \label{fig:plot2}
    \end{subfigure}
    \hfill
    \begin{subfigure}{0.32\textwidth}
        \includegraphics[width=\textwidth]{./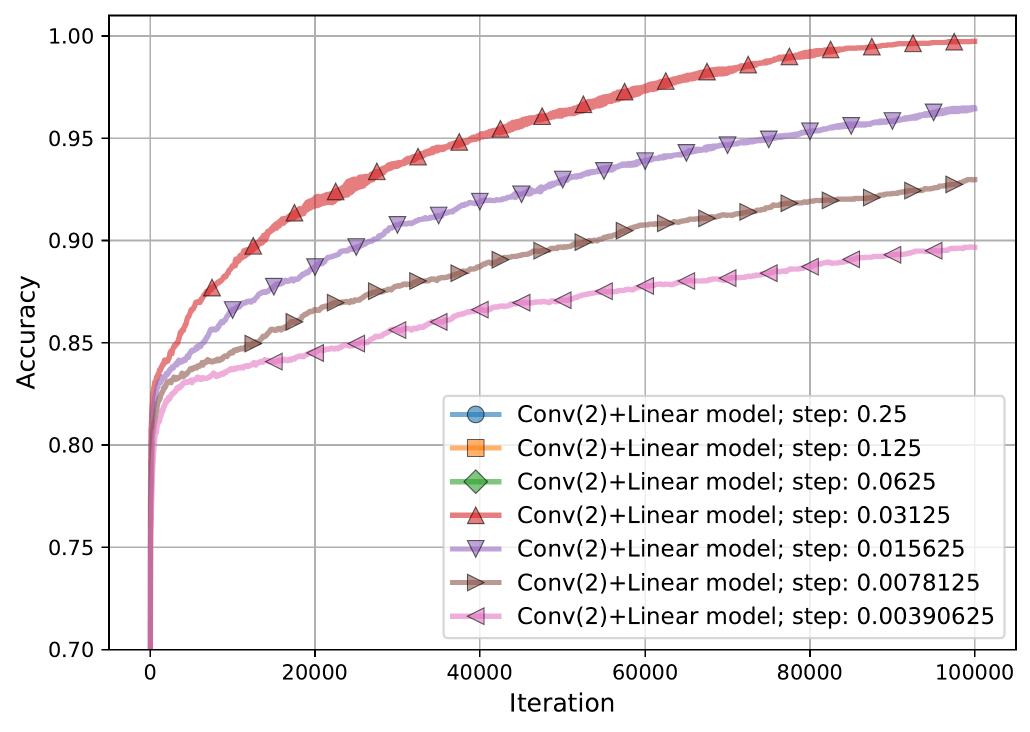}
        \caption{$k = 1000$}
        \label{fig:plot3}
    \end{subfigure}
    \caption{Accuracies for nonlinear model $m_{\textnormal{cv}}$ trained on CIFAR-10 with two classes $0$ and $1$, \# of samples $n=5000,$ and different kernel sizes. }
    \label{fig:three_plots}
\end{figure*}
\begin{equation}
\begin{aligned}
\label{eq:RmAbOdliMCQpye}
\theta_{t}^\top \mA_2 \theta_{t} 
&= \lambda_2 \inp{v_{2,+}}{\theta_{t}}^2 - \lambda_2 \inp{v_{2,-}}{\theta_{t}}^2 \\
&\quad + \mu {\color{mydarkgreen} \inp{v_{\mu,+}}{\theta_{t}}^2} - \mu {\inp{v_{\mu,-}}{\theta_{t}}^2}.
\end{aligned}
\end{equation}

(\textbf{Number of steps of option 1}): At the beginning, assume that $\theta_0^\top \mA_1 \theta_0 \leq 0$ and $\sigma = 0$ in \colorref{eq:perceptron_quadratic_noise_two}; then option 1 is chosen in \colorref{eq:perceptron_quadratic_noise_two}, $\theta_1 = \left(\mI + \gamma \mA_1\right) \theta_0,$ and $\theta_1^\top \mA_1 \theta_1 = \inp{\left(\mI + \gamma \mA_1\right) \theta_0}{\mA_1 \left(\mI + \gamma \mA_1\right) \theta_0}.$ Using Proposition~\ref{proposition:max_matrix_spec} and the eigendecomposition,
$\theta_1^\top \mA_1 \theta_1
= \lambda_1 \inp{v_{1,+}}{\left(\mI + \gamma \mA_1\right) \theta_0}^2 - \lambda_1 \inp{v_{1,-}}{\left(\mI + \gamma \mA_1\right) \theta_0}^2
= \lambda_1 (1 + \gamma \lambda_1)^2 \inp{v_{1,+}}{\theta_0}^2 - \lambda_1 (1 - \gamma \lambda_1)^2 \inp{v_{1,-}}{\theta_0}^2.$ Repeating $s$ more times, 
\begin{align*}
&\theta_s^\top \mA_1 \theta_s = \\
&\lambda_1 (1 + \gamma \lambda_1)^{2s} \inp{v_{1,+}}{\theta_0}^2 - \lambda_1 (1 - \gamma \lambda_1)^{2s} \inp{v_{1,-}}{\theta_0}^2.
\end{align*}
Thus, there will be the smallest $s_1 = \cO\left(\log_{\frac{1 + \gamma \lambda_1}{1 - \gamma \lambda_1}} \frac{\inp{v_{1,-}}{\theta_0}^2}{\inp{v_{1,+}}{\theta_0}^2}\right) = \tilde\cO\left(1\right)$ such that $\theta_{s_1}^\top \mA_1 \theta_{s_1} > 0,$ meaning that the method can only choose option 2 in \colorref{eq:perceptron_quadratic_noise_two} at step $s_1 + 1.$ \\
(\textbf{Number of steps of option 2}): Next, using Proposition~\ref{proposition:max_matrix_spec} with matrix $\mA_2,$ for all $s \geq 1$ steps of option 2 in \colorref{eq:perceptron_quadratic_noise_two},
\begin{equation}
\begin{aligned}
\label{eq:okpjRqpYIGFJNCGlKd}
&\theta_{s+s_1}^\top \mA_2 \theta_{s+s_1}=\\
&\lambda_2 (1 + \gamma \lambda_2)^{2s} \inp{v_{2,+}}{\theta_{s_1}}^2 - \lambda_2 (1 - \gamma \lambda_2)^{2s} \inp{v_{2,-}}{\theta_{s_1}}^2 \\
&\,\,+ \mu (1 + \gamma \mu)^{2s} {\color{mydarkgreen} \inp{v_{\mu,+}}{\theta_{s_1}}^2} - \mu (1 - \gamma \mu)^{2s} {\inp{v_{\mu,-}}{\theta_{s_1}}^2},
\end{aligned}
\end{equation}
and there exists the smallest $s_2 = \tilde\cO\left(1\right)$ such that $\theta_{s_2 + s_1}^\top \mA_2 \theta_{s_2 + s_1} > 0$ or $\theta_{s_2 + s_1}^\top \mA_1 \theta_{s_2 + s_1} \leq 0$
if $\inp{v_{2,+}}{\theta_{s_1}}^2 \neq 0$ or $\inp{v_{\mu,+}}{\theta_{s_1}}^2 \neq 0.$ 

(\textbf{Finite ``ping-pong'' between options 1 and 2}): Notice that it might be possible that $\theta_{s_1 + s_2}^\top \mA_1 \theta_{s_1 + s_2} \leq 0,$ and option 1 repeats again. Therefore, we have a ``ping-pong'' between options 1 and 2 in \colorref{eq:perceptron_quadratic_noise_two}, which ``almost surely'' will stop because, fortunately, if option 1 is chosen, then $\color{mydarkgreen} \inp{v_{\mu,+}}{\theta_{t+1}}^2 = \inp{v_{\mu,+}}{\theta_{t}}^2$ and $\inp{v_{\mu,-}}{\theta_{t + 1}}^2 = \inp{v_{\mu,-}}{\theta_{t}}^2$ due to Proposition~\ref{proposition:max_matrix_spec}. If option 2 is chosen, then $\color{mydarkgreen} \inp{v_{\mu,+}}{\theta_{t+1}}^2 = (1 + \gamma \mu)^2 \inp{v_{\mu,+}}{\theta_{t}}^2$ and $\inp{v_{\mu,-}}{\theta_{t + 1}}^2 = (1 - \gamma \mu)^2 \inp{v_{\mu,-}}{\theta_{t}}^2.$ \emph{A subsequence of the sequence $\color{mydarkgreen}\{\inp{v_{\mu,+}}{\theta_{t}}^2\}_{t \geq 0}$ increases exponentially.} Therefore, option 2 cannot be chosen after several steps $t$, since $\color{mydarkgreen}\inp{v_{\mu,+}}{\theta_{t}}^2$ would become too large in \eqref{eq:RmAbOdliMCQpye}; the algorithm thus necessarily stops and finds a separator.
\begin{quote}
\emph{\textbf{High-level intuition:} in other words, when \colorref{eq:perceptron_quadratic_noise_two} makes the update $\theta_t + \gamma \mA_1 \theta_t,$ $\theta_t^\top \mA_1 \theta_t$ tends to increase and $\theta_t^\top \mA_2 \theta_t$ tends to decrease, and vice versa with $\theta_t + \gamma \mA_2 \theta_t.$ Intuitively, the two matrices play against each other, and we observe a ``ping-pong'' or ``zig-zagging'' effect in Figure~\ref{fig:conv_linear_model_cifar10_0,1_5000}. However, it will ``almost surely'' stop since the correlation ${\color{mydarkgreen} \inp{v_{\mu,+}}{\theta_{t}}^2}$ between $\theta_t$ and the special direction $v_{\mu,+}$ increases, and there will be a moment $t$ when $\theta_{t}^\top \mA_2 \theta_{t} > 0$ even making the updates $\theta_t + \gamma \mA_1 \theta_t$ before.}
\end{quote}
The rate $\tilde{\cO}(\nicefrac{\sqrt{d}}{\mu})$ comes from two facts: first, we have to take $\gamma = \cO(\nicefrac{1}{\sqrt{d}})$ due to \eqref{eq:reducing_to_perceptron} and to ensure that $0 < \norm{\mI + \gamma \mA_i} < 2$ for all $i \in [n]$ (see also Section~\ref{sec:choice_gamma}), so that the steps do not ``explode.'' This is a natural condition induced by $\max_{i \in [n]} \norm{\mA_i} = \Theta(\sqrt{d}).$ Next, the correlation $\color{mydarkgreen} \abs{\inp{v_{\mu,+}}{\theta_{t}}}$ grows as $\Theta(\exp(t \mu \gamma));$ thus, it will break some necessary threshold after $\tilde{\Theta}(\nicefrac{1}{\gamma \mu}) = \tilde{\Theta}(\nicefrac{\sqrt{d}}{\mu})$ steps.

\textbf{Additional challenges in the proof of Theorem~\ref{thm:main}.} There are a few minor but important details that make the proof technical. The first problem arises in \eqref{eq:okpjRqpYIGFJNCGlKd}. For instance, it might be possible that $\inp{v_{2,+}}{\theta_{s_1}}^2 = 0$ and $\inp{v_{\mu,+}}{\theta_{s_1}}^2 = 0,$ and $\theta_{s+s_1}^\top \mA_2 \theta_{s+s_1} \leq 0$ for all $s \geq 1.$ To avoid this, as we explain in Section~\ref{sec:properties}, we have to introduce random noise to escape the ``bad'' subspace. Another, no less important, problem is to prove that the norm of $\theta_t$ does not increase too fast. This step is crucial to show that the numbers of consecutive steps of options 1 and 2 are $\tilde\cO\left(1\right);$ for instance, in \eqref{eq:okpjRqpYIGFJNCGlKd}, if $\norm{\theta_{s_1}}$ is too large, the number of steps $s_2$ might also be huge. 

To the best of our knowledge, this is the first attempt to analyze \colorref{eq:perceptron_quadratic_noise_two} and \colorref{eq:perceptron_quadratic_noise}. We hope that subsequent research in this area will not only generalize the result but also significantly simplify the proofs using our initial ideas.

\vspace{-0.1cm}
\section{Extending to Larger Kernel Sizes and Multi-Layer Models}
\label{sec:larger_kernel_sizes}
Our view of the optimization dynamics through \colorref{eq:perceptron_quadratic} is predictive not only for $m_{\textnormal{cv}}$ with kernel size $k = 2$. In particular, we repeat the experiment from Figure~\ref{fig:conv_linear_model_cifar10_0,1_5000} with larger kernel sizes (see Figure~\ref{fig:three_plots}) and observe that, when GD converges, the largest step size decreases as the kernel size increases. A similar observation holds for the dataset \eqref{eq:two_dataset} in Table~\ref{tbl:step_size}. The following result can explain this. 

\textbf{Larger kernel sizes.} For a kernel of size $k \geq 2$, the results from Section~\ref{sec:from_logistic_to_quadratic} still hold, with the only change that 
\begin{align}
    \label{eq:imfyLjDVlLOp}
    \mA_i \eqdef \begin{bmatrix}
        0 & \dots & 0 & y_i b_i^\top\\
        \vdots & \ddots & \vdots & \vdots\\
        0 & \dots & 0 & y_i b_i^\top (\mP^{k-1})^\top\\
        y_i b_i & \dots & y_i \mP^{k-1} b_i & \mO_{d}\\
    \end{bmatrix},
\end{align}
$\mA_i  \in \R^{(d + k) \times (d + k)}$ for all $i \in [n]$ and $\thetastart_t \eqdef \begin{bmatrix}c_t^\top & v_t^\top\end{bmatrix}^\top \in \R^{k + d}.$ For this matrix, we can prove the following result.
\begin{restatable}{proposition}{MAXVALUESMATRIXKERNEL}
    \label{proposition:max_matrix_larger_kernel}
    \leavevmode
    Consider matrix $\mA_i$ from \eqref{eq:imfyLjDVlLOp}. Then, the norm of $\mA_i$ can be bounded as $\frac{1}{\sqrt{k}} \norm{\sum_{j=1}^k \mP^{j-1} b_i} \leq \norm{\mA_i} \leq \sqrt{k} \norm{b_i}.$
\end{restatable}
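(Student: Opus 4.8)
The plan is to recognize $\mA_i$ as a symmetric \emph{block-antidiagonal} matrix and reduce $\norm{\mA_i}$ to the spectral norm of a single rectangular factor. Writing $a \eqdef y_i b_i$ and reading off the blocks of \eqref{eq:imfyLjDVlLOp}, we have
\[
\mA_i = \begin{bmatrix} \mO_{k} & M^\top \\ M & \mO_{d} \end{bmatrix}, \qquad M \eqdef \begin{bmatrix} a & \mP a & \cdots & \mP^{k-1} a \end{bmatrix} \in \R^{d \times k},
\]
which is symmetric since the top-right block is the transpose of the bottom-left one. Squaring gives the block-diagonal matrix $\mA_i^2 = \begin{bmatrix} M^\top M & \mO \\ \mO & M M^\top \end{bmatrix}$, whose eigenvalues are the squared singular values of $M$ together with zeros. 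Since $\mA_i$ is symmetric, $\norm{\mA_i} = \sqrt{\lambda_{\max}(\mA_i^2)} = \sigma_{\max}(M) = \norm{M}$, the operator norm of $M$; this is exactly the $k = 2$ computation in Proposition~\ref{proposition:max_matrix}, now carried out for general $k$.

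For the upper bound, I would estimate the operator norm by the Frobenius norm: since $\mP$ is a permutation matrix, hence orthogonal, $\norm{\mP^{j-1} a} = \norm{a} = \norm{b_i}$ for every $j$, so
\[
\norm{\mA_i} = \norm{M} \le \norm{M}_F = \Big( \textstyle\sum_{j=1}^{k} \norm{\mP^{j-1} a}^2 \Big)^{1/2} = \sqrt{k}\,\norm{b_i}.
\]
For the lower bound, I would test the operator norm of $M$ against the all-ones vector $\mathbf{1}_k \in \R^k$, for which $M \mathbf{1}_k = \sum_{j=1}^{k} \mP^{j-1} a = y_i \sum_{j=1}^{k} \mP^{j-1} b_i$ and $\norm{\mathbf{1}_k} = \sqrt{k}$; hence, using $y_i \in \{-1,1\}$,
\[
\norm{\mA_i} = \norm{M} \ge \frac{\norm{M \mathbf{1}_k}}{\norm{\mathbf{1}_k}} = \frac{1}{\sqrt{k}} \, \Big\| \textstyle\sum_{j=1}^{k} \mP^{j-1} b_i \Big\|.
\]
Combining the last two displays yields the claim.

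I do not expect a genuine obstacle here; the only points needing care are the bookkeeping of the block layout of \eqref{eq:imfyLjDVlLOp} and the (standard but worth recording) fact that a symmetric off-diagonal block matrix inherits the spectral norm of its off-diagonal factor. If one wanted sharper constants, one could instead diagonalize the Gram matrix $M^\top M = \big[\, a^\top \mP^{\,j-i} a \,\big]_{i,j}$, which is Toeplitz, built from the cyclic autocorrelations $a^\top \mP^m a$ of $a$; but for the stated two-sided bound the Frobenius and test-vector estimates are enough.
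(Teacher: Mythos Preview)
Your proof is correct and essentially follows the paper's own argument. The paper (Theorem~\ref{thm:bound_norm}) computes $\norm{\mA x}^2$ directly for $x=[x_1^\top,x_2^\top]^\top$, bounds the upper part via Cauchy--Schwarz on each column (which is exactly your Frobenius-norm bound $\norm{M}\le\norm{M}_F$), and obtains the lower bound from the same test vector $x_1=\mathbf{1}_k/\sqrt{k}$, $x_2=0$; your block-antidiagonal reduction $\norm{\mA_i}=\norm{M}$ is a slightly cleaner packaging of the same computation.
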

For instance, assume that $b_i = \begin{bmatrix}1, 1, \dots, 1, 1\end{bmatrix}^\top \in \R^d$ for some $i \in [n].$ In this case, $\norm{\mA_i} = \sqrt{k d}.$ Therefore, according to the recommendation from Section~\ref{sec:choice_gamma}, we should take $\gamma < 1 / \sqrt{k d}$, and the maximal allowed step size decreases as the kernel size $k$ increases. This phenomenon we observe in Figure~\ref{fig:three_plots}.

\textbf{Two-layer models.} For a general two-layer model $\mC, v \mapsto \left(\mC b_i\right)^\top v$ with $\mC \in \R^{f \times d}$ and $v \in \R^f,$ we can also construct the corresponding matrix $\mA_i$ and get
\begin{align*}
    \mA_i \eqdef \begin{bmatrix}
        \mO_{f d} & y_i b_i \otimes \mI_{f}\\
        y_i b_i^\top \otimes \mI_{f} & \mO_{f}
    \end{bmatrix} \in \R^{(f d + f) \times (f d + f)}
\end{align*}
with $\thetastart_t \eqdef \begin{bmatrix}\textnormal{vec}(\mC_t)^\top & v_t^\top\end{bmatrix}^\top \in \R^{f d + f}$ (see Section~\ref{sec:one_step_general}), where $\otimes$ is the Kronecker product.

\textbf{Multi-layer models.} Finally, we can consider a multilinear model $m(b_i; \mC_1, \dots, \mC_{\ell}, v) \eqdef \left(\mC_1 \cdots \mC_{\ell} b_i\right)^\top v,$ where $\mC_{\ell} \in \R^{f_{\ell} \times d}, \dots, \mC_{1} \in \R^{f_{1} \times f_{2}}, v \in \R^{f_{1}},$ and prove that GD with normalization reduces to \colorref{eq:generalized_quadratic} with $h_i(\theta) \eqdef \frac{1}{\ell + 1}\mA_i[\theta, \dots, \theta]_{\ell + 1},$ where $\mA_i$ is an $(\ell + 1)$--multilinear map induced by the structure of the multilinear model (see Section~\ref{sec:multi_layer}).

Notice that the rank of the matrices $\mA_i$ increases significantly with more complex models, which makes developing a generalized analysis of Section~\ref{sec:proof_sketch} even more challenging. This is the main reason why we start with the case of a small kernel size $k = 2.$ Extending our results for the kernel size $k = 2$ from Section~\ref{sec:kernel_two} to larger kernels and more general two-linear models remains a challenging direction and may require new mathematical tools. Another important direction is to extend the eigenvalue and eigenvector analysis to multi-layer models and consider activations.

\vspace{-0.2cm}
\section{Related Work}
\textbf{Classical optimization theory.} Under the $L$-smoothness assumption, the theory of GD in both convex and non-convex settings is well known and well studied. For $L$-smooth functions, the convergence rate of GD is $\cO\left(\nicefrac{1}{\gamma T}\right)$ for step sizes $\gamma < \nicefrac{2}{L}$ \citep{nesterov2003introductory}. At the same time, it is possible to find a (quadratic) $L$-smooth function for which GD diverges if $\gamma > \nicefrac{2}{L}$ (see, e.g., \citep{cohen2020gradient}). The value $\nicefrac{2}{L}$ is a fundamental threshold that separates the convergence and divergence regimes of GD.

\textbf{Convergence with large step sizes.} In practice, however, GD often converges even when $\gamma \gg \nicefrac{2}{L}$, seemingly ``contradicting'' classical theoretical results \citep{lewkowycz2020large,cohen2020gradient}. How is this possible? It turns out that the worst-case analysis in classical optimization theory does not fully capture the structure of modern optimization problems. This phenomenon has been studied from various perspectives, including the sharpness of loss functions (characterized by the largest eigenvalue of the Hessian) \citep{kreisler2023gradient}, non-separable data \citep{ji2018risk,meng2024gradient}, bifurcation theory \citep{song2023trajectory}, sharpness dynamics in networks with normalization \citep{lyu2022understanding}, two-layer linear diagonal networks \citep{even2023s}, self-stabilization effects \citep{damian2022self,wang2022analyzing,ahn2022understanding,ma2022beyond}, and low-dimensional settings \citep{zhu2022understanding,chen2022gradient,ahn2024learning}.

\textbf{Logistic loss.}
The works by \citet{wu2024implicit,wu2024large,tyurin2025logistic,zhang2025minimax,zhanggradient} have considered the logistic loss \eqref{eq:logistic_regression_two} with large step sizes. However, their main focus has been on the linear model $m_{\textnormal{lin}}$. It is worth noting that \citet{zhanggradient} also studied two-layer networks under the assumption that the last-layer parameters are fixed, which is impractical and does not capture the true dynamics. We take the next step and develop a theory for the realistic setting in which both the first and last layers are optimized.

\textbf{Implicit acceleration.}
Our observation that nonlinearity in models leads to acceleration is not new. \citet{arora2018optimization} investigated linear neural networks for regression problems and showed that reparameterization can act as a preconditioner that accelerates training. Our analysis and theoretical framework are orthogonal, as we focus on classification problems with logistic loss and provide a different perspective on explaining this phenomenon through perceptron algorithms. Related analyses of implicit acceleration include studies in sum-product networks \citep{trapp2019optimisation}, infinitely wide neural networks \citep{littwin2021implicit}, and graph neural networks \citep{xu2021optimization}.

\vspace{-0.1cm}
\section{Conclusion}
We have only scratched the surface. Analyzing the behavior of the algorithm even in the large-norm regime is already a non-trivial task. We hope that, building on our initial results, it will be possible to generalize the theory to the small-norm regime; we leave this for future work. While we established a connection between gradient descent on the logistic loss and perceptron algorithms, and showed that this connection is useful for theoretically explaining implicit acceleration, extending these results remains an open task. We hope that our perspective on training dynamics through \colorref{eq:perceptron_quadratic} and \colorref{eq:generalized_quadratic}, together with its connection to the classical perceptron algorithm \citep{rosenblatt1958perceptron}, will lead to a better understanding of these dynamics and open new research directions.

\vspace{-0.3cm}
\section*{Acknowledgements}
The work was supported by the grant for research centers in the field of AI provided by the Ministry of Economic Development of the Russian Federation in accordance with the agreement 000000C313925P4F0002 and the agreement №139-10-2025-033.

\vspace{-0.3cm}
\section*{Impact Statement}
This paper presents work whose goal is to advance the field of Machine
Learning. There are many potential societal consequences of our work, none
which we feel must be specifically highlighted here.

\bibliography{example_paper}
\bibliographystyle{icml2026}

\newpage
\appendix
\onecolumn

\section{Notations}
\begin{table}[h]
\centering
\caption{List of notations used throughout the paper.}
\begin{tabular}{cl}
\toprule
\textbf{Notation} & \textbf{Meaning} \\
\midrule
$\N$ & The set of natural numbers $\{1, 2, \dots\}$. \\
$[x]_i$ & The $i$-th coordinate of a vector $x$. \\
$[n]$ & Denotes a finite set $\{1, \dots, n\}$. \\
$\mathbf{1}[\cdot]$ & Indicator function, equal to $1$ if the condition holds and $0$ otherwise. \\
$\norm{x}$ & Euclidean norm of a vector $x$. \\
$\norm{\mA}$ & Spectral norm (largest singular value) of matrix $\mA$. \\
$\inp{a}{b}$ or $a^\top b$ & Standard Euclidean inner product $\sum_i [a]_i [b]_i$. \\
$a \ast b$ & Standard convolution operation defined in footnote \ref{foot:conv}. \\
$\otimes$ & Kronecker product. \\
$\textnormal{vec}(\mA)$ & vectorization of $\mA \in \R^{f \times d}:$ $\textnormal{vec}(\mA) = [[\mA]_{1,1} \, \dots \, [\mA]_{f,1} \, [\mA]_{1,2} \, \dots \, [\mA]_{f,2} \, [\mA]_{1,d} \, \dots [\mA]_{f,d}]^\top.$ \\
$B(a, r)$ & Euclidean ball of radius $r$: $\{x \in \R^d : \norm{x - a} \le r\}$. \\
$S(a, r)$ & Euclidean sphere of radius $r$: $\{x \in \R^d : \norm{x - a} = r\}$. \\
$g = \cO(f)$ & There exists $C > 0$ such that $g(z) \le C \, f(z)$ for all $z \in \cZ$. \\
$g = \Omega(f)$ & There exists $C > 0$ such that $g(z) \ge C \, f(z)$ for all $z \in \cZ$. \\
$g = \Theta(f)$ & There exist $C_1, C_2 > 0$ such that $C_1 f(z) \le g(z) \le C_2 f(z)$ for all $z \in \cZ$. \\
$\tilde{\cO},\tilde{\Omega},$ and $\tilde{\Theta}$ & The same as $\cO$, $\Omega,$ and $\Theta,$ but up to logarithmic factors. \\
$\mA[w_1, \dots, w_{\ell}]_{\ell}$ & Multi-lineal map $\mA$ with $\ell$ coordinate applied to vectors $(w_1, \dots, w_{\ell})$. \\
$\mO_d$ & The $d \times d$ zero matrix. \\
$\mI_d$ & The $d \times d$ identity matrix. \\
$\textnormal{im}(\mA)$ & Image of the matrix $\mA$. \\
$\textnormal{ker}(\mA)$ & Kernel of the matrix $\mA$. \\
\bottomrule
\end{tabular}
\end{table}

\section{Quadratic Perceptron}
\subsection{One step of GD}
\label{sec:one_step_gd}
In this section, we derive one GD step for \eqref{eq:logistic_regression_two} using the model from \eqref{eq:nonlinear} with $k = 2$. We define $a_i \eqdef y_i b_i$ for all $i \in [n].$ Note that 
\begin{align}
    y_i v_t^\top \left(c_t \ast b_i\right) = v_t^\top \left(c_t \ast a_i\right) = \frac{1}{2} \begin{bmatrix}
        [c_{t}]_{1} \\
        [c_{t}]_{2} \\
        v_{t} \\
    \end{bmatrix}^\top \mA_i \begin{bmatrix}
        [c_{t}]_{1} \\
        [c_{t}]_{2} \\
        v_{t} \\
    \end{bmatrix} = \frac{1}{2} w_t^\top \mA_i w_t,
    \label{eq:gXrzGtr}
\end{align}
where $\mA_i$ is defined in \eqref{eq:matrix_a}. We use that $c_t \ast a_i = \left([c_{t}]_{1} \mI + [c_{t}]_{2} \mP\right) a_i$ and $w_t = \begin{bmatrix}[c_{t}]_{1}& [c_{t}]_{2} & v_{t}^\top\end{bmatrix}^\top.$
Thus,
\begin{equation}
\label{eq:uIdhWjdkrheZfITdGJ}
\begin{aligned}
    w_{t+1}
    &=w_{t} - \gamma \nabla f(w_{t}) \\
    &=w_{t}
    + \gamma \frac{1}{n} \sum_{i=1}^{n} \frac{1}{1 + \exp(\frac{1}{2} w_t^\top \mA_i w_t)} 
    \nabla_{w_t} \left(\frac{1}{2} w_t^\top \mA_i w_t\right)\\
    &=w_t
    + \gamma \frac{1}{n} \sum_{i=1}^{n} \frac{1}{1 + \exp(\frac{1}{2} w_t^\top \mA_i w_t)}
    \mA_i w_t.
\end{aligned}
\end{equation}

\subsection{One step of GD with two-layer model}
\label{sec:one_step_general}
In this section, we derive one step of GD for \eqref{eq:logistic_regression_two} with the nonlinear model $\mC, v \mapsto \left(\mC b\right)^\top v,$ where $\mC \in \R^{f \times d}$ and $v \in \R^f$ are weights. We define $a_i \eqdef y_i b_i$ for all $i \in [n].$ Then,
\begin{align*}
    y_i \left(\mC_t b_i\right)^\top v_t = \left(\mC_t a_i\right)^\top v_t = v_t^\top \mC_t a_i.
\end{align*}
Since $v_t^\top \mC_t a_i \in \R$ is a scalar,
\begin{align*}
    y_i \left(\mC_t b_i\right)^\top v_t = \textnormal{vec}(v_t^\top \mC_t a_i) = (a_i^\top \otimes v_t^\top) \textnormal{vec}(\mC_t)
\end{align*}
due to the property $\textnormal{vec}(\mA \mX \mB) = (\mB^\top \otimes \mA) \textnormal{vec}(\mX),$ where $\otimes$ is the Kronecker product. Due to $(a_i^\top \otimes v_t^\top) \textnormal{vec}(\mC_t) \in \R,$
\begin{align*}
    y_i \left(\mC_t b_i\right)^\top v_t = \textnormal{vec}(\mC_t)^\top (a_i \otimes v_t) = \textnormal{vec}(\mC_t)^\top (a_i \otimes \mI_f) v_t.
\end{align*}
Thus, we can take
\begin{align*}
    \mA_i \eqdef \begin{bmatrix}
        \mO_{f d} & y_i b_i \otimes \mI_{f}\\
        y_i b_i^\top \otimes \mI_{f} & \mO_{f}
    \end{bmatrix}
\end{align*}
to ensure that
\begin{align*}
    \frac{1}{2} \begin{bmatrix}
        \textnormal{vec}(\mC_t) \\
        v_{t} \\
    \end{bmatrix}^\top \mA_i \begin{bmatrix}
        \textnormal{vec}(\mC_t) \\
        v_{t} \\
    \end{bmatrix} = y_i \left(\mC_t b_i\right)^\top v_t.
\end{align*}
All other steps \eqref{eq:uIdhWjdkrheZfITdGJ} are the same as in Section~\ref{sec:one_step_gd}.

\subsection{The Hessian of $f$}
We now calculate the Hessian of $f.$
We know that
\begin{align*}
    \nabla f(w) = -\frac{1}{n} \sum_{i=1}^{n} \frac{1}{1 + \exp(\frac{1}{2} w^\top \mA_i w)}
    \mA_i w.
\end{align*}
Therefore,
\begin{equation}
\begin{aligned}
    \label{eq:hessian}
    \nabla^2 f(w) &= -\frac{1}{n} \sum_{i=1}^{n} \frac{1}{1 + \exp(\frac{1}{2} w^\top \mA_i w)}
    \mA_i \\
    &\quad + \frac{1}{n} \sum_{i=1}^{n} \frac{1}{1 + \exp(\frac{1}{2} w^\top \mA_i w)} \left(1 - \frac{1}{1 + \exp(\frac{1}{2} w^\top \mA_i w)}\right)\mA_i w w^\top \mA_i.
\end{aligned}
\end{equation}

\MATRIXLOWERBOUND*
\begin{proof}
Let us take a dataset of size one with $b_1 = \begin{bmatrix} \frac{1}{\sqrt{d}} & \dots & \frac{1}{\sqrt{d}} \end{bmatrix}^\top \in \R^d$ and $y_1 = 1.$ Let us define $a_1 \eqdef y_1 b_1.$ Notice that $\norm{b_1} = \norm{a_1} = 1$ by the construction. Using \eqref{eq:hessian}, we have
\begin{align*}
    &\norm{\nabla^2 f(w)} \\
    &\geq \norm{\frac{1}{1 + \exp(\frac{1}{2} w^\top \mA_1 w)} \left(1 - \frac{1}{1 + \exp(\frac{1}{2} w^\top \mA_1 w)}\right)\mA_1 w w^\top \mA_1} - \norm{\frac{1}{1 + \exp(\frac{1}{2} w^\top \mA_1 w)} \mA_1} \\
    &= \frac{1}{1 + \exp(\frac{1}{2} w^\top \mA_1 w)} \left(1 - \frac{1}{1 + \exp(\frac{1}{2} w^\top \mA_1 w)}\right)\norm{\mA_1 w w^\top \mA_1} - \frac{1}{1 + \exp(\frac{1}{2} w^\top \mA_1 w)} \norm{\mA_1}.
\end{align*}
Note that $\mP a_1 = a_1$ and the matrix 
\begin{align*}
    \mA_1 = \begin{bmatrix}
        0 & 0 &  a_1^\top\\
        0 & 0 &  a_1^\top \mP^\top\\
         a_1 &  \mP a_1 & \mO_{d}\\
    \end{bmatrix}
    = \begin{bmatrix}
        0 & 0 & a_1^\top\\
        0 & 0 & a_1^\top\\
        a_1 & a_1 & \mO_{d}\\
    \end{bmatrix}
\end{align*}
has two non-zero eigenvalues $\sqrt{2} \norm{a_1}$ and $-\sqrt{2} \norm{a_1}$ (see Proposition~\ref{proposition:max_matrix}) with the corresponding orthonormal eigenvectors
\begin{align*}
    v_1 \propto \begin{bmatrix}
        \norm{a_1} \\
        \norm{a_1} \\
        \sqrt{2} a_1
    \end{bmatrix} \textnormal{ and }
    v_2 \propto \begin{bmatrix}
        -\norm{a_1} \\
        -\norm{a_1} \\
        \sqrt{2} a_1
    \end{bmatrix},
\end{align*}
where $\propto$ means that we take the normalized vectors collinear the right hand sides. Therefore, $\norm{\mA_1} = \sqrt{2} \norm{a_1}.$
Since $\frac{1}{1 + \exp(\frac{1}{2} w^\top \mA_1 w)} \in [0, 1],$ we get
\begin{align*}
    &\norm{\nabla^2 f(w)} \\
    &\geq \frac{1}{1 + \exp(\frac{1}{2} w^\top \mA_1 w)} \left(1 - \frac{1}{1 + \exp(\frac{1}{2} w^\top \mA_1 w)}\right)\norm{\mA_1 w w^\top \mA_1} - \sqrt{2} \norm{a_1} \\
    &= \frac{1}{1 + \exp(\frac{1}{2} w^\top \mA_1 w)} \left(1 - \frac{1}{1 + \exp(\frac{1}{2} w^\top \mA_1 w)}\right) w^\top \mA_1^2 w - \sqrt{2} \norm{a_1}.
\end{align*}
Let us take $w = \alpha v_1 + \beta v_2,$ where $\alpha \geq \beta > 0.$ Then 
\begin{align*}
    w^\top \mA_1 w 
    &= \left(\alpha v_1 + \beta v_2\right)^\top \mA_1 \left(\alpha v_1 + \beta v_2\right) = \sqrt{2} \norm{a_1} \left(\alpha v_1 + \beta v_2\right)^\top \left(\alpha v_1 - \beta v_2\right) \\
    &= \sqrt{2} \norm{a_1} \left(\alpha^2 \norm{v_1}^2 - \beta^2 \norm{v_2}^2\right) = \sqrt{2} (\alpha^2 - \beta^2) \norm{a_1} \geq 0
\end{align*}
and 
\begin{align*}
    w^\top \mA_1^2 w 
    &= \left(\alpha v_1 + \beta v_2\right)^\top \mA_1^2 \left(\alpha v_1 + \beta v_2\right) = 2 \norm{a_1}^2 \left(\alpha v_1 + \beta v_2\right)^\top \left(\alpha v_1 + \beta v_2\right) \\
    &= 2 \norm{a_1}^2 \left(\alpha^2 \norm{v_1}^2 + \beta^2 \norm{v_2}^2\right) = 2 \left(\alpha^2 + \beta^2\right) \norm{a_1}^2
\end{align*}
because the eigenvectors $v_1$ and $v_2$ are orthonormal. Thus
\begin{align*}
    &\norm{\nabla^2 f(w)} \geq \frac{\left(\alpha^2 + \beta^2\right) \norm{a_1}^2}{1 + \exp(\frac{\sqrt{2}}{2} (\alpha^2 - \beta^2) \norm{a_1})} - \sqrt{2} \norm{a_1}. 
\end{align*}
Note that $\norm{w}^2 = \norm{\alpha v_1 + \beta v_2}^2 = \alpha^2 + \beta^2,$ meaning that
\begin{align*}
    &\norm{\nabla^2 f(w)} \geq \frac{\norm{w}^2 \norm{a_1}^2}{1 + \exp(\frac{\sqrt{2}}{2} (\alpha^2 - \beta^2) \norm{a_1})} - \sqrt{2} \norm{a_1} = \frac{\norm{w}^2}{1 + \exp(\frac{\sqrt{2}}{2} (\alpha^2 - \beta^2))} - \sqrt{2}. 
\end{align*}
since $\norm{a_1} = 1.$ 
Taking $\alpha \leq \sqrt{\beta^2 + 1},$
\begin{align*}
    &\norm{\nabla^2 f(w)} \geq \frac{\norm{w}^2}{20} - \sqrt{2}. 
\end{align*}
It is left to bound the norm of gradient and the function value:
\begin{align*}
    \norm{\nabla f(w)} 
    &= \frac{1}{1 + \exp(\frac{1}{2} w^\top \mA_1 w)} \norm{\mA_1 w} = \frac{1}{1 + \exp(\frac{\sqrt{2}}{2} (\alpha^2 - \beta^2))} \norm{\mA_1 (\alpha v_1 + \beta v_2)} \\
    &= \frac{1}{1 + \exp(\frac{\sqrt{2}}{2} (\alpha^2 - \beta^2))} \norm{\mA_1 (\alpha v_1 + \beta v_2)} = \frac{\sqrt{2}}{1 + \exp(\frac{\sqrt{2}}{2} (\alpha^2 - \beta^2))} \norm{\alpha v_1 - \beta v_2} \\
    &= \frac{\sqrt{2}}{1 + \exp(\frac{\sqrt{2}}{2} (\alpha^2 - \beta^2))} \sqrt{\alpha^2 + \beta^2} = \frac{\sqrt{2} \norm{w}}{1 + \exp(\frac{\sqrt{2}}{2} (\alpha^2 - \beta^2))}.
\end{align*}
Since $\sqrt{\beta^2 + 1} \geq \alpha \geq \beta,$
\begin{align*}
    \frac{\sqrt{2}}{20} \norm{w} \leq \norm{\nabla f(w)} \leq \sqrt{2} \norm{w}.
\end{align*}
Finally,
\begin{align*}
    f(w) = \log\left(1 + \exp\left(-\frac{1}{2} w^\top \mA_1 w\right)\right) = \log\left(1 + \exp\left(-\frac{\sqrt{2}}{2} (\alpha^2 - \beta^2)\right)\right)
\end{align*}
and 
\begin{align*}
    \frac{1}{20} \leq f(w) \leq 5
\end{align*}
since $\sqrt{\beta^2 + 1} \geq \alpha \geq \beta.$
\end{proof}

\subsection{Proof of theorems and propositions}

\REDUCINGTOPERCEPTRON*
\begin{proof}
    Using mathematical induction, we will prove that $\frac{w_{t}}{\norm{w_{t}}} \overset{\norm{w_0} \rightarrow \infty}{=} \frac{\theta_t}{\norm{\theta_t}},$ $\norm{w_{t}} \overset{\norm{w_0} \rightarrow \infty}{\rightarrow} \infty,$ and $\theta_{t}^\top \mA_i \theta_{t} \neq 0$ for all $i \in [n]$ and $t \geq 0.$ For $t = 0,$ it holds by the assumption. Assume that it holds for $t \geq 0,$ then
    \begin{align*}
        \frac{w_{t+1}}{\norm{w_{t+1}}} 
        &= \frac
        {w_t + \gamma \frac{1}{n} \sum_{i=1}^{n} \frac{1}{1 + \exp(\frac{1}{2} w_t^\top \mA_i w_t)} \mA_i w_t}
        {\norm{w_t + \gamma \frac{1}{n} \sum_{i=1}^{n} \frac{1}{1 + \exp(\frac{1}{2} w_t^\top \mA_i w_t)} \mA_i w_t}} \\
        &= \frac
        {\frac{w_t}{\norm{w_t}} + \gamma \frac{1}{n} \sum_{i=1}^{n} \frac{1}{1 + \exp(\frac{\norm{w_t}^2}{2} \frac{w_t}{\norm{w_t}}^\top \mA_i \frac{w_t}{\norm{w_t}})} \mA_i \frac{w_t}{\norm{w_t}}}
        {\norm{\frac{w_t}{\norm{w_t}} + \gamma \frac{1}{n} \sum_{i=1}^{n} \frac{1}{1 + \exp(\frac{\norm{w_t}^2}{2} \frac{w_t}{\norm{w_t}}^\top \mA_i \frac{w_t}{\norm{w_t}})} \mA_i \frac{w_t}{\norm{w_t}}}}.
    \end{align*}
    Using $\frac{w_t}{\norm{w_t}} \overset{\norm{w_0} \rightarrow \infty}{=} \frac{\theta_t}{\norm{\theta_t}},$ $\theta_t^\top \mA_i \theta_t \neq 0$ and $\frac{\theta_t}{\norm{\theta_t}}^\top \mA_i \frac{\theta_t}{\norm{\theta_t}} \neq 0$ for all $i \in [n],$ and $\norm{w_t} \overset{\norm{w_0} \rightarrow \infty}{\rightarrow} \infty,$ we have 
    \begin{align*}
        \frac{1}{1 + \exp(\frac{\norm{w_t}^2}{2} \frac{w_t}{\norm{w_t}}^\top \mA_i \frac{w_t}{\norm{w_t}})} \overset{\norm{w_0} \rightarrow \infty}{=} \mathbbm{1}\left[\frac{\theta_t}{\norm{\theta_t}}^\top \mA_i \frac{\theta_t}{\norm{\theta_t}} < 0\right] = \mathbbm{1}\left[\theta_t^\top \mA_i \theta_t < 0\right] = \mathbbm{1}\left[\theta_t^\top \mA_i \theta_t \leq 0\right].
    \end{align*}
    Thus, we get
    \begin{align*}
        \frac{w_{t+1}}{\norm{w_{t+1}}} 
        &\overset{\norm{w_0} \rightarrow \infty}{=} \frac
        {\frac{\theta_t}{\norm{\theta_t}} + \gamma \frac{1}{n} \sum_{i \in S_t} \mA_i \frac{\theta_t}{\norm{\theta_t}}}
        {\norm{\frac{\theta_t}{\norm{\theta_t}} + \gamma \frac{1}{n} \sum_{i \in S_t} \mA_i \frac{\theta_t}{\norm{\theta_t}}}} = \frac
        {\theta_t + \gamma \frac{1}{n} \sum_{i \in S_t} \mA_i \theta_t}
        {\norm{\theta_t + \gamma \frac{1}{n} \sum_{i \in S_t} \mA_i \theta_t}}
    \end{align*}
    Since $\gamma < \frac{1}{\max_{i \in [n]} \norm{\mA_i}},$ we have $\mI + \gamma \frac{1}{n} \sum_{i \in S_t} \mA_i \succ 0$ and $\norm{\theta_t + \gamma \frac{1}{n} \sum_{i \in S_t} \mA_i \theta_t} > 0.$ Using the definition of $\theta_{t + 1},$
    \begin{align*}
        \frac{w_{t+1}}{\norm{w_{t+1}}} 
        &\overset{\norm{w_0} \rightarrow \infty}{=} \frac{\theta_{t+1}}{\norm{\theta_{t+1}}}.
    \end{align*}
    The norm
    \begin{align*}
        \norm{w_{t+1}} = \norm{w_t} \norm{\frac{w_t}{\norm{w_t}} + \gamma \frac{1}{n} \sum_{i=1}^{n} \frac{1}{1 + \exp(\frac{\norm{w_t}^2}{2} \frac{w_t}{\norm{w_t}}^\top \mA_i \frac{w_t}{\norm{w_t}})} \mA_i \frac{w_t}{\norm{w_t}}} \overset{\norm{w_0} \rightarrow \infty}{=} \infty
    \end{align*}
    because $\norm{w_t} \rightarrow \infty$ and the second term converges to $\frac{\norm{\theta_{t+1}}}{\norm{\theta_{t}}} > 0.$ 
    
    It left to consider $\theta_{t+1}^\top \mA_i \theta_{t+1}$ Notice that $\theta_{t+1}^\top \mA_i \theta_{t+1}$ is the polynomial function of $\gamma.$ When $\gamma = 0,$ $\theta_{t+1}^\top \mA_i \theta_{t+1} = \theta_{t}^\top \mA_i \theta_{t} \neq 0$ for all $i \in [n].$
    Thus, $\theta_{t+1}^\top \mA_i \theta_{t+1}$ is a \emph{non-zero} polynomial function of $\gamma$ that equals zero with the particular choices of $\gamma.$ The Lebesgue measure of such choices of $\gamma$ is zero. Thus, for almost all choices of $\gamma,$ $\theta_{t+1}^\top \mA_i \theta_{t+1} \neq 0$ for all $i \in [n].$ We have proved the next step of the mathematical induction.

    It left to prove the last statement. For all $t \geq 0,$ we know that $\norm{\theta_t} \times \frac{w_{t}}{\norm{w_{t}}} \overset{\norm{w_0} \rightarrow \infty}{=} \theta_{t}.$ Notice that
    \begin{align*}
        \textnormal{sign}(m_{\textnormal{cv}}(b_i; w_t)) 
        &= \textnormal{sign}(\left(c_t \ast b_i\right)^\top v_t) = \textnormal{sign}\left(\frac{\norm{\theta_t}^2}{\norm{w_t}^2}\left(c_t \ast b_i\right)^\top v_t\right) \\
        &= \textnormal{sign}\left(\left(\frac{\norm{\theta_t} \times c_t}{\norm{w_t}} \ast b_i\right)^\top \frac{\norm{\theta_t} \times v_t}{\norm{w_t}}\right) = \textnormal{sign}\left(m_{\textnormal{cv}}\left(b_i; \norm{\theta_t} \times \frac{w_t}{\norm{w_t}}\right)\right).
    \end{align*}
    Since 
    \begin{align*}
        0 \neq \frac{1}{2}\theta_{t}^\top \mA_i \theta_{t} \overset{\eqref{eq:matrix_a}}{=} y_i \left(\bar{c}_t \ast b_i\right)^\top \bar{v}_t = y_i m_{\textnormal{cv}}(b_i;\theta_t),
    \end{align*}
    for all $i \in [n]$ and $t \geq 0,$ where $\theta_{t} \equiv \begin{bmatrix}\bar{c}_t^\top & \bar{v}_t^\top\end{bmatrix}^\top,$ we can conclude that $0 \neq m_{\textnormal{cv}}(b_i;\theta_t).$
    Therefore, due to $\norm{\theta_t} \times \frac{w_{t}}{\norm{w_{t}}} \overset{\norm{w_0} \rightarrow \infty}{=} \theta_{t},$ for large enough $\norm{w_0},$ $m_{\textnormal{cv}}\left(b_i; \norm{\theta_t} \times \frac{w_t}{\norm{w_t}}\right) \neq 0$ and $\textnormal{sign}(m_{\textnormal{cv}}(b_i; w_t)) = \textnormal{sign}\left(m_{\textnormal{cv}}\left(b_i; \norm{\theta_t} \times \frac{w_t}{\norm{w_t}}\right)\right) = \textnormal{sign}\left(m_{\textnormal{cv}}\left(b_i; \theta_t\right)\right).$
\end{proof}

\MAXVALUESMATRIX*

\begin{proof}
    Note that $\mA$ is a matrix with the rank less or equal 4. Thus, it has at least $d - 2$ zero eigenvalues. Next, if $a = \mP a,$ then $\mA$ is a matrix with the rank less or equal 2. Finally, if $a = 0,$ then $\mA$ is the zero matrix with $d + 2$ zero eigenvalues.
    Assume that $a \neq 0$ and $a = a \mP,$ then 
    \begin{align}
        \mA = \begin{bmatrix}
            0 & 0 & a^\top\\
            0 & 0 & a^\top\\
            a & a & \mO_{d}\\
        \end{bmatrix}.
    \end{align}
    By the definition, one can easily show that this matrix has two non-zero eigenvalues $\sqrt{2} \norm{a}$ and $-\sqrt{2} \norm{a}$ with the corresponding eigenvectors from Proposition~\ref{proposition:max_matrix}. The case $a = -a \mP$ can be analyzed in the same way. Assume that $a \neq \mP a$ and $a \neq -\mP a,$ then the matrix has four non-zero eigenvalues 
    \begin{align*}
        \sqrt{\norm{a}^2 + a^\top \mP a}, -\sqrt{\norm{a}^2 + a^\top \mP a}, \sqrt{\norm{a}^2 - a^\top \mP a}, -\sqrt{\norm{a}^2 - a^\top \mP a}.
    \end{align*}
    with the corresponding eigenvectors from Proposition~\ref{proposition:max_matrix}.
    Using the definition of eigenvalues and the property $\norm{\mP x} = \norm{x}$ for all $x \in \R^d$, one can easily check it.
    Clearly, for all cases we have $\norm{a} \leq \norm{\mA} \leq \sqrt{2} \norm{a}$ since $\abs{a^\top \mP a} \leq \norm{a}^2.$
\end{proof}

\THEOREMNEGONE*
\begin{proof}
    Using Proposition~\ref{proposition:max_matrix}, 
    and the rank–nullity theorem, for each sample $i$, $\textnormal{dim}(\textnormal{im} (\mA_i)) = (d + 2) - \textnormal{dim}(\textnormal{ker} (\mA_i)) \leq 4,$ where $\textnormal{ker} (\mA_i)$ and $\textnormal{im} (\mA_i)$ are the kernel and image of $\mA_i.$ Thus, $\textnormal{dim}(\sum_{i=1}^{n} \textnormal{im} (\mA_i)) \leq \sum_{i=1}^{n} \textnormal{dim}(\textnormal{im} (\mA_i)) \leq 4 n$ and $0 < (d + 2) - 4 n \leq \textnormal{dim}((\sum_{i=1}^{n} \textnormal{im} (\mA_i))^\perp) = \textnormal{dim}(\cap_{i=1}^{n} \textnormal{im} (\mA_i)^\perp) = \textnormal{dim}(\cap_{i=1}^{n} \textnormal{ker} (\mA_i)),$ meaning that we can take $V = \cap_{i=1}^{n} \textnormal{ker} (\mA_i).$ For all $\theta_0 \in \cap_{i=1}^{n} \textnormal{ker} (\mA_i),$ $\mA_i \theta_0 = 0$ for all $i \in [n],$ and the method does not move and does not find a separator since $\theta_0 = \theta_1 = \theta_2,$ and so forth.
\end{proof}

\begin{theorem}
    \label{thm:bound_norm}
    For all $a \in \R^{d},$ consider matrix
    \begin{align}
    \label{eq:imfyLjDVlLOpmatrix}
    \mA \eqdef \begin{bmatrix}
        0 & 0 & \dots & 0 & a^\top\\
        \vdots & \vdots & \ddots & \vdots & a^\top \mP^\top\\
        \vdots & \vdots & \ddots & \vdots & \vdots\\
        0 & 0 & \dots & 0 & a^\top (\mP^{k-1})^\top\\
        a & \mP a & \dots & \mP^{k - 1} a & \mO_{d}\\
    \end{bmatrix} \in \R^{(d + k) \times (d + k)}.
\end{align}
Then, $\frac{1}{\sqrt{k}} \norm{\sum_{j=1}^k \mP^{j-1} a} \leq \norm{\mA} \leq \sqrt{k} \norm{a}.$
\end{theorem}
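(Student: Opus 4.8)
The plan is to exploit the block anti-diagonal structure of $\mA$. Introduce $\mB \eqdef \begin{bmatrix} a & \mP a & \cdots & \mP^{k-1} a\end{bmatrix} \in \R^{d \times k}$, whose $j$-th column is $\mP^{j-1} a$. Then the top-right block of $\mA$ is precisely $\mB^\top$ (its $j$-th row is $a^\top (\mP^{j-1})^\top = (\mP^{j-1} a)^\top$), the bottom-left block is $\mB$, and the two diagonal blocks vanish, so
\begin{align*}
\mA = \begin{bmatrix} \mO_k & \mB^\top \\ \mB & \mO_d \end{bmatrix}.
\end{align*}
The crucial reduction is $\norm{\mA} = \norm{\mB}$. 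Since $\mA$ is symmetric, $\norm{\mA}^2 = \norm{\mA^\top \mA} = \norm{\mA^2}$, and a direct block multiplication gives $\mA^2 = \begin{bmatrix} \mB^\top \mB & \mO \\ \mO & \mB \mB^\top\end{bmatrix}$, which is block diagonal; hence $\norm{\mA^2} = \max\{\norm{\mB^\top \mB}, \norm{\mB \mB^\top}\} = \norm{\mB}^2$. Thus $\norm{\mA} = \norm{\mB}$, and the problem collapses to sandwiching the spectral norm of $\mB$. (Equivalently, one may quote the standard fact that the nonzero eigenvalues of a symmetric block anti-diagonal matrix are $\pm$ the singular values of its off-diagonal block.)

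For the upper bound I would dominate the spectral norm by the Frobenius norm: $\norm{\mB} \leq \norm{\mB}_F = \big(\sum_{j=1}^k \norm{\mP^{j-1} a}^2\big)^{1/2} = \sqrt{k}\,\norm{a}$, using that $\mP$ is a permutation matrix, so $\norm{\mP^{j-1} a} = \norm{a}$ for every $j$. For the lower bound it suffices to test $\mB$ on the unit vector $\tfrac{1}{\sqrt{k}} \mathbf{1}_k \in \R^k$: $\norm{\mB} \geq \norm{\mB \cdot \tfrac{1}{\sqrt{k}} \mathbf{1}_k} = \tfrac{1}{\sqrt{k}} \norm{\sum_{j=1}^k \mP^{j-1} a}$. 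Combining both estimates with $\norm{\mA} = \norm{\mB}$ yields $\tfrac{1}{\sqrt{k}}\norm{\sum_{j=1}^k \mP^{j-1}a} \leq \norm{\mA} \leq \sqrt{k}\,\norm{a}$.

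I do not expect a real obstacle here; the only step that deserves a line of justification is the identity $\norm{\mA} = \norm{\mB}$, which the block-diagonal form of $\mA^2$ settles at once (note that the naive bound $\norm{\mA}\le\norm{\mA}_F$ only gives $\sqrt{2k}\,\norm{a}$, so passing through $\mB$ is what yields the sharp constant). As a consistency check, for $k = 2$ and $a = \mathbf{1}_d$ one has $\sum_{j=1}^2 \mP^{j-1} a = 2\mathbf{1}_d$, so both bounds collapse to the single value $\norm{\mA} = \sqrt{2d} = \sqrt{k}\,\norm{a}$, matching the eigenvalue $\lambda_1 = \sqrt{2d}$ from Propositions~\ref{proposition:max_matrix} and~\ref{proposition:max_matrix_spec}.
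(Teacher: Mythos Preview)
Your proof is correct. The paper's argument is a direct computation: it writes out $\norm{\mA x}^2$ for $x=[x_1^\top,x_2^\top]^\top$, bounds each summand by Cauchy--Schwarz to get $\norm{\mA x}^2\le k\norm{a}^2\norm{x}^2$, and for the lower bound plugs in $x_2=0$, $[x_1]_j=1/\sqrt{k}$. Your route is the same in substance but packaged more structurally: the reduction $\norm{\mA}=\norm{\mB}$ via the block-diagonal form of $\mA^2$ replaces the explicit two-term expansion, after which the Frobenius bound $\norm{\mB}\le\norm{\mB}_F$ is exactly the paper's Cauchy--Schwarz step, and your test vector $\tfrac{1}{\sqrt{k}}\mathbf{1}_k$ for $\mB$ is the paper's $(x_1,x_2)=(\tfrac{1}{\sqrt{k}}\mathbf{1}_k,0)$ for $\mA$. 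The block reduction is a clean abstraction and makes transparent why applying $\norm{\cdot}_F$ directly to $\mA$ would lose a factor $\sqrt{2}$; otherwise the two arguments coincide.
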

\begin{proof}
    For all $x \equiv [x_1^\top, x_2^\top]^\top \in \R^{d + k},$ $x_1 \in \R^{k},$ and $x_2 \in \R^{d},$
    \begin{align*}
        \mA x 
        &= \left[a^\top x_2, (\mP a)^\top x_2, \dots, (\mP^{k-1} a)^\top x_2, \sum_{j=1}^k \mP^{j-1} a [x_{1}]_j\right]^\top.
    \end{align*}
    Thus,
    \begin{align}
        \norm{\mA x}^2 
        &= \sum_{j=1}^k \left((\mP^{j-1} a)^\top x_2\right)^2 + \norm{\sum_{j=1}^k \mP^{j-1} a [x_{1}]_j}^2 \label{eq:YdxWh} \\
        &\leq \sum_{j=1}^k \norm{\mP^{j-1} a}^2 \norm{x_2}^2 + \norm{\sum_{j=1}^k \mP^{j-1} a [x_{1}]_j}^2, \nonumber
    \end{align}
    Since $\mP^\top \mP = \mI,$
    \begin{align*}
        \norm{\mA x}^2 
        &\leq k \norm{a}^2 \norm{x_2}^2 + \norm{\sum_{j=1}^k \mP^{j-1} a [x_{1}]_j}^2.
    \end{align*}
    Using Jensen's inequality, 
    \begin{align*}
        \norm{\mA x}^2 
        &\leq k \norm{a}^2 \norm{x_2}^2 + \sum_{j=1}^k [x_{1}]_j^2 \sum_{j=1}^k  \norm{\mP^{j-1} a}^2 = k \norm{a}^2 \norm{x_2}^2 + k \norm{a}^2 \norm{x_{1}}^2  = k \norm{a}^2 \norm{x}^2
    \end{align*}
    for all $x \in \R^{d + k},$ and we can conclude that $\norm{\mA} \leq \sqrt{k} \norm{a}.$ On the other hand, using \eqref{eq:YdxWh},
    \begin{align*}
        \norm{\mA x}^2 \geq \norm{\sum_{j=1}^k \mP^{j-1} a [x_{1}]_j}^2.
    \end{align*}
    Taking $x_2 = 0$ and $[x_1]_j = \frac{1}{\sqrt{k}}$ for all $j \in [k],$ $\norm{x} = 1$ and  
    \begin{align*}
        \norm{\mA x}^2 \geq \frac{1}{k} \norm{\sum_{j=1}^k \mP^{j-1} a}^2.
    \end{align*}
    In total,
    \begin{align*}
        \frac{1}{\sqrt{k}} \norm{\sum_{j=1}^k \mP^{j-1} a} \leq \norm{\mA} \leq \sqrt{k} \norm{a}.
    \end{align*}
\end{proof}

\MAXVALUESMATRIXKERNEL*
\begin{proof}
    This is a corollary of Theorem~\ref{thm:bound_norm}
\end{proof}

\MAXVALUESMATRIXCOR*

\begin{proof}
    The first and second result are simple corollaries of Proposition~\ref{proposition:max_matrix}. The third result follows from that $v_{\mu,+}$ and $v_{\mu,-}$ are propositional to
    \begin{align*}
        &\begin{bmatrix}
                - \mu &
                \mu &
                \mu &
                - \mu & 0 & \dots & 0
            \end{bmatrix}^\top,\\
        &\begin{bmatrix}
            \mu &
            - \mu &
            \mu &
            - \mu & 0 & \dots & 0
        \end{bmatrix}^\top.
    \end{align*}
    Direct calculations yield
    \begin{align*}
        \mA_1 v_{\mu,+} = 0
    \end{align*}
    and 
    \begin{align*}
        \mA_1 v_{\mu,-} = 0.
    \end{align*}
\end{proof}

\section{Proof of Theorem~\ref{thm:main}}

In this section, we present a formal proof of Theorem~\ref{thm:main}. Before doing so, clarifying Section~\ref{sec:proof_sketch}, we briefly outline the main steps of the proofs. 

\subsection{A more detailed proof sketch}
At the beginning, we fix the maximal number of steps $ T$, step size $\gamma$, and variance $\sigma$ to particular values (which are inferred from the proof). 

Next, we fix the event $\Omega_T$ in Lemma~\ref{lemma:prob} such that 
\begin{align*}
    \norm{\xi_t} \leq \beta \eqdef \sigma \sqrt{\max\{6 d, 4 \log (\nicefrac{1}{\delta})\}}
\end{align*} and 
\begin{align}
    \label{eq:mijDnPSyXSvDKW}
    \abs{\eta_{\lambda,s,t,v} + \sum_{i=1}^{s} (1 + \gamma \lambda)^{i - 1} \inp{\xi_{t + s - i}}{v}} > \alpha_s \eqdef \alpha (1 + \gamma \lambda)^{s / 2}, \alpha \eqdef \frac{\sigma \delta}{20 \sqrt{7} \log\left(\frac{1}{\gamma \lambda_1}\right)},
\end{align}
where $\eta_{\lambda,s,t,v} \eqdef (1 + \gamma \lambda)^{s} \inp{\theta_{t}}{v},$ for all finite choices of $\lambda$ and $v,$ and show that the probability of $\Omega_T$ is large. Starting from this point, the proof is deterministic in nature, and we rely only on these inequalities when we work with $\{\xi_t\}.$

\textbf{(Number of steps of option 1).} The algorithm either does the update $\theta_t + \gamma \mA_1 \theta_t + \xi_t$ (option 1) or $\theta_t + \gamma \mA_2 \theta_t + \xi_t$ (option 2), which interleave and divide the steps into groups of sizes $\{s_i\}.$ Let $s_k$ be a group with option $1,$ and $t$ be the iteration where this group starts. Using Proposition~\ref{proposition:max_matrix_spec},
\begin{align}
    \label{eq:SRQHBa}
    \inp{\theta_{t+s}}{\mA_1 \theta_{t+s}} 
    &= \lambda_1 \inp{\theta_{t+s}}{v_{1,+}}^2 - \lambda_1 \inp{\theta_{t+s}}{v_{1,-}}^2.
\end{align}
for all $s \geq 0.$ Since we apply $\theta_t + \gamma \mA_1 \theta_t + \xi_t$ (option 1) $s$ times, 
\begin{align*}
    \inp{\theta_{t+s}}{v_{1,+}}^2 
    &= \left((1 + \gamma \lambda_1)^{s} \inp{\theta_{t}}{v_{1,+}} + \sum_{i=1}^{s} (1 + \gamma \lambda_1)^{i - 1} \inp{\xi_{t + s - i}}{v_{1,+}}\right)^2.
\end{align*}
Next, we use \eqref{eq:mijDnPSyXSvDKW} to ensure that $\inp{\theta_{t+s}}{v_{1,+}}^2 \geq \alpha^2 (1 + \gamma \lambda_1)^{s}.$ We now know that the first term in \eqref{eq:SRQHBa} grows exponentially. 

The next technical step is to prove that $\inp{\theta_{t+s}}{v_{1,-}}^2$ does not grow faster in order to ensure that $\eqref{eq:SRQHBa} > 0$ after $s = \tilde{\cO}(1)$ steps. Clearly, $\inp{\theta_{t + s}}{v_{1,-}}^2 \leq \norm{\theta_{t + s}}^2.$ Since $\theta_{t + 1} = \theta_t + \gamma \mA_1 \theta_t + \xi_t$ and $\norm{\xi_t} \leq \beta,$
\begin{align*}
    \norm{\theta_{t + s}}^2 \lesssim \norm{(\mI + \gamma \mA_1) \theta_{t + s - 1}}^2 + \beta^2.
\end{align*}
In the worst case, $\norm{\mI + \gamma \mA_1} \simeq 1 + \gamma \lambda_1$ and $\norm{(\mI + \gamma \mA_1) \theta} \simeq (1 + \gamma \lambda_1) \norm{\theta}$, which is non-tight for our goals. However, we know that $\inp{\theta_{t + s}}{\mA_1 \theta_{t + s}} \leq 0,$ due to the condition $1$ of the algorithm, which allows to show a much tighter inequality $\norm{\theta_{t + s}}^2 \lesssim \norm{(\mI + \gamma \mA_1) \theta_{t + s - 1}}^2 + \beta^2 \lesssim (1 + \gamma \mu) \norm{\theta_{t + s - 1}}^2$ for $\beta$ small enough (we get $(1 + \gamma \mu)$ instead of $(1 + \gamma \lambda_1)$). In general, we can show that $\norm{\theta_t}^2 \lesssim \left(1 + \importantfrac\right)^{t - s} \max\{\norm{\theta_s}^2, \norm{\theta_0}^2\}.$

Returning back to \eqref{eq:SRQHBa}, we have shown that 
\begin{align*}
    \inp{\theta_{t+s}}{\mA_1 \theta_{t+s}} \geq \lambda_1 \alpha^2 (1 + \gamma \lambda_1)^{s} - \lambda_1 256 \left(1 + \importantfrac\right)^{s} \max\{\norm{\theta_t}^2, \norm{\theta_0}^2\}.
\end{align*}
Thus, the second term also grows exponentially, but much slower, and we can conclude that the number of consecutive calls of the option with $\mA_1$ is less or equal to 
\begin{align}
    \label{eq:HGSuyTQgBxTSPDxzqeB}
    s_k \leq 1 + \frac{4}{\gamma \lambda_{1}}\log \left(\frac{512 \max\{\norm{\theta_{t}}^2, \norm{\theta_0}^2\}}{\alpha^2}\right)
\end{align}

\textbf{(Number of steps of option 2).} The case with $\mA_2$ can be shown similarly.

\textbf{(Finite ``ping-pong'' between options 1 and 2).}
Now we know that $s_k$ satisfies \eqref{eq:HGSuyTQgBxTSPDxzqeB}, and we observe a ``ping-pong'' effect between options 1 and 2. Fortunately, matrix $\mA_2$ has a special direction such that $\inp{v_{\mu,+}}{\theta_t}^2$ increases exponentially when option 2 is executed and remains unchanged (up to random noise) when option 1 is executed. By choosing $\sigma$ sufficiently small, 
\begin{align}
    \label{eq:vlTTZM}
    \inp{\theta_{t_{j + 1}}}{v_{\mu,+}}^2 > \frac{1}{4} (1 + 2 \gamma \mu)^{\bar{t}_{j + 1}} \inp{\theta_{0}}{v_{\mu,+}}^2,
\end{align}
where $t_{j} \eqdef \sum_{i=1}^{j - 1} s_i$ is the total number of steps before group $j$ of size $s_j,$ and $\bar{t}_{j} \eqdef \sum\limits_{i \in [j - 1] \text{ s.t. } q_i = 2} s_i$ is the number of steps with option $2$ before group $j$ of size $s_j.$ In other words, \eqref{eq:vlTTZM} formalizes the fact that $\inp{\theta_{t_{j + 1}}}{v_{\mu,+}}^2$ increases exponentially when only option 2 is chosen.

Since
\begin{align*}
    \inp{\theta_{t}}{\mA_2 \theta_{t}} 
    &= \lambda_2 \inp{\theta_{t}}{v_{2,+}}^2 - \lambda_2 \inp{\theta_{t}}{v_{2,-}}^2 + \mu \inp{\theta_{t}}{v_{\mu,+}}^2 - \mu \inp{\theta_{t}}{v_{\mu,-}}^2,
\end{align*}
we can conclude that
\begin{align*}
    \inp{\theta_{t_{j}}}{\mA_2 \theta_{t_{j}}} > \frac{\mu}{4} (1 + 2 \gamma \mu)^{\bar{t}_{j}} \inp{\theta_{0}}{v_{\mu,+}}^2 - (\lambda_2 + \mu) \norm{\theta_{t}}^2.
\end{align*}
Using the structure of $\{\mA_i\}$ and the conditions that either $\theta_t^\top \mA_1 \theta_t \leq 0$ or $\theta_t^\top \mA_2 \theta_t \leq 0,$ we can prove that 
\begin{align}
\label{eq:RLIuIWNIMLIoZpBEgie}
\norm{\theta_{t_{j}}}^2 \leq 1024 \left(1 + \frac{93 \gamma \mu}{50}\right)^{\bar{t}_{j}} \norm{\theta_0}^2.
\end{align}
In other words, the norm of $\theta_t$ increases exponentially proportionally to the number of option 2 calls, and almost does not increase when option 1 is called.

Thus,
\begin{align*}
    \inp{\theta_{t_{j}}}{\mA_2 \theta_{t_{j}}} > \frac{\mu}{4} (1 + 2 \gamma \mu)^{\bar{t}_{j}} \inp{\theta_{0}}{v_{\mu,+}}^2 - 1024 (\lambda_2 + \mu) \left(1 + \frac{93 \gamma \mu}{50}\right)^{\bar{t}_{j}} \norm{\theta_0}^2.
\end{align*}
The second term may grow exponentially, but not as fast as the first term. Therefore, the maximum number of times option~$2$ can be called is $\tilde\cO\left(\nicefrac{1}{\gamma \mu}\right)$ and $\bar{t}_{j} = \tilde\cO\left(\nicefrac{1}{\gamma \mu}\right).$ Combining \eqref{eq:HGSuyTQgBxTSPDxzqeB} and \eqref{eq:RLIuIWNIMLIoZpBEgie},
\begin{align*}
    s_j = \tilde\cO\left(\frac{\mu}{\lambda_{1}} \bar{t}_{j}\right) = \tilde\cO\left(\frac{1}{\gamma \lambda_{1}}\right) = \tilde\cO\left(1\right)
\end{align*}
since $\gamma = \Theta(\nicefrac{1}{\lambda_{1}}).$ Since options $1$ and $2$ interleave, the size of each group is $\tilde\cO\left(1\right),$ and option $2$ can be chosen at most $\tilde\cO\left(\nicefrac{1}{\gamma \mu}\right)$ times, we can conclude that the total number of steps is also $\tilde\cO\left(\nicefrac{1}{\gamma \mu}\right) = \tilde\cO\left(\nicefrac{\sqrt{d}}{\mu}\right).$ A formal version of the proof is provided below.

\subsection{Full proof}

\MAINTHEOREM*

\newcommand{\inittext}{When we start \colorref{eq:perceptron_quadratic_noise_two}, it runs through the sequence of steps: $i_{1,1}, \dots, i_{1,s_1}, i_{2,1}, \dots, i_{2,s_2} \dots.$ and the corresponding size of groups $s_1, s_2, \dots,$ where $i_{i,j} \in \{1, 2, 3\},$ and $1$ corresponds to the option with matrix $\mA_1,$ $2$ corresponds to the option with matrix $\mA_2,$ and $3$ corresponds to the ``brake'' option when the algorithm stops. Moreover, we define $q_1, q_2, \dots,$ where $q_i \in \{1, 2, 3\},$ as the type of options of the corresponding group.}

\begin{proof}
    In the theorem, we show it is sufficient to take the number of iterations equal to
    \choiceT[.]
    Next, we consider set $\Omega_T$ defined in Lemma~\ref{lemma:prob}. All our following results are true on the set $\Omega_T$ with probability $\Prob{\Omega_T} \geq 1 - 5 T \delta.$ Taking $\delta = \rho / 5 T,$ we ensure the the following steps and the theorem hold with probability greater than or equal to $1 - \rho.$ 
    
    Next, in the theorem we take $\sigma$ such that
    \begin{equation}
    \label{eq:beta}
    \begin{aligned}
    &\beta \eqdef \sigma \sqrt{\max\{6 d, 4 \log (\nicefrac{1}{\delta})\}} \\
    &\leq \frac{\abs{\inp{\theta_{0}}{v_{\mu,+}}}}{1500 T} \\
    &\leq \min\left\{\frac{\abs{\inp{\theta_{0}}{v_{\mu,+}}}}{2 T},\frac{1}{64} \min\left\{\frac{1}{T}, \gamma \mu\right\} \norm{\theta_0},\frac{1}{128} \gamma \mu \norm{\theta_0},\norm{\theta_0},\min\left\{\frac{\mu \norm{\theta_0}}{1050 \lambda_2}, \frac{\gamma \mu \norm{\theta_0}}{1050}\right\}, \gamma \lambda_{2} \norm{\theta_0}\right\},
    \end{aligned}
    \end{equation}
    where $\beta$ is defined in Lemma~\ref{lemma:prob}. In the inequality, we use that $\gamma \leq \frac{1}{4 \sqrt{d}},$ $\lambda_2 \leq 4 \sqrt{d},$ $\lambda_2 \geq \mu,$ and the choice of $T.$ We will use this bound in theorem and lemmas.
    Once all the parameters are fixed or bounded, we are ready to proceed to the proof.

    \inittext

    If option $3$ occurs before iteration $ T$, then the result is true. In the proof, we will assume that only options 1 and 2 are possible in the first $T$ iterations and will prove a contradiction.

    Our first goal is to show that each $s_k$ is bounded for the groups with $\{1, 2\}$. For all $k \geq 1,$ let $i_{k,1} = 1$ and $t$ be the corresponding iteration.
    Using Proposition~\ref{proposition:max_matrix_spec},
    \begin{align}
        \label{eq:asgasgasgas}
        \inp{\theta_{t+s}}{\mA_1 \theta_{t+s}} 
        &= \lambda_1 \inp{\theta_{t+s}}{v_{1,+}}^2 - \lambda_1 \inp{\theta_{t+s}}{v_{1,-}}^2.
    \end{align}
    for all $t \geq 0$ and $s \geq 0.$
    Since $i_{k,1} = 1,$
    \begin{align*}
        \inp{\theta_{t+1}}{v_{1,+}}^2 
        &= \left(\inp{(\mI + \gamma \mA_1)\theta_{t} + \xi_{t}}{v_{1,+}}\right)^2 = \left(\inp{\theta_{t}}{(\mI + \gamma \mA_1) v_{1,+}} + \inp{\xi_{t}}{v_{1,+}}\right)^2 \\
        &= \left((1 + \gamma \lambda_1) \inp{\theta_{t}}{v_{1,+}} + \inp{\xi_{t}}{v_{1,+}}\right)^2.
    \end{align*}
    Repeating these steps $s - 1$ more times,
    \begin{align*}
        \inp{\theta_{t+s}}{v_{1,+}}^2 
        &= \left((1 + \gamma \lambda_1)^{s} \inp{\theta_{t}}{v_{1,+}} + \sum_{i=1}^{s} (1 + \gamma \lambda_1)^{i - 1} \inp{\xi_{t + s - i}}{v_{1,+}}\right)^2.
    \end{align*}
    Due to Lemma~\ref{lemma:prob} with $\eta_{\lambda_1,s,t,v_{1,+}} = (1 + \gamma \lambda_1)^{s} \inp{\theta_{t}}{v_{1,+}}$,
    \begin{align*}
        \inp{\theta_{t+s}}{v_{1,+}}^2 \geq \alpha^2 (1 + \gamma \lambda_1)^{s}
    \end{align*}
    for all $s \geq 1$ and $t \leq T - 1.$ At the same time, due to Lemma~\ref{lemma:ineq_norm_theta},
    \begin{align*}
        \inp{\theta_{t + s}}{v_{1,-}}^2 \leq \norm{\theta_{t + s}}^2 \leq 256 \left(1 + \importantfrac\right)^{s} \max\{\norm{\theta_t}^2, \norm{\theta_0}^2\}.
    \end{align*}
    Thus, due to \eqref{eq:asgasgasgas},
    \begin{align*}
        \inp{\theta_{t+s}}{\mA_1 \theta_{t+s}} 
        &\geq \lambda_1 \alpha^2 (1 + \gamma \lambda_1)^{s} - \lambda_1 256 \left(1 + \importantfrac\right)^{s} \max\{\norm{\theta_t}^2, \norm{\theta_0}^2\} \\
        &> \lambda_1 \alpha^2 \exp\left(\frac{\gamma \lambda_1 s}{2}\right) - \lambda_1 256 \exp\left(2 \gamma \mu s\right) \max\{\norm{\theta_t}^2, \norm{\theta_0}^2\},
    \end{align*}
    meaning that $\inp{\theta_{t+s}}{\mA_1 \theta_{t+s}} > 0$
    for all $s \geq 1$ such that
        $\frac{\gamma \lambda_1 s}{2} - 2 \gamma \mu s \geq \log \left(\frac{256 \norm{\theta_t}^2}{\alpha^2}\right),$
    and for all $s \geq 1$ such that 
    \begin{align*}
        s \geq \frac{4}{\gamma \lambda_1}\log \left(\frac{256 \max\{\norm{\theta_t}^2, \norm{\theta_0}^2\}}{\alpha^2}\right).
    \end{align*}
    since $\mu \leq \lambda_1 / 10.$ In this way, the number of consecutive calls of the option with $\mA_1$ is less or equal to 
    \begin{align*}
        1 + \frac{4}{\gamma \lambda_1}\log \left(\frac{256 \max\{\norm{\theta_t}^2, \norm{\theta_0}^2\}}{\alpha^2}\right),
    \end{align*}
    for all $t \leq T - 1$, where $t$ is the number of previous iterations. Similarly, let $i_{k,1} = 2$ and $t$ be the corresponding iteration. Using Proposition~\ref{proposition:max_matrix_spec},
    \begin{align}
        \inp{\theta_{t+s}}{\mA_2 \theta_{t+s}} 
        &= \lambda_2 \inp{\theta_{t + s}}{v_{2,+}}^2 - \lambda_2 \inp{\theta_{t + s}}{v_{2,-}}^2 + \mu \inp{\theta_{t + s}}{v_{\mu,+}}^2 - \mu \inp{\theta_{t + s}}{v_{\mu,-}}^2 \\
        &\geq \lambda_2 \inp{\theta_{t + s}}{v_{2,+}}^2 - \left(\lambda_2 + \mu\right) \norm{\theta_{t + s}}^2.
    \end{align}
    Notice that
    \begin{align*}
        \inp{\theta_{t+s}}{v_{2,+}}^2 
        &= \left((1 + \gamma \lambda_2)^{s} \inp{\theta_{t}}{v_{2,+}} + \sum_{i=1}^{s} (1 + \gamma \lambda_2)^{i - 1} \inp{\xi_{t + s - i}}{v_{2,+}}\right)^2.
    \end{align*}
    Using Lemmas~\ref{lemma:prob} and \ref{lemma:ineq_norm_theta} and $\lambda_2 \geq \mu,$
    \begin{align*}
        \inp{\theta_{t+s}}{\mA_2 \theta_{t+s}} 
        &\geq \lambda_2 \alpha^2 (1 + \gamma \lambda_2)^{s} - 512 \lambda_2 \left(1 + \importantfrac\right)^{s} \max\{\norm{\theta_t}^2, \norm{\theta_0}^2\}.
    \end{align*}
    Thus, $\inp{\theta_{t+s}}{\mA_2 \theta_{t+s}} > 0$ for all
    \begin{align*}
        s \geq \frac{4}{\gamma \lambda_2}\log \left(\frac{512 \max\{\norm{\theta_t}^2, \norm{\theta_0}^2\}}{\alpha^2}\right).
    \end{align*}
    In total, since $\lambda_1 \leq \lambda_2,$
    \begin{align}
        \label{eq:xKcqazOnF}
        s_i \leq 1 + \frac{4}{\gamma \lambda_{1}}\log \left(\frac{512 \max\{\norm{\theta_{t_i}}^2, \norm{\theta_0}^2\}}{\alpha^2}\right)
    \end{align}
    for all $i \geq 1$ such that $t_i \leq T - 1,$ where $t_i \eqdef \sum_{j=1}^{i-1} s_j$ is the number of iterations before the $i$\textsuperscript{th} group.

    Next, let $q_i = 1,$ then $v_{\mu,+} \in \textnormal{ker} \mA_1$ (Proposition~\ref{proposition:max_matrix_spec}) and 
    \begin{align*}
        &\inp{\theta_{t + s_i}}{v_{\mu,+}} = \inp{(\mI + \gamma \mA_1)\theta_{t + s_i - 1}}{v_{\mu,+}} + \inp{\xi_{t + s_i - 1}}{v_{\mu,+}} \\
        &= \inp{\theta_{t + s_i - 1}}{v_{\mu,+}} + \inp{\xi_{t + s_i - 1}}{v_{\mu,+}} = \inp{\theta_{t}}{v_{\mu,+}} + \sum_{j=1}^{s_i} \inp{\xi_{t + s_i - j}}{v_{\mu,+}}.
    \end{align*}
    for all $t \geq 0.$ On the other hand, if $q_i = 2,$ using Proposition~\ref{proposition:max_matrix_spec},
    \begin{align*}
        &\inp{\theta_{t + s_i}}{v_{\mu,+}} = \inp{(\mI + \gamma \mA_2)\theta_{t + s_i - 1}}{v_{\mu,+}} + \inp{\xi_{t + s_i - 1}}{v_{\mu,+}} \\
        &= (1 + \gamma \mu) \inp{\theta_{t + s_i - 1}}{v_{\mu,+}} + \inp{\xi_{t + s_i - 1}}{v_{\mu,+}} \\
        &= (1 + \gamma \mu)^{s_i} \inp{\theta_{t}}{v_{\mu,+}} + \sum_{j=1}^{s_i} (1 + \gamma \mu)^{j - 1} \inp{\xi_{t + s_i - j}}{v_{\mu,+}}
    \end{align*}
    for all $t \geq 0.$
    Unrolling the recursion,
    \begin{align*}
        &\inp{\theta_{t_{j + 1}}}{v_{\mu,+}} = \inp{\theta_{\sum_{i=1}^{j} s_i}}{v_{\mu,+}} \\
        &= (1 + \gamma \mu)^{\bar{t}_{j + 1}} \inp{\theta_{0}}{v_{\mu,+}} + \sum_{i=1}^{j} \sum_{p=1}^{s_i} (1 + \gamma \mu)^{\bar{t}_{j + 1} - \bar{t}_{i + 1} + (s_i - p) \cdot \mathbf{1}[q_i = 2]} \inp{\xi_{t_i + p - 1}}{v_{\mu,+}},
    \end{align*}
    where $\bar{t}_{j + 1} \eqdef \sum\limits_{i \in [j] \text{ s.t. } q_i = 2} s_i.$
    Due to Lemma~\ref{lemma:prob}, if $t_{j + 1} \leq T - 1,$ then
    \begin{align*}
        &\abs{\sum_{i=1}^{j} \sum_{p=1}^{s_i} (1 + \gamma \mu)^{\bar{t}_{j + 1} - \bar{t}_{i + 1} + (s_i - p) \cdot \mathbf{1}[q_i = 2]} \inp{\xi_{t_i + p - 1}}{v_{\mu,+}}} \\
        &\leq \beta \sum_{i=1}^{j} \sum_{p=1}^{s_i} (1 + \gamma \mu)^{\bar{t}_{j + 1} - \bar{t}_{i + 1} + (s_i - p) \cdot \mathbf{1}[q_i = 2]} \\
        &\leq \beta (1 + \gamma \mu)^{\bar{t}_{j + 1}} t_{j+1} \leq \beta (1 + \gamma \mu)^{\bar{t}_{j + 1}} T
    \end{align*}
    since $- \bar{t}_{i + 1} + (s_i - p) \cdot \mathbf{1}[q_i = 2] = - \sum\limits_{j \in [i] \text{ s.t. } q_j = 2} s_j + (s_i - p) \cdot \mathbf{1}[q_i = 2] \leq 0$ for all $i \in [j]$ and $p \in [s_i],$ and $t_{j+1} \eqdef \sum_{i=1}^{j} s_i.$ Thus,
    \begin{align*}
        \abs{\inp{\theta_{t_{j + 1}}}{v_{\mu,+}}} \geq (1 + \gamma \mu)^{\bar{t}_{j + 1}} \abs{\inp{\theta_{0}}{v_{\mu,+}}} - \beta (1 + \gamma \mu)^{\bar{t}_{j + 1}} T.
    \end{align*}
    Taking $\beta \leq \frac{\abs{\inp{\theta_{0}}{v_{\mu,+}}}}{2 T},$ we ensure that
    \begin{align}
        \label{eq:gtzMHWstYps}
        \inp{\theta_{t_{j + 1}}}{v_{\mu,+}}^2 \geq \frac{1}{4} (1 + \gamma \mu)^{2 \bar{t}_{j + 1}} \inp{\theta_{0}}{v_{\mu,+}}^2 > \frac{1}{4} (1 + 2 \gamma \mu)^{\bar{t}_{j + 1}} \inp{\theta_{0}}{v_{\mu,+}}^2
    \end{align}
    for all $j \geq 1$ such that $t_{j + 1} \leq T - 1.$ 

    Recall that 
    \begin{align*}
        \inp{\theta_{t}}{\mA_2 \theta_{t}} 
        &= \lambda_2 \inp{\theta_{t}}{v_{2,+}}^2 - \lambda_2 \inp{\theta_{t}}{v_{2,-}}^2 + \mu \inp{\theta_{t}}{v_{\mu,+}}^2 - \mu \inp{\theta_{t}}{v_{\mu,-}}^2 \\
        &\geq - (\lambda_2 + \mu) \norm{\theta_{t}}^2 + \mu \inp{\theta_{t}}{v_{\mu,+}}^2
    \end{align*}
    for all $t \geq 0.$ Using \eqref{eq:gtzMHWstYps} and \eqref{eq:liMjgdb},
    \begin{align}
        \label{eq:AnWiOvMJPSAF}
        \inp{\theta_{t_{j}}}{\mA_2 \theta_{t_{j}}} > \frac{\mu}{4} (1 + 2 \gamma \mu)^{\bar{t}_{j}} \inp{\theta_{0}}{v_{\mu,+}}^2 - 1024 (\lambda_2 + \mu) \left(1 + \frac{93 \gamma \mu}{50}\right)^{\bar{t}_{j}} \norm{\theta_0}^2
    \end{align}
    for all $j \geq 2$ such that $t_{j} \leq T - 1.$ 
    Notice that 
    \begin{align}
        \label{eq:vuntUgImeaX}
        \frac{\mu}{4} (1 + 2 \gamma \mu)^{\bar{t}} \inp{\theta_{0}}{v_{\mu,+}}^2 - 1024 (\lambda_2 + \mu) \left(1 + \frac{93 \gamma \mu}{50}\right)^{\bar{t}} \norm{\theta_0}^2 \geq 0
    \end{align}
    for 
    \begin{align}
        \label{eq:oFYCKnwrGfYAtQHdVu}
        \bar{t} \eqdef \left\lceil\frac{\log\left(\frac{2^{12} (\lambda_2 + \mu) \norm{\theta_0}^2}{\mu \inp{\theta_{0}}{v_{\mu,+}}^2}\right)}{\log\left(1 + \frac{\frac{7}{50} \gamma \mu}{1 + \frac{93 \gamma \mu}{50}}\right)}\right\rceil \leq \frac{300}{7 \gamma \mu} \log\left(\frac{4096 (\lambda_2 + \mu) \norm{\theta_0}^2}{\mu \inp{\theta_{0}}{v_{\mu,+}}^2}\right),
    \end{align}
    where we use $\gamma \leq \frac{1}{\mu}$ and $\log(1 + x) \geq \frac{x}{2}$ for all $0 \leq x \leq 1.$ 

    For $i = 1,$ $t_1 = 0 \leq T - 1,$ and 
    \begin{align*}
        s_1 \leq 1 + \frac{4}{\gamma \lambda_{1}}\log \left(\frac{512 \norm{\theta_0}^2}{\alpha^2}\right).
    \end{align*}
    Moreover, combining \eqref{eq:xKcqazOnF} and \eqref{eq:liMjgdb} with the previous inequality,
    \begin{align}
        \label{eq:vipbCxGiGeKgZsg}
        s_i 
        &\leq 1 + \frac{4}{\gamma \lambda_{1}} \log \left(\frac{2^{19} \norm{\theta_0}^2}{\alpha^2}\right) + \frac{8 \mu}{\lambda_{1}} \bar{t}_{i}
    \end{align}
    for all $i \geq 1$ such that $t_i \leq T - 1.$ 

    Notice that $\bar{t}_1 = t_1 = 0 < T-1.$ 
    Moreover, due to \eqref{eq:vuntUgImeaX} and \eqref{eq:oFYCKnwrGfYAtQHdVu}, we have $\bar{t}_j \leq \bar{t}$ for all $j \geq 1$, since if $\bar{t}_j > \bar{t}$ for some $j \geq 1$, then \eqref{eq:vuntUgImeaX} would be satisfied with $\bar{t}_j - 1$, and option 2 would no longer be chosen by the method.
    Moreover, if $t_k \leq T - 1,$ then $t_{k+1} = t_k + s_k \geq t_k + 1 > t_k,$ meaning that there exists the smallest $p \geq 2$ such that $t_p \geq T.$
    Finally, we will show a contradiction to the fact $\bar{t}_j \leq \bar{t}$ for all $j \geq 1.$
    
    Indeed, for all $j < p,$ we have $t_{j} \leq T - 1$ and $\bar{t}_j \leq \bar{t}.$ Due to \eqref{eq:vipbCxGiGeKgZsg},
    \begin{equation}
    \label{eq:iXfqUzZ}
    \begin{aligned}
        s_i 
        &\leq 1 + \frac{4}{\gamma \lambda_{1}} \log \left(\frac{2^{19} \norm{\theta_0}^2}{\alpha^2}\right) + \frac{8 \mu}{\lambda_{1}} \bar{t}_{i} \\
        &\overset{\mathclap{\bar{t}_j \leq \bar{t}}}{\leq} 1 + \frac{4}{\gamma \lambda_{1}} \log \left(\frac{2^{19} \norm{\theta_0}^2}{\alpha^2}\right) + \frac{8 \mu}{\lambda_{1}} \bar{t} \\
        &\overset{\mathclap{\eqref{eq:oFYCKnwrGfYAtQHdVu}}}{\leq} 1 + \frac{4}{\gamma \lambda_{1}} \log \left(\frac{2^{19} \norm{\theta_0}^2}{\alpha^2}\right) + \frac{350}{\gamma \lambda_{1}} \log\left(\frac{4096 (\lambda_2 + \mu) \norm{\theta_0}^2}{\mu \inp{\theta_{0}}{v_{\mu,+}}^2}\right) \\
        &\leq 1 + \frac{700}{\gamma \lambda_{1}} \log \left(2^{20} \max\left\{\frac{1}{\alpha^2}, \frac{\lambda_2}{\mu \inp{\theta_{0}}{v_{\mu,+}}^2}\right\} \norm{\theta_0}^2\right) := M
    \end{aligned}
    \end{equation}
    for all $2 \leq i < p.$ Thus,
    \begin{align*}
        t_p \eqdef \sum_{j=1}^{p-1} s_j \leq (p - 1) M.
    \end{align*}
    Since $t_{p} \geq T,$ necessarily, 
    \begin{align*}
        p \geq 1 + \frac{T}{M}.
    \end{align*}
    Hence,
    \begin{align*}
        \bar{t}_p \eqdef \sum\limits_{i \in [p - 1] \text{ s.t. } q_i = 2} s_i \geq \sum\limits_{i \in [p - 1] \text{ s.t. } q_i = 2} 1 \geq \frac{p - 1}{2} - 1 > \frac{T}{2 M} - 1,
    \end{align*}
    where use that the options 1 and 2 interleave.
    The inequality contradicts $\bar{t}_p \leq \bar{t}$ for all $p \geq 1$ because we choose $T$
    such that
    \begin{align*}
        \frac{T}{2 M} - 1 > \frac{300}{7 \gamma \mu} \log\left(\frac{4096 (\lambda_2 + \mu) \norm{\theta_0}^2}{\mu \inp{\theta_{0}}{v_{\mu,+}}^2}\right) \overset{\eqref{eq:oFYCKnwrGfYAtQHdVu}}{\geq} \bar{t},
    \end{align*}
    which is shown in Lemma~\ref{lemma:param}.

\end{proof}

\subsection{Lemmas}

\begin{lemma}
    \label{lemma:param}
    For the choice of $T, \gamma,$ and $\sigma$ in Theorem~\ref{thm:main}, we have
    \begin{align*}
        D \eqdef 2 M \left(1 + \frac{300}{7 \gamma \mu} \log\left(\frac{4096 (\lambda_2 + \mu) \norm{\theta_0}^2}{\mu \inp{\theta_{0}}{v_{\mu,+}}^2}\right)\right) < T,
    \end{align*}
    where $M \eqdef 1 + \frac{700}{\gamma \lambda_{1}} \log \left(2^{20} \max\left\{\frac{1}{\alpha^2}, \frac{\lambda_2}{\mu \inp{\theta_{0}}{v_{\mu,+}}^2}\right\} \norm{\theta_0}^2\right)$ and $\delta = \rho / 5 T.$
\end{lemma}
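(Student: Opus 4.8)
The plan is a direct substitution followed by careful bookkeeping of logarithmic factors. After plugging in $\gamma = \tfrac{1}{4\sqrt d}$, $\lambda_1 = \sqrt{2d}$ and the bound $\lambda_2 \le \sqrt{2d+1} \le 2\sqrt d$ (valid since $\mu \le \tfrac{1}{10}$), both $M$ and the logarithm appearing in the second factor of $D$ turn out to be merely \emph{polylogarithmic} in $\sqrt d/\mu$, $1/\rho$, $\norm{\theta_0}$ and $\abs{\inp{\theta_0}{v_{\mu,+}}}$, whereas $T = \lceil B\sqrt d/\mu\rceil$ carries the full factor $\sqrt d/\mu$ multiplied by the deliberately oversized constant $B = A\log(\tfrac{\sqrt d}{\mu}A)$ with $A = 2^{27}\log\big(\tfrac{2^{62}\norm{\theta_0}^2}{\rho^3\inp{\theta_0}{v_{\mu,+}}^2}\big)\log\big(\tfrac{2^{15}\sqrt d\norm{\theta_0}^2}{\mu\inp{\theta_0}{v_{\mu,+}}^2}\big)$. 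So proving $D<T$ will come down to checking that $B$ dominates an explicit product of two logarithms, which is exactly what the double-logarithm form of $A$ was built for.

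First I would record the elementary identities $\tfrac{1}{\gamma\lambda_1} = \tfrac{4\sqrt d}{\sqrt{2d}} = 2\sqrt 2$, so that $\log\tfrac{1}{\gamma\lambda_1} = \tfrac32\log 2 \in [1,2]$ (hence $\alpha = \tfrac{\sigma\delta}{20\sqrt 7\log(1/(\gamma\lambda_1))}$ is a well-defined positive number), and $\tfrac{1}{\gamma\mu} = \tfrac{4\sqrt d}{\mu}$. Next I would lower-bound $\alpha$: substituting $\delta = \rho/5T$ and $\sigma = \abs{\inp{\theta_0}{v_{\mu,+}}}/(4096 T\sqrt{\max\{d,\log(T/\rho)\}})$ gives $\alpha = \Theta\!\big(\tfrac{\abs{\inp{\theta_0}{v_{\mu,+}}}\,\rho}{T^2\sqrt{\max\{d,\log(T/\rho)\}}}\big)$, hence $\log\tfrac{1}{\alpha^2} \le c_1\big(\log T + \log\tfrac1\rho + \log\tfrac{1}{\abs{\inp{\theta_0}{v_{\mu,+}}}} + \log d\big)$ for an explicit constant $c_1$. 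Since $T = \lceil B\sqrt d/\mu\rceil$ with $B = \tilde\Theta(1)$, $\log T$ is itself $O(\log(\sqrt d/\mu) + \log\log(\cdots))$, so $\log\tfrac{1}{\alpha^2}$ is polylogarithmic in the stated quantities.

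With that in hand I would bound $M = 1 + \tfrac{700}{\gamma\lambda_1}\log\!\big(2^{20}\max\{\tfrac1{\alpha^2},\tfrac{\lambda_2}{\mu\inp{\theta_0}{v_{\mu,+}}^2}\}\norm{\theta_0}^2\big) = 1 + 1400\sqrt 2\,\log(\cdots)$; using $\lambda_2 \le 2\sqrt d$, the bound on $\log\tfrac1{\alpha^2}$, and absorbing the $\log\log$ terms into a single logarithm (legitimate since $\log\log X \le \log X$), this yields $M \le c_2\log\!\big(c_3\,\tfrac{\sqrt d\,\norm{\theta_0}^2}{\rho\,\mu\,\inp{\theta_0}{v_{\mu,+}}^2}\big)$ with explicit constants. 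Likewise $\lambda_2+\mu \le 2\sqrt d+1$ bounds $\log\!\big(\tfrac{4096(\lambda_2+\mu)\norm{\theta_0}^2}{\mu\inp{\theta_0}{v_{\mu,+}}^2}\big)$ by $\log\!\big(c_4\tfrac{\sqrt d\norm{\theta_0}^2}{\mu\inp{\theta_0}{v_{\mu,+}}^2}\big)$. Multiplying out with $\tfrac{1}{\gamma\mu} = \tfrac{4\sqrt d}{\mu}$ then gives $D = 2M\big(1 + \tfrac{300}{7\gamma\mu}\log(\cdots)\big) \le c_5\,\tfrac{\sqrt d}{\mu}\,\log\!\big(c_3\tfrac{\sqrt d\norm{\theta_0}^2}{\rho\mu\inp{\theta_0}{v_{\mu,+}}^2}\big)\log\!\big(c_4\tfrac{\sqrt d\norm{\theta_0}^2}{\mu\inp{\theta_0}{v_{\mu,+}}^2}\big)$, where the $1$ inside the second factor was absorbed using $\sqrt d \ge 1$, $1/\mu \ge 10$ and $\log(\cdots) \ge 1$.

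Finally I would compare this with $T \ge B\tfrac{\sqrt d}{\mu} = \tfrac{\sqrt d}{\mu}A\log(\tfrac{\sqrt d}{\mu}A)$: since $\log(\tfrac{\sqrt d}{\mu}A) > 1$, it suffices to check the numerical inequality $c_5\log(c_3\,\cdots)\log(c_4\,\cdots) \le A$, and this is exactly what the form of $A$ guarantees — each of the two logarithms bounding $D$ is at most a constant multiple of one of the two logarithms defining $A$, and the power $\rho^3$ together with the base-two constants $2^{62},2^{15}$ were chosen precisely to swallow $c_3,c_4,c_5$ and the $\log\log$ corrections; then $D \le A\tfrac{\sqrt d}{\mu} \le A\log(\tfrac{\sqrt d}{\mu}A)\tfrac{\sqrt d}{\mu} \le T$, with strict inequality from the ceiling. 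The step I expect to be the main obstacle is purely this bookkeeping of constants: one must verify that $D$ depends on $T$ only through $\log T$ and that, once the concrete value $T = \lceil B\sqrt d/\mu\rceil$ is substituted, every resulting $\log\log$ term is genuinely dominated by the leading logarithms — a monotonicity/self-consistency check, not a fixed-point argument — and this is where keeping $2^{27},2^{62},2^{15}$ explicit makes the inequality close. Everything else is routine manipulation of logarithms using $\log(1+x)\ge x/2$ for $x\in[0,1]$ and $\log(xy)=\log x+\log y$, together with the standing assumptions $\mu\le\tfrac1{10}$ and $d\ge1$ to keep all logarithm arguments at least $1$.
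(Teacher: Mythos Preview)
Your outline matches the paper's up to the closing step, but that step contains a real gap. After substituting $\sigma$ and $\delta=\rho/(5T)$ you correctly find that $1/\alpha^2$ is polynomial in $T$ (the paper gets $1/\alpha^2 \le 2^{42}T^6/(\rho^3\inp{\theta_0}{v_{\mu,+}}^2)$), so $M$ contains a term of order $\log T$. You then substitute $T=\lceil B\sqrt d/\mu\rceil$ and package the result as $M \le c_2\log\!\big(c_3\tfrac{\sqrt d\,\norm{\theta_0}^2}{\rho\mu\inp{\theta_0}{v_{\mu,+}}^2}\big)$. Notice this logarithm inherits a $\log(\sqrt d/\mu)$ contribution from $\log T$, whereas the \emph{first} factor in $A$, namely $\log\!\big(\tfrac{2^{62}\norm{\theta_0}^2}{\rho^3\inp{\theta_0}{v_{\mu,+}}^2}\big)$, does not depend on $d$ or $\mu$ at all. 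Consequently your sufficient condition $c_5\log(c_3\cdots)\log(c_4\cdots)\le A$ fails: fix $\rho$ and $\theta_0$ and let $d\to\infty$ (or $\mu\to 0$); both factors on the left grow like $\log(\sqrt d/\mu)$, so their product is of order $\big(\log(\sqrt d/\mu)\big)^2$, while $A$ grows only like $\log(\sqrt d/\mu)$. The chain $D\le A\tfrac{\sqrt d}{\mu}\le B\tfrac{\sqrt d}{\mu}$ therefore cannot hold, and the factor $\log(\tfrac{\sqrt d}{\mu}A)$ in $B$ is not slack you may discard at the first inequality.

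The paper handles this by \emph{not} substituting $T$ into $\log T$. It stops at
\[
D \;\le\; \underbrace{\tfrac{2^{24}\sqrt d}{\mu}\log\!\Big(\tfrac{2^{62}\norm{\theta_0}^2}{\rho^3\inp{\theta_0}{v_{\mu,+}}^2}\Big)\log\!\Big(\tfrac{2^{15}\sqrt d\,\norm{\theta_0}^2}{\mu\inp{\theta_0}{v_{\mu,+}}^2}\Big)}_{=:b} \;+\; \underbrace{\tfrac{2^{24}\sqrt d}{\mu}\log\!\Big(\tfrac{2^{15}\sqrt d\,\norm{\theta_0}^2}{\mu\inp{\theta_0}{v_{\mu,+}}^2}\Big)}_{=:a}\,\log T,
\]
with $a,b$ independent of $T$, and then invokes the elementary Lemma~\ref{lemma:log} (the fact that $\bar y > b + a\log\bar y$ once $\bar y = 8\max\{a,b\}\log\max\{a,b\}$, for $a,b\ge 4$). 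Since the first log in $A$ is at least $1$ one has $b\ge a$, and the chosen $T$ satisfies $T\ge B\tfrac{\sqrt d}{\mu}=8b\log(8b)\ge 8b\log b$, which closes the self-referential inequality $T>b+a\log T\ge D$. Your route is salvageable if you compare $D$ directly to $B\tfrac{\sqrt d}{\mu}$ rather than $A\tfrac{\sqrt d}{\mu}$, letting the outer $\log(\tfrac{\sqrt d}{\mu}A)$ absorb the stray $\log(\sqrt d/\mu)$ coming from $\log T$ --- but carried out carefully that is precisely a reproof of Lemma~\ref{lemma:log}.
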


\begin{proof}
    Note that 
    \begin{align*}
        &D \eqdef 2 M \left(1 + \frac{300}{7 \gamma \mu} \log\left(\frac{4096 (\lambda_2 + \mu) \norm{\theta_0}^2}{\mu \inp{\theta_{0}}{v_{\mu,+}}^2}\right)\right) \\
        &\overset{\gamma \leq \frac{1}{4 \sqrt{d}}, \lambda_2 \leq 4 \sqrt{d}}{\leq} M \left(\frac{86}{\gamma \mu} \log\left(\frac{2^{15} \sqrt{d} \norm{\theta_0}^2}{\mu \inp{\theta_{0}}{v_{\mu,+}}^2}\right)\right) \\
        &= 2 \left(1 + \frac{700}{\gamma \lambda_{1}} \log \left(2^{20} \max\left\{\frac{1}{\alpha^2}, \frac{\lambda_2}{\mu \inp{\theta_{0}}{v_{\mu,+}}^2}\right\} \norm{\theta_0}^2\right)\right) \left(\frac{86}{\gamma \mu} \log\left(\frac{2^{15} \sqrt{d} \norm{\theta_0}^2}{\mu \inp{\theta_{0}}{v_{\mu,+}}^2}\right)\right) \\
        &\overset{\lambda_1 = \sqrt{2 d}, \gamma \leq \frac{1}{4 \sqrt{d}}}{\leq} \frac{2^{19}}{\gamma \mu} \times \log \left(2^{20} \max\left\{\frac{1}{\alpha^2}, \frac{\lambda_2}{\mu \inp{\theta_{0}}{v_{\mu,+}}^2}\right\} \norm{\theta_0}^2\right) \log\left(\frac{2^{15} \sqrt{d} \norm{\theta_0}^2}{\mu \inp{\theta_{0}}{v_{\mu,+}}^2}\right).
    \end{align*}
    Using the choice of $\gamma$ and $\sigma,$ and the fact that $\delta = \rho / 5 T,$
    \begin{align*}
        \alpha^2 
        &= \frac{\sigma^2 \delta^2}{20^2 7\log^2\left(\frac{1}{\gamma \lambda_1}\right)} \geq \frac{\sigma^2 \delta^2}{5600} \geq \frac{\sigma^2 \rho^2}{140000 T^2} \\
        &= \frac{\rho^2}{140000 T^2} \times \frac{1}{4096^2 \max\{d, \log (T / \rho)\}} \times \frac{\abs{\inp{\theta_{0}}{v_{\mu,+}}}^2}{T^2}.
    \end{align*}
    Since $T \geq \frac{1}{\gamma \mu}$ and $T \geq \frac{4 \sqrt{d}}{\mu},$ we get 
    \begin{align*}
        \alpha^2 
        &\geq \frac{\rho^3 \inp{\theta_{0}}{v_{\mu,+}}^2}{2^{42} T^6}.
    \end{align*}
    Thus,
    \begin{align*}
        \max\left\{\frac{1}{\alpha^2}, \frac{\lambda_2}{\mu \inp{\theta_{0}}{v_{\mu,+}}^2}\right\} \leq \max\left\{\frac{1}{\alpha^2}, \frac{4 \sqrt{d}}{\mu \inp{\theta_{0}}{v_{\mu,+}}^2}\right\} \leq \frac{2^{42} T^6}{\rho^3 \inp{\theta_{0}}{v_{\mu,+}}^2}
    \end{align*}
    and
    \begin{align*}
        D 
        &\leq \frac{2^{19}}{\gamma \mu} \log \left(\frac{2^{62} T^6 \norm{\theta_0}^2}{\rho^3 \inp{\theta_{0}}{v_{\mu,+}}^2}\right) \log\left(\frac{2^{15} 
        \sqrt{d} \norm{\theta_0}^2}{\mu \inp{\theta_{0}}{v_{\mu,+}}^2}\right) \\
        &= \frac{2^{19}}{\gamma \mu} \log \left(\frac{2^{62} \norm{\theta_0}^2}{\rho^3 \inp{\theta_{0}}{v_{\mu,+}}^2}\right) \log\left(\frac{2^{15} \sqrt{d} \norm{\theta_0}^2}{\mu \inp{\theta_{0}}{v_{\mu,+}}^2}\right) + \frac{6 \times 2^{19}}{\gamma \mu} \log\left(\frac{2^{15} \sqrt{d} \norm{\theta_0}^2}{\mu \inp{\theta_{0}}{v_{\mu,+}}^2}\right) \times \log \left(T\right) \\
        &= \frac{2^{21} \sqrt{d}}{\mu} \log \left(\frac{2^{62} \norm{\theta_0}^2}{\rho^3 \inp{\theta_{0}}{v_{\mu,+}}^2}\right) \log\left(\frac{2^{15} \sqrt{d} \norm{\theta_0}^2}{\mu \inp{\theta_{0}}{v_{\mu,+}}^2}\right) + \frac{2^{24} \sqrt{d}}{\mu} \log\left(\frac{2^{15} \sqrt{d} \norm{\theta_0}^2}{\mu \inp{\theta_{0}}{v_{\mu,+}}^2}\right) \times \log \left(T\right) \\
        &\leq \frac{2^{24} \sqrt{d}}{\mu} \log \left(\frac{2^{62} \norm{\theta_0}^2}{\rho^3 \inp{\theta_{0}}{v_{\mu,+}}^2}\right) \log\left(\frac{2^{15} \sqrt{d} \norm{\theta_0}^2}{\mu \inp{\theta_{0}}{v_{\mu,+}}^2}\right) + \frac{2^{24} \sqrt{d}}{\mu} \log\left(\frac{2^{15} \sqrt{d} \norm{\theta_0}^2}{\mu \inp{\theta_{0}}{v_{\mu,+}}^2}\right) \times \log \left(T\right)
    \end{align*}
    where we arranged terms and use the choice of $\gamma.$ Using Lemma~\ref{lemma:log}, we can conclude that $D < T$ for our initial choice of
    \choiceT
\end{proof}

\newcommand{\texttheorem}{For any starting point $\theta_0,$ we have have the sequence of options $i_{1}, \dots, i_{t}, \dots,$ where $i_{t} \in \{1, 2, 3\}$ for all $t \geq 0,$ and $1$ corresponds to the option with matrix $\mA_1,$ $2$ corresponds to the option with matrix $\mA_2,$ and $3$ corresponds to the ``brake'' option}

\begin{restatable}{lemma}{LEMMANORM}
    \label{lemma:ineq_norm_theta}
    In \colorref{eq:perceptron_quadratic_noise_two}, on the set $\Omega_T$ from Lemma~\ref{lemma:prob} and for the choice of $\sigma$ in Theorem~\ref{thm:main},
    \begin{align}
        \label{eq:fVLRZmZ}
        \norm{\theta_t}^2 \leq 256 \left(1 + \importantfrac\right)^{t - s} \max\{\norm{\theta_s}^2, \norm{\theta_0}^2\}.
    \end{align}
    for all $t \geq s \geq 0.$ 

    Moreover, we can provide a more detailed result.
    \texttheorem. For all $t \geq 0,$ if $i_{t} = 1$ and $i_{t + 1} = 1,$ then
    \begin{align}
        \label{eq:imp_1}
        \norm{\theta_{t+1}}^2 \leq \left(1 + \min\left\{\frac{1}{T}, \gamma \mu\right\}\right) \max\{\norm{\theta_t}^2, \norm{\theta_0}^2\}.
    \end{align}
    For all $t \geq 0,$ if $i_{t} = 2$ and $i_{t + 1} = 2,$ then
    \begin{align}
        \label{eq:imp_2}
        \norm{\theta_{t+1}}^2 \leq \left(1 + \frac{11}{8}\gamma \mu\right) \max\{\norm{\theta_t}^2, \norm{\theta_0}^2\}.
    \end{align}
    For all $t \geq 0,$ if $i_{t} = 1$ and $i_{t + 1} = 2,$ then
    \begin{align}
        \label{eq:imp_3}
        \norm{\theta_{t+2}}^2 \leq \max\left\{\left(1 + \importantfrac\right) \norm{\theta_{t}}^2, 64 \norm{\theta_{0}}^2\right\}.
    \end{align}
    For all $t \geq 0,$ if $i_{t} = 2$ and $i_{t + 1} = 1,$ then
    \begin{align}
        \label{eq:imp_4}
        \norm{\theta_{t+2}}^2 \leq \max\left\{\left(1 + \importantfrac\right) \norm{\theta_{t}}^2, 64 \norm{\theta_{0}}^2\right\}.
    \end{align}
    For all $t \geq 0,$
    \begin{align}
        \label{eq:imp_5}
        \norm{\theta_{t+1}}^2 \leq 4 \max\{\norm{\theta_t}^2, \norm{\theta_0}^2\}.
    \end{align}
\end{restatable}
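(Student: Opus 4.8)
The plan is to first establish the five local estimates \eqref{eq:imp_1}--\eqref{eq:imp_5}, and then deduce the global bound \eqref{eq:fVLRZmZ} from them by strong induction on $t$.

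\emph{Set-up and ingredients.} I would start from the spectral picture of Proposition~\ref{proposition:max_matrix_spec}: write $\mA_1 = \lambda_1(v_{1,+}v_{1,+}^\top - v_{1,-}v_{1,-}^\top)$ and $\mA_2 = \lambda_2(v_{2,+}v_{2,+}^\top - v_{2,-}v_{2,-}^\top) + \mu(v_{\mu,+}v_{\mu,+}^\top - v_{\mu,-}v_{\mu,-}^\top)$ with all listed eigenvectors orthonormal, and record the orthogonality relations this forces: $v_{\mu,+},v_{\mu,-}\perp v_{1,+},v_{1,-}$ (from $v_{\mu,\pm}\in\ker\mA_1 = (\textnormal{im}\,\mA_1)^\perp$) and $v_{\mu,\pm}\perp v_{2,\pm}$ (distinct eigenvalues of the symmetric $\mA_2$). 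Since $\gamma = \tfrac{1}{4\sqrt{d}}$ and $\mu\le\tfrac{1}{10}$, one checks $\gamma\lambda_1 = \tfrac{\sqrt{2}}{4}$, $\gamma\lambda_2 < 1$ and $\gamma\mu\le\tfrac{1}{40}$, so each $\mI+\gamma\mA_i$ is symmetric positive definite with spectrum in $(0, 1+\gamma\lambda_i]\subset(0,\tfrac{7}{4})$ and $\norm{\mI+\gamma\mA_i} = 1+\gamma\lambda_i < \tfrac{7}{4}$. Two elementary facts are used throughout: (a) if $\theta^\top\mA_i\theta\le 0$ then $\norm{(\mI+\gamma\mA_i)\theta}^2 = \norm{\theta}^2 + 2\gamma\,\theta^\top\mA_i\theta + \gamma^2\norm{\mA_i\theta}^2 \le (1+(\gamma\lambda_i)^2)\norm{\theta}^2$; and (b) the scalar identity $\tfrac{(1-x)^2(2+x)}{(1+x)^2} - (2-x) = \tfrac{2x(x^2-3)}{(1+x)^2}\le 0$ for $0\le x<\sqrt{3}$, which shows that if the governing sign constraint holds \emph{both} before and after an $\mA_i$-step, then that step is non-expanding up to $O(\beta\norm{\theta})$ noise. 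Here $\norm{\xi_t}\le\beta$ on $\Omega_T$, with $\beta$ small as in \eqref{eq:beta}.

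\emph{The local estimates.} Bound \eqref{eq:imp_5} is immediate from $\norm{\theta_{t+1}}\le(1+\gamma\lambda_{i_t})\norm{\theta_t}+\beta\le\tfrac{7}{4}\norm{\theta_t}+\tfrac{1}{64}\norm{\theta_0}<2\max\{\norm{\theta_t},\norm{\theta_0}\}$. For \eqref{eq:imp_1} and \eqref{eq:imp_2} I decompose $\theta_t$ and $\theta_{t+1}$ in the eigenbasis of $\mA_{i_t}$ and use that the sign constraint holds at $t$ \emph{and} at $t+1$: for $\mA_1$ this forces $\inp{v_{1,+}}{\theta_{t+1}}^2\le\bigl(\tfrac{1-\gamma\lambda_1}{1+\gamma\lambda_1}\bigr)^2\inp{v_{1,-}}{\theta_{t+1}}^2 + O(\beta\norm{\theta_t})$, and inserting this into $\norm{(\mI+\gamma\mA_1)\theta_t}^2-\norm{\theta_t}^2$ and invoking (b) gives $\norm{\theta_{t+1}}^2\le\norm{\theta_t}^2 + O(\beta\norm{\theta_t})$, which \eqref{eq:beta} turns into the factor $1+\min\{1/T,\gamma\mu\}$ on $\max\{\norm{\theta_t}^2,\norm{\theta_0}^2\}$. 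For $\mA_2$ the same argument controls the $v_{2,\pm}$-block; the $v_{\mu,\pm}$-block contributes the only genuine growth, $\inp{v_{\mu,+}}{\theta_{t+1}}^2 = (1+\gamma\mu)^2\inp{v_{\mu,+}}{\theta_t}^2 + O(\beta\cdot)$, and combining the two constraints $\theta_t^\top\mA_2\theta_t\le 0$ (which gives $2\gamma\lambda_2(\inp{v_{2,+}}{\theta_t}^2-\inp{v_{2,-}}{\theta_t}^2)+2\gamma\mu(\inp{v_{\mu,+}}{\theta_t}^2-\inp{v_{\mu,-}}{\theta_t}^2)\le 0$) and $\theta_{t+1}^\top\mA_2\theta_{t+1}\le 0$ (used as in (b) on each block) collapses the $(\gamma\lambda_2)^2$-order contributions and leaves the rate $1+\tfrac{11}{8}\gamma\mu$ after using the slack in \eqref{eq:beta}.

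\emph{The transition estimates and the obstacle.} The crux is \eqref{eq:imp_3}--\eqref{eq:imp_4}, where the two steps use different matrices, the two eigenbases no longer agree, and — unlike the ``stay'' case — neither single step is non-expanding (each can grow the norm by a constant factor near its boundary). The structural fact that makes it work is that $\R^{d+2} = \textnormal{span}(v_{\mu,+})\oplus v_{\mu,+}^{\perp}$ is invariant under \emph{both} $\mA_1$ and $\mA_2$, that the $\mA_1$-step fixes the $v_{\mu,+}$-coordinate (as $v_{\mu,+}\in\ker\mA_1$) while the $\mA_2$-step merely scales it by $1+\gamma\mu$, and that the sign constraints $\theta_t^\top\mA_1\theta_t\le 0$, $\theta_{t+1}^\top\mA_1\theta_{t+1}>0$, $\theta_{t+1}^\top\mA_2\theta_{t+1}\le 0$ live entirely in $v_{\mu,+}^{\perp}$ and pin $\inp{v_{2,+}}{\theta_{t+1}}^2 - \inp{v_{2,-}}{\theta_{t+1}}^2\le\tfrac{\mu}{\lambda_2}\norm{\theta_{t+1}}^2$. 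Feeding the explicit overlaps of the $\mA_1$- and $\mA_2$-eigenvectors for the dataset \eqref{eq:two_dataset} (conveniently via the $(c_1,c_2,v)$ parametrisation, in which $\tfrac{1}{2}\theta^\top\mA_1\theta = (c_1+c_2)(a_1^\top v)$ and $\tfrac{1}{2}\theta^\top\mA_2\theta = -(c_1+c_2)(a_1^\top v) - \mu(c_1 v_1 + c_2 v_2)$) into these relations, one shows that the $v_{\mu,+}^{\perp}$-component contracts over the two steps by enough to absorb the $(1+\gamma\mu)^2$ growth on $v_{\mu,+}$, giving $\norm{\theta_{t+2}}^2\le(1+\importantfrac)\norm{\theta_t}^2$ whenever $\norm{\theta_t}^2\gtrsim\norm{\theta_0}^2$; when $\norm{\theta_t}^2$ is small, applying \eqref{eq:imp_5} twice together with \eqref{eq:beta} produces the clean floor $64\norm{\theta_0}^2$. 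I expect this composition-of-two-asymmetric-steps analysis — in particular extracting the precise constant $\tfrac{93}{50}$ — to be by far the most technical part.

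\emph{Chaining to \eqref{eq:fVLRZmZ}.} Fix $s$, set $\kappa\eqdef 1+\importantfrac$ and $M\eqdef\max\{\norm{\theta_s}^2,\norm{\theta_0}^2\}$, and prove by strong induction on $t\ge s$ the invariant: $\norm{\theta_t}^2\le 256\,\kappa^{t-s}M$, and moreover either $\norm{\theta_t}^2\le 64\,\kappa^{t-s}M$, or ($t\ge s{+}1$, $i_{t-1}\ne i_t$, and $\norm{\theta_{t-1}}^2\le 64\,\kappa^{t-1-s}M$). In the step $t\to t+1$ one distinguishes whether $i_t = i_{t+1}$ and which alternative holds at $t$: when $i_t = i_{t+1}$ and $\theta_t$ is in the ``$64$'' case one applies \eqref{eq:imp_1}/\eqref{eq:imp_2} directly; whenever $\theta_t$ is in the ``just switched'' case ($i_{t-1}\ne i_t$) one applies \eqref{eq:imp_3}/\eqref{eq:imp_4} over the two-step window $[t-1,t+1]$ using the induction hypothesis on $\theta_{t-1}$; and \eqref{eq:imp_5} is used exactly once, never chained, at a fresh switch — which is precisely what yields the factor $256 = 4\cdot 64$. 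The base case $t = s$ is immediate ($\norm{\theta_s}^2\le M\le 64M$ puts us in the ``$64$'' case, so the subcases that would need the invariant at $s-1$ never arise). The first part of the invariant is exactly \eqref{eq:fVLRZmZ}. Finally, if option $3$ ever occurs the iterate is frozen, so all bounds are trivially preserved thereafter.
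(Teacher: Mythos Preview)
Your treatment of \eqref{eq:imp_1}, \eqref{eq:imp_2} and \eqref{eq:imp_5} is essentially the paper's: eigenbasis decomposition of $\mI+\gamma\mA_i$, insert the sign constraint at $t{+}1$ to kill the $O(\gamma\lambda_i)$ cross term, and absorb the $O(\beta\norm{\theta_t})$ noise via \eqref{eq:beta}. Your chaining invariant is a more explicit version of what the paper does: the paper first proves $\min\{\norm{\theta_{t+1}}^2,\norm{\theta_t}^2\}\le 64\,\kappa^{t-s}M$ by induction (splitting on whether $\norm{\theta_t}^2\lessgtr\norm{\theta_{t+1}}^2$), and then runs a second induction to upgrade this to $\norm{\theta_t}^2\le 256\,\kappa^{t-s}M$, using \eqref{eq:imp_5} once at a local maximum of the norm. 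Your ``either $64$ or just-switched'' invariant should work and is morally the same bookkeeping.

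The real divergence is in \eqref{eq:imp_3}--\eqref{eq:imp_4}. The paper does \emph{not} compute eigenvector overlaps or argue that the $v_{\mu,+}^\perp$-component contracts. Instead it proves two auxiliary lemmas (Lemma~\ref{lemma:num_2_1} and Lemma~\ref{lemma:num_1_2}) bounding $\max_{x\in Q}\norm{(\mI+\gamma\mA_2)(\mI+\gamma\mA_1)x}^2$ over the constrained set $Q=\{\norm{x}=1,\ x^\top\mA_1x\le 0,\ x^\top(\mI+\gamma\mA_1)\mA_2(\mI+\gamma\mA_1)x\le\delta\}$ (and the symmetric version). The proof there is: (i) Lagrange-relax the two inequality constraints with multipliers $\eta=\xi=\gamma/2$, reducing to an unconstrained largest-eigenvalue problem for a symmetric $(d{+}2)\times(d{+}2)$ matrix $\mB$; (ii) exploit the explicit block structure of $\mB$ for the dataset \eqref{eq:two_dataset} and a symmetry-reduction lemma (Lemma~\ref{lemma:reduction}) to collapse it to a $5\times 5$ matrix $\mC$; (iii) verify, via symbolic computation of the characteristic polynomial of $\mI+\mC$ and its derivatives at $x=1+\tfrac{9\gamma\mu}{5}$, that all eigenvalues are below $1+\tfrac{9\gamma\mu}{5}$. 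The constant $\tfrac{93}{50}$ then comes from $\tfrac{9}{5}$ plus slack absorbed from the noise and the choice $s=\mu/25$. So the paper's route is computational and does not separate a ``contracting'' subspace; your outline hints at a more structural argument, which may be feasible, but as written it does not supply the mechanism that replaces the Lagrange-plus-characteristic-polynomial step, and that is exactly where the constant lives. Also note a small slip: over the two-step window the $v_{\mu,+}$-coordinate only picks up one factor of $(1+\gamma\mu)$ (the $\mA_1$-step leaves it fixed), so the squared growth there is $(1+\gamma\mu)^2\approx 1+2\gamma\mu$, which is already \emph{larger} than the target $1+\tfrac{93}{50}\gamma\mu$ --- the orthogonal part must genuinely shrink, and establishing that is the whole content of Lemmas~\ref{lemma:num_2_1}--\ref{lemma:num_1_2}.
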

\begin{proof}
    For the choice of $\sigma,$ we have \eqref{eq:beta}.

    \texttheorem.
    Since $\lambda_{2} \geq \max_{i \in \{1, 2\}} \norm{\mA_i} = \max\{\lambda_1, \lambda_2\},$ we have
    \begin{align*}
        \norm{\theta_{t+1}} = \norm{\theta_t + \gamma \mA_{j_t} \theta_t + \xi_t} \leq \norm{\theta_t} + \gamma \norm{\mA_{j_t} \theta_t} + \norm{\xi_t} \overset{\eqref{eq:bound_norm_xi}}{\leq} (1 + \gamma \lambda_{2}) \norm{\theta_t} + \beta.
    \end{align*}
    for all $t \geq 0,$ where $j_t \in \{1, 2\}$ is the index of the matrix $\mA_{j_t}$ that is used at the $t$-th step. Since $\beta \leq \gamma \lambda_{2} \norm{\theta_0},$ we get
    \begin{align}
        \label{eq:ineq_theta_0}
        \norm{\theta_{t+1}}^2 \leq (1 + 2 \gamma \lambda_{2})^2 \max\{\norm{\theta_t}^2, \norm{\theta_0}^2\} \leq 4 \max\{\norm{\theta_t}^2, \norm{\theta_0}^2\} \qquad \forall t \geq 0
    \end{align}
    for all $\gamma \leq \frac{1}{4 \lambda_{2}}.$ 

    Next, consider any step $i_{t}.$ We will consider four options: 
    \\\textbf{(Option 1):} If $i_{t} = 1$ and $i_{t + 1} = 1,$ then
    \begin{align}
        \label{eq:theta_t_1}
        \theta_{t+1} = \theta_t + \gamma \mA_1 \theta_t + \xi_t \text{ and } \inp{\theta_{t+1}}{\mA_1 \theta_{t+1}} \leq 0.
    \end{align}
    Using \eqref{eq:young}, we get
    \begin{align*}
        \norm{\theta_{t+1}}^2 = \norm{\theta_t + \gamma \mA_1 \theta_t + \xi_t}^2 \leq (1 + s) \norm{\theta_t + \gamma \mA_1 \theta_t}^2 + (1 + s^{-1}) \norm{\xi_t}^2
    \end{align*}
    for all $s > 0.$ 
    Let us denote $\{v_{1,j}\}_{i = 1}^d$ as eigenvectors corresponding to the kernel of $\mA_1.$ Rewriting the norm $\norm{\theta_t + \gamma \mA_1 \theta_t}^2$
    in the eigenvectors of $\mA_1$ (see Proposition~\ref{proposition:max_matrix_spec}), we get
    \begin{equation}
    \label{eq:norm_theta_t_1}
    \begin{aligned}
        \norm{\theta_{t+1}}^2 
        &\leq (1 + s)\left(\inp{\theta_t + \gamma \mA_1 \theta_t}{v_{1, +}}^2 + \inp{\theta_t + \gamma \mA_1 \theta_t}{v_{1, -}}^2 + \sum_{j = 1}^{d} \inp{\theta_t + \gamma \mA_1 \theta_t}{v_{1,j}}^2\right) + (1 + s^{-1}) \norm{\xi_t}^2 \\
        &= (1 + s)\left((1 + \gamma \lambda_1)^2\inp{\theta_t}{v_{1, +}}^2 + (1 - \gamma \lambda_1)^2\inp{\theta_t}{v_{1, -}}^2 + \sum_{j = 1}^{d} \inp{\theta_t}{v_{1,j}}^2\right) + (1 + s^{-1}) \norm{\xi_t}^2 \\
        &= (1 + s)\left((1 + \gamma^2 \lambda_1^2)(\inp{\theta_t}{v_{1, +}}^2 + \inp{\theta_t}{v_{1, -}}^2) + 2 \gamma \lambda_1 (\inp{\theta_t}{v_{1, -}}^2 - \inp{\theta_t}{v_{1, +}}^2) + \sum_{j = 1}^{d} \inp{\theta_t}{v_{1,j}}^2\right) \\
        &\quad + (1 + s^{-1}) \norm{\xi_t}^2
    \end{aligned}
    \end{equation}
    since $\mA_1 v_{1, +} =\lambda_1 v_{1, +},$ $\mA_1 v_{1, -} =-\lambda_1 v_{1, -},$ and $\mA_1 v_{1,j} = 0$ for all $j \geq 1.$ Using \eqref{eq:theta_t_1},
    \begin{align*}
        &0 \geq \inp{\theta_{t+1}}{\mA_1 \theta_{t+1}} = \inp{\theta_t + \gamma \mA_1 \theta_t + \xi_t}{\mA_1 \left(\theta_t + \gamma \mA_1 \theta_t + \xi_t\right)} \\
        &=\inp{\theta_t + \gamma \mA_1 \theta_t}{\mA_1 \left(\theta_t + \gamma \mA_1 \theta_t\right)} + 2 \inp{\xi_t}{\mA_1 \left(\theta_t + \gamma \mA_1 \theta_t\right)} + \inp{\xi_t}{\mA_1 \xi_t} \\
        &\geq \inp{\theta_t + \gamma \mA_1 \theta_t}{\mA_1 \left(\theta_t + \gamma \mA_1 \theta_t\right)} - 4 \lambda_1 \norm{\xi_t} \norm{\theta_t} - \lambda_1\norm{\xi_t}^2
    \end{align*}
    since $\norm{\mA_1} = \lambda_1$ and $\norm{\mI + \gamma \mA_1} \leq 2.$ Rewriting the dot product in the eigenvectors of $\mA_1,$ we get
    \begin{align*}
        \lambda_1 \inp{\theta_t + \gamma \mA_1 \theta_t}{v_{1, +}}^2 - \lambda_1 \inp{\theta_t + \gamma \mA_1 \theta_t}{v_{1, -}}^2
        - 4 \lambda_1 \norm{\xi_t} \norm{\theta_t} - \lambda_1\norm{\xi_t}^2 \leq 0,
    \end{align*}
    \begin{align*}
        (1 + \gamma\lambda_1)^2 \inp{\theta_t}{v_{1, +}}^2 - (1 - \gamma\lambda_1)^2\inp{\theta_t}{v_{1, -}}^2 \leq 4 \norm{\xi_t} \norm{\theta_t} + \norm{\xi_t}^2,
    \end{align*}
    and
    \begin{align*}
        \inp{\theta_t}{v_{1, +}}^2 - \inp{\theta_t}{v_{1, -}}^2 
        &\leq - \frac{2 \gamma\lambda_1 }{1 + \gamma^2\lambda_1^2}(\inp{\theta_t}{v_{1, +}}^2 + \inp{\theta_t}{v_{1, -}}^2) + \frac{4 \norm{\xi_t} \norm{\theta_t} + \norm{\xi_t}^2}{1 + \gamma^2\lambda_1^2} \\
        &\leq - \gamma\lambda_1(\inp{\theta_t}{v_{1, +}}^2 + \inp{\theta_t}{v_{1, -}}^2) + 4 \norm{\xi_t} \norm{\theta_t} + \norm{\xi_t}^2.
    \end{align*}
    since $0 \leq \gamma \leq \frac{1}{\lambda_1}.$
    Substituting to \eqref{eq:norm_theta_t_1},
    \begin{align*}
        \norm{\theta_{t+1}}^2 
        &\leq (1 + s)\Bigg[(1 + \gamma^2 \lambda_1^2)(\inp{\theta_t}{v_{1, +}}^2 + \inp{\theta_t}{v_{1, -}}^2) \\
        &\quad+ 2 \gamma \lambda_1 \left(- \gamma\lambda_1(\inp{\theta_t}{v_{1, +}}^2 + \inp{\theta_t}{v_{1, -}}^2) + 4 \norm{\xi_t} \norm{\theta_t} + \norm{\xi_t}^2\right) + \sum_{j = 1}^{d} \inp{\theta_t}{v_{1,j}}^2\Bigg] + (1 + s^{-1}) \norm{\xi_t}^2 \\
        &= (1 + s)\Bigg[(1 - \gamma^2 \lambda_1^2)(\inp{\theta_t}{v_{1, +}}^2 + \inp{\theta_t}{v_{1, -}}^2) + \sum_{j = 1}^{d} \inp{\theta_t}{v_{1,j}}^2\Bigg] \\
        &\quad + (1 + s^{-1}) \norm{\xi_t}^2 + 2 (1 + s) \gamma \lambda_1 \left(4 \norm{\xi_t} \norm{\theta_t} + \norm{\xi_t}^2\right) \\
        &\leq (1 + s) \norm{\theta_t}^2 + (1 + s^{-1} + 2 (1 + s) \gamma \lambda_1) \norm{\xi_t}^2 + 8 (1 + s) \gamma \lambda_1 \norm{\xi_t} \norm{\theta_t} \\
        &\leq (1 + 2 s) \norm{\theta_t}^2 + \left(1 + s^{-1} + 2 (1 + s) \gamma \lambda_1 + \frac{32 (1 + s)^2 \gamma^2 \lambda_1^2}{s}\right) \norm{\xi_t}^2
    \end{align*}
    due to $\inp{\theta_t}{v_{1, +}}^2 + \inp{\theta_t}{v_{1, -}}^2 + \sum_{j = 1}^{d} \inp{\theta_t}{v_{1,j}}^2 = \norm{\theta_t}^2$ and Young's inequality \eqref{eq:young}. 
    Recall that $s$ is a free parameter, and we can choose $s = \frac{1}{4} \min\left\{\frac{1}{T}, \gamma \mu\right\},$  Moreover, $\gamma \leq \frac{1}{\lambda_1}.$ Therefore,
    \begin{align*}
        \norm{\theta_{t+1}}^2 
        &\leq (1 + 2 s) \norm{\theta_t}^2 + \frac{256}{s} \norm{\xi_t}^2 \overset{\eqref{eq:bound_norm_xi}}{\leq} (1 + 2 s) \norm{\theta_t}^2 + \frac{256}{s} \beta^2,
    \end{align*}
    \begin{align*}
        \norm{\theta_{t+1}}^2 \leq \left(1 + \frac{1}{2} \min\left\{\frac{1}{T}, \gamma \mu\right\}\right) \norm{\theta_t}^2 + 1024 \max\left\{T, \frac{1}{\gamma \mu}\right\} \beta^2
    \end{align*}
    and taking $\beta \leq \frac{1}{64} \min\left\{\frac{1}{T}, \gamma \mu\right\} \norm{\theta_0}$,
    \begin{align}
        \label{eq:ineq_theta_1}
        \norm{\theta_{t+1}}^2 \leq \left(1 + \min\left\{\frac{1}{T}, \gamma \mu\right\}\right) \max\{\norm{\theta_t}^2, \norm{\theta_0}^2\} \leq \left(1 + \gamma \mu\right) \max\{\norm{\theta_t}^2, \norm{\theta_0}^2\}.
    \end{align}

    \textbf{(Option 2):} If $i_{t} = 2$ and $i_{t + 1} = 2,$ then
    \begin{align}
        \label{eq:theta_t_2}
        \theta_{t+1} = \theta_t + \gamma \mA_2 \theta_t + \xi_t \text{ and } \inp{\theta_{t+1}}{\mA_2 \theta_{t+1}} \leq 0.
    \end{align}
    Let us denote $\{v_{1,j}\}_{i = 1}^{d-2}$ as eigenvectors corresponding to the kernel of $\mA_2.$ Similarly to the previous steps, using Proposition~\ref{proposition:max_matrix_spec},
    \begin{equation}
    \label{eq:asdawda}
    \begin{aligned}
        \norm{\theta_{t+1}}^2 
        &\leq (1 + s)\Bigg(\inp{\theta_t + \gamma \mA_2 \theta_t}{v_{2, +}}^2 + \inp{\theta_t + \gamma \mA_2 \theta_t}{v_{2, -}}^2 \\
        &\quad + \inp{\theta_t + \gamma \mA_2 \theta_t}{v_{\mu, +}}^2 + \inp{\theta_t + \gamma \mA_2 \theta_t}{v_{\mu, -}}^2 + \sum_{j = 1}^{d - 2} \inp{\theta_t + \gamma \mA_2 \theta_t}{v_{2,j}}^2\Bigg) \\
        &\quad + (1 + s^{-1}) \norm{\xi_t}^2 \\
        &= (1 + s)\Bigg((1 + \gamma \lambda_2)^2 \inp{\theta_t}{v_{2, +}}^2 + (1 - \gamma \lambda_2)^2 \inp{\theta_t}{v_{2, -}}^2 \\
        &\quad + (1 + \gamma \mu)^2 \inp{\theta_t}{v_{\mu, +}}^2 + (1 - \gamma \mu)^2 \inp{\theta_t}{v_{\mu, -}}^2 + \sum_{j = 1}^{d - 2} \inp{\theta_t}{v_{2,j}}^2\Bigg) \\
        &\quad + (1 + s^{-1}) \norm{\xi_t}^2 \\
        &= (1 + s)\Bigg((1 + \gamma^2 \lambda_2^2) (\inp{\theta_t}{v_{2, +}}^2 + \inp{\theta_t}{v_{2, -}}^2) + 2 \gamma \lambda_2 (\inp{\theta_t}{v_{2, +}}^2 - \inp{\theta_t}{v_{2, -}}^2) \\
        &\quad + (1 + \gamma \mu)^2 \inp{\theta_t}{v_{\mu, +}}^2 + (1 - \gamma \mu)^2 \inp{\theta_t}{v_{\mu, -}}^2 + \sum_{j = 1}^{d - 2} \inp{\theta_t}{v_{2,j}}^2\Bigg) \\
        &\quad + (1 + s^{-1}) \norm{\xi_t}^2.
    \end{aligned}
    \end{equation}
    Using \eqref{eq:theta_t_2},
    \begin{align*}
        &0 \geq \inp{\theta_{t+1}}{\mA_2 \theta_{t+1}} = \inp{\theta_t + \gamma \mA_2 \theta_t + \xi_t}{\mA_2 \left(\theta_t + \gamma \mA_2 \theta_t + \xi_t\right)} \\
        &=\inp{\theta_t + \gamma \mA_2 \theta_t}{\mA_2 \left(\theta_t + \gamma \mA_2 \theta_t\right)} + 2 \inp{\xi_t}{\mA_2 \left(\theta_t + \gamma \mA_2 \theta_t\right)} + \inp{\xi_t}{\mA_2 \xi_t} \\
        &\geq \inp{\theta_t + \gamma \mA_2 \theta_t}{\mA_2 \left(\theta_t + \gamma \mA_2 \theta_t\right)} - 4 \lambda_2 \norm{\xi_t} \norm{\theta_t} - \lambda_2 \norm{\xi_t}^2.
    \end{align*}
    Rewriting the dot product in the eigenvectors of $\mA_2,$ we get
    \begin{align*}
        &\lambda_2 \inp{\theta_t + \gamma \mA_2 \theta_t}{v_{2, +}}^2 - \lambda_2 \inp{\theta_t + \gamma \mA_2 \theta_t}{v_{2, -}}^2 + \mu \inp{\theta_t + \gamma \mA_2 \theta_t}{v_{\mu, +}}^2 - \mu \inp{\theta_t + \gamma \mA_2 \theta_t}{v_{\mu, -}}^2 \\
        &\leq 4 \lambda_2 \norm{\xi_t} \norm{\theta_t} + \lambda_2 \norm{\xi_t}^2.
    \end{align*}
    Due to Proposition~\ref{proposition:max_matrix_spec},
    \begin{align*}
        &\lambda_2 (1 + \gamma \lambda_2)^2 \inp{\theta_t}{v_{2, +}}^2 - \lambda_2 (1 - \gamma \lambda_2)^2  \inp{\theta_t}{v_{2, -}}^2 + \mu (1 + \gamma \mu)^2 \inp{\theta_t}{v_{\mu, +}}^2 - \mu (1 - \gamma \mu)^2 \inp{\theta_t}{v_{\mu, -}}^2 \\
        &\leq 4 \lambda_2 \norm{\xi_t} \norm{\theta_t} + \lambda_2 \norm{\xi_t}^2
    \end{align*}
    and
    \begin{align*}
        &\inp{\theta_t}{v_{2, +}}^2 - \inp{\theta_t}{v_{2, -}}^2 \\
        &\leq - \frac{2 \gamma \lambda_2}{1 + \gamma^2 \lambda_2^2} (\inp{\theta_t}{v_{2, +}}^2 + \inp{\theta_t}{v_{2, -}}^2) - \frac{\mu (1 + \gamma \mu)^2}{\lambda_2 (1 + \gamma^2 \lambda_2^2)} \inp{\theta_t}{v_{\mu, +}}^2 + \frac{\mu (1 - \gamma \mu)^2}{\lambda_2 (1 + \gamma^2 \lambda_2^2)} \inp{\theta_t}{v_{\mu, -}}^2 \\
        &\quad + \frac{4 \norm{\xi_t} \norm{\theta_t} + \norm{\xi_t}^2}{1 + \gamma^2 \lambda_2^2},
    \end{align*}
    where we have arranged terms. Substituting to \eqref{eq:asdawda},
    \begin{align*}
        \norm{\theta_{t+1}}^2 
        &\leq (1 + s)\Bigg[(1 + \gamma^2 \lambda_2^2) (\inp{\theta_t}{v_{2, +}}^2 + \inp{\theta_t}{v_{2, -}}^2) \\
        &\quad + 2 \gamma \lambda_2\left(- \frac{2 \gamma \lambda_2}{1 + \gamma^2 \lambda_2^2} (\inp{\theta_t}{v_{2, +}}^2 + \inp{\theta_t}{v_{2, -}}^2) - \frac{\mu (1 + \gamma \mu)^2}{\lambda_2 (1 + \gamma^2 \lambda_2^2)} \inp{\theta_t}{v_{\mu, +}}^2 + \frac{\mu (1 - \gamma \mu)^2}{\lambda_2 (1 + \gamma^2 \lambda_2^2)} \inp{\theta_t}{v_{\mu, -}}^2\right) \\
        &\quad + 2 \gamma \lambda_2\left(\frac{4 \norm{\xi_t} \norm{\theta_t} + \norm{\xi_t}^2}{1 + \gamma^2 \lambda_2^2}\right) \\
        &\quad + (1 + \gamma \mu)^2 \inp{\theta_t}{v_{\mu, +}}^2 + (1 - \gamma \mu)^2 \inp{\theta_t}{v_{\mu, -}}^2 + \sum_{j = 1}^{d - 2} \inp{\theta_t}{v_{2,j}}^2\Bigg] \\
        &\quad + (1 + s^{-1}) \norm{\xi_t}^2.
    \end{align*}
    Since $0 \leq \gamma \leq \frac{1}{\lambda_2}$ and $0 \leq \gamma \leq \frac{1}{\mu},$
    \begin{align*}
        \norm{\theta_{t+1}}^2 
        &\leq (1 + s)\Bigg[\inp{\theta_t}{v_{2, +}}^2 + \inp{\theta_t}{v_{2, -}}^2 \\
        &\quad + (1 + \gamma \mu)^2 \left(1 - \gamma \mu\right) \inp{\theta_t}{v_{\mu, +}}^2 + (1 - \gamma \mu)^2 \left(1 + 2 \gamma \mu\right) \inp{\theta_t}{v_{\mu, -}}^2 + \sum_{j = 1}^{d - 2} \inp{\theta_t}{v_{2,j}}^2\Bigg] \\
        &\quad + (1 + s^{-1}) \norm{\xi_t}^2 + (1 + s) 2 \gamma \lambda_2\left(4 \norm{\xi_t} \norm{\theta_t} + \norm{\xi_t}^2\right) \\
        &\leq (1 + s)\Bigg[\inp{\theta_t}{v_{2, +}}^2 + \inp{\theta_t}{v_{2, -}}^2 \\
        &\quad + (1 + \gamma \mu) \inp{\theta_t}{v_{\mu, +}}^2 + \inp{\theta_t}{v_{\mu, -}}^2 + \sum_{j = 1}^{d - 2} \inp{\theta_t}{v_{2,j}}^2\Bigg] \\
        &\quad + (1 + s^{-1}) \norm{\xi_t}^2 + (1 + s) 2 \gamma \lambda_2\left(4 \norm{\xi_t} \norm{\theta_t} + \norm{\xi_t}^2\right) \\
        &\leq (1 + s) (1 + \gamma \mu) \norm{\theta_t}^2  + (1 + s^{-1} + 2 (1 + s) \gamma \lambda_2) \norm{\xi_t}^2 + 8 (1 + s) \gamma \lambda_2 \norm{\xi_t} \norm{\theta_t}
    \end{align*}
    because $\inp{\theta_t}{v_{2, +}}^2 + \inp{\theta_t}{v_{2, -}}^2 + \inp{\theta_t}{v_{\mu, +}}^2 + \inp{\theta_t}{v_{\mu, -}}^2 + \sum_{j = 1}^{d - 2} \inp{\theta_t}{v_{1,j}}^2 = \norm{\theta_t}^2.$ Taking $s = \frac{\gamma \mu}{16} \leq 1$ and using Young's inequality, 
    \begin{align*}
        \norm{\theta_{t+1}}^2 
        &\leq \left(1 + \frac{9}{8}\gamma \mu\right) \norm{\theta_t}^2  + (1 + s^{-1} + 2 (1 + s) \gamma \lambda_2) \norm{\xi_t}^2 + 8 (1 + s) \gamma \lambda_2 \norm{\xi_t} \norm{\theta_t} \\
        &\leq \left(1 + \frac{9}{8}\gamma \mu\right) \norm{\theta_t}^2  + \left(5 + \frac{16}{\gamma \mu}\right) \norm{\xi_t}^2 + 512 \norm{\xi_t}^2 \frac{1}{\gamma \mu} + \frac{\gamma \mu}{8} \norm{\theta_t}^2 \\
        &= \left(1 + \frac{10}{8}\gamma \mu\right) \norm{\theta_t}^2  + \frac{1024}{\gamma \mu} \norm{\xi_t}^2 \overset{\eqref{eq:bound_norm_xi}}{\leq} \left(1 + \frac{10}{8}\gamma \mu\right) \norm{\theta_t}^2  + \frac{1024}{\gamma \mu} \beta^2.
    \end{align*}
    where we use $0 \leq \gamma \leq \frac{1}{\lambda_2}$ and $0 \leq \gamma \leq \frac{1}{\mu}.$ Taking $\beta \leq \frac{1}{128} \gamma \mu \norm{\theta_0},$
    \begin{align}
        \label{eq:ineq_theta_2}
        \norm{\theta_{t+1}}^2 \leq \left(1 + \frac{11}{8}\gamma \mu\right) \max\{\norm{\theta_t}^2, \norm{\theta_0}^2\}.
    \end{align}
    \\\textbf{(Option 3):} If $i_{t} = 1$ and $i_{t + 1} = 2,$ then
    \begin{equation}
    \begin{aligned}
        \label{eq:theta_t_3}
        &\theta_{t+2} = \left(\mI + \gamma \mA_2\right) \left(\left(\mI + \gamma \mA_1\right) \theta_{t} + \xi_{t}\right) + \xi_{t+1}, \, \inp{\theta_{t}}{\mA_1 \theta_{t}} \leq 0, \\
        &\textnormal{ and } \inp{\left(\mI + \gamma \mA_1\right) \theta_{t} + \xi_{t}}{\mA_2 \left(\left(\mI + \gamma \mA_1\right) \theta_{t} + \xi_{t}\right)} \leq 0.
    \end{aligned}
    \end{equation}
    Thus,
    \begin{align}
        \norm{\theta_{t+2}} 
        &= \norm{\left(\mI + \gamma \mA_2\right) \left(\left(\mI + \gamma \mA_1\right) \theta_{t} + \xi_{t}\right) + \xi_{t+1}} \nonumber \\
        &\leq \norm{\left(\mI + \gamma \mA_2\right) \left(\mI + \gamma \mA_1\right) \theta_{t}} + \norm{\left(\mI + \gamma \mA_2\right) \xi_{t}} + \norm{\xi_{t+1}} \nonumber \\
        &\leq \norm{\left(\mI + \gamma \mA_2\right) \left(\mI + \gamma \mA_1\right) \theta_{t}} + 2 \norm{\xi_{t}} + \norm{\xi_{t+1}} \label{eq:asdasdsad}
    \end{align}
    because $\norm{\left(\mI + \gamma \mA_2\right)} \leq 2$ for all $\gamma \leq \frac{1}{\lambda_2}.$ Next,
    \begin{align*}
        0 
        &\geq \inp{\left(\mI + \gamma \mA_1\right) \theta_{t} + \xi_{t}}{\mA_2 \left(\left(\mI + \gamma \mA_1\right) \theta_{t} + \xi_{t}\right)} \\
        &= \inp{\left(\mI + \gamma \mA_1\right) \theta_{t}}{\mA_2 \left(\left(\mI + \gamma \mA_1\right) \theta_{t}\right)} + 2 \inp{\xi_{t}}{\mA_2 \left(\mI + \gamma \mA_1\right) \theta_{t}} + \inp{\xi_{t}}{\mA_2 \xi_{t}} \\
        &\geq \inp{\left(\mI + \gamma \mA_1\right) \theta_{t}}{\mA_2 \left(\left(\mI + \gamma \mA_1\right) \theta_{t}\right)} - 4 \lambda_2 \norm{\xi_{t}} \norm{\theta_{t}} - 
        \lambda_2 \norm{\xi_{t}}^2 \\
        &\geq \inp{\left(\mI + \gamma \mA_1\right) \theta_{t}}{\mA_2 \left(\left(\mI + \gamma \mA_1\right) \theta_{t}\right)} - s \norm{\theta_{t}}^2 - \left(\lambda_2 + \frac{16 \lambda_2^2}{s}\right) \norm{\xi_{t}}^2.
    \end{align*}
    for all $s > 0.$ If $\norm{\theta_t} \leq \norm{\theta_0},$ then 
    \begin{align}
        \label{eq:asdasdasd}
        \norm{\theta_{t+2}} \leq 4 \norm{\theta_0} + 2 \norm{\xi_{t}} + \norm{\xi_{t+1}} \overset{\eqref{eq:bound_norm_xi}}{\leq} 4 \norm{\theta_0} + 3 \beta \leq 8 \norm{\theta_0}.
    \end{align}
    Due to \eqref{eq:asdasdsad}, $\norm{\mI + \gamma \mA_1} \leq 2,$ $\norm{\mI + \gamma \mA_2} \leq 2,$ and $\beta \leq \norm{\theta_0}.$ Otherwise, if $\norm{\theta_t} > \norm{\theta_0},$ then
    \begin{align*}
        \norm{\theta_{t+2}} 
        &\leq \norm{\left(\mI + \gamma \mA_2\right) \left(\mI + \gamma \mA_1\right) \frac{\theta_{t}}{\norm{\theta_{t}}}} \norm{\theta_{t}} + 2 \norm{\xi_{t}} + \norm{\xi_{t+1}},
    \end{align*}
    \begin{align*}
        \inp{\frac{\theta_{t}}{\norm{\theta_{t}}}}{\mA_1 \frac{\theta_{t}}{\norm{\theta_{t}}}} \leq 0,
    \end{align*}
    and
    \begin{align*}
        \inp{\left(\mI + \gamma \mA_1\right) \frac{\theta_{t}}{\norm{\theta_{t}}}}{\mA_2 \left(\left(\mI + \gamma \mA_1\right) \frac{\theta_{t}}{\norm{\theta_{t}}}\right)} \leq s + \left(\frac{\lambda_2}{\norm{\theta_t}^2} + \frac{16 \lambda_2^2}{s \norm{\theta_t}^2}\right) \norm{\xi_{t}}^2 \leq s + \left(\frac{\lambda_2}{\norm{\theta_0}^2} + \frac{16 \lambda_2^2}{s \norm{\theta_0}^2}\right) \norm{\xi_{t}}^2.
    \end{align*}
    Using Lemma~\ref{lemma:num_2_1} with $\delta = s + \left(\frac{\lambda_2}{\norm{\theta_0}^2} + \frac{16 \lambda_2^2}{s \norm{\theta_0}^2}\right) \norm{\xi_{t}}^2,$ we obtain
    \begin{align*}
        \norm{\theta_{t+2}} 
        &\leq \sqrt{1 + \frac{9 \gamma \mu}{5} + \left(s + \left(\frac{\lambda_2}{\norm{\theta_0}^2} + \frac{16 \lambda_2^2}{s \norm{\theta_0}^2}\right) \norm{\xi_{t}}^2\right)\frac{\gamma}{2}} \norm{\theta_{t}} + 2 \norm{\xi_{t}} + \norm{\xi_{t+1}},
    \end{align*}
    Since $\norm{\xi_{t}} \leq \beta,$
    \begin{align*}
        \norm{\theta_{t+2}} 
        &\leq \sqrt{1 + \frac{9 \gamma \mu}{5} + \left(s + \left(\frac{\lambda_2}{\norm{\theta_0}^2} + \frac{16 \lambda_2^2}{s \norm{\theta_0}^2}\right) \beta^2\right)\frac{\gamma}{2}} \norm{\theta_{t}} + 3 \beta,
    \end{align*}
    Choosing $s = \frac{\mu}{25}$ and $\beta \leq \min\left\{\frac{\mu \norm{\theta_0}}{1050 \lambda_2}, \frac{\gamma \mu \norm{\theta_0}}{1050}\right\} \left(\leq \min\left\{1, \sqrt{\frac{\mu}{25} \left(\frac{\lambda_2}{\norm{\theta_0}^2} + \frac{400 \lambda_2^2}{\mu \norm{\theta_0}^2}\right)^{-1}}, \frac{\gamma \mu \norm{\theta_0}}{1050}\right\}\right),$
    \begin{align*}
        \norm{\theta_{t+2}} 
        &\leq \sqrt{1 + \importantfrac} \max\{\norm{\theta_{t}}, \norm{\theta_{0}}\}
    \end{align*}
    for all $\gamma \leq \frac{1}{\mu}$ and $\gamma \leq \frac{1}{\lambda_2}.$ Combining the last inequality with \eqref{eq:asdasdasd}:
    \begin{align}
        \label{eq:ineq_theta_3}
        \norm{\theta_{t+2}} \leq \max\left\{\sqrt{1 + \importantfrac} \norm{\theta_{t}}, 8 \norm{\theta_{0}}\right\}.
    \end{align}
    \\\textbf{(Option 4):} If $i_{t} = 2$ and $i_{t + 1} = 1,$ then the proof of the inequality
    \begin{align}
        \label{eq:ineq_theta_4}
        \norm{\theta_{t+2}} \leq \max\left\{\sqrt{1 + \importantfrac} \norm{\theta_{t}}, 8 \norm{\theta_{0}}\right\}.
    \end{align}
    is the same as in the previous option, with the only change that we should use Lemma~\ref{lemma:num_1_2} instead.

    Using mathematical induction, we will show that 
    \begin{align}
        \label{eq:asdasdsaddsa}
        \min\{\norm{\theta_{t+1}}^2, \norm{\theta_t}^2\} \leq 64 \left(1 + \importantfrac\right)^{t - s} \max\{\norm{\theta_s}^2, \norm{\theta_0}^2\}
    \end{align}
    for all $t \geq s.$ Clearly, $\norm{\theta_s}^2 \leq 64 \max\{\norm{\theta_s}^2, \norm{\theta_0}^2\}.$ Thus, $\min\{\norm{\theta_{s+1}}^2, \norm{\theta_s}^2\} \leq 64 (1 + \importantfrac)^{0} \max\{\norm{\theta_s}^2, \norm{\theta_0}^2\}.$
    Let the inequality $\min\{\norm{\theta_{t+1}}^2, \norm{\theta_t}^2\} \leq 64 (1 + \importantfrac)^{t - s} \max\{\norm{\theta_s}^2, \norm{\theta_0}^2\}$ hold. If $\norm{\theta_t}^2 > \norm{\theta_{t+1}}^2,$ then
    \begin{align*}
        \min\{\norm{\theta_{t+2}}^2, \norm{\theta_{t+1}}^2\} 
        &\leq \norm{\theta_{t+1}}^2 = \min\{\norm{\theta_{t+1}}^2, \norm{\theta_t}^2\} \leq 64 \left(1 + \importantfrac\right)^{t - s} \max\{\norm{\theta_s}^2, \norm{\theta_0}^2\} \\
        &\leq 64 \left(1 + \importantfrac\right)^{t - s + 1} \max\{\norm{\theta_s}^2, \norm{\theta_0}^2\}.
    \end{align*}
    Otherwise, if $\norm{\theta_t}^2 \leq \norm{\theta_{t+1}}^2,$ then in the case of \textbf{(Option 1)} and \textbf{(Option 2)}, we have
    \begin{align*}
        &\min\{\norm{\theta_{t+2}}^2, \norm{\theta_{t+1}}^2\} \leq \norm{\theta_{t+1}}^2 \\
        &\leq \left(1 + \frac{11}{8}\gamma \mu\right) \max\{\norm{\theta_t}^2, \norm{\theta_0}^2\} \\
        &= \left(1 + \frac{11}{8}\gamma \mu\right) \max\left\{\min\left\{\norm{\theta_{t+1}}^2, \norm{\theta_{t}}^2\right\}, \norm{\theta_0}^2\right\} \\
        &\leq \left(1 + \frac{11}{8}\gamma \mu\right) \max\left\{64 \left(1 + \importantfrac\right)^{t - s} \max\{\norm{\theta_s}^2, \norm{\theta_0}^2\}, \norm{\theta_0}^2\right\} \\
        &\leq 64 \left(1 + \importantfrac\right)^{t - s + 1} \max\{\norm{\theta_s}^2, \norm{\theta_0}^2\}.
    \end{align*}
    If $\norm{\theta_t}^2 \leq \norm{\theta_{t+1}}^2,$ in the case of \textbf{(Option 3)} and \textbf{(Option 4)} we have
    \begin{align*}
        &\min\{\norm{\theta_{t+2}}^2, \norm{\theta_{t+1}}^2\} \leq \norm{\theta_{t+2}}^2 \\
        &\leq \max\left\{\left(1 + \importantfrac\right) \norm{\theta_{t}}^2, 64 \norm{\theta_{0}}^2\right\} \\
        &= \max\left\{\left(1 + \importantfrac\right) \min\left\{\norm{\theta_{t + 1}}^2, \norm{\theta_{t}}^2\right\}, 64 \norm{\theta_{0}}^2\right\} \\
        &\leq \max\left\{\left(1 + \importantfrac\right) 64 \left(1 + \importantfrac\right)^{t - s} \max\{\norm{\theta_s}^2, \norm{\theta_0}^2\}, 64 \norm{\theta_{0}}^2\right\} \\
        &= 64 \left(1 + \importantfrac\right)^{t - s + 1} \max\{\norm{\theta_s}^2, \norm{\theta_0}^2\}.
    \end{align*}
    We have proved the next step of the mathematical induction. 
    
    Next, we use mathematical once again to show that $\norm{\theta_t}^2 \leq 256 \left(1 + \importantfrac\right)^{t - s} \max\{\norm{\theta_s}^2, \norm{\theta_0}^2\}$ for all $t \geq s.$ Clearly, $\norm{\theta_s}^2 \leq 256 \max\{\norm{\theta_s}^2, \norm{\theta_0}^2\}.$
    For all $t \geq s + 1,$ using \eqref{eq:asdasdsaddsa}, if $\norm{\theta_t}^2 \leq \norm{\theta_{t+1}}^2,$
    then $\norm{\theta_t}^2 \leq 64 (1 + \importantfrac)^{t - s} \max\{\norm{\theta_s}^2, \norm{\theta_0}^2\} \leq 256 (1 + \importantfrac)^{t - s} \max\{\norm{\theta_s}^2, \norm{\theta_0}^2\}.$ If $\norm{\theta_t}^2 > \norm{\theta_{t+1}}^2$ and $\norm{\theta_t}^2 \leq \norm{\theta_{t-1}}^2,$ then $\norm{\theta_t}^2 = \min\{\norm{\theta_t}^2, \norm{\theta_{t-1}}^2\} \leq 64 (1 + \importantfrac)^{t - s - 1} \max\{\norm{\theta_s}^2, \norm{\theta_0}^2\} \leq 256 (1 + \importantfrac)^{t - s} \max\{\norm{\theta_s}^2, \norm{\theta_0}^2\}.$ Finally, if $\norm{\theta_t}^2 > \norm{\theta_{t+1}}^2$ and $\norm{\theta_t}^2 > \norm{\theta_{t-1}}^2,$ then
    \begin{align*}
        &\norm{\theta_t}^2 \overset{\eqref{eq:ineq_theta_0}}{\leq} 4 \norm{\theta_{t-1}}^2 = 4 \min\{\norm{\theta_{t}}^2, \norm{\theta_{t-1}}^2\} \leq 4 \times 64 \left(1 + \importantfrac\right)^{t - s - 1} \max\{\norm{\theta_s}^2, \norm{\theta_0}^2\} \\
        &\leq 256 \left(1 + \importantfrac\right)^{t - s} \max\{\norm{\theta_s}^2, \norm{\theta_0}^2\}.
    \end{align*}
\end{proof}

\begin{lemma}
    \label{lemma:ineq_norm_theta_better}
    \inittext

    Let $t_i \eqdef \sum\limits_{j=1}^{i-1} s_j$ and $\bar{t}_{i} \eqdef \sum\limits_{j \in [i - 1] \text{ s.t. } q_j = 2} s_j.$

    Then, on the set $\Omega_T$ from Lemma~\ref{lemma:prob} and for the choice of $\sigma$ in Theorem~\ref{thm:main},
    \begin{align}
        \label{eq:liMjgdb}
        \norm{\theta_{t_{j}}}^2 \leq 1024 \left(1 + \frac{93 \gamma \mu}{50}\right)^{\bar{t}_{j}} \norm{\theta_0}^2,
    \end{align}
    for all $j \geq 2$ such that $t_{j} \leq T - 1.$
\end{lemma}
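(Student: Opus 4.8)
The plan is to induct on the group index $j$, feeding in the per-step estimates \eqref{eq:imp_1}--\eqref{eq:imp_5} of Lemma~\ref{lemma:ineq_norm_theta} (which hold on the event $\Omega_T$ of Lemma~\ref{lemma:prob} for the $\sigma$ of Theorem~\ref{thm:main}). Exactly as in the proof of Theorem~\ref{thm:main}, we may assume that only options $1$ and $2$ are taken before iteration $T$, so by maximality the groups strictly alternate, $q_i\neq q_{i+1}$. Write $G\eqdef 1+\importantfrac$. The point of this lemma, as opposed to the cruder \eqref{eq:fVLRZmZ} whose exponent is the \emph{total} step count $t_j-s$, is that the exponent should see only the \emph{option-$2$} steps; so the whole argument is a careful charging scheme in which every step before group $j$ is assigned either a negligible cost or a cost $\leq G$ tied to a distinct option-$2$ step. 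The hypothesis $t_j\leq T-1$ is exactly what guarantees that the step at iteration $t_j$ is still covered by $\Omega_T$, which we will need for the windowed iterate $\theta_{t_j+1}$.

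The two facts that make the charging work are: \emph{(i) option-$1$ runs are essentially free} --- by \eqref{eq:imp_1} an interior option-$1$ step multiplies $\norm{\theta_t}^2$ by at most $1+\min\{1/T,\gamma\mu\}\leq 1+1/T$ (up to the harmless reset to $\norm{\theta_0}^2$), and since at most $t_j\leq T$ option-$1$ steps occur before group $j$, the product of \emph{all} of these factors, over \emph{all} option-$1$ runs simultaneously, is at most $(1+1/T)^T<3$; this must be collapsed globally, not once per group, or a short option-$2$ group would cost a constant and the bound would blow up; and \emph{(ii) each option-$2$ step and each transition costs at most $G$} --- an interior option-$2$ step multiplies $\norm{\theta_t}^2$ by at most $1+\tfrac{11}{8}\gamma\mu\leq G$ by \eqref{eq:imp_2}, while by \eqref{eq:imp_3}--\eqref{eq:imp_4} a transition step taken together with the step immediately after it multiplies $\norm{\theta_t}^2$ by at most $G$ or else caps it at $64\norm{\theta_0}^2$. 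In the alternating structure every transition straddles exactly one option-$2$ step (the first or last step of the adjacent option-$2$ group) and every non-transition option-$2$ step is interior, so the option-$2$-and-transition factors multiply to at most $G^{\bar t_j}$, with the transition \emph{steps} themselves riding for free inside those $G$'s or inside the global factor $<3$.

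The one real subtlety --- and the main obstacle --- is that \eqref{eq:imp_3}--\eqref{eq:imp_4} bound $\norm{\theta_{t+2}}^2$ in terms of $\norm{\theta_t}^2$, skipping over the iterate $\theta_{t_i}$ at the start of a group, which is allowed to spike. To keep the charging exact I would carry through the induction a short window of consecutive values, e.g.\ $W_j\eqdef\max\{\norm{\theta_{t_j-1}}^2,\norm{\theta_{t_j+1}}^2\}$, and prove $W_j\leq 256\,G^{\bar t_j}\norm{\theta_0}^2$ by induction on $j$ (base case $j=2$ being immediate, the $j=1$ case trivial); this is the same window device used for \eqref{eq:asdasdsaddsa} inside Lemma~\ref{lemma:ineq_norm_theta}, now arranged so that only option-$2$ steps enter the exponent, and the constant $256$ absorbs the global factor $<3$ of (i) together with the $64$'s from the resets of (ii). In the inductive step one iterates \eqref{eq:imp_1} or \eqref{eq:imp_2} over the interior of group $j$ to pass from $\norm{\theta_{t_j+1}}^2$ to $\norm{\theta_{t_j+s_j-1}}^2=\norm{\theta_{t_{j+1}-1}}^2$, then applies the transition \eqref{eq:imp_3}/\eqref{eq:imp_4} to reach $\norm{\theta_{t_{j+1}+1}}^2$, checking that in each case the exponent increases by exactly $\bar t_{j+1}-\bar t_j$, the number of option-$2$ steps of group $j$. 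Finally, one application of \eqref{eq:imp_5} yields $\norm{\theta_{t_j}}^2\leq 4\max\{\norm{\theta_{t_j-1}}^2,\norm{\theta_0}^2\}\leq 4W_j\leq 1024\,G^{\bar t_j}\norm{\theta_0}^2$, which is the claim. The degenerate groups with $s_i\in\{1,2\}$ (no interior, so the group's one or two steps are themselves transitions) are dispatched directly from \eqref{eq:imp_3}--\eqref{eq:imp_5} and add only cases, no new idea.
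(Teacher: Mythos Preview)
Your plan is essentially the paper's proof: induct on the group index using the per-step bounds \eqref{eq:imp_1}--\eqref{eq:imp_5} of Lemma~\ref{lemma:ineq_norm_theta}, keep the cheap $(1+1/T)$ factors from interior option-$1$ steps explicit through the induction and collapse them at the end via $(1+1/T)^{T-1}\leq 4$, use a windowed hypothesis around the group boundary to absorb the two-step transition bounds \eqref{eq:imp_3}--\eqref{eq:imp_4}, and recover $\norm{\theta_{t_j}}^2$ at the end from the window via \eqref{eq:imp_5}. The paper's window is a \emph{disjunction} (either $\norm{\theta_{t_j-1}}^2$ or $\norm{\theta_{t_j}}^2$ satisfies the bound) rather than your max over $\{\theta_{t_j-1},\theta_{t_j+1}\}$, but the idea is the same.

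One bookkeeping point to fix: your claim that ``in each case the exponent increases by exactly $\bar t_{j+1}-\bar t_j$'' is off by one, and as a result the hypothesis $W_j\leq 256\,G^{\bar t_j}\norm{\theta_0}^2$ does not propagate group-by-group. When $q_j=1$, the interior of group $j$ contributes only $(1+1/T)$ factors, but the transition into group $j{+}1$ adds one $G$, whereas $\bar t_{j+1}-\bar t_j=0$; conversely when $q_j=2$ you collect only $s_j-1$ factors of $G$ ($s_j-2$ from the interior plus one from the exit transition) against $\bar t_{j+1}-\bar t_j=s_j$. These mismatches cancel over any consecutive \emph{pair} of groups, so the charging scheme you describe is globally sound, but not at each $j$. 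The paper resolves this by carrying $(1+1/T)^{t_j-\bar t_j}$ explicitly in the hypothesis and inserting a correction $-\mathbf{1}[q_{j-1}=2]$ in the $G$-exponent of the $\theta_{t_j-1}$ branch; with the analogous shift in your $W_j$ (or by inducting over pairs of groups) your argument closes.
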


\begin{proof}
    Using Lemma~\ref{lemma:ineq_norm_theta} and mathematical induction, we will prove that
    either 
    \begin{align}
        \label{eq:odXRriYzbwPWwopvlWDVone}
        \norm{\theta_{t_{j} - 1}}^2 \leq 64 \times \left(1 + \frac{1}{T}\right)^{t_{j} - \bar{t}_{j}} \left(1 + \frac{93 \gamma \mu}{50}\right)^{\bar{t}_{j} - \mathbf{1}[q_{j - 1} = 2]} \norm{\theta_0}^2
    \end{align}
    or
    \begin{align}
        \label{eq:odXRriYzbwPWwopvlWDVtwo}
        \norm{\theta_{t_{j}}}^2 \leq 64 \times \left(1 + \frac{1}{T}\right)^{t_{j} - \bar{t}_{j}} \left(1 + \frac{93 \gamma \mu}{50}\right)^{\bar{t}_{j}} \norm{\theta_0}^2
    \end{align}
    holds for all $j \geq 2$ such that $t_{j} \leq T - 1.$ For $j = 2,$ \eqref{eq:odXRriYzbwPWwopvlWDVone} is true if $t_2 \leq T - 1.$ Indeed, if $s_1 = 1,$ then
    \begin{align*}
        \norm{\theta_{t_{2} - 1}}^2 = \norm{\theta_{t_{1}}}^2 = \norm{\theta_{0}}^2.
    \end{align*}
    If $s_1 = 2$ and $q_1 = 1,$ then
    \begin{align*}
        &\norm{\theta_{t_{2} - 1}}^2 = \norm{\theta_{s_1 - 1}}^2 \leq \left(1 + \frac{1}{T}\right) \max\{\norm{\theta_{0}}^2, \norm{\theta_0}^2\} = \left(1 + \frac{1}{T}\right) \norm{\theta_{0}}^2
    \end{align*}
    due to \eqref{eq:imp_1}. If $s_1 = 3$ and $q_1 = 1,$ then
    \begin{align*}
        &\norm{\theta_{t_{2} - 1}}^2 = \norm{\theta_{s_1 - 1}}^2 \leq \left(1 + \frac{1}{T}\right) \max\{\norm{\theta_{s_2 - 2}}^2, \norm{\theta_0}^2\} \\
        &\leq \left(1 + \frac{1}{T}\right) \max\left\{\left(1 + \frac{1}{T}\right) \max\{\norm{\theta_{s_1 - 3}}^2, \norm{\theta_0}^2\}, \norm{\theta_0}^2\right\} = \left(1 + \frac{1}{T}\right)^2 \norm{\theta_0}^2,
    \end{align*}
    and so on. Thus, if $s_1 \geq 2$ and $q_1 = 1,$ then
    \begin{align*}
        \norm{\theta_{t_{2} - 1}}^2 
        &\leq \left(1 + \frac{1}{T}\right)^{s_1 - 1} \norm{\theta_{0}}^2 \leq \left(1 + \frac{1}{T}\right)^{t_2} \norm{\theta_{0}}^2 \\
        &\leq 64 \times \left(1 + \frac{1}{T}\right)^{t_{2} - \bar{t}_{2}} \left(1 + \frac{93 \gamma \mu}{50}\right)^{\bar{t}_{2} - \mathbf{1}[q_{1} = 2]} \norm{\theta_0}^2
    \end{align*}
    and \eqref{eq:odXRriYzbwPWwopvlWDVone} holds with $j = 2$ since $t_2 = s_1,$ $\bar{t}_2 = 0,$ and $q_1 = 1.$

    If $s_1 \geq 2$ and $q_1 = 2,$ then, similarly,
    \begin{align*}
        \norm{\theta_{t_{2} - 1}}^2 \leq \left(1 + \frac{11}{8}\gamma \mu\right)^{s_1 - 1} \norm{\theta_{0}}^2 \leq 64 \times \left(1 + \frac{1}{T}\right)^{t_{2} - \bar{t}_{2}} \left(1 + \frac{93 \gamma \mu}{50}\right)^{\bar{t}_{2} - \mathbf{1}[q_{1} = 2]} \norm{\theta_0}^2
    \end{align*}
    due to \eqref{eq:imp_2}, $t_2 = \bar{t}_2 = s_1,$ and $q_1 = 2.$ We now proceed to the next step of the induction assuming that $t_{j + 1} \leq T - 1$ (if $t_{j + 1} > T - 1,$ then the proof is not needed since we consider the set of $j \geq 1$ such that $t_{j} \leq T - 1$).

    (Case 1): If \eqref{eq:odXRriYzbwPWwopvlWDVone} holds and $q_{j - 1} = 2,$ then $t_{j + 1} = s_{j} + t_{j},$ $q_{j} = 1,$ $\bar{t}_{j + 1} = \bar{t}_{j},$ and
    \begin{align*}
        \norm{\theta_{t_{j} + 1}}^2 = \norm{\theta_{t_{j} - 1 + 2}}^2 \leq \max\left\{\left(1 + \importantfrac\right) \norm{\theta_{t_{j} - 1}}^2, 64 \norm{\theta_{0}}^2\right\}.
    \end{align*}
    due to \eqref{eq:imp_4} from Lemma~\ref{lemma:ineq_norm_theta}. Using \eqref{eq:odXRriYzbwPWwopvlWDVone},
    \begin{align*}
        \norm{\theta_{t_{j} + 1}}^2 
        &\leq \max\left\{\left(1 + \importantfrac\right) \times 64 \times \left(1 + \frac{1}{T}\right)^{t_{j} - \bar{t}_{j}} \left(1 + \frac{93 \gamma \mu}{50}\right)^{\bar{t}_{j} - \mathbf{1}[q_{j - 1} = 2]} \norm{\theta_0}^2, 64 \norm{\theta_{0}}^2\right\} \\
        &= \left(1 + \importantfrac\right) \times 64 \times \left(1 + \frac{1}{T}\right)^{t_{j} - \bar{t}_{j + 1}} \left(1 + \frac{93 \gamma \mu}{50}\right)^{\bar{t}_{j + 1} - 1} \norm{\theta_0}^2 \\
        &= 64 \times \left(1 + \frac{1}{T}\right)^{t_{j} - \bar{t}_{j + 1}} \left(1 + \frac{93 \gamma \mu}{50}\right)^{\bar{t}_{j + 1}} \norm{\theta_0}^2.
    \end{align*}
    If $s_{j} = 1,$ using $t_{j+1} \geq t_{j},$ we get
    \begin{align*}
        \norm{\theta_{t_{j + 1}}}^2 
        &= 64 \times \left(1 + \frac{1}{T}\right)^{t_{j + 1} - \bar{t}_{j + 1}} \left(1 + \frac{93 \gamma \mu}{50}\right)^{\bar{t}_{j + 1}} \norm{\theta_0}^2.
    \end{align*}
    and \eqref{eq:odXRriYzbwPWwopvlWDVtwo} with $j \to j + 1.$ Similarly, if $s_{j} = 2,$ we have
    \begin{align*}
        &\norm{\theta_{t_{j + 1} - 1}}^2 = \norm{\theta_{t_{j} + 2 - 1}}^2 = \norm{\theta_{t_{j} + 1}}^2 \leq 64 \times \left(1 + \frac{1}{T}\right)^{t_{j} - \bar{t}_{j + 1}} \left(1 + \frac{93 \gamma \mu}{50}\right)^{\bar{t}_{j + 1}} \norm{\theta_0}^2 \\
        &\leq 64 \times \left(1 + \frac{1}{T}\right)^{t_{j + 1} - \bar{t}_{j + 1}} \left(1 + \frac{93 \gamma \mu}{50}\right)^{\bar{t}_{j + 1}} \norm{\theta_0}^2 \\
        &= 64 \times \left(1 + \frac{1}{T}\right)^{t_{j + 1} - \bar{t}_{j + 1}} \left(1 + \frac{93 \gamma \mu}{50}\right)^{\bar{t}_{j + 1} - \mathbf{1}[q_{j} = 2]} \norm{\theta_0}^2
    \end{align*}
    and we get \eqref{eq:odXRriYzbwPWwopvlWDVone} with $j \to j + 1.$ If $s_{j} > 2,$ we have
    \begin{align*}
        &\norm{\theta_{t_{j + 1} - 1}}^2 = \norm{\theta_{t_{j} + 1 + (s_j - 2)}}^2 \leq \left(1 + \frac{1}{T}\right) \max\{\norm{\theta_{t_{j} + 1 + (s_j - 3)}}^2, \norm{\theta_0}^2\}
    \end{align*}
    due to \eqref{eq:imp_1} from Lemma~\ref{lemma:ineq_norm_theta}. If $\norm{\theta_0}^2 \geq \norm{\theta_{t_{j} + 1 + (s_j - 3)}}^2,$ we have proved \eqref{eq:odXRriYzbwPWwopvlWDVone} with $j \to j + 1.$ Otherwise, 
    \begin{align*}
        \norm{\theta_{t_{j + 1} - 1}}^2 = \norm{\theta_{t_{j} + 1 + (s_j - 2)}}^2 \leq \left(1 + \frac{1}{T}\right) \norm{\theta_{t_{j} + 1 + (s_j - 3)}}^2
    \end{align*}
    and, repeating the same steps,
    \begin{align*}
        &\norm{\theta_{t_{j + 1} - 1}}^2 = \norm{\theta_{t_{j} + 1 + (s_j - 2)}}^2 \leq \left(1 + \frac{1}{T}\right)^{s_j - 2} \norm{\theta_{t_{j} + 1}}^2 \\
        &\leq \left(1 + \frac{1}{T}\right)^{s_j - 2} \times 64 \times \left(1 + \frac{1}{T}\right)^{t_{j} - \bar{t}_{j + 1}} \left(1 + \frac{93 \gamma \mu}{50}\right)^{\bar{t}_{j + 1}} \norm{\theta_0}^2 \\
        &= 64 \times \left(1 + \frac{1}{T}\right)^{t_{j + 1} - 2 - \bar{t}_{j + 1}} \left(1 + \frac{93 \gamma \mu}{50}\right)^{\bar{t}_{j + 1} - \mathbf{1}[q_{j} = 2]} \norm{\theta_0}^2 \\
        &\leq 64 \times \left(1 + \frac{1}{T}\right)^{t_{j + 1} - \bar{t}_{j + 1}} \left(1 + \frac{93 \gamma \mu}{50}\right)^{\bar{t}_{j + 1} - \mathbf{1}[q_{j} = 2]} \norm{\theta_0}^2,
    \end{align*}
    and we obtain \eqref{eq:odXRriYzbwPWwopvlWDVone} with $j \to j + 1.$

    (Case 2): If \eqref{eq:odXRriYzbwPWwopvlWDVone} holds and $q_{j - 1} = 1,$ then $t_{j + 1} = s_{j} + t_{j},$ $q_{j} = 2,$ $\bar{t}_{j + 1} = \bar{t}_{j} + s_j,$ and
    \begin{align*}
        \norm{\theta_{t_{j} + 1}}^2 = \norm{\theta_{t_{j} - 1 + 2}}^2 \leq \max\left\{\left(1 + \importantfrac\right) \norm{\theta_{t_{j} - 1}}^2, 64 \norm{\theta_{0}}^2\right\}
    \end{align*}
    due to \eqref{eq:imp_3} from Lemma~\ref{lemma:ineq_norm_theta}. Next,
    \begin{align*}
        \norm{\theta_{t_{j} + 1}}^2 
        &\leq \max\left\{\left(1 + \importantfrac\right) \times 64 \times \left(1 + \frac{1}{T}\right)^{t_{j} - \bar{t}_{j}} \left(1 + \frac{93 \gamma \mu}{50}\right)^{\bar{t}_{j} - \mathbf{1}[q_{j - 1} = 2]} \norm{\theta_0}^2, 64 \norm{\theta_{0}}^2\right\} \\
        &= 64 \times \left(1 + \frac{1}{T}\right)^{t_{j + 1} - \bar{t}_{j + 1}} \left(1 + \frac{93 \gamma \mu}{50}\right)^{\bar{t}_{j} + 1} \norm{\theta_0}^2.
    \end{align*}
    If $s_{j} = 1,$ then
    \begin{align*}
        \norm{\theta_{t_{j + 1}}}^2 
        \leq 64 \times \left(1 + \frac{1}{T}\right)^{t_{j + 1} - \bar{t}_{j + 1}} \left(1 + \frac{93 \gamma \mu}{50}\right)^{\bar{t}_{j + 1}} \norm{\theta_0}^2.
    \end{align*}
    and we obtain \eqref{eq:odXRriYzbwPWwopvlWDVtwo} with $j \to j + 1.$ If $s_{j + 1} \geq 2,$ then similarly to the previous case and using \eqref{eq:imp_2},
    \begin{align*}
        \norm{\theta_{t_{j + 1} - 1}}^2 
        &\leq \left(1 + \frac{11}{8}\gamma \mu\right)^{s_{j} - 2} \times 64 \times \left(1 + \frac{1}{T}\right)^{t_{j + 1} - \bar{t}_{j + 1}} \left(1 + \frac{93 \gamma \mu}{50}\right)^{\bar{t}_{j} + 1} \norm{\theta_0}^2 \\
        &\leq 64 \times \left(1 + \frac{1}{T}\right)^{t_{j + 1} - \bar{t}_{j + 1}} \left(1 + \frac{93 \gamma \mu}{50}\right)^{\bar{t}_{j} + (s_j - 2) + 1} \norm{\theta_0}^2 \\
        &= 64 \times \left(1 + \frac{1}{T}\right)^{t_{j + 1} - \bar{t}_{j + 1}} \left(1 + \frac{93 \gamma \mu}{50}\right)^{\bar{t}_{j + 1} - \mathbf{1}[q_{j} = 2]} \norm{\theta_0}^2
    \end{align*}
    and we obtain \eqref{eq:odXRriYzbwPWwopvlWDVone} with $j \to j + 1.$

    (Case 3): If \eqref{eq:odXRriYzbwPWwopvlWDVtwo} holds and $q_{j} = 1,$  then
    \begin{align*}
        \norm{\theta_{t_{j + 1} - 1}}^2 = \norm{\theta_{t_{j} + (s_j - 1)}}^2.
    \end{align*}
    Notice that
    \begin{align*}
        \norm{\theta_{t_{j + 1} - 1}}^2 = \norm{\theta_{t_{j} + (s_j - 1)}}^2 
        &\leq \left(1 + \frac{1}{T}\right)^{s_j - 1} \times 64 \times \left(1 + \frac{1}{T}\right)^{t_{j} - \bar{t}_{j}} \left(1 + \frac{93 \gamma \mu}{50}\right)^{\bar{t}_{j}} \norm{\theta_0}^2.
    \end{align*}
    due to \eqref{eq:imp_1}. Thus,
    \begin{align*}
        \norm{\theta_{t_{j + 1} - 1}}^2 = \norm{\theta_{t_{j} + (s_j - 1)}}^2 
        &\leq 64 \times \left(1 + \frac{1}{T}\right)^{t_{j} + (s_j - 1) - \bar{t}_{j}} \left(1 + \frac{93 \gamma \mu}{50}\right)^{\bar{t}_{j}} \norm{\theta_0}^2 \\
        &= 64 \times \left(1 + \frac{1}{T}\right)^{t_{j + 1} - 1 - \bar{t}_{j}} \left(1 + \frac{93 \gamma \mu}{50}\right)^{\bar{t}_{j}} \norm{\theta_0}^2 \\
        &\leq 64 \times \left(1 + \frac{1}{T}\right)^{t_{j + 1} - \bar{t}_{j + 1}} \left(1 + \frac{93 \gamma \mu}{50}\right)^{\bar{t}_{j + 1} - \mathbf{1}[q_{j} = 2]} \norm{\theta_0}^2,
    \end{align*}
    where we use $\bar{t}_{j+1} = \bar{t}_{j},$ and we obtain \eqref{eq:odXRriYzbwPWwopvlWDVone} with $j \to j + 1.$

    (Case 4): Finally, if \eqref{eq:odXRriYzbwPWwopvlWDVtwo} holds and $q_{j} = 2,$ then
    \begin{align*}
        \norm{\theta_{t_{j + 1} - 1}}^2 = \norm{\theta_{t_{j} + (s_j - 1)}}^2 
        &\leq \left(1 + \frac{11}{8}\gamma \mu\right)^{s_j - 1} \times 64 \times \left(1 + \frac{1}{T}\right)^{t_{j} - \bar{t}_{j}} \left(1 + \frac{93 \gamma \mu}{50}\right)^{\bar{t}_{j}} \norm{\theta_0}^2.
    \end{align*}
    due to \eqref{eq:imp_2}. Next,
    \begin{align*}
        \norm{\theta_{t_{j + 1} - 1}}^2 = \norm{\theta_{t_{j} + (s_j - 1)}}^2 
        &\leq 64 \times \left(1 + \frac{1}{T}\right)^{t_{j} - \bar{t}_{j}} \left(1 + \frac{93 \gamma \mu}{50}\right)^{\bar{t}_{j} + s_j - 1} \norm{\theta_0}^2 \\
        &= 64 \times \left(1 + \frac{1}{T}\right)^{t_{j + 1} - \bar{t}_{j + 1}} \left(1 + \frac{93 \gamma \mu}{50}\right)^{\bar{t}_{j + 1} - \mathbf{1}[q_{j} = 2]} \norm{\theta_0}^2,
    \end{align*}
    where we use $\bar{t}_{j+1} = \bar{t}_{j + 1},$ and we obtain \eqref{eq:odXRriYzbwPWwopvlWDVone} with $j \to j + 1.$

    We have proved the next step of the mathematical induction. If \eqref{eq:odXRriYzbwPWwopvlWDVtwo} holds, then 
    \begin{align*}
        \norm{\theta_{t_{j}}}^2 \leq 256 \times \left(1 + \frac{1}{T}\right)^{t_{j}} \left(1 + \frac{93 \gamma \mu}{50}\right)^{\bar{t}_{j}} \norm{\theta_0}^2.
    \end{align*}
    Otherwise, \eqref{eq:odXRriYzbwPWwopvlWDVone} holds, then
    \begin{align*}
        \norm{\theta_{t_{j}}}^2 \overset{\eqref{eq:imp_5}}{\leq} 4 \norm{\theta_{t_{j} - 1}}^2 \leq 256 \times \left(1 + \frac{1}{T}\right)^{t_{j}} \left(1 + \frac{93 \gamma \mu}{50}\right)^{\bar{t}_{j}} \norm{\theta_0}^2,
    \end{align*}
    meaning that
    \begin{align*}
        \norm{\theta_{t_{j}}}^2 \leq 256 \times \left(1 + \frac{1}{T}\right)^{t_{j}} \left(1 + \frac{93 \gamma \mu}{50}\right)^{\bar{t}_{j}} \norm{\theta_0}^2,
    \end{align*}
    holds for all $j \geq 2$ such that $t_{j} \leq T - 1.$ Since $\left(1 + \frac{1}{T}\right)^{t_{j}} \leq \left(1 + \frac{1}{T}\right)^{T - 1} \leq 4$ if $t_{j} \leq T - 1,$
    \begin{align*}
        \norm{\theta_{t_{j}}}^2 \leq 1024 \left(1 + \frac{93 \gamma \mu}{50}\right)^{\bar{t}_{j}} \norm{\theta_0}^2,
    \end{align*}
    for all $j \geq 2$ such that $t_{j} \leq T - 1.$
\end{proof}

\begin{lemma}
    \label{lemma:prob}
    In \colorref{eq:perceptron_quadratic_noise_two}, with probability at least $1 - 5 T \delta,$ we have
    \begin{align}
        \label{eq:bound_norm_xi}
        \norm{\xi_t} \leq \beta \eqdef \sigma \sqrt{\max\{6 d, 4 \log (\nicefrac{1}{\delta})\}}
    \end{align}
    for all $0 \leq t \leq T - 1,$
    \begin{align*}
        \abs{\eta_{\lambda,s,t,v} + \sum_{i=1}^{s} (1 + \gamma \lambda)^{i - 1} \inp{\xi_{t + s - i}}{v}} > \alpha_s \eqdef \alpha (1 + \gamma \lambda)^{s / 2}, \alpha \eqdef \frac{\sigma \delta}{20 \sqrt{7} \log\left(\frac{1}{\gamma \lambda_1}\right)}
    \end{align*}
    for all $0 \leq t \leq T - 1,$ $s \geq 1,$ $\lambda \in \{\lambda_1, \lambda_2\},$ vectors $v \in \{v_{1,+}, v_{2,+}\},$ where $\eta_{\lambda,s,t,v} \eqdef (1 + \gamma \lambda)^{s} \inp{\theta_{t}}{v}.$ 
    Equivalently,
    \begin{align*}
        \Prob{\Omega_T} \geq 1 - 5 T \delta,
    \end{align*}
    where
    \begin{align*}
        \Omega_T \eqdef \bigcap_{t = 0}^{T - 1} \{\norm{\xi_t} \leq \beta\} \bigcap_{t = 0}^{T - 1} \bigcap_{v \in \{v_{1,+}, v_{2,+}\}} \bigcap_{\lambda \in \{\lambda_1, \lambda_2\}} \bigcap_{s \geq 1} 
        \left\{\abs{\eta_{\lambda,s,t,v} + \sum_{i=1}^{s} (1 + \gamma \lambda)^{i - 1} \inp{\xi_{t + s - i}}{v}} > \alpha_s\right\}.
    \end{align*}
\end{lemma}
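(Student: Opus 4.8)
The plan is to split $\Omega_T$ into two families of events, bound the failure probability of each member by $\delta$, and union bound. The first family is the norm events $\{\norm{\xi_t}\le\beta\}$ for $0\le t\le T-1$ ($T$ of them); the second is the anti-concentration events indexed by triples $(t,\lambda,v)\in\{0,\dots,T-1\}\times\{\lambda_1,\lambda_2\}\times\{v_{1,+},v_{2,+}\}$ ($4T$ of them, each itself an intersection over $s\ge1$). Proving a $\delta$ bound for each member then yields $\Prob{\Omega_T}\ge 1-5T\delta$.

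\textbf{Norm events.} Here $\norm{\xi_t}^2/\sigma^2$ is $\chi^2_{d+2}$-distributed, so by the Laurent--Massart tail bound (equivalently, Gaussian concentration of the norm $z\mapsto\norm z$), $\Prob{\norm{\xi_t}>\sigma\sqrt{(d+2)+2\sqrt{(d+2)\log(1/\delta)}+2\log(1/\delta)}}\le\delta$, and $\beta=\sigma\sqrt{\max\{6d,4\log(1/\delta)\}}$ is chosen to dominate the argument of this square root; this is a routine check of constants (using $\log(1/\delta)\lesssim d$ for the relevant $\delta=\rho/5T$, $T=\tilde\Theta(\sqrt d/\mu)$). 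Hence each norm event fails with probability at most $\delta$, contributing $T\delta$.

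\textbf{Anti-concentration events (the core).} Fix $t\le T-1$, $\lambda\in\{\lambda_1,\lambda_2\}$ and a unit eigenvector $v\in\{v_{1,+},v_{2,+}\}$ — taken orthonormal, consistent with their use in Lemma~\ref{lemma:ineq_norm_theta}, so that $\inp{\xi_t}{v}\sim\cN(0,\sigma^2)$ — and set $r\eqdef 1+\gamma\lambda$ and
\begin{align*}
Z_s\eqdef\eta_{\lambda,s,t,v}+\sum_{i=1}^s r^{i-1}\inp{\xi_{t+s-i}}{v}=r^s\inp{\theta_t}{v}+\sum_{i=1}^s r^{i-1}\inp{\xi_{t+s-i}}{v}
\end{align*}
(treating $\xi_{t'}$ as defined for all $t'\ge0$, only finitely many of which are used by the algorithm). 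The crucial observation is that $Z_s$ depends on $\xi_t$ \emph{only} through the $i=s$ summand $r^{s-1}\inp{\xi_t}{v}$, which moreover carries the \emph{largest} coefficient $r^{s-1}$: indeed $\theta_t$ is a deterministic function of $\theta_0$ and $\xi_0,\dots,\xi_{t-1}$, while the remaining summands involve only $\xi_{t+1},\dots,\xi_{t+s-1}$. Conditioning on $\cF\eqdef\sigma(\{\xi_{t'}:t'\ne t\})$, we may write $Z_s=a_s+r^{s-1}g$ with $a_s$ being $\cF$-measurable and $g\eqdef\inp{\xi_t}{v}\sim\cN(0,\sigma^2)$ independent of $\cF$. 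For fixed $s$, the event $\abs{Z_s}\le\alpha_s$ forces $g$ into an interval of length $2\alpha_s/r^{s-1}=2\alpha r^{1-s/2}$, so by the bound $1/(\sqrt{2\pi}\sigma)$ on the density of $g$ we get $\Prob{\abs{Z_s}\le\alpha_s\mid\cF}\le 2\alpha r^{1-s/2}/(\sqrt{2\pi}\sigma)$; summing over $s\ge1$,
\begin{align*}
\Prob{\exists\, s\ge1:\ \abs{Z_s}\le\alpha_s}\ \le\ \frac{2\alpha r}{\sqrt{2\pi}\,\sigma}\sum_{s\ge1}r^{-s/2}\ =\ \frac{2\alpha}{\sqrt{2\pi}\,\sigma}\cdot\frac{r^{1/2}}{1-r^{-1/2}}.
\end{align*}
For the parameters of Theorem~\ref{thm:main} one has $\lambda_1\le\lambda\le\lambda_2\le 4\sqrt d$ and $\gamma=1/(4\sqrt d)$, hence $\gamma\lambda\in[\gamma\lambda_1,1]$ with $\gamma\lambda_1=\sqrt2/4$, so $r\in[1+\sqrt2/4,\,2]$ and $1-r^{-1/2}\ge 1-(1+\sqrt2/4)^{-1/2}$ is a positive absolute constant; thus $r^{1/2}/(1-r^{-1/2})$ is bounded by an absolute constant, and substituting $\alpha=\sigma\delta/(20\sqrt7\log(1/(\gamma\lambda_1)))$ with $\log(1/(\gamma\lambda_1))=\log(2\sqrt2)$ makes the right-hand side at most $\delta$ (the numerical constants leave ample slack). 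Since this bound holds deterministically in $\cF$, it holds unconditionally; a union bound over the $4T$ triples contributes $4T\delta$, and together with the norm events this proves $\Prob{\Omega_T}\ge 1-5T\delta$.

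\textbf{Main obstacle.} The hard part is making the anti-concentration bound genuinely $O(\delta)$ rather than $O(\delta/(\gamma\lambda))$: the union over all $s\ge1$ costs a geometric factor $\sum_{s\ge1}r^{-s/2}\asymp 1/(\gamma\lambda)$, which would blow up if $\gamma\lambda$ were close to $0$. The resolution — and the reason the argument stays clean for this step size — is that $\gamma=1/(4\sqrt d)$ pins $\gamma\lambda$ between the absolute constants $\sqrt2/4$ and $1$ (because $\lambda_1=\sqrt{2d}$ and $\lambda_2\le 4\sqrt d$), so the geometric sum is a universal constant and the $\log(1/(\gamma\lambda_1))$ factor in $\alpha$ only has to absorb further numerics. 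The remaining thing to get right is the measurability bookkeeping — that $\theta_t$ and $\xi_{t+1},\dots,\xi_{t+s-1}$ are all $\sigma(\{\xi_{t'}:t'\ne t\})$-measurable — which is exactly what permits isolating $\xi_t$ and treating $Z_s$ as a Gaussian shift $a_s+r^{s-1}g$ of an $\cF$-measurable constant.
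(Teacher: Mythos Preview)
Your argument is correct, and for the anti-concentration part you take a genuinely different (and arguably cleaner) route than the paper. The paper conditions only on $\theta_t$ and uses the \emph{full} Gaussian variance $\bar\sigma^2=\sigma^2\sum_{i=0}^{s-1}(1+\gamma\lambda)^{2i}$ of the noise sum, giving $\Prob{|Z_s|\le\alpha_s}\le\alpha_s/\bar\sigma$; summing over $s$ then requires the auxiliary estimate $\sum_{s\ge1}(1+x)^{s/2}/((1+x)^s-1)\le (20/x)\log(1/x)$ (Lemma~\ref{lemma:inf_sum}), which is where the $\log(1/(\gamma\lambda_1))$ factor in $\alpha$ originates. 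You instead condition on \emph{all} $\xi_{t'}$ with $t'\neq t$, isolate the single largest-coefficient term $r^{s-1}\inp{\xi_t}{v}$, and obtain a plain geometric series $\sum_{s\ge1}r^{-s/2}$. This is shorter and avoids Lemma~\ref{lemma:inf_sum}, but it relies essentially on $\gamma\lambda_1=\sqrt2/4$ being an absolute constant so that $r$ is bounded away from $1$; the paper's full-variance route would continue to work (with the logarithmic correction absorbed into $\alpha$) even for small $\gamma\lambda$, since $\bar\sigma$ grows like $r^s$ rather than $r^{s-1}$. For the stated step size both arguments land well inside the $\delta$ budget.

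One small caveat on Part~1: Laurent--Massart with $k=d+2$ does not quite dominate the specific threshold $\max\{6d,4\log(1/\delta)\}$ for \emph{all} $(d,\delta)$ (e.g.\ try $d=10$, $\log(1/\delta)=20$), and the parenthetical ``$\log(1/\delta)\lesssim d$'' is not guaranteed by the parameters of Theorem~\ref{thm:main}. The paper derives those exact constants via a direct Chernoff/Cram\'er bound on $\chi^2$, $(ze^{1-z})^{d/2}\le e^{-zd/4}$ for $z\ge6$; either switching to that computation or accepting a harmless adjustment of the constants $(6,4)$ closes the gap.
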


\begin{proof}
    (Part 1): For all $t \geq 0,$ we have
    \begin{align*}
        &\Prob{\norm{\xi_t} \geq \beta} = \Prob{\norm{\xi_t}^2 \geq \beta^2}.
    \end{align*}
    for all $\beta \geq 0.$ Without loss of generality assume that $\sigma = 1.$ Then, $\norm{\xi_t}^2$ is from the standard chi-squared distribution, and using Chernoff bound, we have
    \begin{align*}
        \Prob{\norm{\xi_t} \geq \beta} 
        &\leq \inf_{0 < p < \frac{1}{2}} \left[\Exp{\exp(p \norm{\xi_t}^2)} \exp(-p \beta^2)\right] = \inf_{0 < p < \frac{1}{2}} \left[ (1 - 2 p)^{-d / 2}\exp(-p \beta^2)\right] \\
        &= (z e^{1 - z})^{d / 2} \leq e^{- z d / 4}
    \end{align*}
    for all $\beta = \sqrt{z d}$ and $z \geq 6.$ 
    Taking $z = \max\{6, \frac{4}{d}\log\frac{1}{\delta}\},$ we get $\Prob{\norm{\xi_t} > \beta} \leq \delta$ for $\beta = \sqrt{\max\{6 d, 4 \log (\nicefrac{1}{\delta})\}}$ if $\sigma = 1.$ Clearly, we shoould take $\beta = \sigma \sqrt{\max\{6 d, 4 \log (\nicefrac{1}{\delta})\}}$ if $\sigma > 0.$
    (Part 2): Next, for all $t \geq 0,$ $\lambda, \alpha > 0,$ $v \in B(0, 1),$ and $s \geq 1,$
    \begin{align*}
        &\Prob{\abs{\eta_{\lambda,s,t,v} + \sum_{i=1}^{s} (1 + \gamma \lambda)^{i - 1} \inp{\xi_{t + s - i}}{v}} \leq \alpha} = \Prob{\abs{\eta_{\lambda,s,t,v} + \xi} \leq \alpha} \\
        &= \ExpSub{\eta_{\lambda,s,t,v}}{\Phi\left(\frac{\alpha - \eta_{\lambda,s,t,v}}{\bar{\sigma}}\right) - \Phi\left(\frac{-\alpha - \eta_{\lambda,s,t,v}}{\bar{\sigma}}\right)}
    \end{align*}
    with $\xi \sim \mathcal{N}\left(0, \bar{\sigma}^2\right)$ and $\bar{\sigma}^2 \eqdef \sigma^2 \sum_{i=0}^{s - 1} (1 + \gamma \lambda)^{2 i}$ since $\{\xi_{i}\}$ are i.i.d. and $\{\xi_{i}\}_{i \geq t}$ are independent of random variable $\eta_{\lambda,s,t,v}.$ $\Phi$ is the standard normal cumulative distribution function (CDF). Next,
    \begin{align*}
        \Phi\left(\frac{\alpha - \eta_{\lambda,s,t,v}}{\bar{\sigma}}\right) - \Phi\left(\frac{-\alpha - \eta_{\lambda,s,t,v}}{\bar{\sigma}}\right) = \int_{\frac{-\alpha - \eta_{\lambda,s,t,v}}{\bar{\sigma}}}^{\frac{\alpha - \eta_{\lambda,s,t,v}}{\bar{\sigma}}} \left(\frac{1}{\sqrt{2 \pi}} e^{-t^2 / 2}\right) d t \leq \frac{\frac{\alpha - \eta_{\lambda,s,t,v}}{\bar{\sigma}} - \frac{-\alpha - \eta_{\lambda,s,t,v}}{\bar{\sigma}}}{\sqrt{2 \pi}} = \frac{2 \alpha}{\sqrt{2 \pi} \bar{\sigma}} \leq \frac{\alpha}{\bar{\sigma}}.
    \end{align*}
    Thus,
    \begin{align*}
        &\Prob{\bigcup_{s \geq 1}\left\{\abs{\eta_{\lambda,s,t,v} + \sum_{i=1}^{s} (x+ \gamma \lambda)^{i - 1} \inp{\xi_{t + s - i}}{v}} \leq \alpha_s\right\}} \leq \frac{1}{\sigma} \sum_{s = 1}^{\infty} \frac{\alpha_s}{\sqrt{\sum_{i=0}^{s - 1} (1 + \gamma \lambda)^{2 i}}} \\
        &= \frac{\sqrt{(1 + \gamma \lambda)^2 - 1}}{\sigma} \sum_{s = 1}^{\infty} \frac{\alpha_s}{\sqrt{(1 + \gamma \lambda)^{2 s} - 1}} \leq \frac{\sqrt{2 \gamma \lambda + \gamma^2 \lambda^2}}{\sigma} \sum_{s = 1}^{\infty} \frac{\alpha_s}{(1 + \gamma \lambda)^{s} - 1}.
    \end{align*}
    Since $\gamma = \frac{1}{4 \sqrt{d}}$ and $\sqrt{2 d + 1} \geq \lambda \geq \sqrt{2 d},$ we have $2 \gamma \lambda + \gamma^2 \lambda^2 \leq 7 \gamma^2 \lambda^2.$
    We can use Lemma~\ref{lemma:inf_sum}, since $\gamma \lambda \leq \frac{11}{25}$:
    \begin{align*}
        &\Prob{\bigcup_{s \geq 1}\left\{\abs{\eta_{\lambda,s,t,v} + \sum_{i=1}^{s} (x+ \gamma \lambda)^{i - 1} \inp{\xi_{t + s - i}}{v}} \leq \alpha_s\right\}} \\
        &\leq \frac{\sqrt{7} \gamma \lambda \alpha}{\sigma} \sum_{s = 1}^{\infty} \frac{(1 + \gamma \lambda)^{s / 2}}{(1 + \gamma \lambda)^{s} - 1} \leq \frac{\sqrt{7} \gamma \lambda \alpha}{\sigma} \times \frac{20}{\gamma \lambda} \log\left(\frac{1}{\gamma \lambda}\right) = \frac{20 \sqrt{7} \alpha}{\sigma} \log\left(\frac{1}{\gamma \lambda}\right)
    \end{align*}
    for $\alpha_s \eqdef \alpha (1 + \gamma \lambda)^{s / 2}$ and any $\alpha > 0.$ Thus, we should take $\alpha = \frac{\sigma \delta}{20 \sqrt{7} \log\left(\frac{1}{\gamma \lambda_1}\right)}$ to ensure that the probability less or equal $\delta$ since $\lambda \in \{\lambda_1, \lambda_2\}$ and $\lambda_1 \leq \lambda_2.$

    Finally, using the union bound,
    \begin{align*}
        &\Prob{\bigcap_{t = 0}^{T - 1} \{\norm{\xi_t} \leq \beta\} \bigcap_{t = 0}^{T - 1} \bigcap_{v \in \{v_{1,+}, v_{2,+}\}} \bigcap_{\lambda \in \{\lambda_1, \lambda_2\}} \bigcap_{s \geq 1} \abs{\eta_{t,v,\lambda,s} + \sum_{i=1}^{s} (1 + \gamma \lambda)^{i - 1} \inp{\xi_{t + s - i}}{v}} > \alpha_s} \\
        &\geq 1 - \left(T + T \times 2 \times 2\right)\delta \geq 1 - 5 T \delta.
    \end{align*}
\end{proof}

\begin{lemma}
    \label{lemma:num_2_1}
    The matrices $\mA_1$ and $\mA_2$ satisfy
    \begin{align*}
        \max_{x \in Q} \norm{(\mI + \gamma \mA_2) (\mI + \gamma \mA_1) x} \leq \sqrt{\constformulapropose + \frac{\delta \gamma}{2}},
    \end{align*}
    for all $\gamma \leq \frac{1}{4 \sqrt{d}},$ $\mu \leq 0.1,$ and $\delta \geq 0,$ where 
    \begin{align*}
        Q \eqdef \{x \in \R^{d + 2}\,|\, \norm{x} = 1, x^\top \mA_1 x \leq 0, x^\top (\mI + \gamma \mA_1) \mA_2 (\mI + \gamma \mA_1) x \leq \delta\}.
    \end{align*}
\end{lemma}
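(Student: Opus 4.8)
The plan is to reduce the claim to an inequality about the \emph{single} matrix $\mA_1$, exploiting the fully explicit structure of the dataset~\eqref{eq:two_dataset}. Write $y \eqdef (\mI+\gamma\mA_1)x$, so the quantity to control is $\norm{(\mI+\gamma\mA_2)y}^2$ for unit $x$ satisfying the two defining constraints of $Q$, namely $x^\top\mA_1 x \le 0$ and $y^\top\mA_2 y \le \delta$. First I would record the structural facts. By Proposition~\ref{proposition:max_matrix_spec}, $\mA_1$ has orthonormal eigenvectors $v_{1,+},v_{1,-}$ with eigenvalues $\pm\lambda_1$, $\lambda_1=\sqrt{2d}$, and a $d$-dimensional kernel, while $\mA_2$ has orthonormal eigenvectors $v_{2,+},v_{2,-},v_{\mu,+},v_{\mu,-}$ with eigenvalues $\pm\lambda_2,\pm\mu$, and crucially $v_{\mu,+},v_{\mu,-}\in\ker\mA_1$. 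Using $a_2=-a_1-\mu e_1$ (with $e_1$ the first coordinate vector) together with the closed-form eigenvectors of Proposition~\ref{proposition:max_matrix} (cases~2 and~4), I would show that $\hat v_{2,+}$ equals $-\hat v_{1,-}$ up to an additive error of norm $\cO(\mu/\sqrt d)$, that $\hat v_{2,-}$ equals $-\hat v_{1,+}$ up to the same error, and that $\lambda_2-\lambda_1=\cO(\mu/\sqrt d)$ --- that is, the large eigenvectors of $\mA_2$ are the \emph{swapped} large eigenvectors of $\mA_1$, up to errors that are small \emph{relative to} $\gamma=\tfrac1{4\sqrt d}$.

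Next I would decompose $x = a\hat v_{1,+}+b\hat v_{1,-}+x_0$ with $x_0\in\ker\mA_1$ and $a^2+b^2+\norm{x_0}^2=1$; then $x^\top\mA_1 x\le 0$ means $a^2\le b^2$, and $y=(1+\gamma\lambda_1)a\hat v_{1,+}+(1-\gamma\lambda_1)b\hat v_{1,-}+x_0$. Expanding $\norm{(\mI+\gamma\mA_2)y}^2$ in the eigenbasis of $\mA_2$ and writing $p=\inp{y}{\hat v_{2,+}}$, $q=\inp{y}{\hat v_{2,-}}$, $r=\inp{y}{\hat v_{\mu,+}}=\inp{x_0}{\hat v_{\mu,+}}$, $s=\inp{y}{\hat v_{\mu,-}}=\inp{x_0}{\hat v_{\mu,-}}$ gives
\[
  \norm{(\mI+\gamma\mA_2)y}^2 = (1+\gamma\lambda_2)^2 p^2 + (1-\gamma\lambda_2)^2 q^2 + (1+\gamma\mu)^2 r^2 + (1-\gamma\mu)^2 s^2 + \norm{y_0}^2,
\]
with $p^2+q^2+r^2+s^2+\norm{y_0}^2=\norm{y}^2=(1+\gamma\lambda_1)^2 a^2+(1-\gamma\lambda_1)^2 b^2+\norm{x_0}^2$. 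The only factor exceeding $1$ by a non-negligible amount is $(1+\gamma\lambda_2)^2$ on $p^2$, so the crux is a sharp bound on $p^2$, and this is exactly where the swap-alignment pays off: since $\hat v_{2,+}\approx-\hat v_{1,-}$, the amplified coordinate is $p\approx-(1-\gamma\lambda_1)b$, landing on the coordinate that $(\mI+\gamma\mA_1)$ has \emph{already contracted}; thus $p^2\le(1-\gamma\lambda_1)^2 b^2+\cO(\gamma\mu)+\cO(\gamma\delta)$ (the $\gamma\delta$ slack entering through $y^\top\mA_2 y\le\delta$, which also controls $p^2-q^2$ and the $v_\mu$-contributions), and symmetrically $q^2\ge(1+\gamma\lambda_1)^2 a^2-\cO(\gamma\mu)$. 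Since the coefficient $(1-\gamma\lambda_2)^2-(1+\gamma\mu)^2$ of $q^2$ --- after collecting everything onto $\norm{y}^2$ --- is negative, I use this lower bound there; the $a^2,b^2,\norm{x_0}^2$ contributions then collapse to $(1+\gamma\lambda_1)^2(1-\gamma\lambda_2)^2 a^2+(1-\gamma\lambda_1)^2(1+\gamma\lambda_2)^2 b^2+(1+\gamma\mu)^2\norm{x_0}^2+\cO(\gamma\mu)+\cO(\gamma\delta)$.

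To finish, I plug in $\gamma\lambda_1=\tfrac{\sqrt2}{4}$, so $1-\gamma^2\lambda_1^2=\tfrac78$, and $\gamma(\lambda_2-\lambda_1)=\cO(\mu/\sqrt d)$: both $(1+\gamma\lambda_1)(1-\gamma\lambda_2)$ and $(1-\gamma\lambda_1)(1+\gamma\lambda_2)$ equal $\tfrac78+\cO(\mu/\sqrt d)$, hence their squares are at most $\tfrac{49}{64}+\cO(\mu/\sqrt d)$. With $a^2+b^2=1-\norm{x_0}^2$ and $\norm{x_0}^2\le1$ the bound becomes $\tfrac{49}{64}+\tfrac{15}{64}\norm{x_0}^2+\cO(\gamma\mu)+\cO(\gamma\delta)\le 1+\cO(\gamma\mu)+\cO(\gamma\delta)$; inserting $\mu\le\tfrac1{10}$ and $\gamma=\tfrac1{4\sqrt d}$ and tracking every constant yields the stated $\constformulapropose+\tfrac{\delta\gamma}{2}$. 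The main obstacle is precisely this constant bookkeeping: one must verify that \emph{every} cross term --- coming from $\hat v_{2,\pm}\ne\mp\hat v_{1,\mp}$, from $\lambda_2\ne\lambda_1$, and from the $\ker\mA_1$-component of $\hat v_{2,\pm}$ --- is genuinely of order $\gamma\mu$ (equivalently $\mu/\sqrt d$) and not merely of order $\mu$, since otherwise it would swamp the $\tfrac{9\gamma\mu}{5}$ budget, and one must apportion the $\delta$-slack from $y^\top\mA_2 y\le\delta$ correctly; these are elementary but delicate verifications, which is presumably why the final constants are machine-generated. A secondary subtlety is fixing the orientation of the alignment so that the contraction factor $(1-\gamma\lambda_1)$ pairs with the amplification factor $(1+\gamma\lambda_2)$, rather than two amplifications multiplying.
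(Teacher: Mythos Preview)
Your approach is genuinely different from the paper's and the core structural insight is correct: the paper never uses the eigenvector ``swap'' $\hat v_{2,+}\approx -\hat v_{1,-}$, $\hat v_{2,-}\approx -\hat v_{1,+}$ (with error $\cO(\mu/\sqrt d)$) at all. Instead it passes to the Lagrangian with multipliers $\eta=\xi=\gamma/2$, obtains an \emph{unconstrained} quadratic form $\mB$, then uses the block structure of $\mA_1,\mA_2$ (all but four rows/columns are a rank-one pattern in the all-ones vector) to reduce $\mB$ to a $5\times 5$ matrix via Lemma~\ref{lemma:reduction}, and finally verifies by Sympy that all derivatives of the degree-$5$ characteristic polynomial are positive at $x=1+\tfrac{9\gamma\mu}{5}$. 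Your swap observation explains \emph{why} the two large multipliers $(1+\gamma\lambda_2)$ and $(1+\gamma\lambda_1)$ never compound, which the paper's computation obscures.

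That said, your sketch has a real gap at the constants, not just bookkeeping. Take $x=\hat v_{\mu,+}\in\ker\mA_1$: then $y=x$, $r^2=1$, and $\norm{(\mI+\gamma\mA_2)y}^2=(1+\gamma\mu)^2\approx 1+2\gamma\mu$, which already exceeds $1+\tfrac{9\gamma\mu}{5}$. This point has $x^\top\mA_1x=0$ and $y^\top\mA_2 y=\mu$, so it lies in $Q$ iff $\delta\ge\mu$, and the lemma survives only because of the $\tfrac{\gamma\delta}{2}$ term. But your way of invoking the $\delta$-constraint --- feeding $y^\top\mA_2 y\le\delta$ into the expansion $\norm{(\mI+\gamma\mA_2)y}^2=\norm{y}^2+2\gamma\,y^\top\mA_2 y+\gamma^2\norm{\mA_2 y}^2$ --- produces $2\gamma\delta$, not $\tfrac{\gamma\delta}{2}$; and routing it instead through $\mu(r^2-s^2)\le\delta-\lambda_2(p^2-q^2)$ forces you to lower-bound $p^2-q^2$, which via the swap is $\approx(1-\gamma\lambda_1)^2b^2-(1+\gamma\lambda_1)^2a^2$ and can be as negative as $-2\gamma\lambda_1$, so the resulting term $2\gamma\lambda_2\cdot 2\gamma\lambda_1=\Theta(1)$ swamps everything. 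The only way to land on the coefficient $\tfrac12$ is to combine the two constraints into the objective with weight $\gamma/2$ each --- i.e., to do exactly the Lagrangian step the paper does --- after which your eigen-analysis and the paper's $5\times5$ reduction become two presentations of the same computation. So the swap picture is the right intuition, but it does not by itself deliver $\tfrac{9}{5}$ and $\tfrac12$; those come from the specific Lagrange weighting, which your proposal is missing.
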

\begin{proof}
Using the method of Lagrange multipliers, we have
\begin{align*}
    &\max_{x \in Q} x^\top \left(\mI + \hatgamma \mA_1\right) \left(\mI + \hatgamma \mA_2\right)^2 \left(\mI + \hatgamma \mA_1\right) x \\
    &=\max_{x \in \R^{d+2}, \norm{x} = 1} \inf_{\eta \geq 0, \xi \geq 0}\left[x^\top \left(\mI + \hatgamma \mA_1\right) \left(\mI + \hatgamma \mA_2\right)^2 \left(\mI + \hatgamma \mA_1\right) x - \eta x^\top \mA_1 x - \xi x^\top \left(\mI + \hatgamma \mA_1\right) \mA_2 \left(\mI + \hatgamma \mA_1\right) x + \xi \delta\right] \\
    &\leq\inf_{\eta \geq 0, \xi \geq 0} \max_{x \in \R^{d+2}, \norm{x} = 1} \left[x^\top \left(\mI + \hatgamma \mA_1\right) \left(\mI + \hatgamma \mA_2\right)^2 \left(\mI + \hatgamma \mA_1\right) x - \eta x^\top \mA_1 x - \xi x^\top \left(\mI + \hatgamma \mA_1\right) \mA_2 \left(\mI + \hatgamma \mA_1\right) x + \xi \delta\right] \\
    &\leq\max_{x \in \R^{d+2}, \norm{x} = 1} \left[x^\top \left(\mI + \hatgamma \mA_1\right) \left(\mI + \hatgamma \mA_2\right)^2 \left(\mI + \hatgamma \mA_1\right) x - \eta x^\top \mA_1 x - \xi x^\top \left(\mI + \hatgamma \mA_1\right) \mA_2 \left(\mI + \hatgamma \mA_1\right) x + \xi \delta\right]
\end{align*}
for all $\eta \geq 0, \xi \geq 0.$ Let us take $\eta = \frac{\hatgamma}{2}$ and $\xi = \frac{\hatgamma}{2}.$ Then
\begin{equation}
\label{eq:jxPWwbedzzmO}
\begin{aligned}
    &\max_{x \in Q} x^\top \left(\mI + \hatgamma \mA_1\right) \left(\mI + \hatgamma \mA_2\right)^2 \left(\mI + \hatgamma \mA_1\right) x \\
    &\leq\max_{x \in \R^{d+2}, \norm{x} = 1} \left[x^\top \left(\mI + \hatgamma \mA_1\right) \left(\mI + \hatgamma \mA_2\right)^2 \left(\mI + \hatgamma \mA_1\right) x - \frac{\hatgamma}{2} x^\top \mA_1 x - \frac{\hatgamma}{2} x^\top \left(\mI + \hatgamma \mA_1\right) \mA_2 \left(\mI + \hatgamma \mA_1\right) x\right] + \frac{\delta \gamma}{2} \\
    &=\max_{x \in \R^{d+2}, \norm{x} = 1} \left\{x^\top \left[\left(\mI + \hatgamma \mA_1\right) \left(\mI + \hatgamma \mA_2\right)^2 \left(\mI + \hatgamma \mA_1\right) - \frac{\hatgamma}{2} \mA_1 - \frac{\hatgamma}{2} \left(\mI + \hatgamma \mA_1\right) \mA_2 \left(\mI + \hatgamma \mA_1\right)\right] x\right\} + \frac{\delta \gamma}{2}.
\end{aligned}
\end{equation}
Notice that 
\begin{align*}
    \mI + \gamma \mA_1 = 
    \begin{bmatrix}
        1 & 0 & \gamma a_1^\top\\
        0 & 1 & \gamma a_1^\top \mP^\top\\
        \gamma a_1 & \gamma \mP a_1 & \mI_{d}
    \end{bmatrix},
\end{align*}
\begin{align*}
    \mI + \gamma \mA_2 = 
    \begin{bmatrix}
        1 & 0 & \gamma a_2^\top\\
        0 & 1 & \gamma a_2^\top \mP^\top\\
        \gamma a_2 & \gamma \mP a_2 & \mI_{d}
    \end{bmatrix},
\end{align*}
and
\begin{align*}
    (\mI + \gamma \mA_2) (\mI + \gamma \mA_1) 
    &= \begin{bmatrix}
        1 + \gamma^2 a_2^\top a_1 & \gamma^2 a_2^\top \mP a_1 & \gamma (a_1^\top + a_2^\top) \\
        \gamma^2 a_2^\top \mP^\top a_1 & 1 + \gamma^2 a_2^\top \mP^\top \mP a_1 & \gamma (a_1^\top \mP^\top + a_2^\top \mP^\top) \\
        \gamma (a_2 + a_1) & \gamma (\mP a_2 + \mP a_1) & \gamma^2 (a_2 a_1^\top + \mP a_2 a_1^\top \mP^\top) + \mI_{d}
    \end{bmatrix} \\
    &= \begin{bmatrix}
        1 + \gamma^2 a_2^\top a_1 & \gamma^2 a_2^\top \mP a_1 & \gamma (a_1^\top + a_2^\top) \\
        \gamma^2 a_2^\top \mP^\top a_1 & 1 + \gamma^2 a_2^\top a_1 & \gamma (a_1^\top \mP^\top + a_2^\top \mP^\top) \\
        \gamma (a_2 + a_1) & \gamma (\mP a_2 + \mP a_1) & \gamma^2 (a_2 a_1^\top + \mP a_2 a_1^\top \mP^\top) + \mI_{d}
    \end{bmatrix} \\
    &= \mI + \begin{bmatrix}
        \gamma^2 a_2^\top a_1 & \gamma^2 a_2^\top \mP a_1 & \gamma (a_1^\top + a_2^\top) \\
        \gamma^2 a_2^\top \mP^\top a_1 & \gamma^2 a_2^\top a_1 & \gamma (a_1^\top \mP^\top + a_2^\top \mP^\top) \\
        \gamma (a_2 + a_1) & \gamma (\mP a_2 + \mP a_1) & \gamma^2 (a_2 a_1^\top + \mP a_2 a_1^\top \mP^\top)
    \end{bmatrix}
\end{align*}
since $\mP^\top \mP = \mI.$ Moreover,
\begin{align*}
    &(\mI + \gamma \mA_1) \mA_2 (\mI + \gamma \mA_1) \\
    &= (\mI + \gamma \mA_1) \begin{bmatrix}
        0 & 0 & a_2^\top\\
        0 & 0 & a_2^\top \mP^\top\\
        a_2 & \mP a_2 & \mO_{d}
    \end{bmatrix} \begin{bmatrix}
        1 & 0 & \gamma a_1^\top\\
        0 & 1 & \gamma a_1^\top \mP^\top\\
        \gamma a_1 & \gamma \mP a_1 & \mI_{d}
    \end{bmatrix} \\
    &= \begin{bmatrix}
        1 & 0 & \gamma a_1^\top\\
        0 & 1 & \gamma a_1^\top \mP^\top\\
        \gamma a_1 & \gamma \mP a_1 & \mI_{d}
    \end{bmatrix} \begin{bmatrix}
        \gamma a_2^\top a_1 & \gamma a_2^\top \mP a_1 & a_2^\top\\
        \gamma a_2^\top \mP^\top a_1 & \gamma a_2^\top a_1 & a_2^\top \mP^\top\\
        a_2 & \mP a_2 & \gamma (a_2 a_1^\top + \mP a_2 a_1^\top \mP^\top)
    \end{bmatrix} \\
    &\hspace{-1cm}= \begin{bmatrix}
        2 \gamma a_2^\top a_1 & \gamma a_2^\top \mP a_1 + \gamma a_1^\top \mP a_2 & a_2^\top + \gamma^2 a_1^\top \left(a_2 a_1^\top + \mP a_2 a_1^\top \mP^\top\right) \\
        \gamma a_2^\top \mP^\top a_1 + \gamma a_1^\top \mP^\top a_2 & 2 \gamma a_2^\top a_1 & a_2^\top \mP^\top + \gamma^2 a_1^\top \mP^\top \left(a_2 a_1^\top + \mP a_2 a_1^\top \mP^\top\right)\\
        a_2 + \gamma^2 \left(a_1 a_2^\top + \mP a_1 a_2^\top \mP^\top\right) a_1 & \mP a_2 + \gamma^2 \left(a_1 a_2^\top + \mP a_1 a_2^\top \mP^\top\right) \mP a_1 & \gamma (a_1 a_2^\top + a_2 a_1^\top + \mP a_1 a_2^\top \mP^\top + \mP a_2 a_1^\top \mP^\top)
    \end{bmatrix}.
\end{align*}
Let us define 
\begin{align*}
    \mB \eqdef \left(\mI + \hatgamma \mA_1\right) \left(\mI + \hatgamma \mA_2\right)^2 \left(\mI + \hatgamma \mA_1\right) - \frac{\hatgamma}{2} \mA_1 - \frac{\hatgamma}{2} \left(\mI + \hatgamma \mA_1\right) \mA_2 \left(\mI + \hatgamma \mA_1\right) - \mI,
\end{align*}
then we have to bound 
\begin{align*}
    1 + \max_{x \in \R^{d+2}, \norm{x} = 1} \left\{x^\top \mB x\right\}.
\end{align*}
Substituting the particular choices of $a_1$ and $a_2$ from \eqref{eq:two_dataset} into our previous calculations, we can use Lemma~\ref{lemma:reduction} with
\begin{align*}
    \resizebox{1.15\hsize}{!}{$
    \constAdis,$}
\end{align*}
\begin{align*}
    \constadis,
\end{align*}
\begin{align*}
    \constbdis.
\end{align*}
Thus, 
\begin{align}
    \label{eq:FWNElMlewSR}
    1 + \max_{x \in \R^{d+2}, \norm{x} = 1} \left\{x^\top \mB x\right\} = 1 + \max_{x \in \R^{5}, \norm{x} = 1} \left\{x^\top \mC x\right\} = \max_{x \in \R^{5}, \norm{x} = 1} \left\{x^\top (\mI + \mC) x\right\}
\end{align}
with $\mC$ defined in Lemma~\ref{lemma:reduction}. We consider the characteristic polynomial of $\left(\mI + \mC\right):$
    \begin{align}
        \label{eq:poly_five}
        p_5(\lambda) \eqdef \det \left(\lambda \mI - \left(\mI + \mC\right)\right).
    \end{align}
    Using Taylor's theorem, we have
    \begin{align*}
        p_5(\lambda) = \sum_{i=0}^{5} \frac{p^{(i)}_5(x)}{i!} (\lambda - x)^i \quad \forall x \in \R.
    \end{align*}
    If we want to show that all solutions are bounded by $x,$ it is sufficient to show that all derivatives $p^{(i)}_5(x)$ are positive at this point. Let us consider $x = \constformulapropose.$ Clearly, we have $p^{(5)}_5(x) = 120 > 0$ for all $x \in \R.$ We use the symbolic computation library Sympy \citep{sympy} to obtain
\begin{dmath}
    \constformulaoneone
\end{dmath}
Using $\hatgamma \leq \frac{1}{2 \sqrt{d}},$ we have
\begin{dmath}
    \constformulaonetwo
\end{dmath}
Using $d \geq \hatmu,$ we get
\begin{dmath}
    \constformulaonethree
\end{dmath}
Using $d \geq 1$ and $\hatmu \leq 0.1,$ we get
\begin{dmath}
    \constformulaonefinal
\end{dmath}
Thus, we have $p^{(4)}_5(x) > 0.$ Next, we get
\begin{dmath}
    \constformulatwoone
\end{dmath}
We ignore all positive terms w.r.t. $\gamma^4$ and $\gamma^6$ and obtain
\begin{dmath}
    \constformulatwotwo
\end{dmath}
Using $\hatgamma \leq \frac{1}{2 \sqrt{2} \sqrt{d}},$ we have
\begin{dmath}
    \constformulatwothree
\end{dmath}
Using $d \geq \hatmu,$ we get
\begin{dmath}
    \constformulatwofour
\end{dmath}
Using $d \geq 1$ and $\hatmu \leq 0.1,$ we get
\begin{dmath}
    \constformulatwofinal
\end{dmath}
Thus, $p^{(3)}_5(x) > 0.$ Next, we consider
\begin{dmath}
    \constformulathreeone
\end{dmath}
We ignore all positive terms w.r.t. $\gamma^6$ and obtain
\begin{dmath}
    \constformulathreetwo
\end{dmath}
Using $\hatgamma \leq \frac{1}{2 \sqrt{2} \sqrt{d}},$ we have
\begin{dmath}
    \constformulathreethree
\end{dmath}
Using $d \geq \hatmu,$ we get
\begin{dmath}
    \constformulathreefour
\end{dmath}
Using $d \geq 1$ and $\hatmu \leq 0.1,$ we get
\begin{dmath}
    \constformulathreefinal
\end{dmath}
Thus, we have $p^{(2)}_5(x) > 0.$ We continue, and consider
\begin{dmath}
    \constformulafourone
\end{dmath}
We ignore all positive terms w.r.t. $\gamma^8,$ use $\hatgamma \leq \frac{1}{2 \sqrt{2} \sqrt{d}}$ and $d \geq \hatmu$ to obtain 
\begin{dmath}
    \constformulafourtwo
\end{dmath}
Using $d \geq 1$ and $\hatmu \leq 0.1,$ we get
\begin{dmath}
    \constformulafourfinal
\end{dmath}
Thus, we have $p^{(1)}_5(x) > 0.$ Next, we consider
\begin{dmath}
    \constformulafiveone
\end{dmath}
We ignore all positive terms w.r.t. $\gamma^8$ and use $\hatgamma \leq \frac{1}{4 \sqrt{d}}$ and $d \geq \hatmu$ to obtain 
\begin{dmath}
    \constformulafivetwo
\end{dmath}
Using $d \geq 1$ and $\hatmu \leq 0.1,$ we get
\begin{dmath}
    \constformulafivefinal
\end{dmath}
Thus, we have $p_5(x) > 0.$ It means that $x = \constformulapropose$ is an upper bound to all solutions of \eqref{eq:poly_five}. Thus, we get $\max_{x \in \R^{5}, \norm{x} = 1} \left\{x^\top \left(\mI + \mC\right) x\right\} \leq \constformulapropose.$ It is left to substitute this bound to \eqref{eq:FWNElMlewSR} and \eqref{eq:jxPWwbedzzmO}.
\end{proof}

\begin{lemma}
    \label{lemma:num_1_2}
    The matrices $\mA_1$ and $\mA_2$ satisfy
    \begin{align*}
        \max_{x \in Q} \norm{(\mI + \gamma \mA_1) (\mI + \gamma \mA_2) x} \leq \sqrt{\constformulapropose + \frac{\delta \gamma}{2}},
    \end{align*}
    for all $\gamma \leq \frac{1}{4 \sqrt{d}},$ $\mu \leq 0.1,$ and $\delta \geq 0,$ where 
    \begin{align*}
        Q \eqdef \{x \in \R^{d + 2}\,|\, \norm{x} = 1, x^\top \mA_2 x \leq 0, x^\top (\mI + \gamma \mA_2) \mA_1 (\mI + \gamma \mA_2) x \leq \delta\}.
    \end{align*}
\end{lemma}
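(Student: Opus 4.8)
The plan is to run the argument of Lemma~\ref{lemma:num_2_1} with the roles of $\mA_1$ and $\mA_2$ interchanged; note that no pure symmetry shortcut is available, because $\mA_1$ has rank two while $\mA_2$ has rank four, so the intermediate matrices genuinely differ even though the target constant $\constformulapropose$ is the same. Since $\mA_1$ and $\mA_2$ are symmetric, $\norm{(\mI + \gamma \mA_1)(\mI + \gamma \mA_2) x}^2 = x^\top (\mI + \gamma \mA_2)(\mI + \gamma \mA_1)^2 (\mI + \gamma \mA_2) x$, so it suffices to bound this quadratic form over $Q$. Applying the method of Lagrange multipliers to the two constraints $x^\top \mA_2 x \leq 0$ and $x^\top (\mI + \gamma \mA_2)\mA_1(\mI + \gamma \mA_2) x \leq \delta$ with the symmetric choice of multipliers $\eta = \xi = \gamma/2$, I would obtain
\begin{align*}
\max_{x \in Q} \norm{(\mI + \gamma \mA_1)(\mI + \gamma \mA_2) x}^2 \leq 1 + \max_{\norm{x} = 1}\left\{x^\top \mB x\right\} + \frac{\delta \gamma}{2},
\end{align*}
where $\mB \eqdef (\mI + \gamma \mA_2)(\mI + \gamma \mA_1)^2 (\mI + \gamma \mA_2) - \frac{\gamma}{2}\mA_2 - \frac{\gamma}{2}(\mI + \gamma \mA_2)\mA_1(\mI + \gamma \mA_2) - \mI$, exactly as in Lemma~\ref{lemma:num_2_1} but with the indices $1$ and $2$ swapped.

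Next I would expand the blocks of $\mB$ using the structure \eqref{eq:matrix_a} of $\mA_1$ and $\mA_2$, and then substitute the explicit vectors $a_1$ and $a_2$ of the dataset \eqref{eq:two_dataset}. Since both $a_1$ and $a_2$ lie in the two-dimensional span of the all-ones vector and the first standard basis vector, and $\mP a_1$ is again a multiple of the all-ones vector, every block of $\mI + \mB$ is supported on a fixed handful of directions; an application of Lemma~\ref{lemma:reduction} therefore collapses $\max_{\norm{x} = 1} x^\top (\mI + \mB) x$ to $\max_{\norm{x} = 1,\, x \in \R^{5}} x^\top (\mI + \mC) x$ for an explicit $5 \times 5$ matrix $\mC = \mC(\gamma,\mu)$ — the counterpart of the $\mC$ appearing in Lemma~\ref{lemma:num_2_1}, with the $\mu$-perturbation now carried on the $\mA_1$-side.

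Finally, to control the top eigenvalue of $\mI + \mC$, I would analyze its characteristic polynomial $p_5(\lambda) \eqdef \det(\lambda \mI - (\mI + \mC))$ and show that $x = \constformulapropose$ exceeds all of its roots. Expanding $p_5$ about $x$ by Taylor's theorem, it is enough to verify $p_5^{(i)}(x) > 0$ for $i = 0, 1, \dots, 5$; the case $i = 5$ is immediate ($p_5^{(5)} \equiv 120$), and for the remaining cases one discards the manifestly positive high-order terms in $\gamma$ and then uses $\gamma \leq \frac{1}{4\sqrt{d}}$ (sometimes the slightly weaker $\gamma \leq \frac{1}{2\sqrt{2}\sqrt{d}}$), $d \geq \mu$, $d \geq 1$, and $\mu \leq 0.1$, with the symbolic bookkeeping done by computer algebra. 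Combining the resulting bound $\max_{\norm{x}=1,\, x\in\R^5} x^\top(\mI+\mC)x \leq \constformulapropose$ with the Lagrangian estimate above yields $\norm{(\mI + \gamma \mA_1)(\mI + \gamma \mA_2) x} \leq \sqrt{\constformulapropose + \frac{\delta\gamma}{2}}$ for all $x \in Q$, as claimed. I expect the main obstacle to be precisely this symbolic step: swapping $\mA_1 \leftrightarrow \mA_2$ relocates the factors of $\mu$ inside $\mC$ and hence changes all five derivative expressions, so the positivity checks must be re-derived from scratch rather than quoted, even though the final constant is unchanged.
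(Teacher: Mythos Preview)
Your proposal is correct and follows essentially the same approach as the paper: the paper too swaps the roles of $\mA_1$ and $\mA_2$, applies the Lagrangian bound with $\eta=\xi=\gamma/2$ to obtain the analogous $\mB$, reduces to a $5\times 5$ problem via Lemma~\ref{lemma:reduction}, and then re-runs the Taylor/Sympy positivity checks on the characteristic polynomial of $\mI+\mC$ at $x=\constformulapropose$ using the same chain of bounds $\gamma\le\frac{1}{2\sqrt{d}}$, $\gamma\le\frac{1}{2\sqrt{2}\sqrt{d}}$, $\gamma\le\frac{1}{4\sqrt{d}}$, $d\ge\mu$, $d\ge1$, $\mu\le0.1$. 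Your observation that the symbolic verification must be redone from scratch (because the $\mu$-perturbation sits on a different side) is exactly what the paper does, loading a separate set of precomputed constants for this lemma.
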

\begin{proof}
The proof is similar to Lemma~\ref{lemma:num_1_2} with only change that we have to swap $\mA_1$ and $\mA_2.$

Note that
\begin{align*}
    (\mI + \gamma \mA_2) (\mI + \gamma \mA_1) 
    &= \mI + \begin{bmatrix}
        \gamma^2 a_1^\top a_2 & \gamma^2 a_1^\top \mP a_2 & \gamma (a_2^\top + a_1^\top) \\
        \gamma^2 a_1^\top \mP^\top a_2 & \gamma^2 a_1^\top a_2 & \gamma (a_2^\top \mP^\top + a_1^\top \mP^\top) \\
        \gamma (a_1 + a_2) & \gamma (\mP a_1 + \mP a_2) & \gamma^2 (a_1 a_2^\top + \mP a_1 a_2^\top \mP^\top)
    \end{bmatrix}
\end{align*}
since $\mP^\top \mP = \mI.$ Moreover,
\begin{align*}
    &(\mI + \gamma \mA_1) \mA_2 (\mI + \gamma \mA_1) \\
    &\hspace{-1cm}= \begin{bmatrix}
        2 \gamma a_1^\top a_2 & \gamma a_1^\top \mP a_2 + \gamma a_2^\top \mP a_1 & a_1^\top + \gamma^2 a_2^\top \left(a_1 a_2^\top + \mP a_1 a_2^\top \mP^\top\right) \\
        \gamma a_1^\top \mP^\top a_2 + \gamma a_2^\top \mP^\top a_1 & 2 \gamma a_1^\top a_2 & a_1^\top \mP^\top + \gamma^2 a_2^\top \mP^\top \left(a_1 a_2^\top + \mP a_1 a_2^\top \mP^\top\right)\\
        a_1 + \gamma^2 \left(a_2 a_1^\top + \mP a_2 a_1^\top \mP^\top\right) a_2 & \mP a_1 + \gamma^2 \left(a_2 a_1^\top + \mP a_2 a_1^\top \mP^\top\right) \mP a_2 & \gamma (a_2 a_1^\top + a_1 a_2^\top + \mP a_2 a_1^\top \mP^\top + \mP a_1 a_2^\top \mP^\top).
    \end{bmatrix}
\end{align*}

Similarly to the proof of Lemma~\ref{lemma:num_2_1},
\begin{equation}
\label{eq:jxPWwbedzzmOtwo}
\begin{aligned}
    &\max_{x \in Q} x^\top \left(\mI + \hatgamma \mA_2\right) \left(\mI + \hatgamma \mA_1\right)^2 \left(\mI + \hatgamma \mA_2\right) x \\
    &\leq\max_{x \in \R^{d+2}, \norm{x} = 1} \left\{x^\top \left[\left(\mI + \hatgamma \mA_2\right) \left(\mI + \hatgamma \mA_1\right)^2 \left(\mI + \hatgamma \mA_2\right) - \frac{\hatgamma}{2} \mA_2 - \frac{\hatgamma}{2} \left(\mI + \hatgamma \mA_2\right) \mA_1 \left(\mI + \hatgamma \mA_2\right)\right] x\right\} + \frac{\delta \gamma}{2} \\
    &= 1 + \max_{x \in \R^{d+2}, \norm{x} = 1} \left\{x^\top \mB x\right\} + \frac{\delta \gamma}{2},
\end{aligned}
\end{equation}
where 
\begin{align*}
    \mB \eqdef \left(\mI + \hatgamma \mA_2\right) \left(\mI + \hatgamma \mA_1\right)^2 \left(\mI + \hatgamma \mA_2\right) - \frac{\hatgamma}{2} \mA_2 - \frac{\hatgamma}{2} \left(\mI + \hatgamma \mA_2\right) \mA_1 \left(\mI + \hatgamma \mA_2\right) - \mI,
\end{align*}
and we apply Lemma~\ref{lemma:reduction} to $\mB$ to get an equivalent problem
\begin{align*}
    \max_{x \in \R^{5}, \norm{x} = 1} \left\{x^\top (\mI + \mC) x\right\} + \frac{\delta \gamma}{2}
\end{align*}
with $\mC$ defined in Lemma~\ref{lemma:reduction}.

We consider the characteristic polynomial of $\left(\mI + \mC\right):$
\begin{align*}
    p_5(\lambda) \eqdef \det \left(\lambda \mI - \left(\mI + \mC\right)\right).
\end{align*}

If we want to show that all solutions are bounded by $x,$ it is sufficient to show that all derivatives $p^{(i)}_5(x)$ are positive at this point.
Let us consider $x = \constformulapropose.$ Clearly, we have $p^{(5)}_5(x) = 120 > 0$ for all $x \in \R.$ We use the symbolic computation library Sympy \citep{sympy} to obtain
\begin{dmath}
    \constformulaoneonenew
\end{dmath}
Using $\hatgamma \leq \frac{1}{2 \sqrt{d}},$ we have
\begin{dmath}
    \constformulaonetwonew
\end{dmath}
Using $d \geq \hatmu,$ we get
\begin{dmath}
    \constformulaonethreenew
\end{dmath}
Using $d \geq 1$ and $\hatmu \leq 0.1,$ we get
\begin{dmath}
    \constformulaonefinalnew
\end{dmath}
Thus, we have $p^{(4)}_5(x) > 0.$ Next, we get
\begin{dmath}
    \constformulatwoonenew
\end{dmath}
We ignore all positive terms w.r.t. $\gamma^4$ and $\gamma^6$ and obtain
\begin{dmath}
    \constformulatwotwonew
\end{dmath}
Using $\hatgamma \leq \frac{1}{2 \sqrt{2} \sqrt{d}},$ we have
\begin{dmath}
    \constformulatwothreenew
\end{dmath}
Using $d \geq \hatmu,$ we get
\begin{dmath}
    \constformulatwofournew
\end{dmath}
Using $d \geq 1$ and $\hatmu \leq 0.1,$ we get
\begin{dmath}
    \constformulatwofinalnew
\end{dmath}
Thus, $p^{(3)}_5(x) > 0.$ Next, we consider
\begin{dmath}
    \constformulathreeonenew
\end{dmath}
We ignore all positive terms w.r.t. $\gamma^6$ and obtain
\begin{dmath}
    \constformulathreetwonew
\end{dmath}
Using $\hatgamma \leq \frac{1}{2 \sqrt{2} \sqrt{d}},$ we have
\begin{dmath}
    \constformulathreethreenew
\end{dmath}
Using $d \geq \hatmu,$ we get
\begin{dmath}
    \constformulathreefournew
\end{dmath}
Using $d \geq 1$ and $\hatmu \leq 0.1,$ we get
\begin{dmath}
    \constformulathreefinalnew
\end{dmath}
Thus, we have $p^{(2)}_5(x) > 0.$ We continue, and consider
\begin{dmath}
    \constformulafouronenew
\end{dmath}
We ignore all positive terms w.r.t. $\gamma^8,$ use $\hatgamma \leq \frac{1}{2 \sqrt{2} \sqrt{d}}$ and $d \geq \hatmu$ to obtain 
\begin{dmath}
    \constformulafourtwonew
\end{dmath}
Using $d \geq 1$ and $\hatmu \leq 0.1,$ we get
\begin{dmath}
    \constformulafourfinalnew
\end{dmath}
Thus, we have $p^{(1)}_5(x) > 0.$ Next, we consider
\begin{dmath}
    \constformulafiveonenew
\end{dmath}
We ignore all positive terms w.r.t. $\gamma^8$ and use $\hatgamma \leq \frac{1}{4 \sqrt{d}}$ and $d \geq \hatmu$ to obtain 
\begin{dmath}
    \constformulafivetwonew
\end{dmath}
Using $d \geq 1$ and $\hatmu \leq 0.1,$ we get
\begin{dmath}
    \constformulafivefinalnew
\end{dmath}
Thus, we have $p_5(x) > 0.$ It means that $x = \constformulapropose$ is an upper bound to all solutions of \eqref{eq:poly_five}. Thus, we get $\max_{x \in \R^{5}, \norm{x} = 1} \left\{x^\top \left(\mI + \mC\right) x\right\} \leq \constformulapropose.$
\end{proof}
\section{Auxiliary Results}
For all $x,y,x_1,\ldots,x_n \in \R^d$, $s>0$ and $\alpha\in(0,1]$, we have:
\begin{align}
    \label{eq:young}
    \norm{x+y}^2 &\leq (1+s) \norm{x}^2 + (1+s^{-1}) \norm{y}^2, \\
    \label{eq:young_2}
    \norm{x+y}^2 &\leq 2 \norm{x}^2 + 2 \norm{y}^2, \\
    \label{eq:fenchel}
    \inp{x}{y} &\leq \frac{\norm{x}^2}{2s} + \frac{s\norm{y}^2}{2}.
\end{align}

\begin{lemma}
    \label{lemma:log}
    The inequality $\bar{y} > b + a \log(\bar{y})$ holds for $\bar{y} = 8 \max\{a, b\} \log(\max\{a, b\})$ and for all $a, b \geq 4.$
\end{lemma}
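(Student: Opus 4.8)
The plan is to collapse the two parameters into one. Set $m \eqdef \max\{a,b\} \geq 4$, so that $\bar{y} = 8 m \log m$, and use $a \leq m$, $b \leq m$ to bound the right-hand side:
\begin{align*}
b + a \log \bar{y} \;\leq\; m + m \log\!\left(8 m \log m\right) \;=\; m\left(1 + \log 8 + \log m + \log\log m\right).
\end{align*}
Hence it suffices to prove the purely scalar inequality $1 + \log 8 + \log m + \log\log m < 8 \log m$ for every $m \geq 4$, i.e. $1 + \log 8 + \log\log m < 7 \log m$.

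First I would remove the $\log\log m$ term with the elementary bound $\log\log m \leq \log m$, valid whenever $\log m > 0$ (equivalently $m > 1$), since $\log x \leq x$ for all $x > 0$ applied at $x = \log m$. This reduces the target to $1 + \log 8 < 6 \log m$.

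Next I would dispatch the constant. Since $m \geq 4$ we have $\log m \geq \log 4 > 1$; and since $m > 2$ we have $8 < m^3$, hence $\log 8 < 3 \log m$. Adding these, $1 + \log 8 < \log m + 3 \log m = 4 \log m < 6 \log m$, which closes the argument. (As an alternative one may simply observe that $m \mapsto 7\log m - \log\log m$ is increasing on $[4,\infty)$ — its derivative equals $\tfrac{1}{m}\bigl(7 - \tfrac{1}{\log m}\bigr) > 0$ because $\log m > \tfrac{1}{7}$ — and evaluate at $m = 4$.)

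The computation is entirely routine; the only point requiring care is to confirm that $\log$ denotes the natural logarithm throughout, which is consistent with its appearance in the Chernoff bounds of Lemma~\ref{lemma:prob} and with the inequality $\log(1+x) \geq x/2$ on $[0,1]$ used earlier, and to keep the numerical slack comfortably positive ($6\log 4 \approx 8.3$ against $1 + \log 8 \approx 3.1$) so that no edge case at $m = 4$ is lost.
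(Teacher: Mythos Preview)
Your proof is correct and follows essentially the same route as the paper: expand $\log\bar{y}$, bound $\log\log m \leq \log m$, and absorb the constants using $m \geq 4$. Your version is slightly tidier because you collapse to the single variable $m=\max\{a,b\}$ at the outset, whereas the paper keeps $a$ and $b$ separate a bit longer (using $\log\max\{a,b\}\leq \log a + \log b$) before recombining, but the argument is the same in substance.
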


\begin{proof}
    Using simple algebra, 
    \begin{align*}
        &a \log(\bar{y}) + b \\
        &= a \log(8 \max\{a, b\} \log(\max\{a, b\})) + b \\
        &= \log(8) a + a \log(\max\{a, b\}) + a \log(\log(\max\{a, b\})) + b \\
        &\leq \log(8) a + 2 a \log(\max\{a, b\}) + b \\
        &\leq \log(8) a + 2 a \log(a) + 2 a \log(b) + b \\
        &< 5 a \log(a) + 2 a \log(b) + b \leq 8 \max\{a, b\} \log(\max\{a, b\}) = \bar{y},
    \end{align*}
    where we use $\max\{x, y\} \leq x + y$ and $\log(\log(x)) \leq \log x$ for all $x \geq 4.$
\end{proof}

\begin{lemma}
    \label{lemma:inf_sum}
    \begin{align*}
        \sum_{s = 1}^{\infty} \frac{(1 + x)^{s / 2}}{(1 + x)^{s} - 1} \leq \frac{20}{x} \log\left(\frac{1}{x}\right)
    \end{align*}
    for all $0 < x \leq \frac{11}{25}.$
\end{lemma}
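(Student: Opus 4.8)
Write $S(x) \eqdef \sum_{s=1}^{\infty} \frac{(1+x)^{s/2}}{(1+x)^{s} - 1}$. All terms are positive and, for large $s$, the summand behaves like $(1+x)^{-s/2}$, so the series converges; the plan is to split it at $s_0 \eqdef \lfloor 1/x \rfloor$ (which is $\ge 1$ since $x \le 11/25 < 1$): a ``head'' $\sum_{s=1}^{s_0}$ that produces the logarithmic factor $\log(1/x)$, and a ``tail'' $\sum_{s > s_0}$ that is only $\cO(1/x)$.

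For the head I would use the elementary bounds $(1+x)^{s} - 1 \ge s x$ (Bernoulli) and $(1+x)^{s/2} \le e^{sx/2} \le \sqrt{e}$ whenever $s \le s_0 \le 1/x$. This bounds the head by $\frac{\sqrt e}{x}\sum_{s=1}^{s_0}\frac 1s \le \frac{\sqrt e}{x}\bigl(1 + \ln s_0\bigr) \le \frac{\sqrt e}{x}\bigl(1 + \ln(1/x)\bigr)$ via the standard harmonic-sum estimate.

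For the tail, the key sub-fact is $(1+x)^{1/x} \ge 2$ for all $0 < x \le 1$; this follows because $g(x) \eqdef \ln(1+x)/x$ is decreasing on $(0,\infty)$ — its numerator $\frac{x}{1+x} - \ln(1+x)$ vanishes at $0$ and has derivative $-x/(1+x)^2 < 0$ — with $g(1) = \ln 2$. Hence for $s > s_0 \ge 1/x$ we have $(1+x)^{s} \ge 2$, so $(1+x)^{s} - 1 \ge \tfrac12 (1+x)^{s}$ and each summand is at most $2(1+x)^{-s/2}$. Summing the geometric series gives the bound $\frac{2\,(1+x)^{-1/(2x)}}{1 - (1+x)^{-1/2}}$, in which $(1+x)^{-1/(2x)} = \bigl((1+x)^{1/x}\bigr)^{-1/2} \le 1/\sqrt2$ by the sub-fact, and $1 - (1+x)^{-1/2} \ge x/3$ on $0 < x \le 11/25$ (write it as $\frac{\sqrt{1+x}-1}{\sqrt{1+x}}$ and use $\sqrt{1+x}-1 = \frac{x}{1+\sqrt{1+x}}$ with $\sqrt{1+x} \le 6/5$). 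This yields a tail bound of $\frac{6}{\sqrt2\,x}$.

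Adding the two estimates, $S(x) \le \frac 1x\bigl(\sqrt e\,(1+\ln(1/x)) + 6/\sqrt2\bigr)$; multiplying through by $x$, it remains only to check $\sqrt e\,(1+\ln(1/x)) + 6/\sqrt2 \le 20\ln(1/x)$, i.e.\ $(20-\sqrt e)\ln(1/x) \ge \sqrt e + 6/\sqrt2$. Since $x \le 11/25$ forces $\ln(1/x) \ge \ln(25/11) > 0.8$ while the right-hand side is below $6$ and $20 - \sqrt e > 18$, this holds with a wide margin (the estimates actually give a constant below $8$, so the constant $20$ leaves ample room). The main ``obstacle'' is purely bookkeeping — establishing $(1+x)^{1/x}\ge 2$ and the clean lower bound $1-(1+x)^{-1/2}\ge x/3$ on the stated range, and confirming the numeric slack is positive at the endpoint $x = 11/25$; no single step is genuinely difficult.
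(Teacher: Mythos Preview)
Your approach is essentially the same as the paper's: split the sum at a threshold of order $1/x$ (the paper uses $\lceil \log 2/\log(1+x)\rceil$ rather than $\lfloor 1/x\rfloor$), bound the head via Bernoulli plus a harmonic sum, and control the tail as a geometric series after ensuring $(1+x)^s\ge 2$. The only slip is the chain ``$s>s_0\ge 1/x$'' --- in fact $s_0=\lfloor 1/x\rfloor\le 1/x$, but since $s\ge s_0+1>1/x$ the conclusion $(1+x)^s\ge 2$ still follows, so the argument goes through unchanged.
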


\begin{proof}
    \begin{align*}
        &\sum_{s = 1}^{\infty} \frac{(1 + x)^{s / 2}}{(1 + x)^{s} - 1} =
        \left(\sum_{s = 1}^{\left\lceil\frac{\log(2)}{\log(1 + x)}\right\rceil} \frac{(1 + x)^{s / 2}}{(1 + x)^{s} - 1} + \sum_{s = \left\lceil\frac{\log(2)}{\log(1 + x)}\right\rceil + 1}^{\infty} \frac{(1 + x)^{s / 2}}{(1 + x)^{s} - 1}\right) \\
        &\leq \left(\sum_{s = 1}^{\left\lceil\frac{\log(2)}{\log(1 + x)}\right\rceil} \frac{4}{(1 + x)^{s} - 1} + 2 \sum_{s = \left\lceil\frac{\log(2)}{\log(1 + x)}\right\rceil + 1}^{\infty} (1 + x)^{-s / 2}\right)
    \end{align*}
    since $(1 + x)^{p} = 2$ for $p = \frac{\log(2)}{\log(1 + x)}.$ Next,
    \begin{align*}
        \sum_{s = 1}^{\infty} \frac{(1 + x)^{s / 2}}{(1 + x)^{s} - 1}
        &\leq \left(\sum_{s = 1}^{\left\lceil\frac{\log(2)}{\log(1 + x)}\right\rceil} \frac{4}{s x} + \frac{2}{\sqrt{1 + x} - 1}\right) \leq \frac{4}{x} \left(\log\left(\left\lceil\frac{\log(2)}{\log(1 + x)}\right\rceil\right) + 1\right) + \frac{8}{x} \\
        &\leq \frac{20}{x} \log\left(\frac{1}{x}\right).
    \end{align*}
    where we used the standard inequalities $\frac{x}{4} \leq \log(1 + x) \leq x$ for all $0 < x \leq 1,$ and $x \leq \frac{11}{25}.$
\end{proof}

\begin{lemma}
    \label{lemma:reduction}
    Let us consider the problem 
    \begin{align}
        \label{eq:norm_problem}
        \max_{x \in \R^{d + p}, \norm{x} \leq 1} \left\{x^\top \mB x\right\}
    \end{align}
    with
    \begin{align*}
        \mB
        &=\begin{bmatrix}
            \mA & \begin{matrix}
                a e^\top \\
                \end{matrix}\\
                \begin{matrix}e a^\top\end{matrix} & b e e^\top
        \end{bmatrix} \in \R^{(d + p) \times (d + p)},
    \end{align*}
    $\mA \in \R^{d \times d},$ $a \in \R^{d},$ $b \in \R,$ and $e \eqdef [1, \dots, 1]^\top \in \R^{p}.$
    Then
    \begin{align}
        \label{eq:norm_problem_equiv}
        \max_{x \in \R^{d + p}, \norm{x} \leq 1} \left\{x^\top \mB x\right\} = \max_{x \in \R^{d + 1}, \norm{x} \leq 1} \left\{x^\top \mC x\right\},
    \end{align}
    where 
    \begin{align*}
        \mC
        &=\begin{bmatrix}
            \mA & \begin{matrix}
                \sqrt{p} a \\
                \end{matrix}\\
                \begin{matrix} \sqrt{p} a^\top\end{matrix} & p b
        \end{bmatrix} \in \R^{(d + 1) \times (d + 1)}.
    \end{align*}
\end{lemma}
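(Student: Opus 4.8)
The plan is to collapse the last $p$ coordinates onto the single direction $e$ and thereby reduce the $(d+p)$-dimensional quadratic program to the $(d+1)$-dimensional one. First I would split an arbitrary $x \in \R^{d+p}$ as $x = [x_1^\top \; x_2^\top]^\top$ with $x_1 \in \R^d$, $x_2 \in \R^p$, and expand using the block form of $\mB$:
\[
x^\top \mB x = x_1^\top \mA x_1 + 2\,(a^\top x_1)(e^\top x_2) + b\,(e^\top x_2)^2 .
\]
The crucial observation is that the objective depends on $x_2$ only through the scalar $s \eqdef e^\top x_2 = \inp{e}{x_2}$, whereas the constraint $\norm{x}^2 = \norm{x_1}^2 + \norm{x_2}^2 \le 1$ involves $x_2$ only through $\norm{x_2}^2$.

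Next I would determine exactly which pairs $(x_1,s)$ are attainable under $\norm{x}\le 1$. Since $\norm{e}^2 = p$, Cauchy--Schwarz gives $s^2 = \inp{e}{x_2}^2 \le p\,\norm{x_2}^2$, so every feasible $x$ satisfies $\norm{x_1}^2 + s^2/p \le \norm{x_1}^2 + \norm{x_2}^2 \le 1$. Conversely, given any $(x_1,s)$ with $\norm{x_1}^2 + s^2/p \le 1$, the explicit choice $x_2 = (s/p)\,e$ has $e^\top x_2 = s$ and $\norm{x_2}^2 = s^2/p$, so $x = [x_1^\top\;x_2^\top]^\top$ is feasible and attains the value $x_1^\top \mA x_1 + 2(a^\top x_1)s + b s^2$. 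Hence the set of achievable objective values on the left-hand side of \eqref{eq:norm_problem_equiv} coincides with the supremum of $x_1^\top \mA x_1 + 2(a^\top x_1)s + b s^2$ over $\{(x_1,s) : \norm{x_1}^2 + s^2/p \le 1\}$.

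Finally I would substitute $t \eqdef s/\sqrt{p}$. The objective turns into $x_1^\top \mA x_1 + 2\sqrt{p}\,(a^\top x_1)\,t + p b\,t^2$, which is precisely $[x_1^\top\;t]\,\mC\,[x_1^\top\;t]^\top$ for the matrix $\mC$ in the statement, and the constraint becomes $\norm{x_1}^2 + t^2 \le 1$, i.e.\ $\norm{[x_1^\top\;t]^\top}\le 1$ in $\R^{d+1}$; this yields \eqref{eq:norm_problem_equiv}. I do not anticipate a real obstacle: the lemma is essentially a bookkeeping argument. The only point needing a little care is making the second step genuinely two-sided --- verifying both that feasibility of $x$ forces $\norm{x_1}^2 + s^2/p \le 1$ and that this inequality is in turn realizable by the concrete $x_2 = (s/p)e$ --- together with the initial remark that $x^\top \mB x$ is, exactly and not merely approximately, a function of $(x_1,s)$ alone, so that replacing the $x_2$-block by one coordinate loses nothing.
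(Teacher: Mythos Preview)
Your proposal is correct and follows essentially the same route as the paper: split $x$ into two blocks, observe that the quadratic depends on the second block only through $s=e^\top x_2$, use $s^2\le p\,\norm{x_2}^2$ for one inequality and the explicit choice $x_2=(s/p)e$ for the other, then rescale by $\sqrt{p}$. The only cosmetic difference is that the paper obtains the bound $s^2\le p\,\norm{x_2}^2$ in two steps via $|s|\le\norm{x_2}_1\le\sqrt{p}\,\norm{x_2}$, whereas you invoke Cauchy--Schwarz directly; the arguments are otherwise identical.
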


\begin{proof}
    We partition the vector $x$ in two blocks: $x = [y^\top, z^\top]^\top$ and get
    \begin{align*}
        &x^\top \mB x = y^\top \mA y + b \inp{e}{z}^2 + 2 \inp{a}{y} \inp{e}{z}.
    \end{align*}
    The problem from \eqref{eq:norm_problem} is equivalent to
    \begin{align*}
        y^\top \mA y + b \inp{e}{z}^2 + 2 \inp{a}{y} \inp{e}{z} \rightarrow 
        \max_{\substack{y \in \R^d, z \in \R^{p}, \\
        \norm{y}^2 + \norm{z}^2 \leq 1}}.
    \end{align*}
    Let us define $t \eqdef \inp{e}{z},$ then the problem is equivalent to
    \begin{align*}
        y^\top \mA y + b t^2 + 2 \inp{a}{y} t \rightarrow 
        \max_{\substack{y \in \R^d, z \in \R^{p}, t \in \R, \\ \norm{y}^2 + \norm{z}^2 \leq 1, t = \inp{e}{z}}}.
    \end{align*}
    We can get a relaxed maximization problem using the inequality $\norm{z}_1 \leq \sqrt{p} \norm{z}:$
    \begin{align*}
        y^\top \mA y + b t^2 + 2 \inp{a}{y} t \rightarrow \max_{\substack{y \in \R^d, z \in \R^{p}, t \in \R, \\ \norm{y}^2 + \frac{1}{p}\norm{z}_1^2 \leq 1, t = \inp{e}{z}}}.
    \end{align*}
    Since $\abs{t} = \abs{\inp{e}{z}} \leq \norm{z}_1,$ we can relax our problem even further and obtain
    \begin{align}
        \label{eq:equiv_rhs}
        y^\top \mA y + b t^2 + 2 \inp{a}{y} t \rightarrow \max_{\substack{y \in \R^d, t \in \R, \\ \norm{y}^2 + \frac{1}{p}t^2 \leq 1}}.
    \end{align}
    Let us take $u = \frac{1}{\sqrt{p}} t$ and get
    \begin{align*}
        y^\top \mA y + p b u^2 + 2 \sqrt{p} \inp{a}{y} u \rightarrow \max_{\substack{y \in \R^d, u \in \R, \\ \norm{y}^2 + u^2 \leq 1}}.
    \end{align*}
    This problem is equivalent to the r.h.s. of \eqref{eq:norm_problem_equiv}. Thus, 
    $$\max_{x \in \R^{d + p}, \norm{x} \leq 1} \left\{x^\top \mB x\right\} \leq \max_{x \in \R^{d + 1}, \norm{x} \leq 1} \left\{x^\top \mC x\right\}.$$
    At the same time, we know that the r.h.s. of \eqref{eq:norm_problem_equiv} is equivalent to \eqref{eq:equiv_rhs}. Let us define $z_i \eqdef \frac{t}{p}$ for all $i \in [p].$ Then $t = \inp{e}{z}$ and $\norm{z}^2 = \frac{t^2}{p}$ and the r.h.s. of \eqref{eq:norm_problem_equiv} and \eqref{eq:equiv_rhs} are equivalent to
    \begin{align*}
        y^\top \mA y + b \inp{e}{z}^2 + 2 \inp{a}{y} \inp{e}{z} \rightarrow \max_{\substack{y \in \R^d, t \in \R, z \in \R^p\\ \norm{y}^2 + \norm{z}^2 \leq 1, z_i = \frac{t}{p}\forall i \in [p] }}.
    \end{align*}
    Let us ignore the equality in the constraint and get a relaxed maximization problem
    \begin{align*}
        y^\top \mA y + b \inp{e}{z}^2 + 2 \inp{a}{y} \inp{e}{z} \rightarrow \max_{\substack{y \in \R^d, z \in \R^p\\ \norm{y}^2 + \norm{z}^2 \leq 1}},
    \end{align*}
    which is equivalent to \eqref{eq:norm_problem}. Thus, 
    $$\max_{x \in \R^{d + p}, \norm{x} \leq 1} \left\{x^\top \mB x\right\} \geq \max_{x \in \R^{d + 1}, \norm{x} \leq 1} \left\{x^\top \mC x\right\}.$$
\end{proof}

\section{Lower Bound for Perceptron Algorithm on Minimalist Example}
\THEOREMPERCEPTRONISBAD*
\begin{proof}
    Without loss of generality, we assume that $\phi = 1$, since the method is invariant to the chosen step size $\phi$ if we scale the starting point by $\phi$. In the first step of the algorithm,
    \begin{align*}
        z_1 = z_0 + \frac{1}{4} (a_1 + a_2) = z_0 + \frac{1}{4}\begin{bmatrix}
            -\mu & 0 & \dots & 0 
        \end{bmatrix}=
        \begin{bmatrix}
            [z_{0}]_{1} - \frac{1}{4} \mu & [z_{0}]_{2} & \dots & [z_{0}]_{d}
        \end{bmatrix}.
    \end{align*}
    Notice that $a_1^\top z_1 = a_1^\top z_0 - \frac{\mu}{4} \leq \varepsilon \norm{a_1} - \frac{\mu}{4} \leq 0$ and $a_2^\top z_1 = a_2^\top z_0 + \frac{\mu}{4} (1 + \mu) \geq - \norm{a_2} \varepsilon  + \frac{\mu}{4} > 0,$ where  we use that $\varepsilon \leq \frac{\mu}{8 \max\{\norm{a_1}, \norm{a_2}\}}.$ Thus, only the first sample is not correctly classified, leading to the step
    \begin{align*}
        z_2 = z_1 + \frac{1}{2} \begin{bmatrix}
            1 & 1 & \dots & 1
        \end{bmatrix} = \begin{bmatrix}
            \frac{1}{2} + [z_{0}]_{1} - \frac{1}{4} \mu & \frac{1}{2} + [z_{0}]_{2} & \dots & \frac{1}{2} + [z_{0}]_{d}
        \end{bmatrix}.
    \end{align*}
    Next, 
    \begin{align*}
    a_1^\top z_2 = a_1^\top z_0 + \frac{d}{2} - \frac{\mu}{4} \geq - \varepsilon \norm{a_1} + \frac{d}{2} - \frac{\mu}{4} \geq \frac{d}{2} - \frac{\mu}{2} > 0
    \end{align*} and 
    \begin{align*}
    a_2^\top z_2 
    &= a_2^\top z_0 - (1 + \mu) \left(\frac{1}{2} - \frac{1}{4} \mu\right) - \frac{d - 1}{2} \\
    &= a_2^\top z_0 - \frac{1}{4} \mu + \frac{1}{4} \mu^2 - d \leq \varepsilon \norm{a_2} - \frac{d}{2} \leq 0,
    \end{align*} 
    where we use that $\varepsilon \leq \frac{\mu}{8 \max\{\norm{a_1}, \norm{a_2}\}}$ and $d \geq \frac{\mu}{2}.$ Thus, only the second sample is not correctly classified, leading to the step
    \begin{align*}
        z_3 = z_2 + \frac{1}{2} \begin{bmatrix}
            - (1 + \mu) & -1 & \dots & -1
        \end{bmatrix}
        = \begin{bmatrix}
            [z_{0}]_{1} - (\frac{1}{4} + \frac{1}{2}) \mu & [z_{0}]_{2} & \dots & [z_{0}]_{d}
        \end{bmatrix}.
    \end{align*}
    Next, $a_1^\top z_3 = a_1^\top z_0 - \frac{3 \mu}{4} < 0$ and $a_2^\top z_1 = a_2^\top z_0 + \left(\frac{\mu}{4} + \frac{\mu}{2}\right) (1 + \mu) > 0.$ Therefore, only the first sample is not correctly classified, leading to the step
    \begin{align*}
        z_4 = z_3 + \frac{1}{2} \begin{bmatrix}
            1 & 1 & \dots & 1
        \end{bmatrix} = \begin{bmatrix}
            \frac{1}{2} + [z_{0}]_{1} - (\frac{1}{4} + \frac{1}{2}) \mu & \frac{1}{2} + [z_{0}]_{2} & \dots & \frac{1}{2} + [z_{0}]_{d}
        \end{bmatrix},
    \end{align*}
    Using the same reasoning as for $z_1$ and $z_3,$ notice that $a_1^\top z_{2k + 1}$ decreases and $a_2^\top z_{2k + 1}$ increases. Thus, these periodic steps will repeat until $2 k$\textsuperscript{th} step, where
    \begin{align*}
        z_{2k} = z_{2k - 1} + \frac{1}{2} \begin{bmatrix}
            1 & 1 & \dots & 1
        \end{bmatrix} = \begin{bmatrix}
            \frac{1}{2} + [z_{0}]_{1} - (\frac{1}{4} + \frac{k - 1}{2}) \mu & \frac{1}{2} + [z_{0}]_{2} & \dots & \frac{1}{2} + [z_{0}]_{d}
        \end{bmatrix},
    \end{align*}
    and when, necessarily, either \\ 
    (Opt. 1):
    \begin{align*}
    a_2^\top z_{2 k} 
    &= a_2^\top z_0 - (1 + \mu) \left(\frac{1}{2} - \left(\frac{1}{4} + \frac{k - 1}{2}\right) \mu\right) - \frac{d - 1}{2} \geq 0,
    \end{align*}
    meaning that 
    \begin{align*}
        0 
        &< a_2^\top z_0 - (1 + \mu) \left(\frac{1}{2} - \left(\frac{1}{4} + \frac{k - 1}{2}\right) \mu\right) - \frac{d - 1}{2} \\
        &\leq \varepsilon \norm{a_2} + \mu \left(\frac{1}{2} + k - 1\right) - \frac{d}{2} \leq \mu + \mu(k - 1) - \frac{d}{2}
    \end{align*}
    where we use that $\mu \leq 1.$ Therefore, $k \geq \frac{d}{2 \mu}.$ Otherwise, it is necessary that \\
    (Opt. 2):
    \begin{align*}
    a_1^\top z_{2 k} = a_1^\top z_0 + \frac{d}{2} - \left(\frac{1}{4} + \frac{k - 1}{2}\right) \mu \leq 0.
    \end{align*}
    Thus,
    \begin{align*}
        0 
        &\geq a_1^\top z_0 + \frac{d}{2} - \left(\frac{1}{4} + \frac{k - 1}{2}\right) \mu \\
        &\geq - \varepsilon \norm{a_1} + \frac{d}{2} - \left(\frac{1}{4} + \frac{k - 1}{2}\right) \mu \geq - \mu + \frac{d}{2} - (k - 1) \mu = \frac{d}{2} - k \mu,
    \end{align*}
    leading to $k \geq \frac{d}{2 \mu}.$ In both cases, the total required number of iterations is $\Omega\left(\frac{d}{\mu}\right).$
\end{proof}

\section{Multi-Layer Models}
\label{sec:multi_layer}
We can generalize our result from Sections~\ref{sec:one_step_gd} and \ref{sec:one_step_general} to the case, when our model is 
\begin{align*}
    m(b_i; \mC_1, \dots, \mC_{\ell}, v) \eqdef \left(\mC_1 \cdots \mC_{\ell} b_i\right)^\top v, \qquad \mC_{\ell} \in \R^{f_{\ell} \times d}, \dots, \mC_{1} \in \R^{f_{1} \times f_{2}}, v \in \R^{f_{1}}.
\end{align*}
In this case, there exists a \emph{symmetric} $(\ell + 1)$--\emph{multilinear map} 
$$\mA_i\,:\,\underbrace{\R^{f_{\ell} d + \dots + f_1 f_2 + f_1} \times \dots \times \R^{f_{\ell} d + \dots + f_1 f_2 + f_1}}_{\ell + 1 \textnormal{ times}} \to \R$$ such that
\begin{align*}
    m(b_i; \mC_1, \dots, \mC_{\ell}, v) = \frac{1}{\ell + 1}\mA_i[w, \dots, w]_{\ell + 1},
\end{align*}
where $w = [\textnormal{vec}(\mC_1)^\top, \dots, \textnormal{vec}(\mC_{\ell})^\top, v^\top]^\top.$ 

\subsection{One step of GD with multi-layer model}
Unlike the quadratic case, we consider GD with adaptive step sizes:
\begin{align*}
    \thetastart_{t+1} = \thetastart_{t} - \gamma_t \nabla f(\thetastart_t),
\end{align*}
which might be essential to get a reduction to a generalized perceptron algorithm. Similarly to \eqref{eq:uIdhWjdkrheZfITdGJ},
\begin{align*}
    w_{t+1} = w_{t}
    + \frac{\gamma_t}{n} \sum_{i=1}^{n} \frac{1}{1 + \exp(\frac{1}{\ell + 1} \mA_i[w_t, \dots, w_t]_{\ell + 1})} 
    \nabla_{w_t} \left(\frac{1}{\ell + 1} \mA_i[w_t, \dots, w_t]_{\ell + 1}\right).
\end{align*}
There exists the $\ell$--\emph{multilinear map} $$\mG_i\,:\,\underbrace{\R^{f_{\ell} d + \dots + f_1 f_2 + f_1} \times \dots \times \R^{f_{\ell} d + \dots + f_1 f_2 + f_1}}_{\ell \textnormal{ times}} \to \R^{f_{\ell} d + \dots + f_1 f_2 + f_1}$$
such that 
\begin{align*}
    \mG_i[\underbrace{w_t, \dots, w_t}_{\ell \textnormal{ times}}]_{\ell} = \nabla_{w_t} \left(\frac{1}{\ell + 1} \mA_i[\underbrace{w_t, \dots, w_t}_{(\ell + 1) \textnormal{ times}}]_{\ell + 1}\right).
\end{align*}
Therefore,
\begin{align}
    \label{eq:EtUFSLqjjNblbzwFKhR}
    w_{t+1} = w_{t}
    + \frac{\gamma_t}{n} \sum_{i=1}^{n} \frac{1}{1 + \exp(\frac{1}{\ell + 1} \mA_i[w_t, \dots, w_t]_{\ell + 1})} 
    \mG_i[w_t, \dots, w_t]_{\ell}.
\end{align}

\subsection{Reduction to generalized perceptron algorithm}
\begin{theorem}
    Consider the steps
    \begin{align*}
        \textstyle w_{t+1} = w_{t} + \frac{\gamma_t}{n} \sum_{i=1}^{n} \frac{1}{1 + \exp(\frac{1}{\ell + 1} \mA_i[w_t, \dots, w_t]_{\ell + 1})} 
    \mG_i[w_t, \dots, w_t]_{\ell}
    \end{align*}
    with $\gamma_t = \gamma / \norm{w_t}^{\ell - 1}$ and almost all choices\footnote{There is a set of measure zero that depends on $\thetastart_0$ where we can not guarantee the statement of the theorem.} of $\gamma < 1 / \max_{i \in [n]} \max_{\norm{\theta} = 1}\norm{\mG_i[\theta, \dots, \theta]_{\ell}}.$ Assume that the direction $\theta_0 \eqdef \frac{\thetastart_0}{\norm{\thetastart_0}}$ of the starting point $\thetastart_0$ is fixed, $\mA_i[\theta_0, \dots, \theta_0] \neq 0$ for all $i \in [n],$ and $\norm{\thetastart_0} \rightarrow \infty.$  
    For $\norm{w_0} \rightarrow \infty,$ 
    $\frac{\thetastart_{t+1}}{\norm{\thetastart_{t+1}}}$ is well-defined and equals $\frac{\theta_{t+1}}{\norm{\theta_{t+1}}},$ where
    \begin{align}
        \textstyle \theta_{t+1} \eqdef \theta_t + \frac{\bar{\gamma}_t}{n} \sum\limits_{i \in S_t} \mG_i[\theta_t, \dots, \theta_t]_{\ell}, \quad S_t \eqdef \left\{i \in [n]\,:\,\frac{1}{\ell + 1}\mA_i[\theta_t, \dots, \theta_t]_{\ell + 1} \leq 0\right\},
    \end{align}
    and $\bar{\gamma}_t \eqdef \gamma / \norm{\theta_t}^{\ell - 1}.$
\end{theorem}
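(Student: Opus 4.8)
The plan is to mirror the inductive argument behind Theorem~\ref{eq:reducing_to_perceptron}, trading the matrix algebra for the homogeneity of the multilinear maps. First I would record the two structural facts I will lean on: since $\mA_i$ is $(\ell+1)$-multilinear, $\mA_i[w,\dots,w]_{\ell+1} = \norm{w}^{\ell+1}\,\mA_i[\bar w,\dots,\bar w]_{\ell+1}$ with $\bar u \eqdef u/\norm{u}$; and since $\mG_i$ is $\ell$-multilinear, $\mG_i[w,\dots,w]_{\ell} = \norm{w}^{\ell}\,\mG_i[\bar w,\dots,\bar w]_{\ell}$ and $\norm{\mG_i[w,\dots,w]_{\ell}} \leq M\norm{w}^{\ell}$, where $M \eqdef \max_{i\in[n]}\max_{\norm{\theta}=1}\norm{\mG_i[\theta,\dots,\theta]_{\ell}}$, so $\gamma M < 1$ by hypothesis.

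Next I would set up an induction on $t$ with hypothesis: as $\norm{w_0}\to\infty$ along the fixed direction $\theta_0$, one has $w_t/\norm{w_t} = \bar\theta_t$ in the limit, $\norm{w_t}\to\infty$, $\theta_t\neq 0$, and (for almost all $\gamma$) $\mA_i[\theta_t,\dots,\theta_t]_{\ell+1}\neq 0$ for every $i$. The base case is the assumption, using $\norm{\theta_0}=1$. For the step, substitute $\gamma_t = \gamma/\norm{w_t}^{\ell-1}$ and the homogeneity identities into \eqref{eq:EtUFSLqjjNblbzwFKhR}; the powers of $\norm{w_t}$ collapse ($\ell$ from $\mG_i$ minus $\ell-1$ from $\gamma_t$), leaving
\begin{align*}
    w_{t+1} = \norm{w_t}\left(\bar w_t + \frac{\gamma}{n}\sum_{i=1}^n \frac{\mG_i[\bar w_t,\dots,\bar w_t]_{\ell}}{1 + \exp\!\big(\tfrac{\norm{w_t}^{\ell+1}}{\ell+1}\mA_i[\bar w_t,\dots,\bar w_t]_{\ell+1}\big)}\right).
\end{align*}
Because $\bar w_t\to\bar\theta_t$ and $\mA_i[\bar\theta_t,\dots,\bar\theta_t]_{\ell+1} = \norm{\theta_t}^{-(\ell+1)}\mA_i[\theta_t,\dots,\theta_t]_{\ell+1}\neq 0$, each sigmoid factor tends to $\mathbbm{1}[\mA_i[\theta_t,\dots,\theta_t]_{\ell+1}<0] = \mathbbm{1}[i\in S_t]$ as $\norm{w_t}\to\infty$; the parenthesised vector has norm in $[1-\gamma M,\,1+\gamma M]\subset(0,\infty)$, so $w_{t+1}/\norm{w_{t+1}}$ is the limit of its numerator over the limit of its norm. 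Rewriting the numerator with $\mG_i[\bar\theta_t,\dots,\bar\theta_t]_{\ell} = \norm{\theta_t}^{-\ell}\mG_i[\theta_t,\dots,\theta_t]_{\ell}$ and pulling out $\norm{\theta_t}^{-1}$ turns it into $\norm{\theta_t}^{-1}\theta_{t+1}$, so the limiting direction is $\bar\theta_{t+1}$; the same factor gives $\norm{w_{t+1}}\geq(1-\gamma M)\norm{w_t}\to\infty$, and $\norm{\theta_{t+1}-\theta_t}\leq\bar\gamma_t M\norm{\theta_t}^{\ell} = \gamma M\norm{\theta_t}<\norm{\theta_t}$ yields $\theta_{t+1}\neq 0$ (so both normalizations are well-defined).

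The remaining — and, I expect, most delicate — point is that $\mA_i[\theta_{t+1},\dots,\theta_{t+1}]_{\ell+1}\neq 0$ for almost all $\gamma$, a genericity argument now complicated by (i) the denominators $\norm{\theta_s}^{\ell-1} = (\norm{\theta_s}^2)^{(\ell-1)/2}$ being a genuine square root when $\ell$ is even, and (ii) the selection sets $S_0,\dots,S_t$ themselves depending on $\gamma$. I would handle this by stratifying $(0,1/M)$ into the maximal intervals on which the tuple $(S_0,\dots,S_t)$ is constant — the boundary points lie in the measure-zero set where some $\mA_i[\theta_s,\dots,\theta_s]_{\ell+1}=0$ with $s\leq t$, by the induction — and noting that on each such interval $\gamma\mapsto\theta_{t+1}$ is produced from the fixed vector $\theta_0$ by finitely many affine updates and divisions by $(\norm{\theta_s}^2)^{(\ell-1)/2}$ with $\norm{\theta_s}^2>0$, hence is real-analytic in $\gamma$ there (a polynomial when $\ell$ is odd). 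Thus $\gamma\mapsto\mA_i[\theta_{t+1},\dots,\theta_{t+1}]_{\ell+1}$ is real-analytic on each interval and equals the nonzero value $\mA_i[\theta_t,\dots,\theta_t]_{\ell+1}$ at $\gamma=0$, so it is not identically zero and vanishes only on a null set; a union over $i$, over the finitely many sign patterns of $(S_0,\dots,S_t)$, and over the countably many $t$ keeps the exceptional set of measure zero, closing the induction and hence the theorem. Everything outside this bookkeeping is the homogeneity computation above.
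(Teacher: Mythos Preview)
Your proposal is correct and follows the same inductive scheme as the paper's proof: homogeneity of $\mA_i$ and $\mG_i$ plus the choice $\gamma_t=\gamma/\norm{w_t}^{\ell-1}$ collapse the update to a unit-sphere recursion, the sigmoid degenerates to an indicator because $\mA_i[\bar\theta_t,\dots,\bar\theta_t]_{\ell+1}\neq 0$, and the bound $\gamma M<1$ keeps $\norm{w_{t+1}}\to\infty$ and $\theta_{t+1}\neq 0$.

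The one place where you go beyond the paper is the measure-zero step. The paper simply asserts that $\mA_i[\theta_{t+1},\dots,\theta_{t+1}]_{\ell+1}$ is a \emph{polynomial} in $\gamma$ and invokes its value at $\gamma=0$; as you correctly flag, this is not literally true once $\ell\geq 2$ (the factors $\norm{\theta_s}^{\ell-1}$ introduce square roots when $\ell$ is even) and is in any case only piecewise valid because the sets $S_0,\dots,S_t$ themselves depend on $\gamma$. Your stratification-plus-real-analyticity argument fixes both issues. Two small remarks: your inequality $\norm{\theta_{s+1}-\theta_s}\leq\gamma M\norm{\theta_s}<\norm{\theta_s}$ holds for \emph{every} fixed sign pattern, so each formula $\theta_{t+1}^P(\gamma)$ is well-defined and real-analytic on the whole connected interval $[0,1/M)$, which is what lets the value at $\gamma=0$ control every piece; and at $\gamma=0$ that value is $\mA_i[\theta_0,\dots,\theta_0]_{\ell+1}$ (all $\theta_s$ collapse to $\theta_0$), not merely $\mA_i[\theta_t,\dots,\theta_t]_{\ell+1}$ for the running $\theta_t$, which is what actually invokes the hypothesis of the theorem rather than the inductive hypothesis.
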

\begin{proof}
    Using mathematical induction, we will prove that $\frac{w_{t}}{\norm{w_{t}}} \overset{\norm{w_0} \rightarrow \infty}{=} \frac{\theta_t}{\norm{\theta_t}}$, $\norm{w_{t}} \overset{\norm{w_0} \rightarrow \infty}{\rightarrow} \infty,$ and $\mA_i[\theta_t, \dots, \theta_t]_{\ell + 1} \neq 0$ for all $i \in [n]$ and $t \geq 0.$ For $t = 0,$ it holds by the assumption. Assume that it holds for $t \geq 0,$ then
    \begin{align*}
        \frac{w_{t+1}}{\norm{w_{t+1}}} 
        &= \frac
        {w_t + \frac{\gamma_t}{n} \sum_{i=1}^{n} \frac{1}{1 + \exp(\frac{1}{\ell + 1}\mA_i[w_t, \dots, w_t]_{\ell + 1})} \mG_i[w_t, \dots, w_t]_{\ell}}
        {\norm{w_t + \frac{\gamma_t}{n} \sum_{i=1}^{n} \frac{1}{1 + \exp(\frac{1}{\ell + 1} \mA_i[w_t, \dots, w_t]_{\ell + 1})} \mG_i[w_t, \dots, w_t]_{\ell}}} \\
        &= \frac
        {\frac{w_t}{\norm{w_t}} + \frac{\gamma}{n} \sum_{i=1}^{n} \frac{1}{1 + \exp(\frac{\norm{w_t}^{\ell + 1}}{\ell + 1}\mA_i[\frac{w_t}{\norm{w_t}}, \dots, \frac{w_t}{\norm{w_t}}]_{\ell + 1})} \mG_i[\frac{w_t}{\norm{w_t}}, \dots, \frac{w_t}{\norm{w_t}}]_{\ell}}
        {\norm{\frac{w_t}{\norm{w_t}} + \frac{\gamma}{n} \sum_{i=1}^{n} \frac{1}{1 + \exp(\frac{\norm{w_t}^{\ell + 1}}{\ell + 1} \mA_i[\frac{w_t}{\norm{w_t}}, \dots, \frac{w_t}{\norm{w_t}}]_{\ell + 1})} \mG_i[\frac{w_t}{\norm{w_t}}, \dots, \frac{w_t}{\norm{w_t}}]_{\ell}}},
    \end{align*}
    where we use the choice of $\gamma_t$ and the properties of multilineal functions.
    Using $\frac{w_t}{\norm{w_t}} \overset{\norm{w_0} \rightarrow \infty}{=} \frac{\theta_t}{\norm{\theta_t}},$ $\mA_i[\theta_t, \dots, \theta_t]_{\ell + 1} \neq 0$ for all $i \in [n],$ and $\norm{w_t} \overset{\norm{w_0} \rightarrow \infty}{\rightarrow} \infty,$ we have 
    \begin{align*}
        \frac{1}{1 + \exp(\frac{\norm{w_t}^{\ell + 1}}{\ell + 1} \mA_i[\frac{w_t}{\norm{w_t}}, \dots, \frac{w_t}{\norm{w_t}}]_{\ell + 1})} \overset{\norm{w_0} \rightarrow \infty}{=} \mathbbm{1}\left[\mA_i[\theta_t, \dots, \theta_t]_{\ell + 1} < 0\right] = \mathbbm{1}\left[\mA_i[\theta_t, \dots, \theta_t]_{\ell + 1} \leq 0\right].
    \end{align*}
    Thus, we get
    \begin{align*}
        \frac{w_{t+1}}{\norm{w_{t+1}}} 
        &\overset{\norm{w_0} \rightarrow \infty}{=} \frac
        {\frac{\theta_t}{\norm{\theta_t}} + \frac{\gamma }{n} \sum_{i \in S_t} \mG_i[\frac{\theta_t}{\norm{\theta_t}}, \dots, \frac{\theta_t}{\norm{\theta_t}}]_{\ell}}
        {\norm{\frac{\theta_t}{\norm{\theta_t}} + \frac{\gamma}{n} \sum_{i \in S_t} \mG_i[\frac{\theta_t}{\norm{\theta_t}}, \dots, \frac{\theta_t}{\norm{\theta_t}}]_{\ell}}} = \frac
        {\theta_t + \frac{\gamma}{n \norm{\theta_t}^{\ell - 1}} \sum_{i \in S_t} \mG_i[\theta_t, \dots, \theta_t]_{\ell}}
        {\norm{\theta_t + \frac{\gamma}{n \norm{\theta_t}^{\ell - 1}} \sum_{i \in S_t} \mG_i[\theta_t, \dots, \theta_t]_{\ell}}},
    \end{align*}
    where the last norm is positive due to the condition on $\gamma.$
    Using the definition of $\theta_{t + 1},$
    \begin{align*}
        \frac{w_{t+1}}{\norm{w_{t+1}}} 
        &\overset{\norm{w_0} \rightarrow \infty}{=} \frac{\theta_{t+1}}{\norm{\theta_{t+1}}}.
    \end{align*}
    The norm
    \begin{align*}
        \norm{w_{t+1}} = \norm{w_t} \norm{\frac{w_t}{\norm{w_t}} + \frac{\gamma}{n} \sum_{i=1}^{n} \frac{1}{1 + \exp(\frac{\norm{w_t}^{\ell + 1}}{\ell + 1} \mA_i[\frac{w_t}{\norm{w_t}}, \dots, \frac{w_t}{\norm{w_t}}]_{\ell + 1})} \mG_i[\frac{w_t}{\norm{w_t}}, \dots, \frac{w_t}{\norm{w_t}}]_{\ell}} \overset{\norm{w_0} \rightarrow \infty}{=} \infty
    \end{align*}
    because $\norm{w_t} \rightarrow \infty$ and the second term converges to $\frac{\norm{\theta_{t+1}}}{\norm{\theta_{t}}} > 0.$ 
    
    It left to show that $\mA_i[\theta_{t+1}, \dots, \theta_{t+1}]_{\ell + 1} \neq 0$ for almost all choices of $\gamma.$ Clearly, $\mA_i[\theta_{t+1}, \dots, \theta_{t+1}]_{\ell + 1}$ is the polynomial function of $\gamma.$ When $\gamma = 0,$ $\mA_i[\theta_{t+1}, \dots, \theta_{t+1}]_{\ell + 1} = \mA_i[\theta_{t}, \dots, \theta_{t}]_{\ell + 1} \neq 0$ for all $i \in [n].$ Thus, for almost all choices of $\gamma,$ $\mA_i[\theta_{t+1}, \dots, \theta_{t+1}]_{\ell + 1} \neq 0$ for all $i \in [n].$ We have proved the next step of the mathematical induction.
\end{proof}

\newpage
\section{Extra Experiments}
\label{sec:extra_experiments}
The code was implemented in Python~3 using PyTorch~\citep{paszke2019pytorch} and executed on a machine with 52~CPUs (Intel(R) Xeon(R) Gold~6278C @~2.60,GHz) and one GPU (Tesla~V100-SXM3-32GB). In all experiments, we either use all available samples or uniformly select $n/2$ samples from each class. We run gradient descent~(GD) with different model architectures and step sizes. In all experiments, the bias weights are turned off, and the models are initialized with the default PyTorch initialization: the weights of linear and convolution layers are sampled from $\mathcal{U}(-\sqrt{1 / \textnormal{input\_features}}, \sqrt{1 / \textnormal{input\_features}})$ and $\mathcal{U}(-\sqrt{1 / (\textnormal{input\_channels $\times$ kernel\_size})}, \sqrt{1 / (\textnormal{input\_channels $\times$ kernel\_size})}),$ respectively.

\newcommand{\exponescale}{1.0}
\subsection{Experiments with $m_{\textnormal{lin}}$ and $m_{\textnormal{cv}}$ with kernel size $k = 2$}
\label{sec:extra_nonlinear}
In this section, we verify our experimental conclusions from Section~\ref{sec:exp_linear_model} and extend the results of Figure~\ref{fig:linear_vs_conv_linear} to other numbers of samples, other classes, and datasets.

\subsubsection{CIFAR-10 with with classes $0$ and $1$ and $n = 5000$ samples}
\label{sec:exp:orig}
Here, we repeat the experiment from Figure~\ref{fig:linear_vs_conv_linear}, adding plots of the losses. The conclusions are the same as Figure~\ref{fig:linear_vs_conv_linear}: the nonlinear model converges much faster. Notice that the losses are also non-stable and chaotic even though our setup is fully deterministic.
\begin{figure}[h]
\centering
    \includegraphics[width=\exponescale\textwidth]{./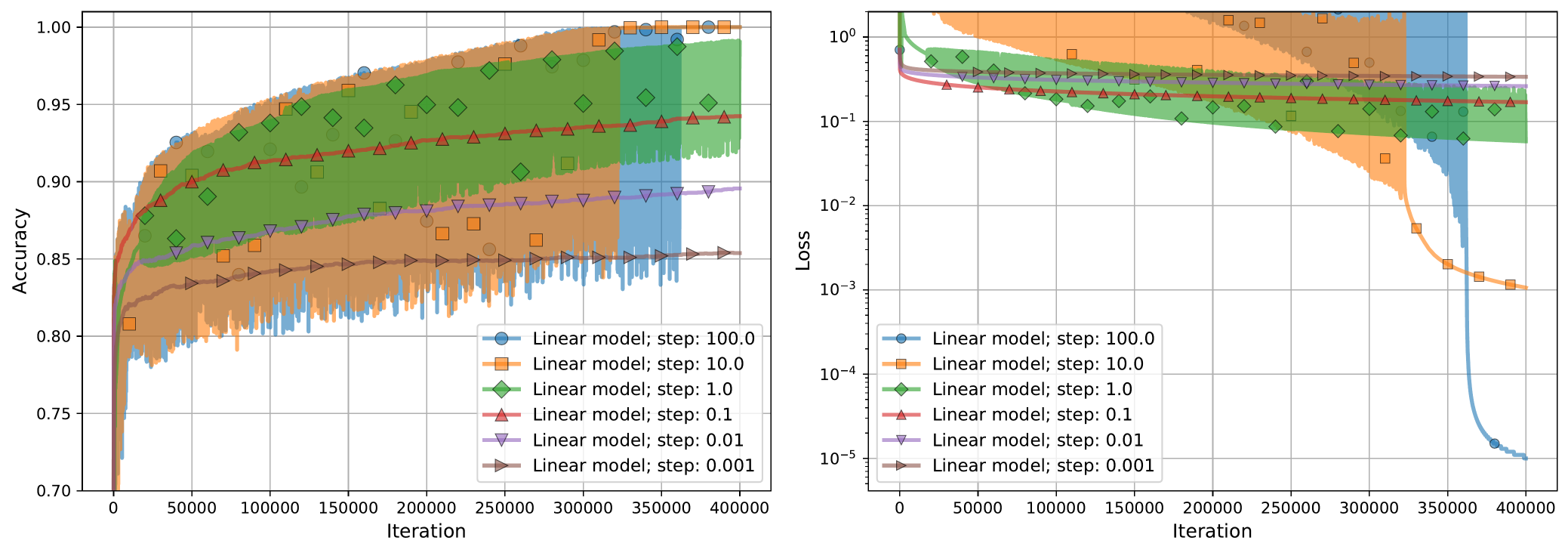}
    \caption{Linear model $m_{\textnormal{lin}}$ with classes $0$ and $1$ and $n = 5000$ samples on CIFAR-10}
    \label{fig:cifar10_1}
\end{figure}
\begin{figure}[h]
\centering
    \includegraphics[width=\exponescale\textwidth]{./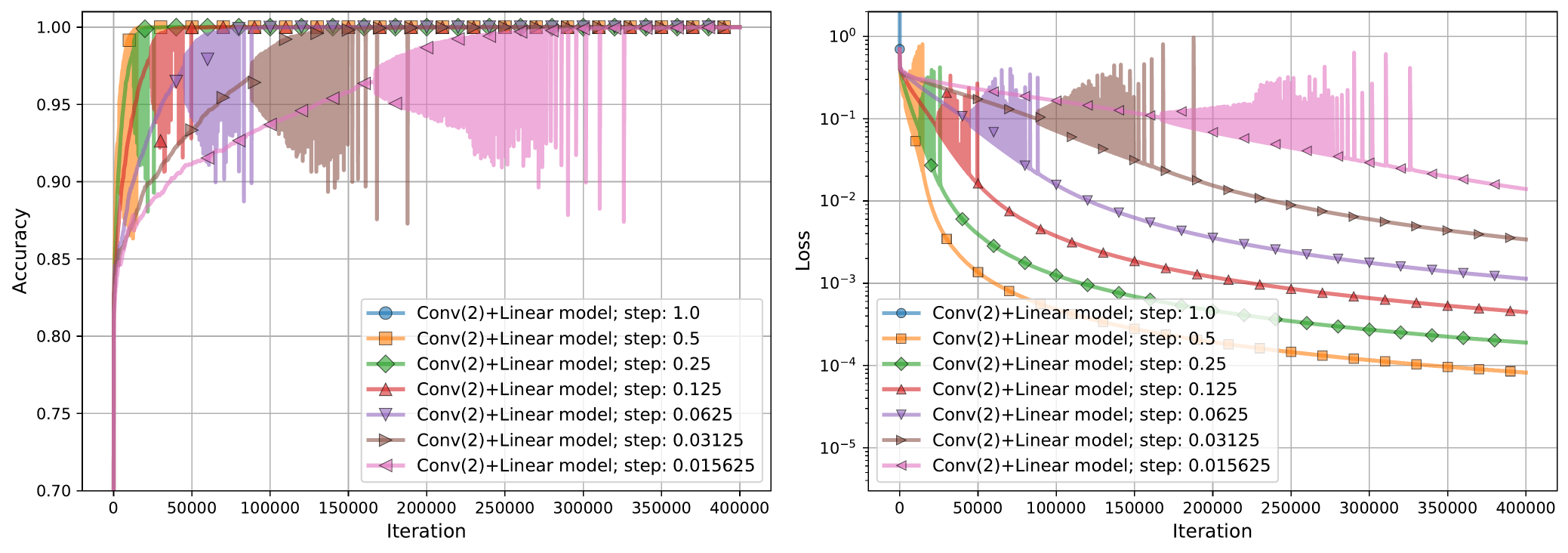}
    \caption{Non-linear model $m_{\textnormal{cv}}$ with kernel size $k = 2$, classes $0$ and $1$, and $n = 5000$ samples on CIFAR-10}
    \label{fig:cifar10_2}
\end{figure}

\clearpage
\newpage
\subsubsection{CIFAR-10 with with classes $3$ and $7$ and $n = 5000$ samples}
The setup in this experiment is the same as in Section~\ref{sec:exp:orig}, except that we consider classes $3$ and $7$ (the pair is chosen randomly). We can see that after 400K iterations, $m_{\textnormal{lin}}$ does not find a separator (the best accuracy is below $1.0$), whereas $m_{\textnormal{cv}}$ requires at most 100K iterations.
\begin{figure}[h]
\centering
    \includegraphics[width=\exponescale\textwidth]{./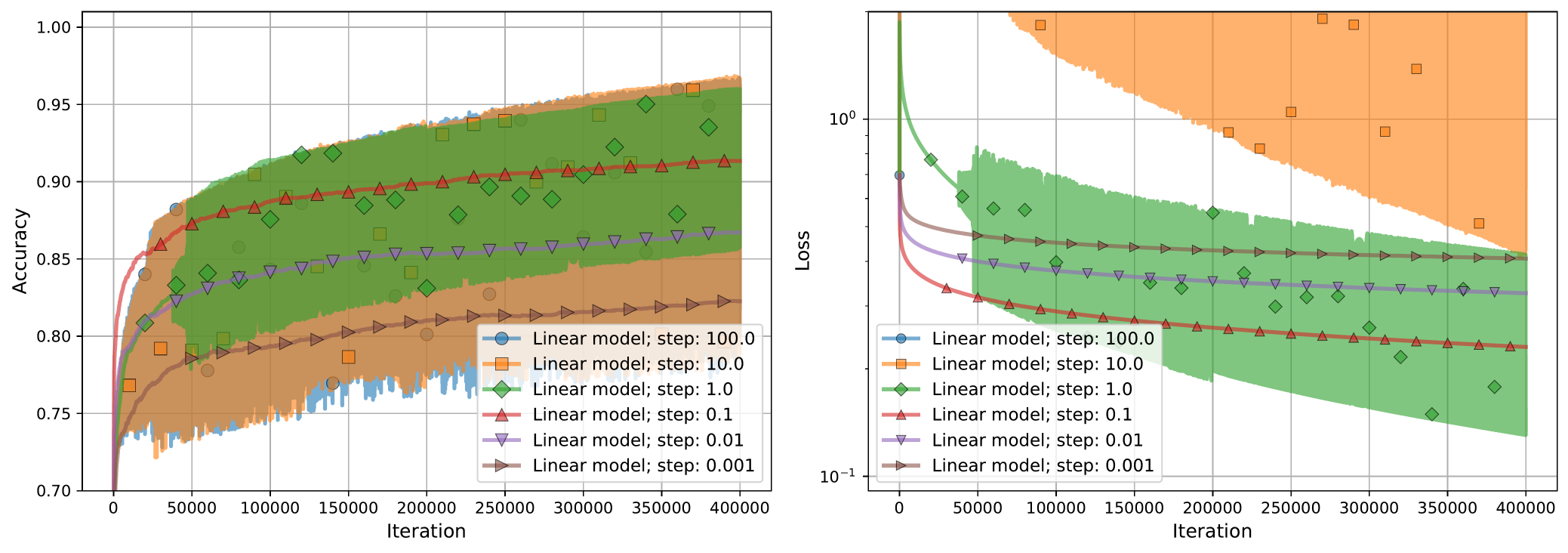}
    \caption{Linear model $m_{\textnormal{lin}}$ with classes $3$ and $7$ and $n = 5000$ samples on CIFAR-10}
\end{figure}
\begin{figure}[h]
\centering
    \includegraphics[width=\exponescale\textwidth]{./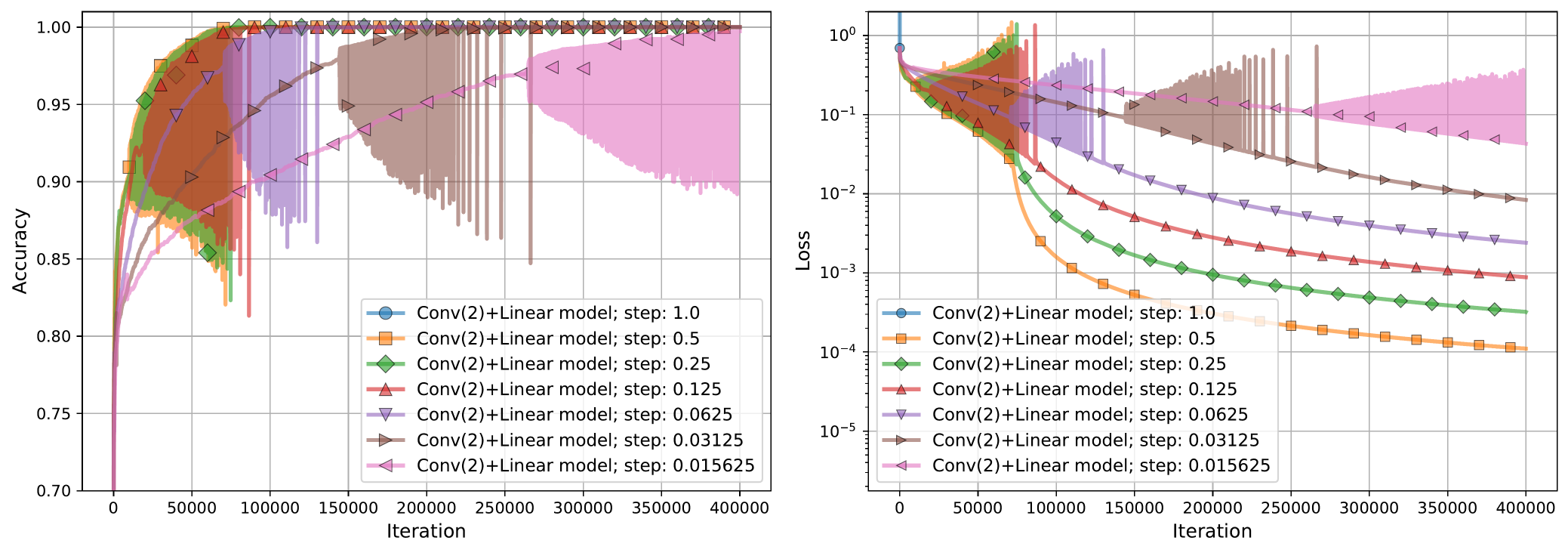}
    \caption{Non-linear model $m_{\textnormal{cv}}$ with kernel size $k = 2$, classes $3$ and $7$, and $n = 5000$ samples on CIFAR-10}
\end{figure}

\newpage
\subsubsection{CIFAR-10 with with classes $0$ and $1$ and all samples}
Unlike Section~\ref{sec:exp:orig}, we take all samples from both classes. In this setting, both models do not find a separator after 400K iterations, but the nonlinear model converges to a higher accuracy.
\begin{figure}[h]
\centering
    \includegraphics[width=\exponescale\textwidth]{./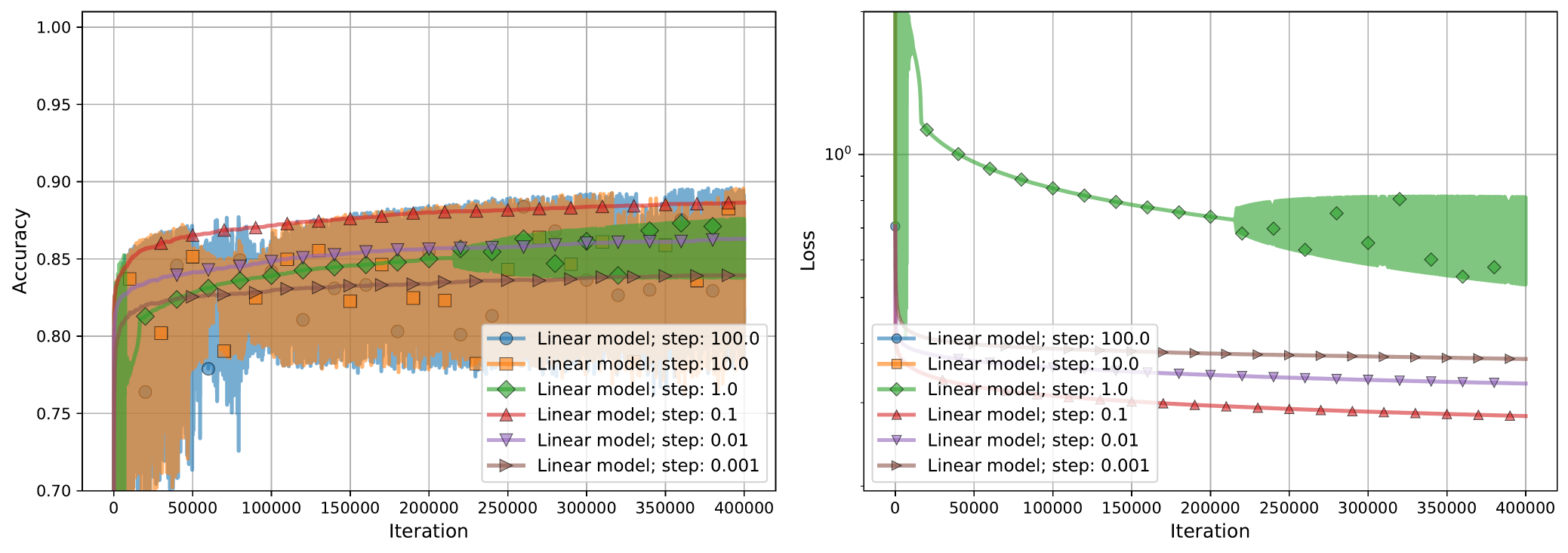}
    \caption{Linear model $m_{\textnormal{lin}}$ with classes $0$ and $1$ and all samples on CIFAR-10}
\end{figure}
\begin{figure}[h]
\centering
    \includegraphics[width=\exponescale\textwidth]{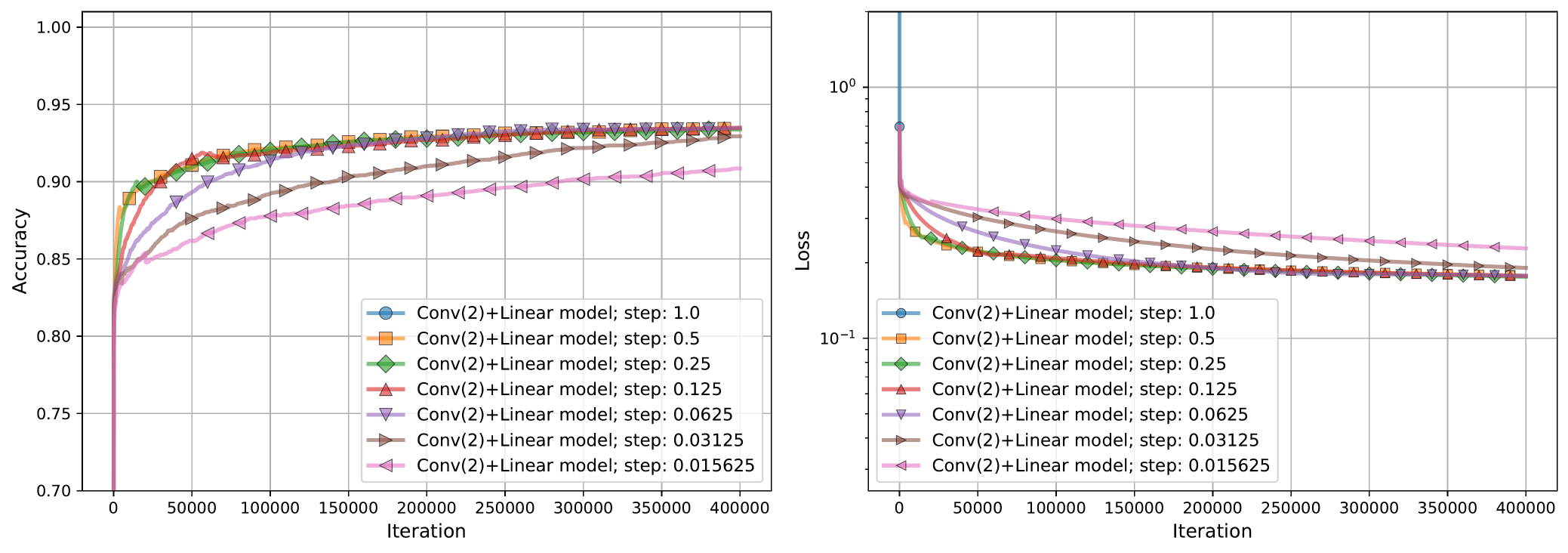}
    \caption{Non-linear model $m_{\textnormal{cv}}$ with kernel size $k = 2$, classes $0$ and $1$, and all samples on CIFAR-10}
\end{figure}

\newpage
\subsubsection{EuroSAT with with classes $3$ and $7$ and $n = 5000$ samples}
We now turn to other datasets: EuroSAT \citep{helber2019eurosat}, FashionMNIST \citep{xiao2017fashion}, Food101 \citep{bossard2014food}, MNIST \citep{lecun2010mnist}, and Gisette \citep{guyon2004result}. Across these datasets, for various choices of classes and sample sizes, we observe that the nonlinear model generally finds separating solutions much faster. The only exception occurs on MNIST (see Section~\ref{sec:bad_mnist}), where the linear model converges to a separator more quickly than the nonlinear one. The reason for this behavior is not entirely clear; we hypothesize that MNIST is comparatively simpler than the other datasets, with larger inter-class margins, making it easier for linear models to find a separator.
\begin{figure}[h]
\centering
    \includegraphics[width=\exponescale\textwidth]{./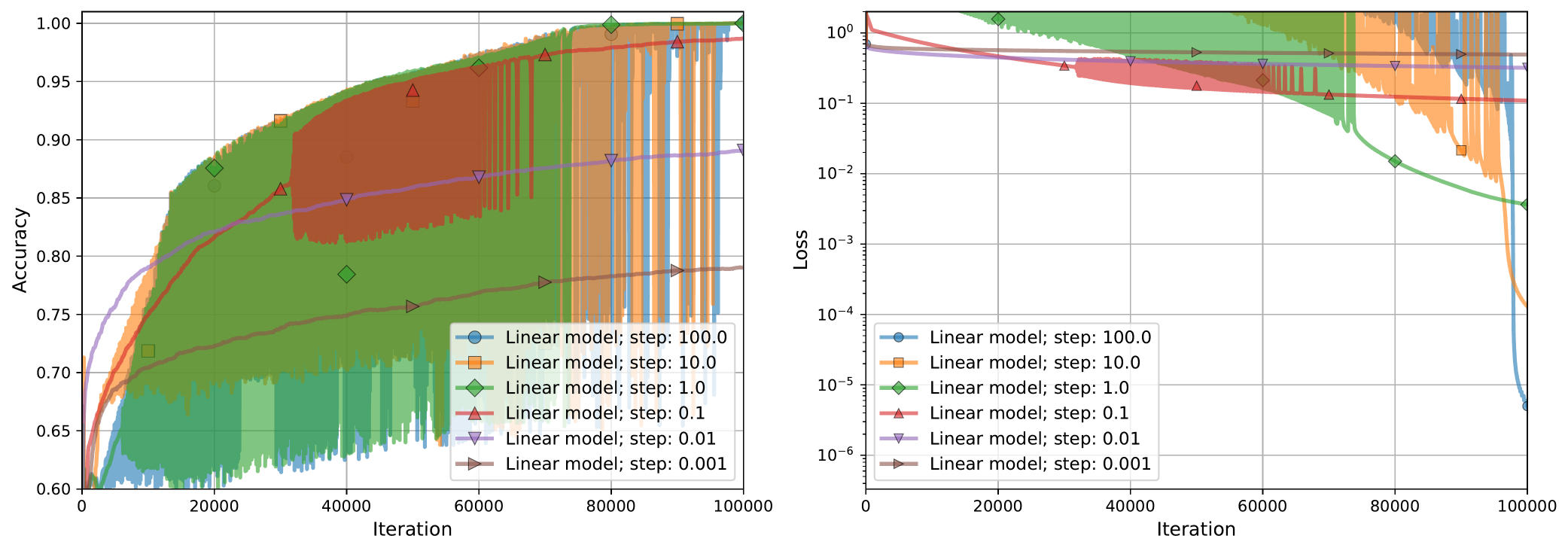}
    \caption{Linear model $m_{\textnormal{lin}}$ with classes $3$ and $7$ and $n = 5000$ samples on EuroSAT}
\end{figure}
\begin{figure}[h]
\centering
    \includegraphics[width=\exponescale\textwidth]{./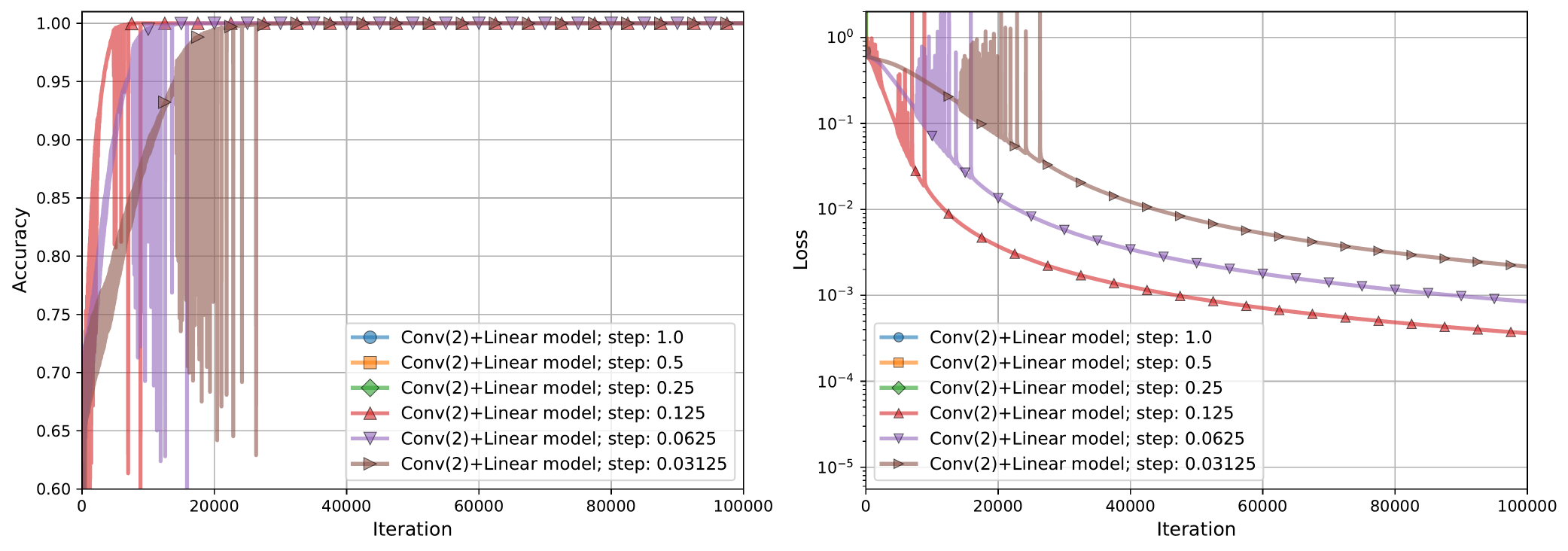}
    \caption{Non-linear model $m_{\textnormal{cv}}$ with kernel size $k = 2$, classes $3$ and $7$, and $n = 5000$ samples on EuroSAT}
\end{figure}

\newpage
\subsubsection{EuroSAT with with classes $0$ and $1$ and $n = 5000$ samples}

\begin{figure}[h]
\centering
    \includegraphics[width=\exponescale\textwidth]{./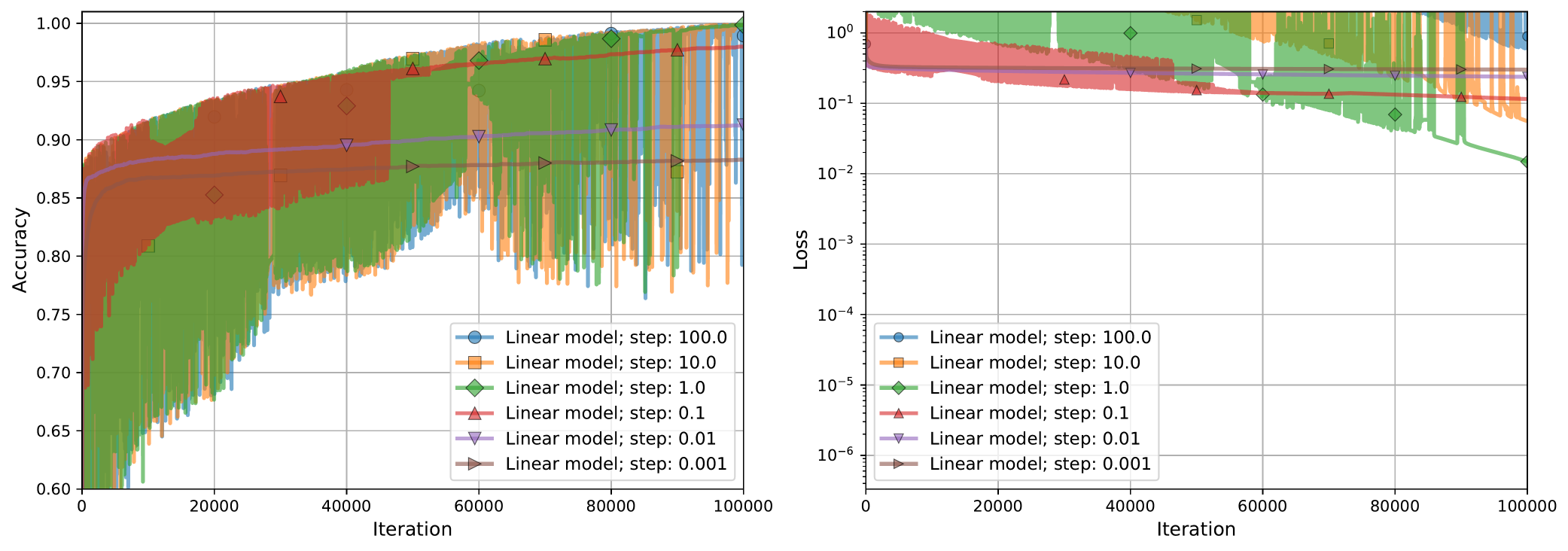}
    \caption{Linear model $m_{\textnormal{lin}}$ with classes $0$ and $1$ and $n = 5000$ samples on EuroSAT}
\end{figure}
\begin{figure}[h]
\centering
    \includegraphics[width=\exponescale\textwidth]{./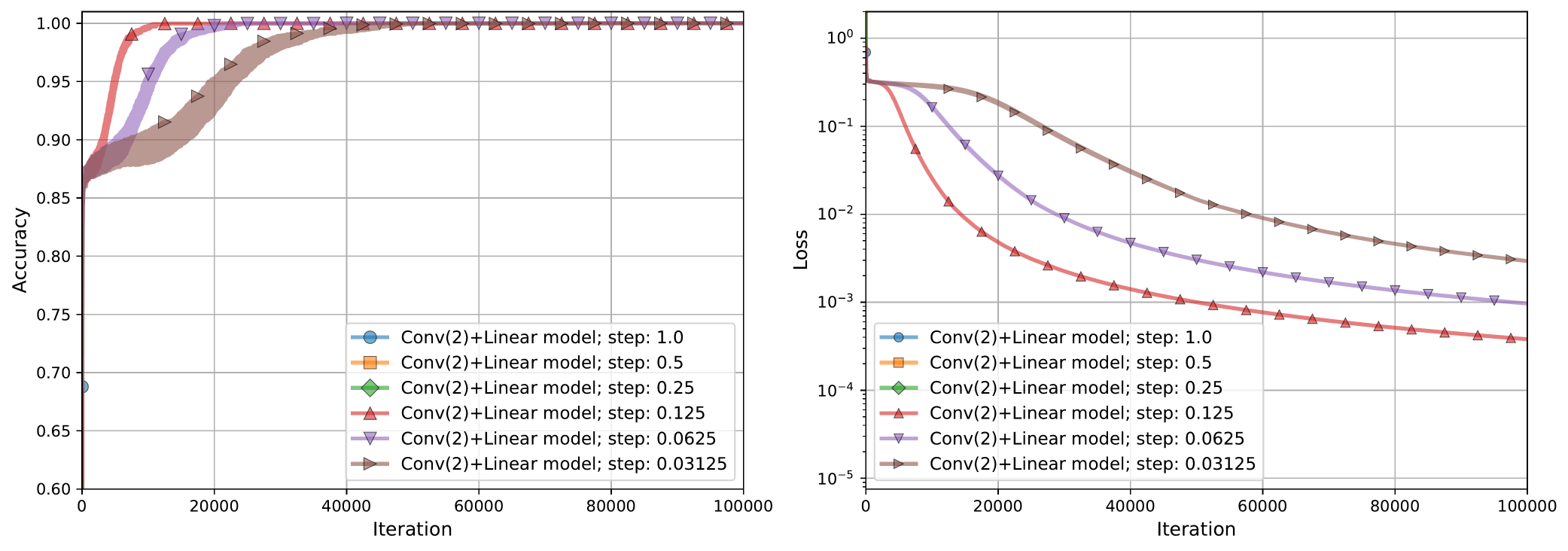}
    \caption{Non-linear model $m_{\textnormal{cv}}$ with kernel size $k = 2$, classes $0$ and $1$, and $n = 5000$ samples on EuroSAT}
\end{figure}

\newpage
\subsubsection{FashionMNIST with with classes $5$ and $7$ and $n = 5000$ samples}
\begin{figure}[h]
\centering
    \includegraphics[width=\exponescale\textwidth]{./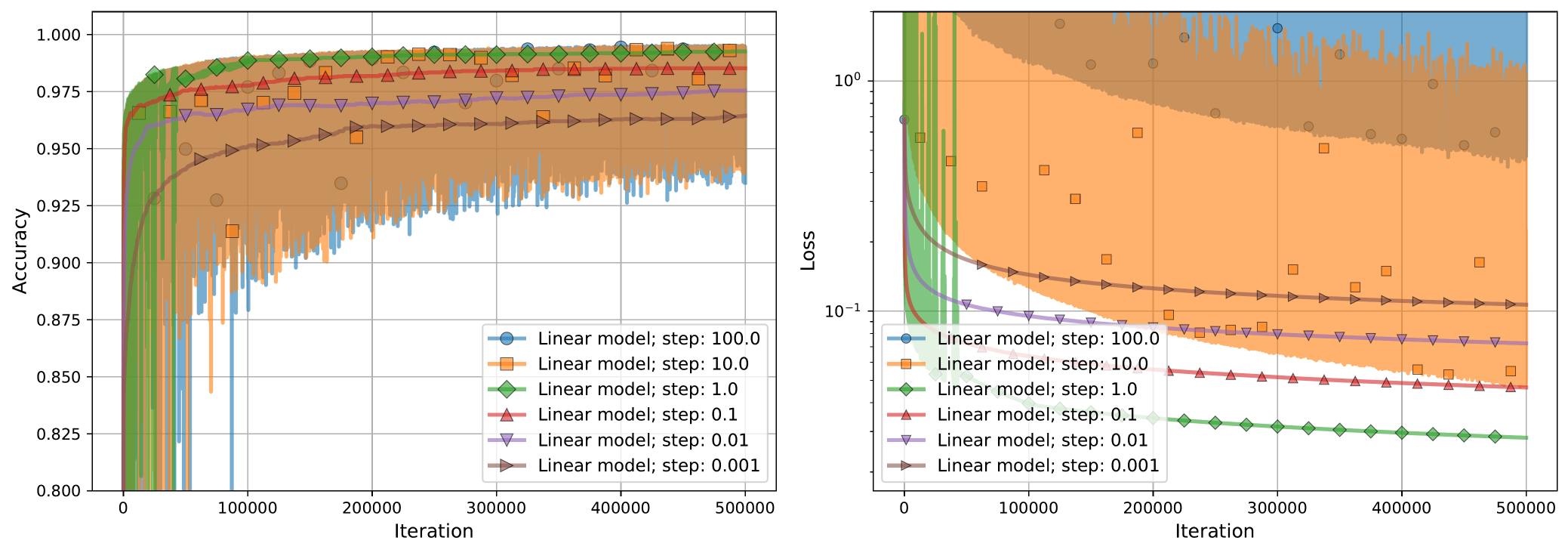}
    \caption{Linear model $m_{\textnormal{lin}}$ with classes $5$ and $7$ and $n = 5000$ samples on FashionMNIST}
\end{figure}
\begin{figure}[h]
\centering
    \includegraphics[width=\exponescale\textwidth]{./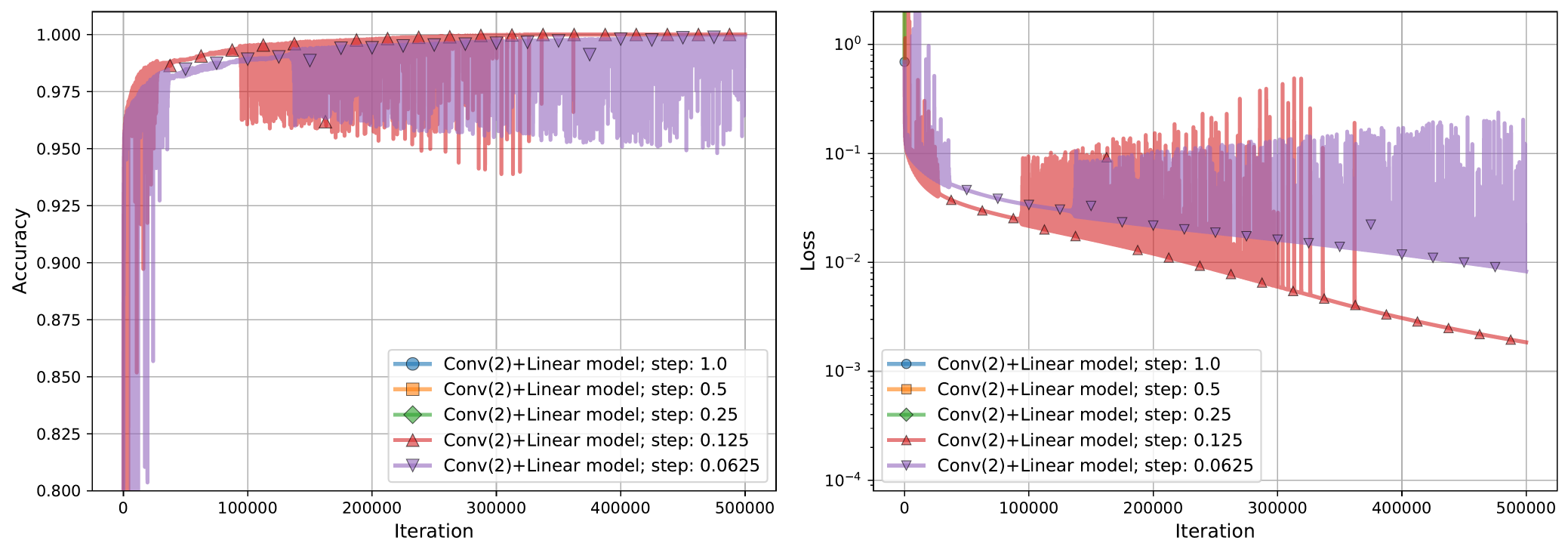}
    \caption{Non-linear model $m_{\textnormal{cv}}$ with kernel size $k = 2$, classes $5$ and $7$, and $n = 5000$ samples on FashionMNIST}
\end{figure}

\newpage
\subsubsection{FashionMNIST with with classes $7$ and $9$ and $n = 5000$ samples}
\begin{figure}[h]
\centering
    \includegraphics[width=\exponescale\textwidth]{./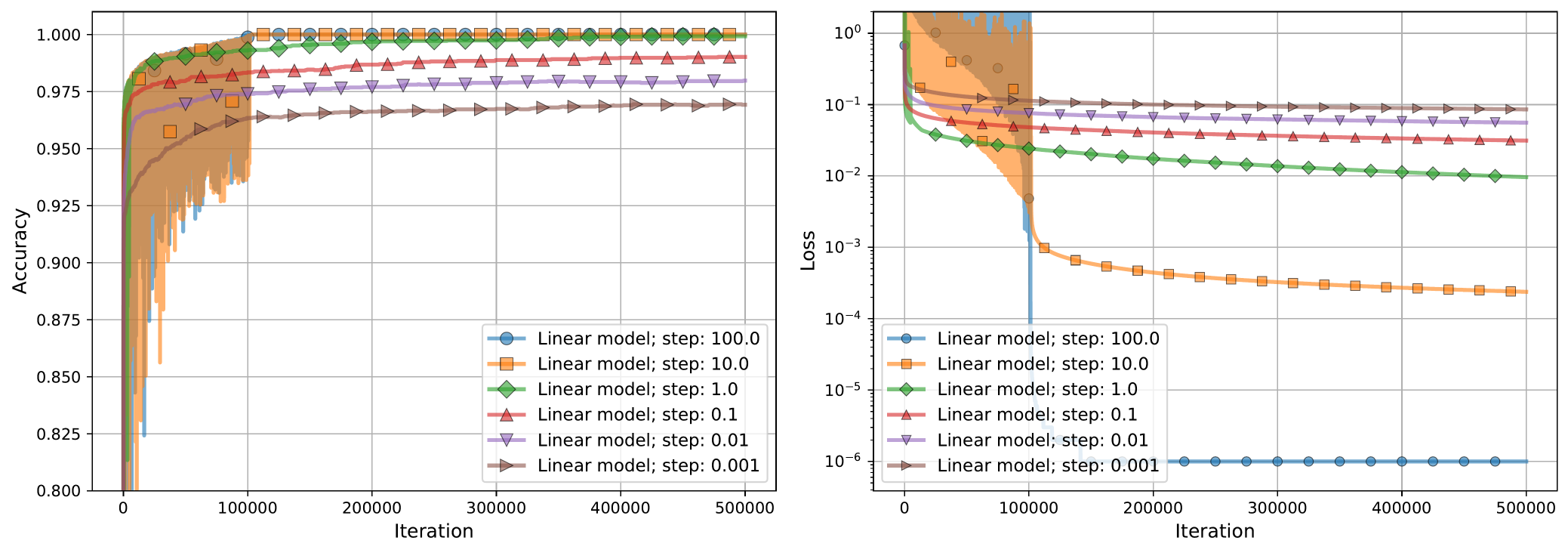}
    \caption{Linear model $m_{\textnormal{lin}}$ with classes $7$ and $9$ and $n = 5000$ samples on FashionMNIST}
\end{figure}
\begin{figure}[h]
\centering
    \includegraphics[width=\exponescale\textwidth]{./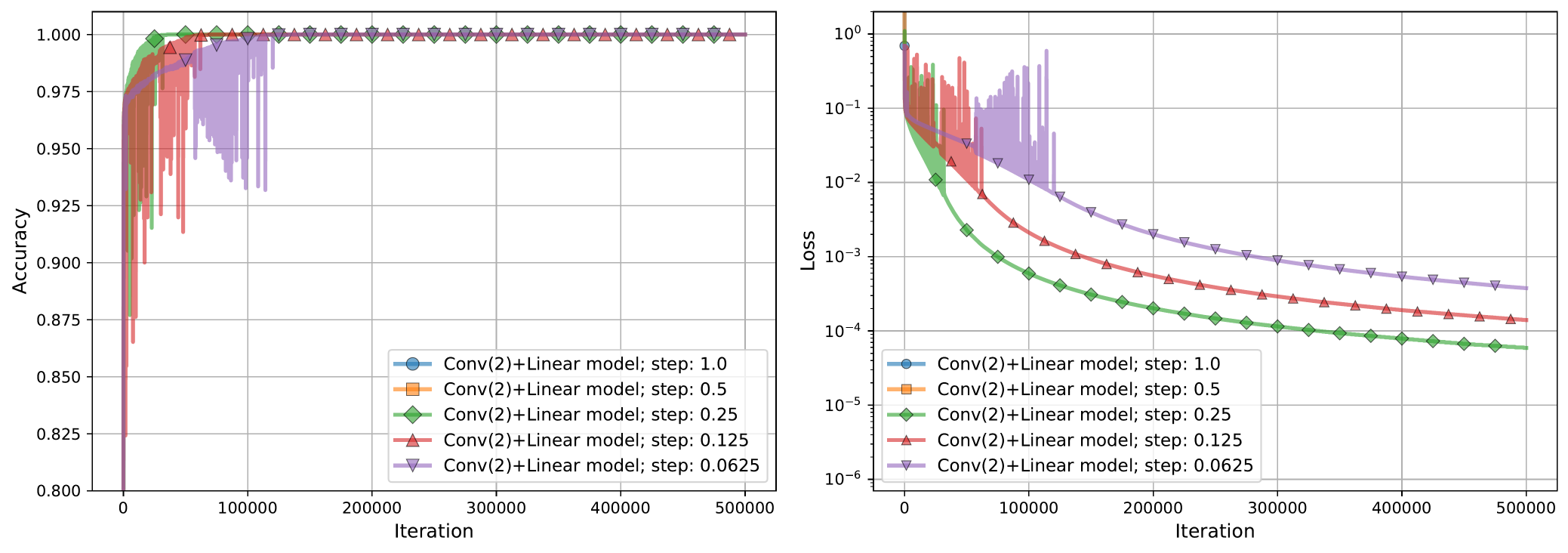}
    \caption{Non-linear model $m_{\textnormal{cv}}$ with kernel size $k = 2$, classes $7$ and $9$, and $n = 5000$ samples on FashionMNIST}
\end{figure}

\newpage
\subsubsection{Food101 with with classes $0$ and $1$ and $n = 10000$ samples}
\begin{figure}[h]
\centering
    \includegraphics[width=\exponescale\textwidth]{./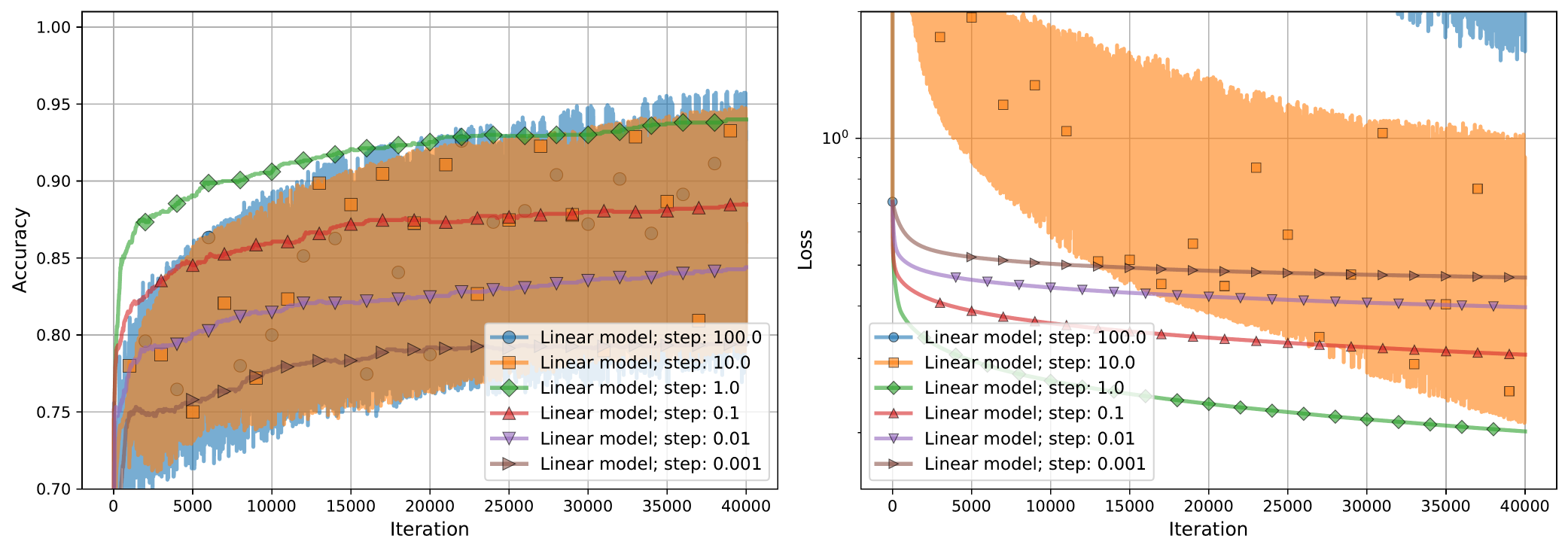}
    \caption{Linear model $m_{\textnormal{lin}}$ with classes $0$ and $1$ and $n = 10000$ samples on Food101}
\end{figure}
\begin{figure}[h]
\centering
    \includegraphics[width=\exponescale\textwidth]{./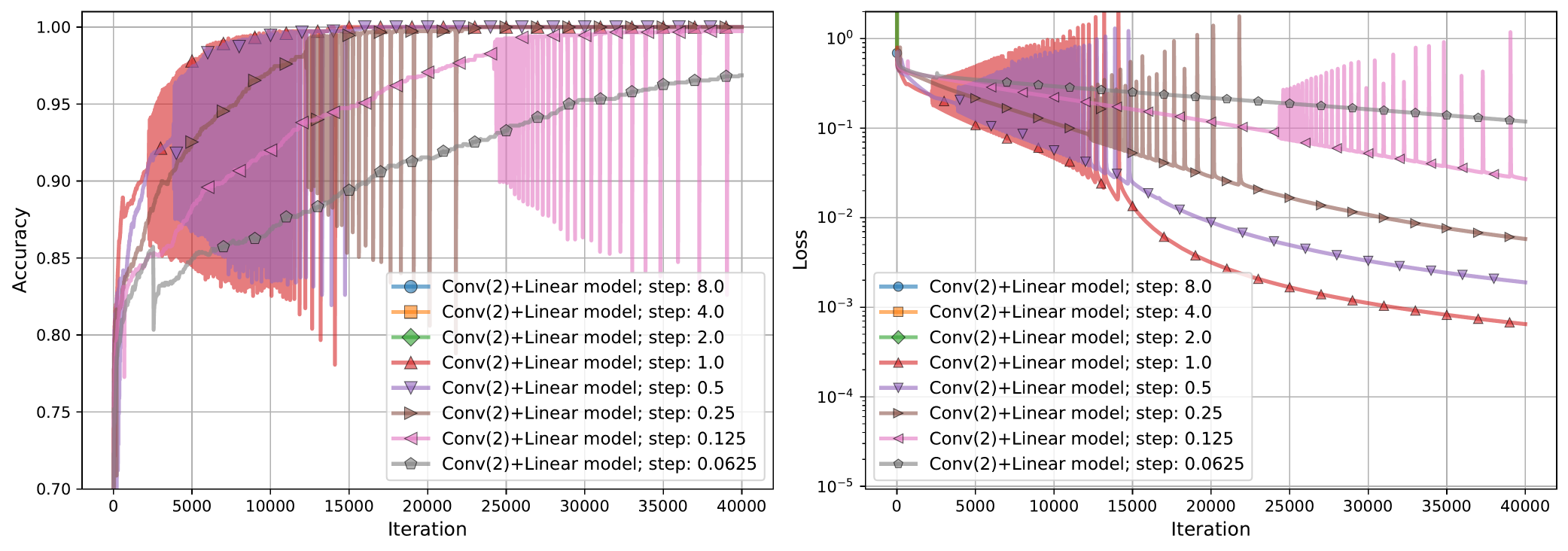}
    \caption{Non-linear model $m_{\textnormal{cv}}$ with kernel size $k = 2$, classes $0$ and $1$, and $n = 10000$ samples on Food101}
\end{figure}

\newpage
\subsubsection{Food101 with with classes $3$ and $7$ and $n = 10000$ samples}
\begin{figure}[h]
\centering
    \includegraphics[width=\exponescale\textwidth]{./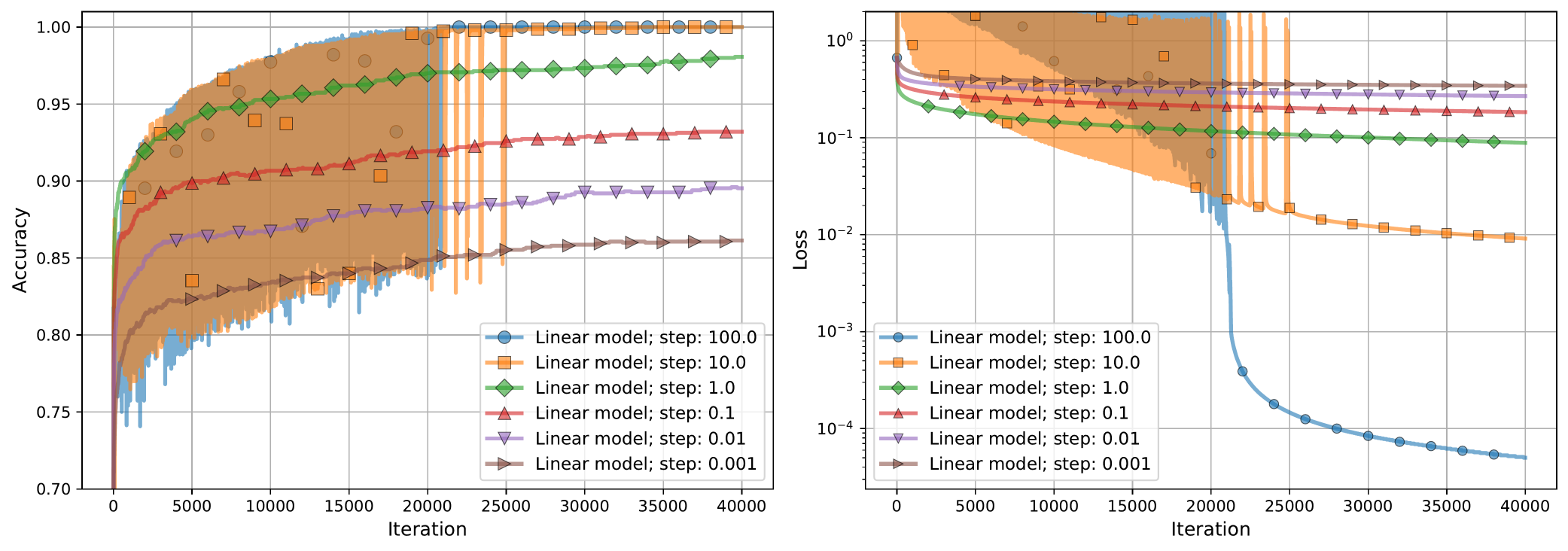}
    \caption{Linear model $m_{\textnormal{lin}}$ with classes $3$ and $7$ and $n = 10000$ samples on Food101}
\end{figure}
\begin{figure}[h]
\centering
    \includegraphics[width=\exponescale\textwidth]{./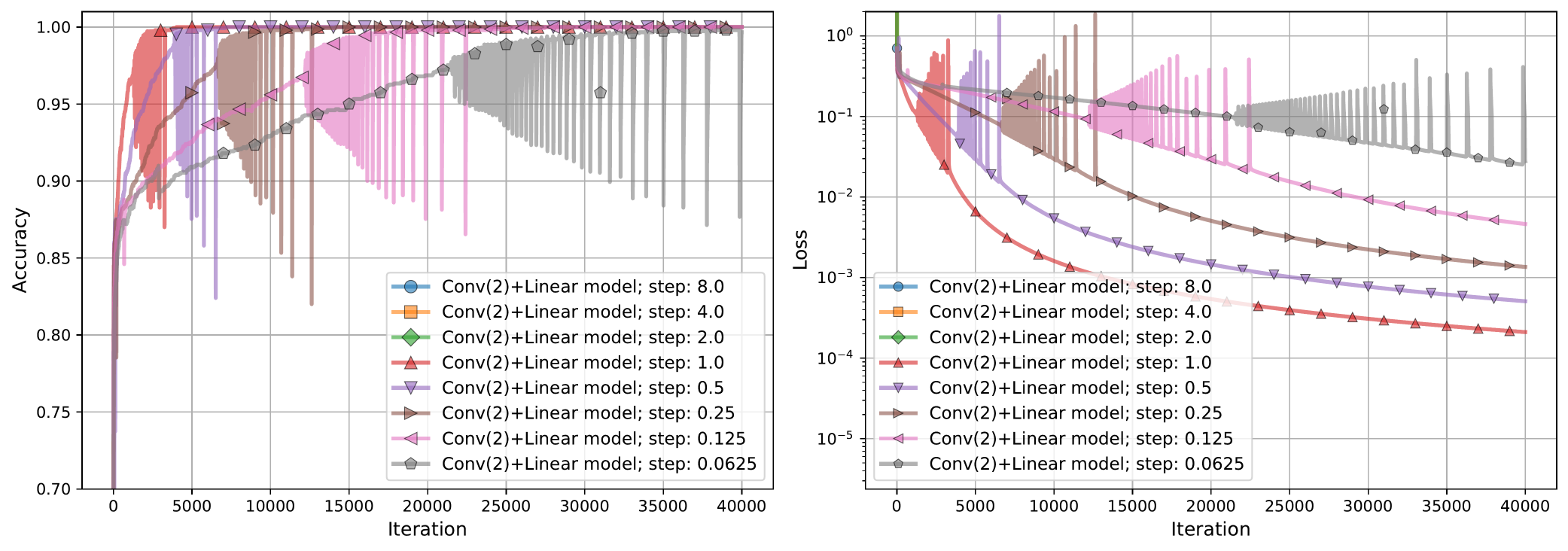}
    \caption{Non-linear model $m_{\textnormal{cv}}$ with kernel size $k = 2$, classes $3$ and $7$, and $n = 10000$ samples on Food101}
\end{figure}

\newpage
\subsubsection{MNIST with with classes $0$ and $1$ and all samples}

\begin{figure}[h]
\centering
    \includegraphics[width=\exponescale\textwidth]{./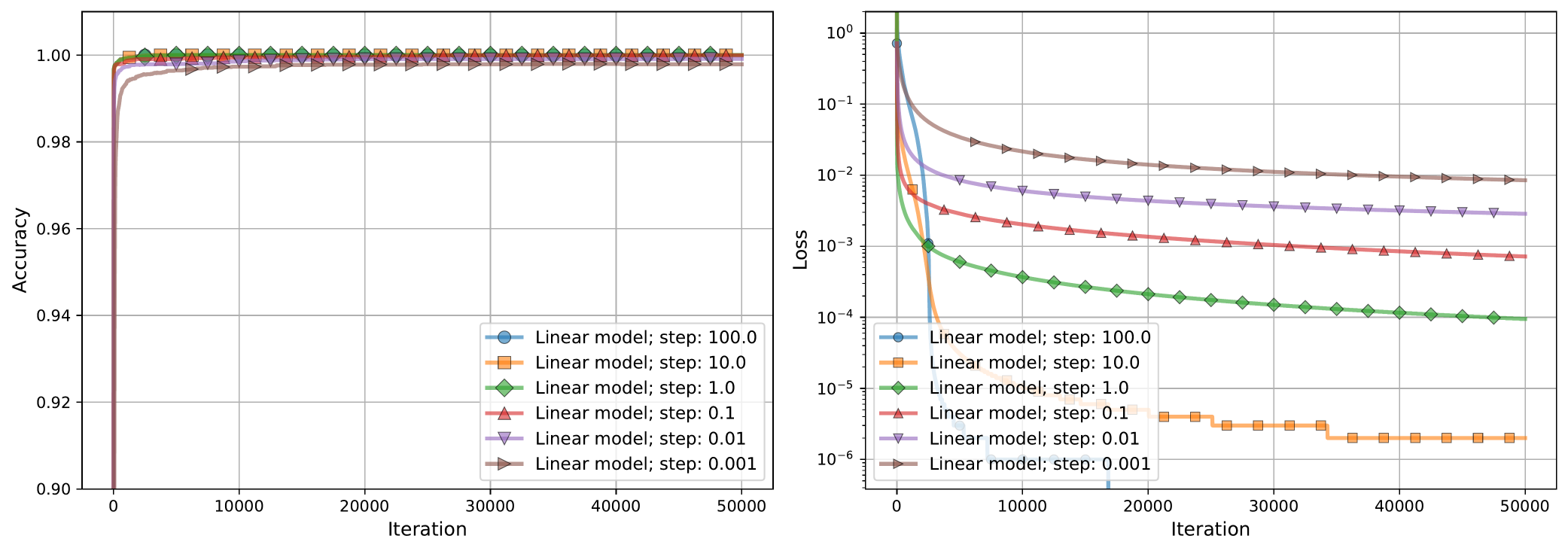}
    \caption{Linear model $m_{\textnormal{lin}}$ with classes $0$ and $1$ and all samples on MNIST}
\end{figure}
\begin{figure}[h]
\centering
    \includegraphics[width=\exponescale\textwidth]{./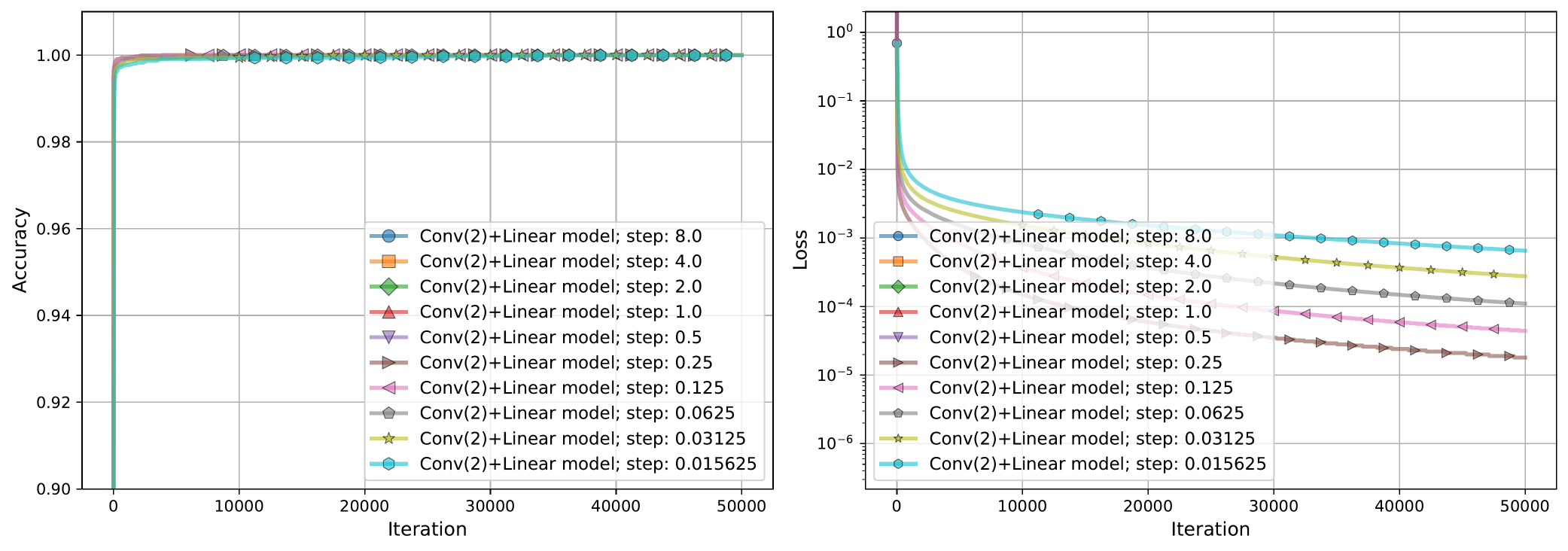}
    \caption{Non-linear model $m_{\textnormal{cv}}$ with kernel size $k = 2$, classes $0$ and $1$, and all samples on MNIST}
\end{figure}

\newpage
\subsubsection{MNIST with with classes $7$ and $8$ and all samples}
\label{sec:bad_mnist}

\begin{figure}[h]
\centering
    \includegraphics[width=\exponescale\textwidth]{./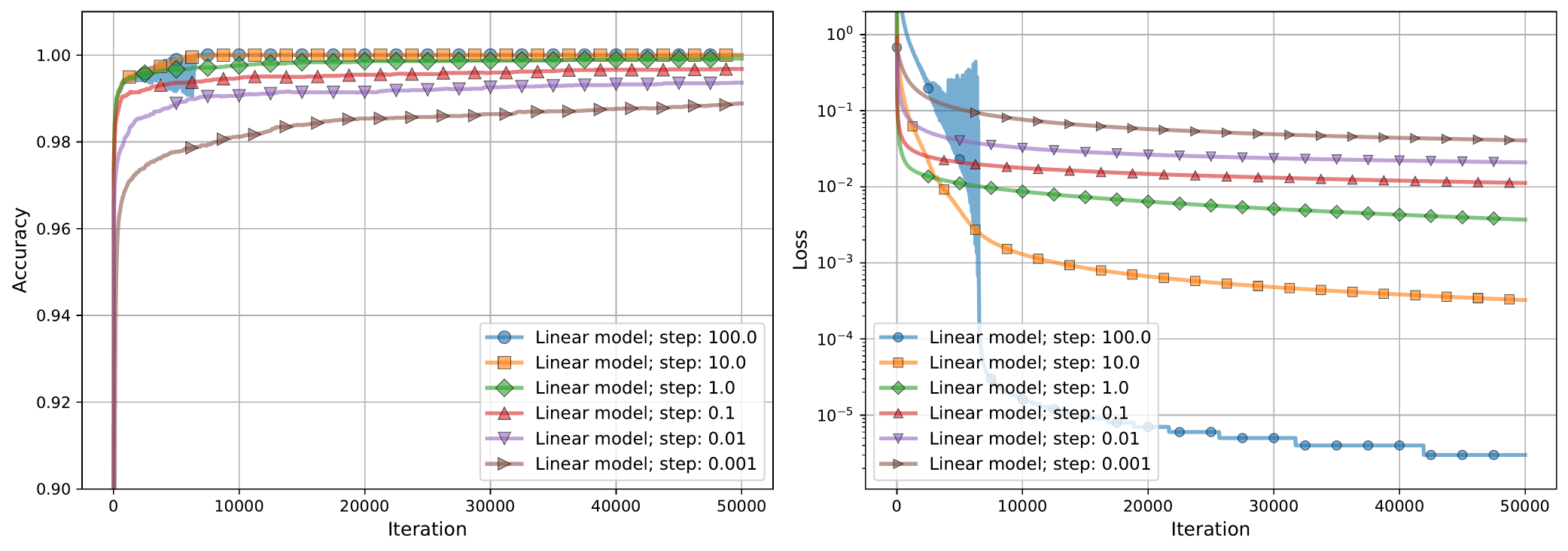}
    \caption{Linear model $m_{\textnormal{lin}}$ with classes $7$ and $8$ and all samples on MNIST}
\end{figure}
\begin{figure}[h]
\centering
    \includegraphics[width=\exponescale\textwidth]{./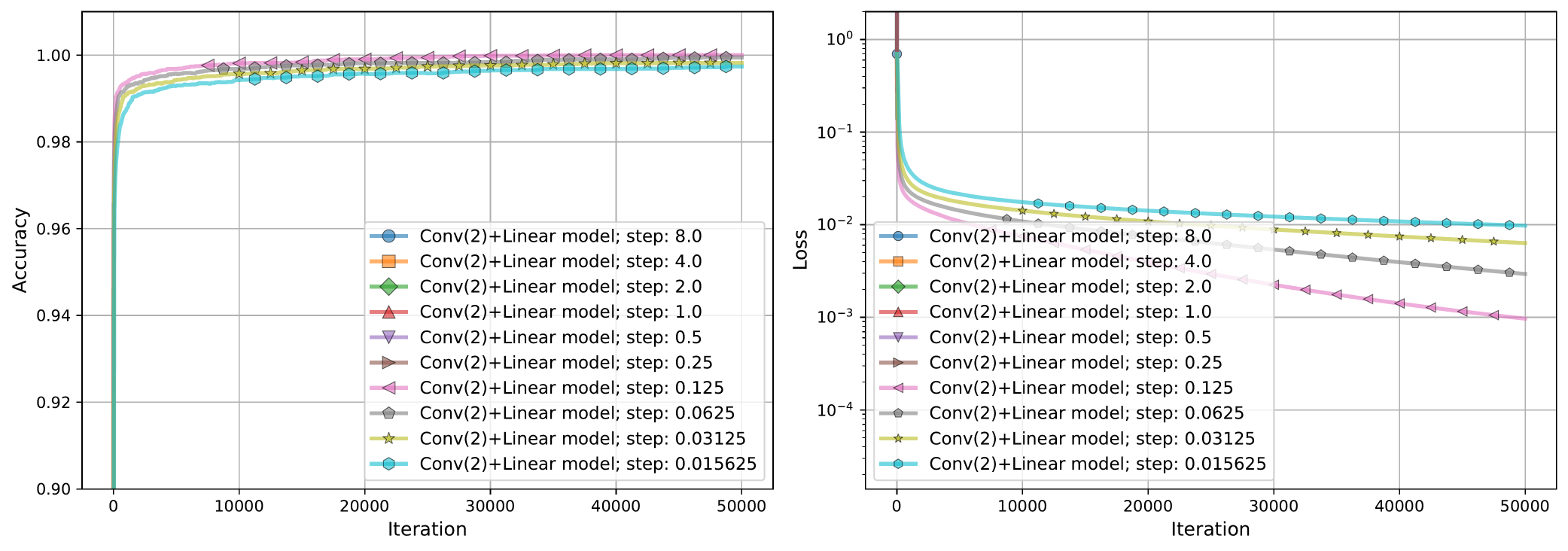}
    \caption{Non-linear model $m_{\textnormal{cv}}$ with kernel size $k = 2$, classes $7$ and $8$, and all samples on MNIST}
\end{figure}

\newpage
\subsubsection{Gisette with all samples}

\begin{figure}[h]
\centering
    \includegraphics[width=\exponescale\textwidth]{./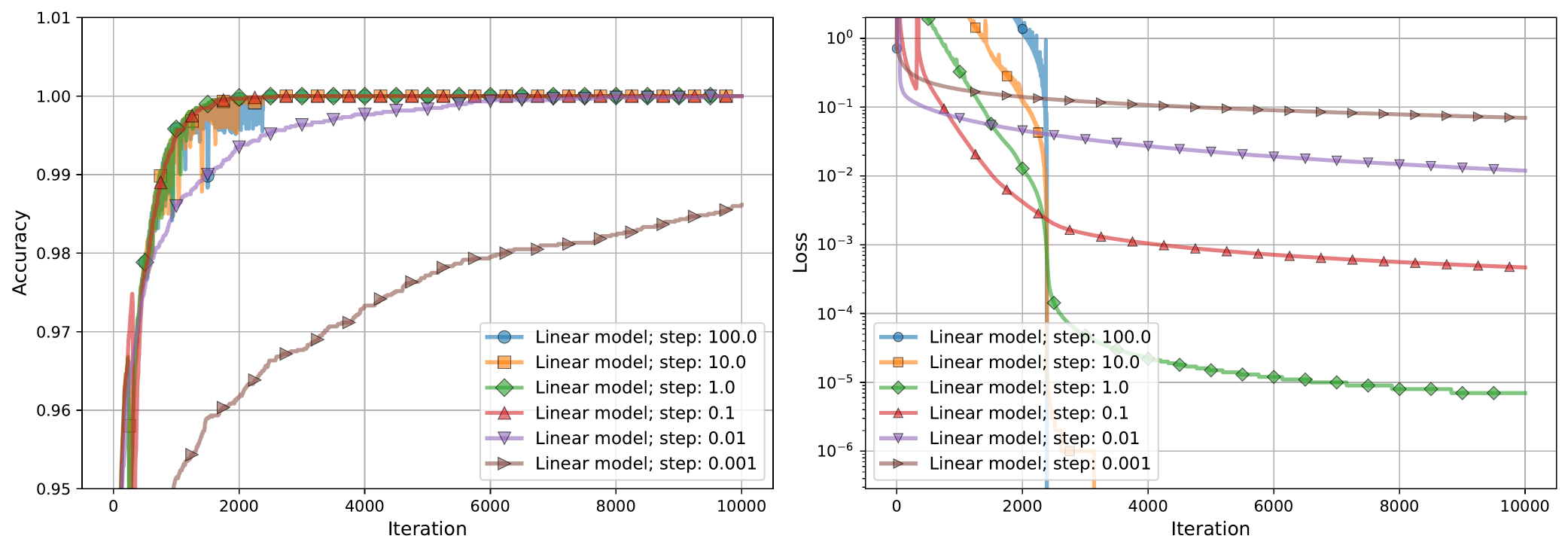}
    \caption{Linear model $m_{\textnormal{lin}}$ with all samples on Gisette}
\end{figure}

\begin{figure}[h]
\centering
    \includegraphics[width=\exponescale\textwidth]{./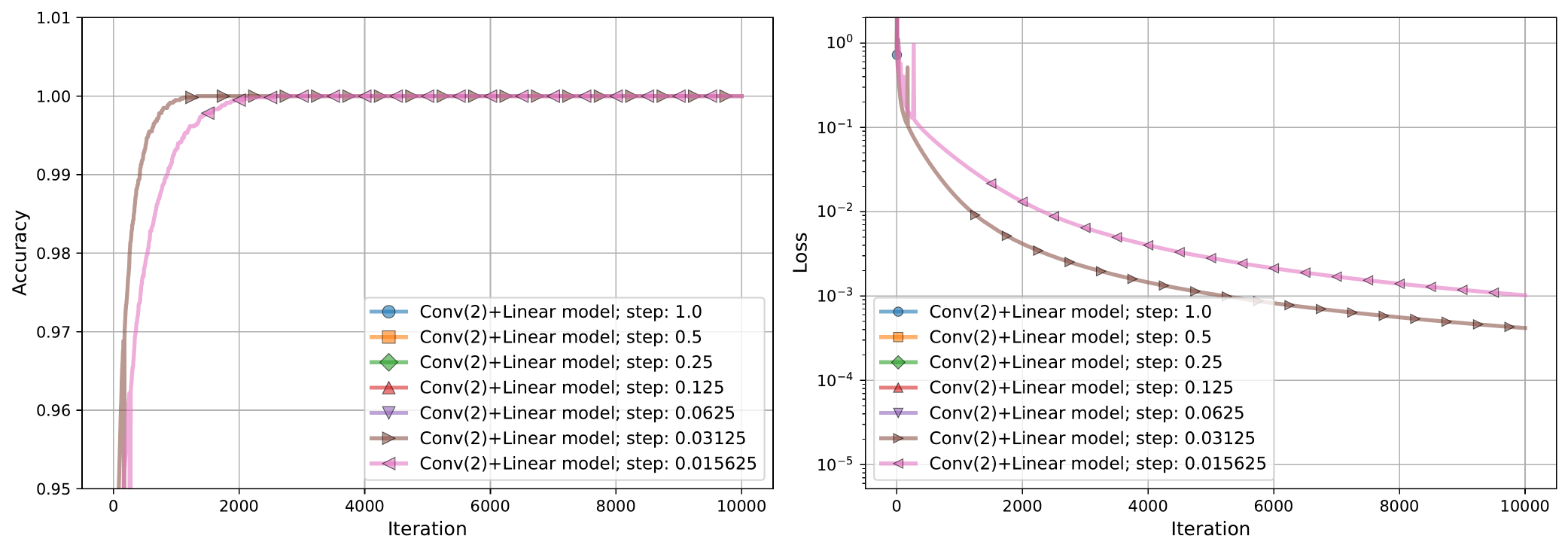}
    \caption{Non-linear model $m_{\textnormal{cv}}$ with kernel size $k = 2$, and all samples on Gisette}
\end{figure}

\newpage
\subsection{The largest step size vs. kernel size on dataset \eqref{eq:two_dataset}}
In Table~\ref{tbl:step_size}, we show how the choice of the largest step sizes changes as the kernel size of the nonlinear model $m_{\textnormal{cv}}$ increases on dataset \eqref{eq:two_dataset}. Similar to the experiments with CIFAR-10 in Figure~\ref{fig:three_plots}, we observe that the step size decreases with an increasing kernel size.
\begin{table}[h]
\centering
\caption{The largest step size vs. kernel size on dataset \eqref{eq:two_dataset} with nonlinear model $m_{\textnormal{cv}}.$}
\label{tbl:step_size}
\begin{tabular}{cc}
\toprule
Kernel size & Best step size \\
\midrule
         10 &         0.0158 \\
        100 &         0.0079 \\
       1000 &         0.0020 \\
\bottomrule
\end{tabular}
\end{table}

\newpage
\subsection{The monotonically increasing nature of the iterates’ norms}
\label{sec:incr_norm}
\begin{figure}[h]
\centering
    \includegraphics[width=\exponescale\textwidth]{./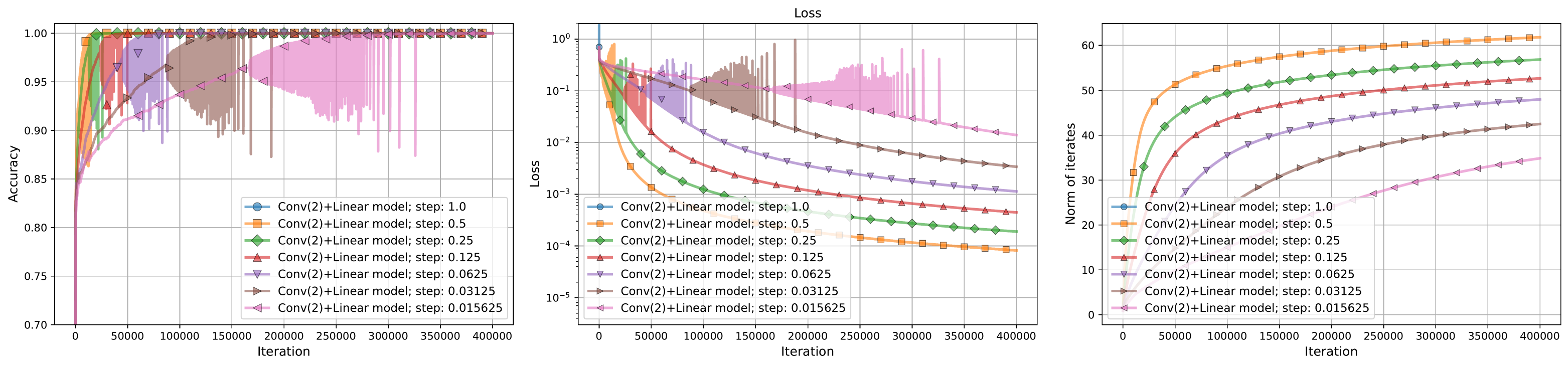}
    \caption{Non-linear model $m_{\textnormal{cv}}$ with kernel size $k = 2$, classes $0$ and $1$, and $n = 5000$ samples on CIFAR-10}
    \label{fig:cifar10_2_norm}
\end{figure}
In this section, we present the same experiment as in Figure~\ref{fig:conv_linear_model_cifar10_0,1_5000} and Figure~\ref{fig:cifar10_2}. However, we additionally show how the norms of the iterates change with the number of iterations. The main observation in Figure~\ref{fig:cifar10_2_norm} is that the iterates’ norms increase.

\subsection{Increasing the initial norm of the model}
\label{sec:large_init}
In this section, we study the robustness of the implicit acceleration as we increase the norm of the initial iterate. In Figures~\ref{fig:cifar10_22_norm} and \ref{fig:cifar10_23_norm}, we present the same experiments as in Figures~\ref{fig:conv_linear_model_cifar10_0,1_5000} and \ref{fig:cifar10_2}, with the only difference being that the initial point is scaled by $10$ or $100$, and we observe that the nonlinear model still converges faster than the linear model in Figure~\ref{fig:linear_model_cifar10_0,1_5000}.
\begin{figure}[h]
\centering
    \includegraphics[width=\exponescale\textwidth]{./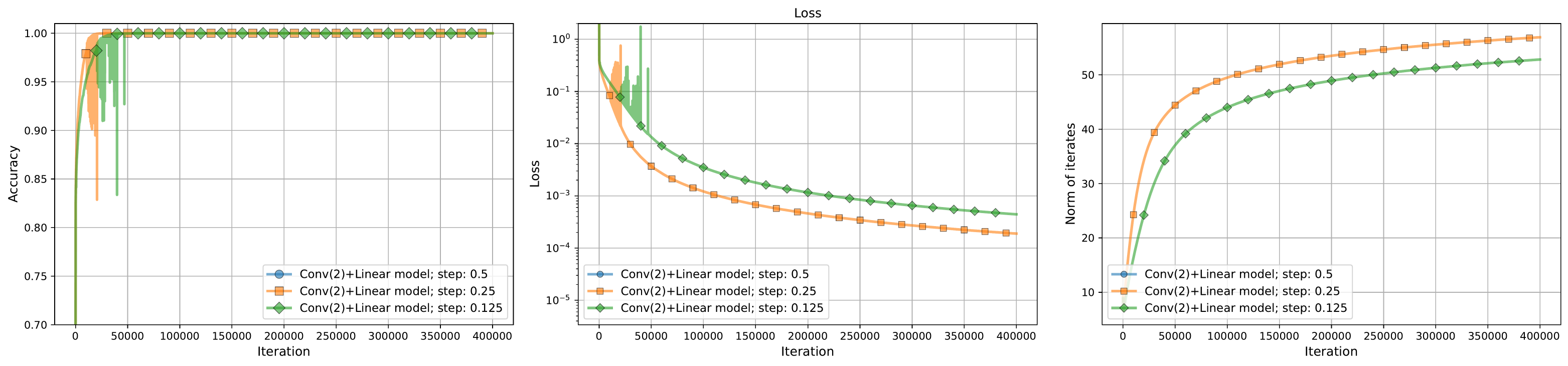}
    \caption{Non-linear model $m_{\textnormal{cv}}$ with kernel size $k = 2$, classes $0$ and $1$, $n = 5000$ samples on CIFAR-10, and scaled the initial iterate by $10$}
    \label{fig:cifar10_22_norm}
\end{figure}
\begin{figure}[h]
\centering
    \includegraphics[width=\exponescale\textwidth]{./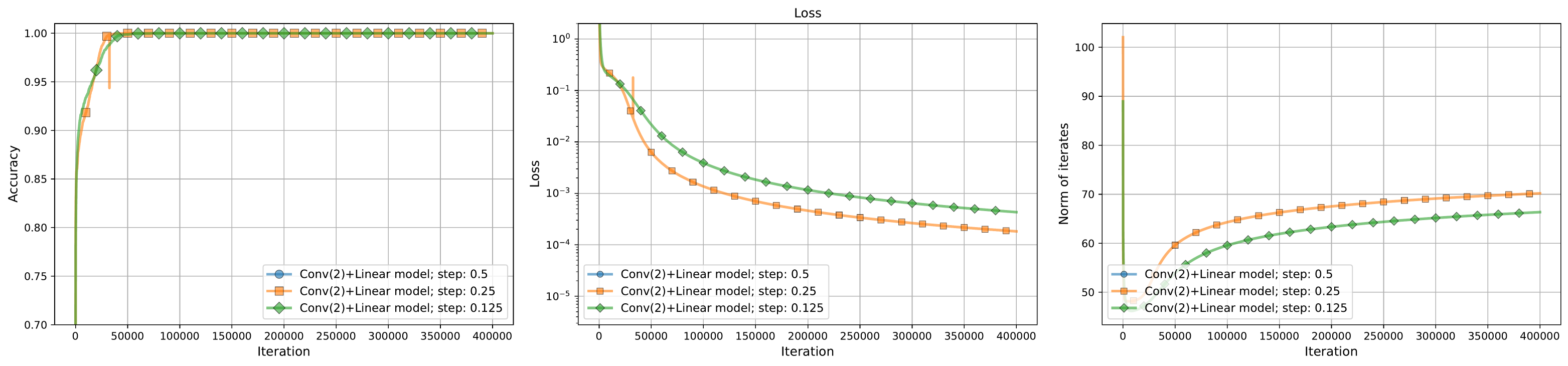}
    \caption{Non-linear model $m_{\textnormal{cv}}$ with kernel size $k = 2$, classes $0$ and $1$, $n = 5000$ samples on CIFAR-10, and scaled the initial iterate by $100$}
    \label{fig:cifar10_23_norm}
\end{figure}

\subsection{ResNet and CIFAR-10}
\label{sec:resnet}
We consider the image classification problem on CIFAR-10 \citep{krizhevsky2009learning} using ResNet18 \citep{he2016deep} without batch normalization (BN) \citep{ioffe2015batch} and with the ELU activation \citep{clevert2015fast}. Since we consider the binary optimization problems, we only take classes 0 and 1 in CIFAR-10. All methods start from the same starting point randomly generated by PyTorch. In Figure~\ref{fig:resnet_cifar}, we observe behavior similar to that in Figure~\ref{fig:linear_vs_conv_linear}: for small $\gamma,$ the plots are smooth and the losses decrease monotonically, but starting from $\gamma = 0.01,$ the convergence becomes chaotic, even though GD with the logistic loss is deterministic.
\begin{figure}[h]
\centering
\begin{minipage}{.49\textwidth}
    \centering
    \includegraphics[width=\textwidth]{./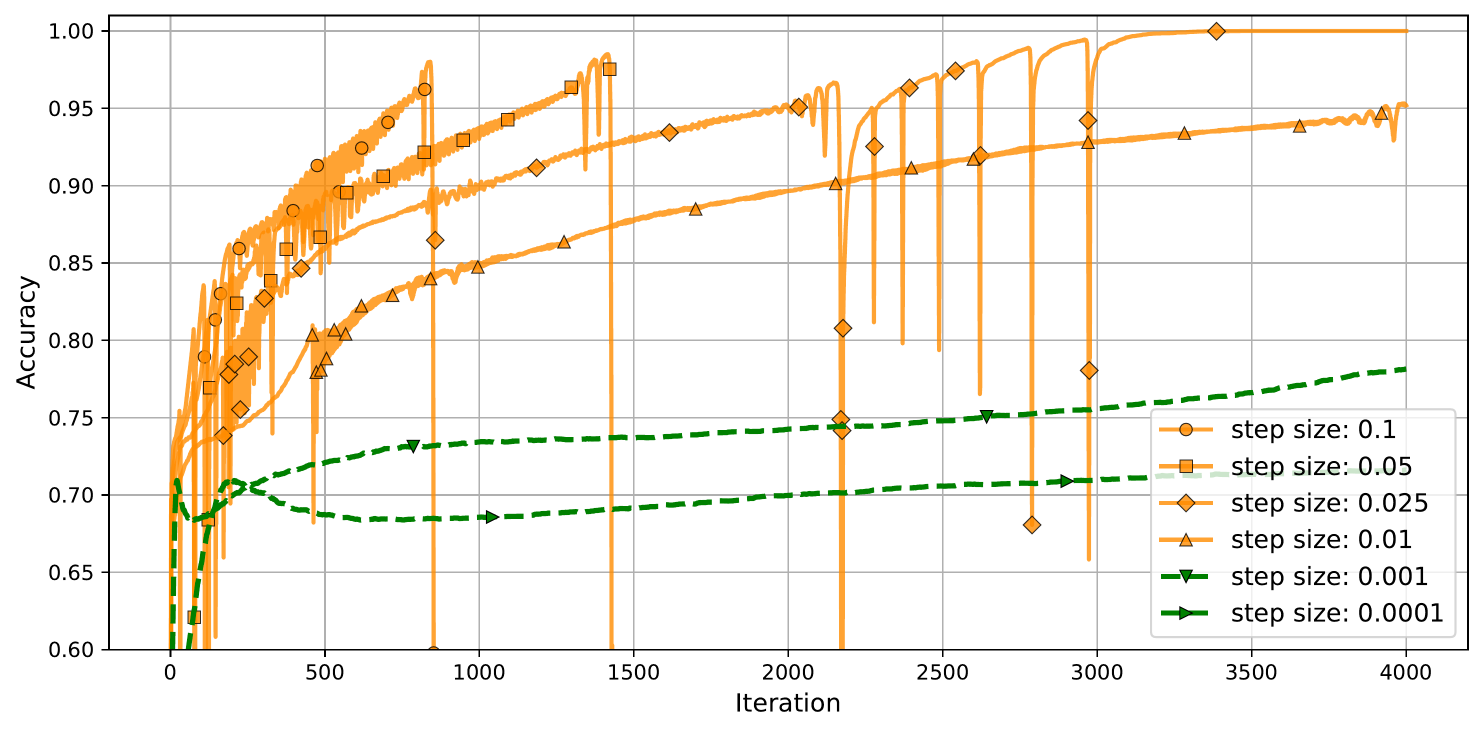}
\end{minipage}%
\begin{minipage}{.49\textwidth}
    \centering
    \includegraphics[width=\textwidth]{./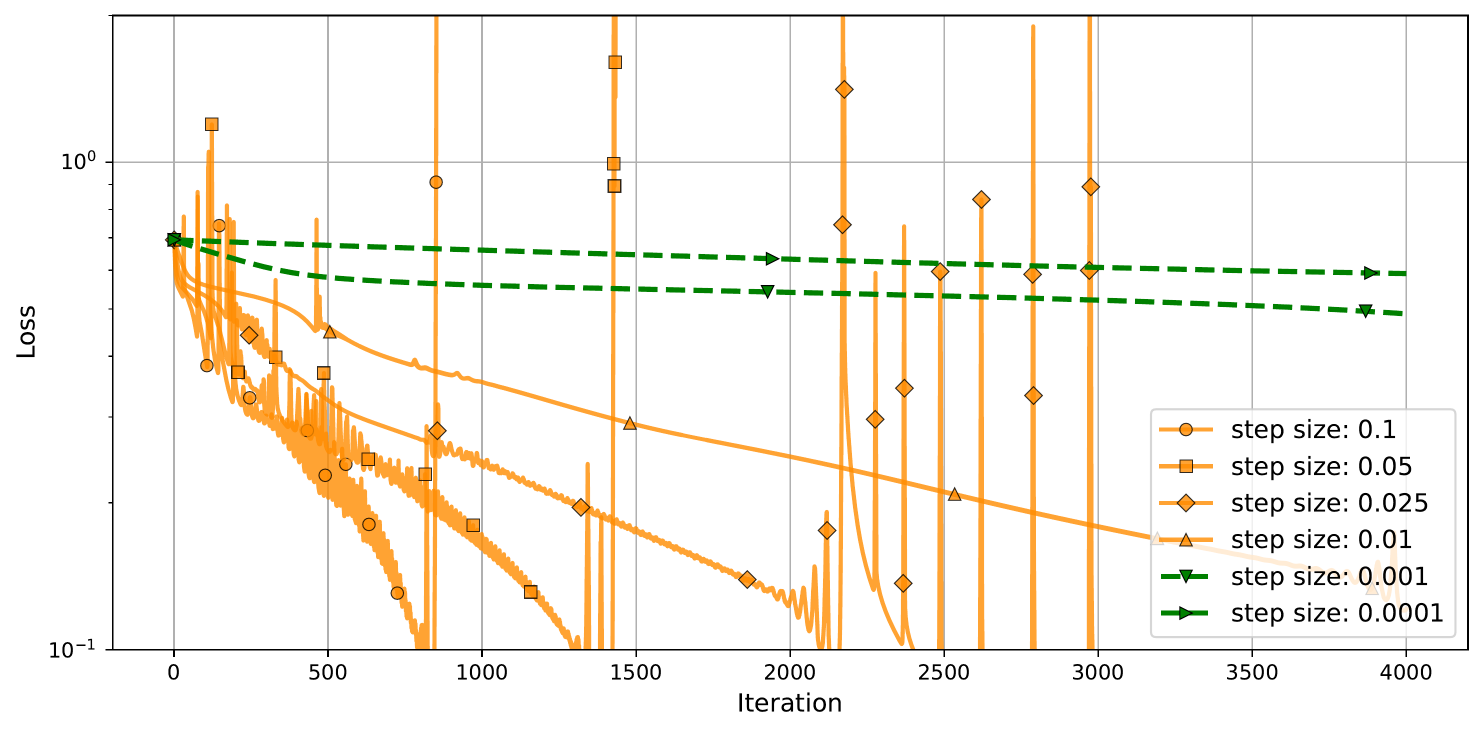}
\end{minipage}%
\caption{Training ResNet18 on CIFAR-10 with the logistic loss and the vanilla GD method.}
\label{fig:resnet_cifar}
\end{figure}

\end{document}